\documentclass{article}

\usepackage[utf8]{inputenc}         %
\usepackage[T1]{fontenc}            %
\usepackage{algorithm}              %
\usepackage[noend]{algpseudocode}   %
\usepackage{amsfonts}               %
\usepackage{amsmath}                %
\usepackage{amssymb}                %
\usepackage{amsthm}                 %
\usepackage{bbding}                 %
\usepackage{bbm}                    %
\usepackage{booktabs}               %
\usepackage{bm}                     %
\usepackage{caption}                %
\usepackage{colortbl}               %
\usepackage{comment}                %
\usepackage{enumitem}               %
\usepackage{etoc}                   %
\usepackage{footnote}               %
\usepackage{fullpage}               %
\usepackage{graphicx}               %
\usepackage{longtable}              %
\usepackage{mathtools}              %
\usepackage{microtype}              %
\usepackage{minitoc}                %
\usepackage{multirow}               %
\usepackage{nicefrac}               %
\usepackage{pifont}                 %
\usepackage{setspace}               %
\usepackage{soul}                   %
\usepackage{subcaption}             %
\usepackage[x11names]{xcolor}       %
\usepackage{tcolorbox}              %
\usepackage{url}                    %
\usepackage{wrapfig}                %

\usepackage{tyler}                  %

\usepackage[style=authoryear,maxcitenames=2,maxbibnames=99,backend=biber,sorting=nyt,natbib=true,backref=true,uniquelist=false]{biblatex}
\usepackage[colorlinks, citecolor=blue]{hyperref} %
\usepackage{cleveref}
\usepackage{footnotebackref}

\newtheorem{theorem}{Theorem}[section]
\newtheorem{assumption}[theorem]{Assumption}

\newtheorem{definition}[theorem]{Definition}
\newtheorem{lemma}[theorem]{Lemma}
\newtheorem{proposition}[theorem]{Proposition}
\newtheorem{remark}[theorem]{Remark}

\crefname{term}{term}{terms}
\Crefname{term}{Term}{Terms}
\crefformat{term}{term~(#2#1#3)}
\Crefformat{term}{Term~(#2#1#3)}

\makesavenoteenv{tabular}

\makeatletter
\newcommand\footnoteref[1]{\protected@xdef\@thefnmark{\ref{#1}}\@footnotemark}
\makeatother

\renewcommand \thepart{}
\renewcommand \partname{}

\newtcolorbox{wheatbox}{colback=Wheat1, colframe=Wheat1!75!black}

\begin{document}

\doparttoc %
\faketableofcontents %

\begin{center}
    {\bf{\LARGE{SGD Provably Prioritizes a Shortcut\\Spurious Feature in the XOR Model}}}
    
    \vspace*{.2in}
    
    \renewcommand{\thefootnote}{\fnsymbol{footnote}}
    {\large{
    \begin{tabular}{cc}
    Tyler LaBonte$^{1}$ & Vidya Muthukumar$^{2,1}$
    \end{tabular}}}
    
    \vspace*{.2in}
    
    \renewcommand{\thefootnote}{\arabic{footnote}}
    \setcounter{footnote}{0}
    \begin{tabular}{c}
        $^1$H. Milton Stewart School of Industrial and Systems Engineering, Georgia Institute of Technology \\
        $^2$School of Electrical and Computer Engineering, Georgia Institute of Technology
        \\ \texttt{\{tlabonte, vmuthukumar8\}@gatech.edu}
    \end{tabular}
\vspace*{.2in}
\end{center}

\begin{abstract}
    Neural networks are known to be susceptible to over-reliance on spurious correlations.
    However, the precise mechanism by which models exploit shortcut features is not fully understood, and algorithms to mitigate this behavior rely on as yet unjustified assumptions about the learned representations.
    In this work, we provide the first end-to-end theoretical characterization of spurious feature learning for two-layer ReLU neural networks trained by online minibatch SGD on the logistic loss.
    We consider data drawn from the high-dimensional Boolean hypercube with a quadratic signal function (namely XOR) and a linear spurious correlation.
    We show that SGD learns the spurious feature first, and exponentially fast.
    Moreover, the optimization dynamics couple the spurious and signal features, with a stronger spurious component inhibiting signal feature learning.
    Our analysis reveals precise phase transitions in the learning dynamics.
    In the first phase, alignment between the signs of the spurious feature and second-layer weight drives rapid growth of the spurious feature.
    In the second phase, large majority group margin slows learning and the signal feature remains suppressed.
    When the spurious correlation is maximally strong, we show theoretically that the spurious feature dominates even at the sample complexity threshold where XOR would be learned in isolation (\ie if the spurious feature was absent).
    In contrast, when the correlation strength is constant, we provide preliminary empirical evidence that the model can eventually learn the XOR signal, although the spurious feature is not forgotten.
\end{abstract}

\section{Introduction}
Classification tasks in machine learning are sensitive to \emph{spurious correlations}: facile patterns which are
predictive of the class label in the training dataset but irrelevant to the target function~\citep{sagawa2020distributionally}.
These ``shortcuts'' can be as simple as using the background of an image to predict its content~\citep{beery2018recognition, xiao2021noise}, and have been observed in applications ranging from medicine~\citep{zech2018variable} to justice~\citep{chouldechova2016fair} and facial recognition~\citep{liu2015deep}.
Models which over-rely on these shortcuts can be accurate on average, but perform no better than random guessing on data where the spurious correlation is absent or opposite~\citep{shah2020pitfalls}.

Understanding how neural networks learn features is increasingly viewed as essential to explaining their reliance on spurious correlations~\citep{izmailov2022feature}.
In particular, it is unclear how quickly signal and spurious features are learned, the impact of their relative complexity and strength of the spurious correlation, and how these features are represented in parameter space.
Insights would have significant implications for the design of debiasing algorithms.
Among the most popular such methods, \emph{Just Train Twice}~\citep{liu2021just} is predicated on faster learning of spurious features than signal features, while \emph{Deep Feature Reweighting}~\citep{kirichenko2023last} assumes the model learns both signal and spurious features even if it primarily utilizes the latter --- these are highly nontrivial conditions on the learned representations which currently lack theoretical justification.
Despite recent advances in linearized settings or with modified training algorithms~\citep{bombari2024spurious, hermann2024foundations, yang2024identifying}, we do not understand the mechanism underlying spurious feature learning in nonlinear models, even for basic target functions.

In this paper, we theoretically characterize the feature learning process of stochastic gradient descent (SGD) on a two-layer ReLU neural network where the ground-truth (signal) model is the Boolean exclusive-OR (XOR) problem, but the data contains a linear spurious correlation.
As the spurious feature is quantitatively ``simpler'' than the quadratic XOR signal, our setting enables rigorous study of how neural networks exploit low-complexity shortcuts~\citep{qiu2024complexity}.
Compared to learning the XOR in isolation~\citep{glasgow2024sgd}, we find that the signal and spurious features exhibit nonlinear interdependencies which complicate the learning dynamics --- we show that the spurious feature competes with, and eventually suppresses, the signal feature.
Our main theoretical result (\Thmref{main}) states that the spurious feature inevitably dominates when the spurious correlation is maximally strong.
Our theory identifies several interesting phase transitions in the learning dynamics, illustrated in \Figref{phases}.

\begin{wheatbox}
    In more detail, our \textbf{main contributions} include the below:
    \begin{itemize}
        \item We provide the first end-to-end theoretical characterization of spurious feature learning for two-layer ReLU neural networks trained by online minibatch SGD on the logistic loss (\ie no model linearization or layer-wise training).
        \item If the spurious correlation is maximally strong, we show theoretically that the spurious feature suppresses signal learning, inducing shortcut reliance even at the sample complexity threshold for learning XOR in isolation. Our analysis reveals precise phase transitions in the learning dynamics which align with simulations.
        \item If the correlation strength is constant, we provide empirical evidence that the model eventually learns the XOR signal. Moreover, the spurious feature is not forgotten, and the network decomposes into disjoint signal and spurious subnetworks.
    \end{itemize}
\end{wheatbox}

\paragraph{Novelty of techniques.} 
Our proofs build on the XOR feature learning analysis done by~\cite{glasgow2024sgd} without a spurious correlation.
Our framework similarly involves two phases, though our phase transitions are different and governed by the spurious feature.
Despite sharing the basic setting of~\cite{glasgow2024sgd} and some technical tools (\eg \Lemref{boolean_to_gaussian_delta}), our analysis diverges almost immediately due to nonlinear interactions between the signal and spurious features that fundamentally alter the learning dynamics.
The starkest difference is the exponential suppression of the signal in Phase II, which necessitates a disparate approach focused on the data margins.
Additionally, the spurious feature grows so rapidly that a useful Taylor approximation to the population gradients also used by~\cite{glasgow2024sgd} becomes vacuous after $O(\log\log(d)\eta^{-1})$ iterations rather than the $O(\log(d)\eta^{-1})$ of~\cite{glasgow2024sgd}, where $\eta$ is the learning rate. This is a significant compression since the full training run lasts $O(\log(d)\eta^{-1})$ iterations, and it induces substantial differences in Phase I analysis.
The signal-spurious interdependence also introduces new technical challenges,
the greatest being the analysis of the orthogonal component $\bwperp$ in Phase I. %
We elaborate on these challenges and our approach in \Secref{proof_sketch}.

\paragraph{Notation.}
We provide a full table of notation in \Secref{notation}, and describe here only what is necessary to read the main paper.
We use uppercase bold symbols to denote matrices (\eg $\mathbf{X}$), lowercase bold symbols to denote vectors (\eg $\bx$), and italicized symbols to denote scalars (\eg $x$).
We write $\be_k$ for the $k$-th standard basis vector and $\norm{\cdot}$ for the vector $\ell_2$-norm.
Let $\P_{\bx}$ and $\E_{\bx}$ denote a probability and expectation with respect to a random vector $\bx$, respectively.
Let $\text{Unif}(\mathbb{S}^{d-1}(\theta))$ denote the uniform distribution on the $\ell_2$-sphere in $\R^d$ of radius $\theta$.
We define the following asymptotic notation with respect to growing data dimension $d$: $x\ll y \iff x=o(y)$, $x\lesssim y \iff x=O(y)$, $x\gg y \iff x=\omega(y)$, $x\gtrsim y \iff x = \Omega(y)$, and $x\asymp y \iff x=\Theta(y)$.
Finally, we write $x=(1\pm o(1))\cdot y$ as shorthand for $x\in [(1-o(1))\cdot y, (1+o(1))\cdot y]$.

\subsection{Related Work}
Here we provide a brief summary of related work across three axes.

\paragraph{Spurious correlations.} The proclivity of neural networks to over-reliance on spurious features has been widely observed~\citep{geirhos2020shortcut, singla2022salient}.
These features often manifest as simple ``shortcuts'' --- including image backgrounds~\citep{beery2018recognition, xiao2021noise} and secondary objects~\citep{rosenfeld2018elephant, shetty2019not} in computer vision, and syntactical or statistical heuristics in NLP~\citep{gururangan2018annotation, mccoy2019right, niven2019probing}.
In applications, such shortcuts are known to exacerbate biases~\citep{hovy2015tagging, blodgett2016demographic, tatman2017gender, hashimoto2018fairness} and cause failure in high-stakes scenarios~\citep{liu2015deep, chouldechova2016fair, zech2018variable, oakden-rayner2019hidden}.
Substantial research in group robustness and out-of-distribution generalization has investigated algorithms to mitigate these issues~\citep{arjovsky2019invariant, sagawa2020distributionally, nam2020learning, liu2021just, idrissi2022simple, pagliardini2023agree, kirichenko2023last, qiu2023simple, labonte2023towards, vasudeva2024mitigating, noohdani2024decompose, tifrea2024frappe}.
These methods are often predicated on tenuous assumptions about the feature learning process, \eg that the spurious features are learned faster~\citep{liu2021just} or have simpler representations~\citep{vasudeva2024mitigating}, or that the signal and spurious features are jointly learned but improperly weighted in the last layer~\citep{kirichenko2023last, qiu2023simple, labonte2023towards}.
Theoretical analysis such as ours would (in the long term) clarify these assumptions and possibly be prescriptive of novel methodology.

\paragraph{Feature learning theory without spurious correlations.}
A profound advantage of neural networks over classical kernel methods (including the neural tangent kernel (NTK)~\citep{jacot2018neural}) is their ability to learn features from data~\citep{karp2021local, telgarsky2023feature, vyas2023empirical, radhakrishnan2024mechanism}.
Showing how feature learning reduces the sample complexity of learning implicitly low-dimensional functions is an active area of research~\citep{mei2018mean,li2020learning,abbe2021staircase, bietti2022learning, abbe2022merged, damian2022neural, tan2023online, mousavi-hosseini2023neural, wu2023finite, abbe2023sgd, lee2024neural}.
Many such analyses are limited by their modification of the training algorithm, \eg via layer-wise training or gradient clipping.
Our work is particularly inspired by~\cite{glasgow2024sgd}, which showed that two-layer ReLU neural networks trained by standard SGD (with no such modifications) can learn the Boolean XOR in $d$ dimensions with $d\cdot\text{polylog}(d)$ sample complexity.
Ultimately, we show that this sample complexity is insufficient to learn the XOR in the presence of the linear spurious correlation.

\paragraph{Spurious feature learning.} Theoretical insight into spurious correlations is useful even in linear models~\citep{sagawa2020investigation, nagarajan2021understanding, ye2023freeze, puli2023dont}, but recent work suggests that feature learning underlies the design of robustness algorithms~\citep{izmailov2022feature}.
Several works have studied shortcut learning in the random features and NTK regimes~\citep{bombari2024spurious, hermann2024foundations, roy2025how}.
Most closely related is~\cite{yang2024identifying}, which showed that spurious features are learned early in SGD training and characterized separability of majority and minority groups; their analysis uses square loss and a linearization similar to the NTK~\citep{hu2020surprising}.
Another related work is~\cite{qiu2024complexity}, which characterized the gradients with and without spurious correlations (under layer-wise training), but did not provide an end-to-end analysis of test error.
In contrast, we directly analyze nonlinear neural networks trained by online minibatch SGD and explicitly characterize the majority/minority group error.
We contextualize our results with the empirical findings of~\cite{qiu2024complexity} in \Secref{discussion}.

\section{Setting} \label{sec:setting}

We now describe our Boolean XOR setting with a linear spurious correlation, as well as our two-layer neural network model and training procedure.

\subsection{Data} \label{sec:data}
Define $\bmu_1\coloneqq \be_1-\be_2$ and $\bmu_2\coloneqq \be_1+\be_2$.
Let $\lambda\in (0, \tfrac{1}{2})$ denote the strength of the spurious correlation, where lower $\lambda$ indicates stronger correlation.
We model the data distribution $P_d(\lambda)$ by:\footnote{For simplicity, we detail our setting for an axis-aligned Boolean hypercube where the first three coordinates comprise the ground truth and spurious correlation. Nevertheless, all aspects of our analysis are rotationally invariant, and thus our results hold for Boolean hypercubes in any basis.}
\begin{equation}\label{eq:XOR-spurious-correlation}
    \bx = \left\{
    \begin{array}{llll}
        \hphantom{-}\bmu_1+\be_3+\bxi & \withprob \nicefrac{1}{4}-\nicefrac{\lambda}{4} & \hphantom{-}\bmu_2-\be_3+\bxi & \withprob \nicefrac{1}{4}-\nicefrac{\lambda}{4} \\
        \hphantom{-}\bmu_1-\be_3+\bxi & \withprob \nicefrac{\lambda}{4} & \hphantom{-}\bmu_2+\be_3+\bxi & \withprob \nicefrac{\lambda}{4} \\
        -\bmu_1+\be_3+\bxi & \withprob \nicefrac{1}{4}-\nicefrac{\lambda}{4} & -\bmu_2-\be_3+\bxi & \withprob \nicefrac{1}{4}-\nicefrac{\lambda}{4} \\
        -\bmu_1-\be_3+\bxi & \withprob \nicefrac{\lambda}{4} & -\bmu_2+\be_3+\bxi & \withprob \nicefrac{\lambda}{4}
    \end{array}
    \right.,
\end{equation}
where $\bxi\sim\Unif(0^3\times\{\pm 1\}^{d-3})$ so that $\bxi\perp\{\bmu_1,\bmu_2, \be_3\}$.
For the remainder of the paper, we write $\P_{\bx}\coloneqq \P_{\bx\sim P_d(\lambda)}$ and $\E_{\bx}\coloneqq \E_{\bx\sim P_d(\lambda)}$ as shorthand.
We will also write $\bx\coloneqq\bz+\bs+\bxi$ as shorthand, where $\bz \coloneqq x_1 \be_1 + x_2 \be_2$ and $\bs \coloneqq x_3 \be_3$ and the distribution of both $\bz$ and $\bs$ can be derived from \Eqref{XOR-spurious-correlation} above.
Note that for $\lambda=\tfrac{1}{2}$ we would have $\bx\sim\Unif(\{\pm 1\}^d)$, recovering the data distribution of~\cite{glasgow2024sgd}.

The target function is the XOR formula on the first two dimensions, \ie $y(\bx)\coloneqq y(\bz) \coloneqq -x_1x_2$ with no label noise.
We denote the majority group by $\Xmaj \coloneqq \{\bx\in \{\pm 1\}^d:y(\bx)=x_3\}$ and the minority group by $\Xmin \coloneqq \{\bx \in \{\pm 1\}^d:y(\bx)=-x_3\}$.
It can be verified directly from Equation~\eqref{eq:XOR-spurious-correlation} that $\P_{\bx}\left(\bx \in \Xmaj\right) = 1 - \lambda$ while $\P_{\bx}\left(\bx \in \Xmin\right) = \lambda$.
We denote the accuracy of a real-valued predictor $f:\{\pm 1\}^d\to\R$ on data uniformly drawn from a set $\X\subseteq\{\pm 1\}^d$ by
\begin{equation*}
    \textnormal{Acc}_{\X}(f) \coloneqq \P_{\bx\sim \Unif(\X)}\big(\sgn(f(\bx))=y(\bx)\big).
\end{equation*}
Note that the fully spurious predictor $\fsp(\bx)=x_3$ has $\textnormal{Acc}_{\Xmaj}(\fsp)=1$ and $\textnormal{Acc}_{\Xmin}(\fsp)=0$.

\subsection{Model and Training} \label{sec:model}

We simultaneously train both layers of a two-layer ReLU neural network with $p$ neurons $\{(a_j,\bw_j)\}_{j=1}^p$.
For initialization parameter $\theta\in\R$, we initialize each neuron \iid with $\bw_j\sim\Unif(\mathbb{S}^{d-1}(\theta))$ so that $\norm{\bw_j}=\theta$ and $a_j=r_j\theta$ where $r_j\sim\Unif(\{\pm 1\})$. %
We represent the prediction as an expectation over the empirical distribution $\rho$ of the neurons, \ie
\begin{equation*}
    f_\rho(\bx)\coloneqq \E_{(a,\bw) \sim \rho}[a\sigma(\bw^\top\bx)] = \frac{1}{p}\sum_{j=1}^{p} a_j \sigma(\bw_j^\top \bx),
\end{equation*}
where $\sigma(\alpha)\coloneqq\max(0, \alpha)$ is the ReLU.
We often drop the $j$ and simply refer to a neuron as $(a,\bw)$.

We train via online minibatch SGD with learning rate $\eta>0$ on the logistic loss.
Let $\gamma(\bx)\coloneqq y(\bx)f_\rho(\bx)$ denote the (unnormalized) margin of datum $\bx$ and $\psi(u)\coloneqq 1/(1+e^{-u})$ denote the sigmoid. %
Then, we write the logistic loss by the composite notation $\ell_\rho(\bx)\coloneqq h(\gamma(\bx))$ where $h(\gamma)\coloneqq-2\log(\psi(\gamma))$.
Accordingly, we define $\ell_\rho^{(1)}(\bx)\coloneqq h'(\gamma(\bx))$.
We define the population loss as $L_\rho\coloneqq \E_{\bx} [\ell_\rho(\bx)]$ and the empirical loss on the $t$-th minibatch of size $m$, denoted $M^{(t)}\sim P_d^m(\lambda)$, as $\widehat{L}_{\rho^{(t)}}\coloneqq \frac{1}{m}\sum_{\bx\in M^{(t)}} \ell_{\rho^{(t)}}(\bx)$.
We write $\nabla_{\bw} L\coloneqq p \cdot \frac{\partial L}{\partial \bw}$ for the $p$-scaled gradient of $L$ with respect to $\bw$ and $\partial_{u} L \coloneqq p\cdot \frac{\partial L}{\partial u}$ for the $p$-scaled partial derivative of $L$ with respect to $u\in (a, w_i)$.\footnote{We scale by $p$ following~\cite{glasgow2024sgd} to match the conventional mean-field scaling and avoid excessive $\tfrac{1}{p}$ factors.}
We denote the minibatch SGD update by
\begin{equation*}
   \atplus = \at - \eta \partial_{\at}\widehat{L}_{\rhot} \qquad \bwtplus = \bwt - \eta \nabla_{\bwt}\widehat{L}_{\rhot}.
\end{equation*}

\subsection{Feature Learning Analysis} \label{sec:feature_learning_analysis}

For a neuron $(a,\bw)$, let us write $\bw\coloneqq\bw_{1:2}+\bwsp+\bwperp$ and $\bw_{1:2}\coloneqq \bwsig+\bwopp$, where
\begin{equation*}
    \bwsig \coloneqq
    \begin{cases}
        \frac{1}{2}\bmu_1\bmu_1^\top \bw & a \geq 0 \\
        \frac{1}{2}\bmu_2\bmu_2^\top \bw & a < 0
    \end{cases}
    \qquad
    \bwopp \coloneqq
    \begin{cases}
        \frac{1}{2}\bmu_2\bmu_2^\top \bw & a \geq 0 \\
        \frac{1}{2}\bmu_1\bmu_1^\top \bw & a < 0
    \end{cases}
    \qquad \bwsp \coloneqq w_3 \be_3.
\end{equation*}
We will often write $\wsp\coloneqq w_3$ as shorthand for the scalar value of $\bwsp$.
Note that $\bwsig$ and $\bwopp$ are defined with respect to $\sgn(a)$; our analysis shows $\sgn(a)$ does not change.
We also define the ``positive'' neurons by $S^+\coloneqq \{(a,\bw):\sgn(a)=1\}$ and the ``negative'' neurons by $S^-\coloneqq \{(a,\bw):\sgn(a)=-1\}$, where importantly $\bmu_1 \parallel \bwsig$ for neurons in $S^+$ while $\bmu_2 \parallel \bwsig$ for neurons in $S^-$.

Intuitively, $\bwsig$ is the feature in the ``signal'' direction --- towards the correct classification of the XOR label --- while $\bwopp$ is the feature in the ``opposing'', or negative, direction.
Likewise, $\bwsp$ denotes the ``spurious'' feature, while $\bwperp$ comprises ``orthogonal'' features.
The majority of our technical analysis will be towards providing a high-probability quantification of the growth rates of $\norm{\bwsig}$, $\norm{\bwopp}$, $\norm{\bwsp}$, and $\norm{\bwperp}$ for each neuron (where probability is over all the minibatched data and the random initialization).

A model which isolates the ground-truth XOR signal will have $\bw=\pm\bwsig$ for all neurons $(a,\bw)$.
For example, if $\bw=\bwsig$ with $a\geq 0$ then $\bw=\frac{1}{2}\bmu_1\bmu_1^\top\bw$, so $\bw=\alpha\bmu_1$ for $\alpha\in \R$.
In particular, the smallest optimal (normalized) neural network classifier has four neurons as follows:
\begin{equation*}
    f^\star(\bx)=\frac{1}{2}\Big(\sigma(\bmu_1^\top\bx)+\sigma(-\bmu_1^\top\bx)-\sigma(\bmu_2^\top\bx)-\sigma(-\bmu_2^\top\bx)\Big).
\end{equation*}

\section{Main Result} \label{sec:main}

\begin{figure}[t]
    \centering
    \begin{subfigure}[b]{0.325\textwidth}
        \centering
        \includegraphics[width=\textwidth, height=0.18\textheight]{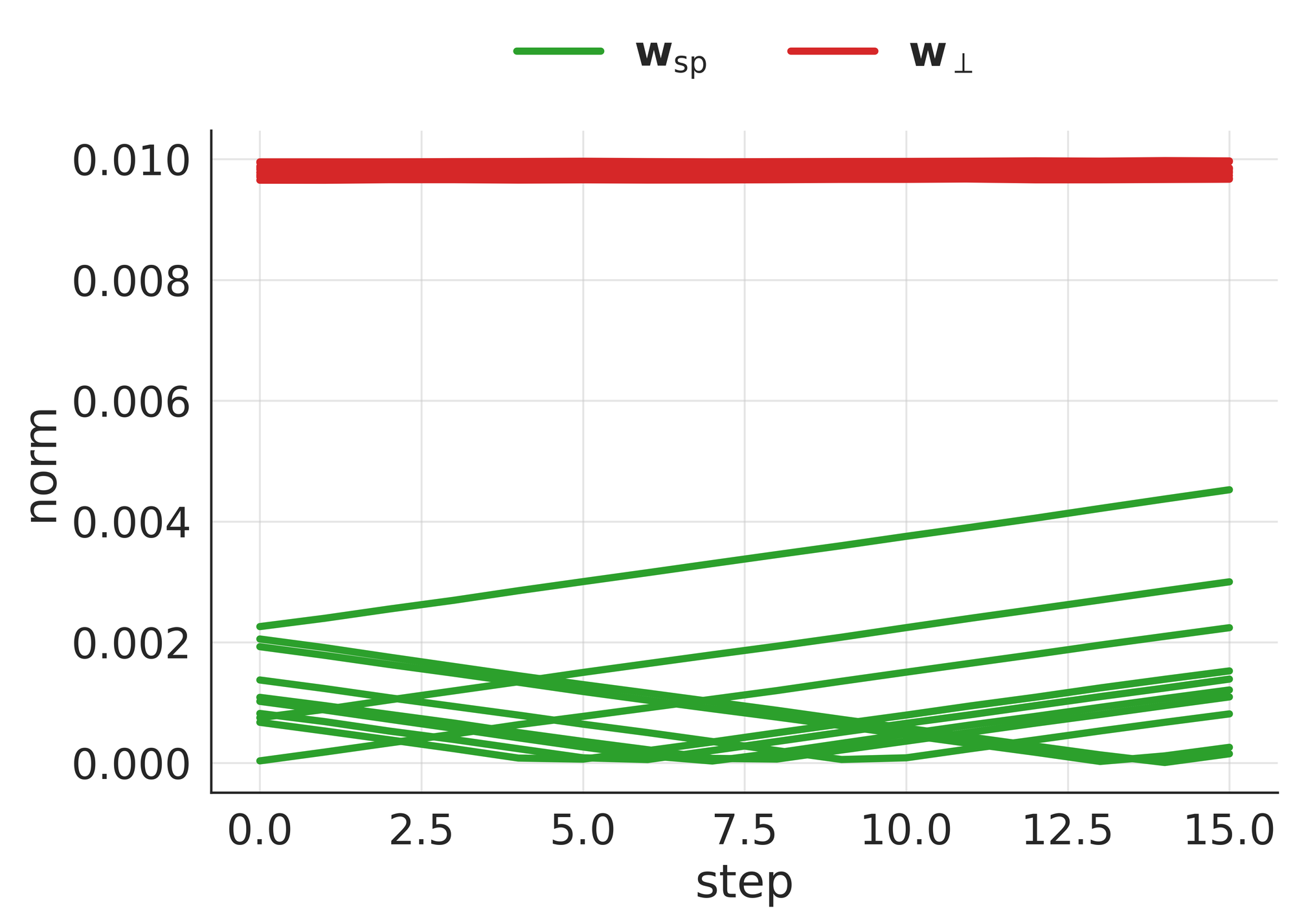}
        \caption{Phase Ia}
        \label{fig:phase1a}
    \end{subfigure}
    \begin{subfigure}[b]{0.325\textwidth}
        \centering
        \includegraphics[width=\textwidth, height=0.18\textheight]{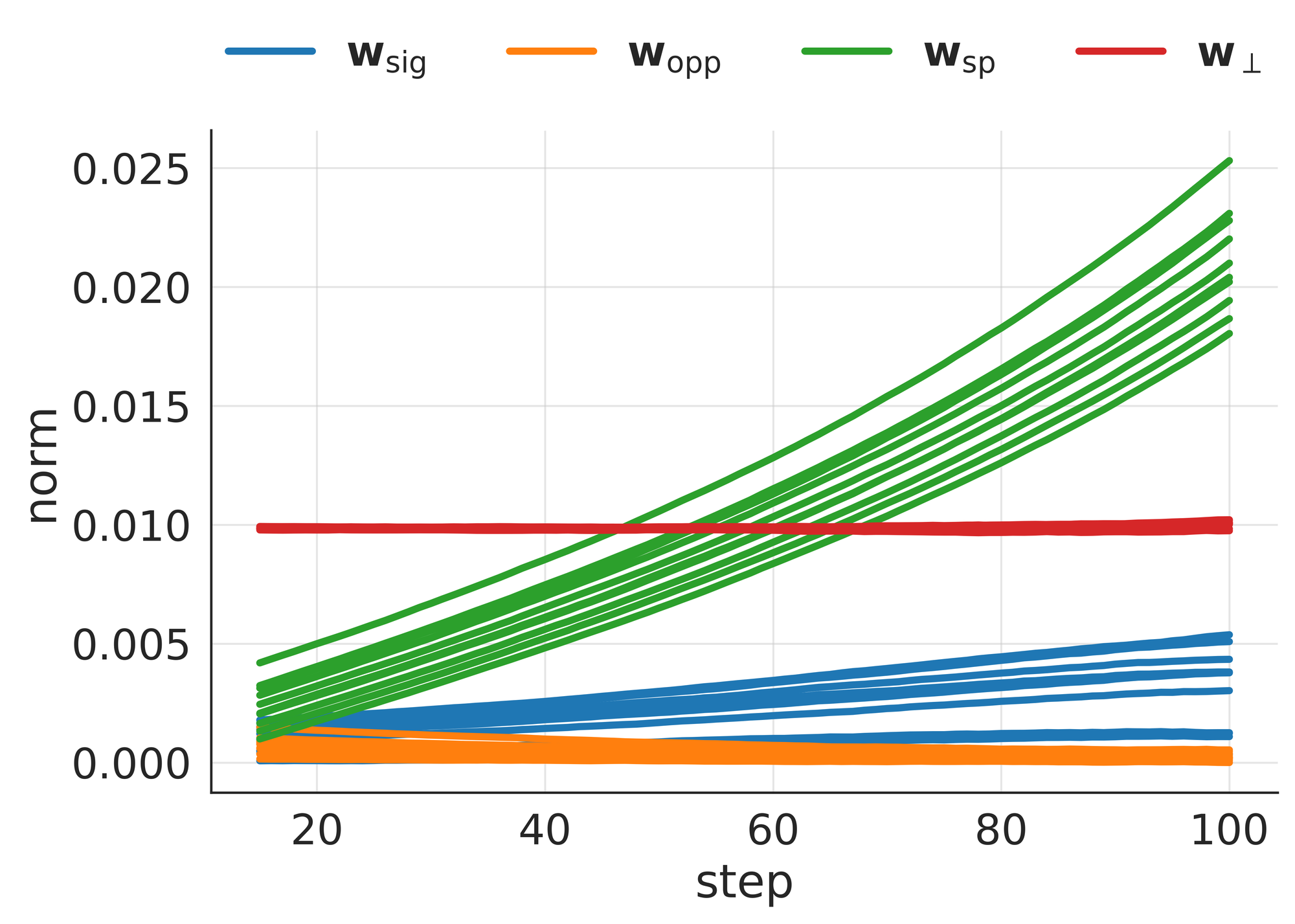}
        \caption{Phase Ib}
        \label{fig:phase1b}
    \end{subfigure}
    \begin{subfigure}[b]{0.325\textwidth}
        \centering
        \includegraphics[width=\textwidth, height=0.18\textheight]{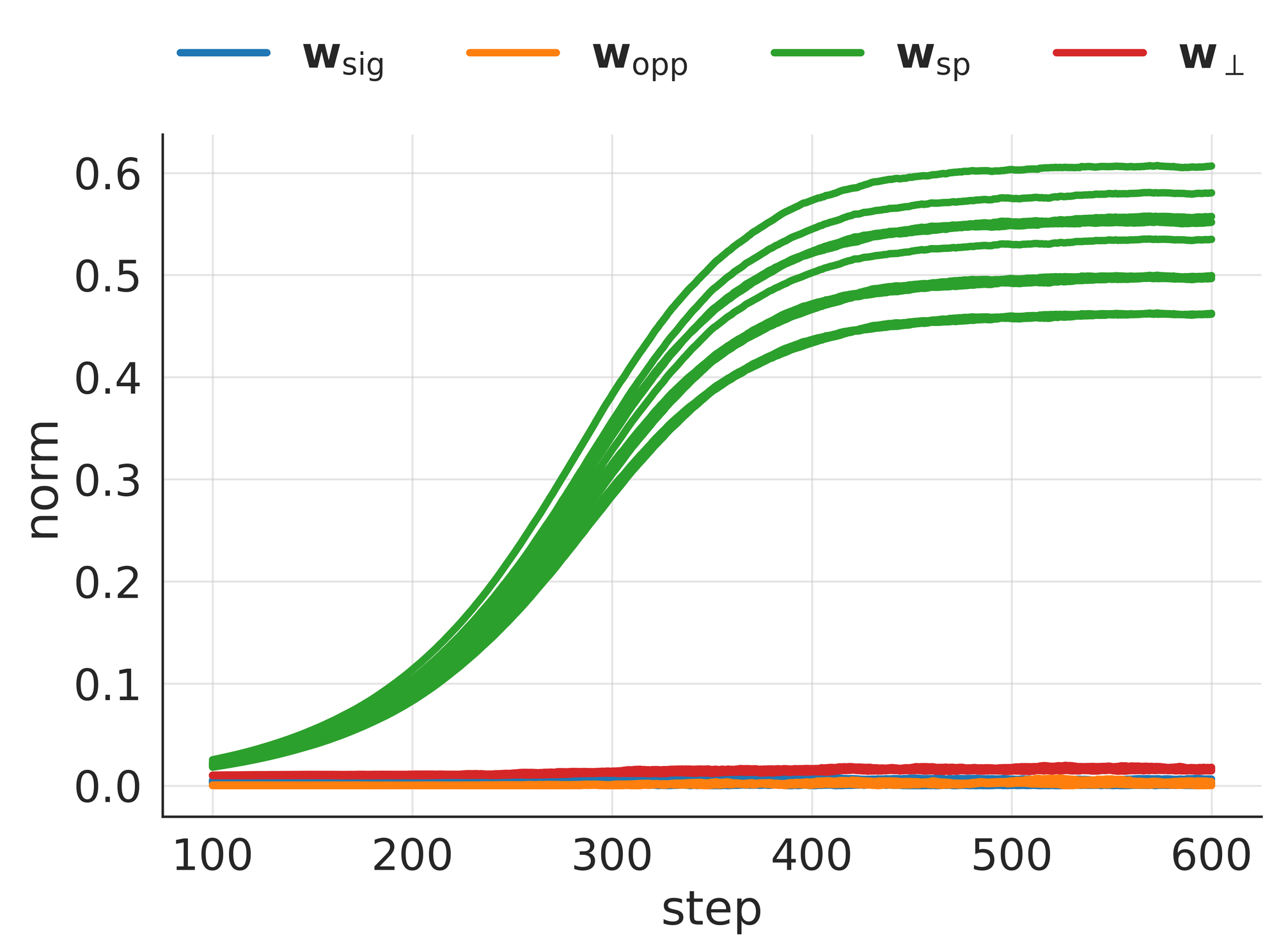}
        \caption{Phase II}
        \label{fig:phase2}
    \end{subfigure}
    \caption{\textbf{Phase transitions in spurious feature learning.} We display the results of a training run with dimension $d=100$, spurious correlation strength $\lambda=0.1$, learning rate $\eta=0.05$, width $p=10$, initialization scale $\theta=0.01$, and batch size $m=5000$. For each of the $p=10$ neurons, we plot $\norm{\bwsig}$, $\norm{\bwopp}$, $\norm{\bwsp}$, and $\norm{\bwperp}$ (defined in \Secref{feature_learning_analysis}). %
    Phase Ia is a very short initial phase which induces alignment between $\sgn(a)$ and $\sgn(\wsp)$; we have omitted $\bwsig$ and $\bwopp$ to emphasize how some $\bwsp$ lines ``bounce off'' zero.
    In Phase Ib, the spurious feature $\bwsp$ grows exponentially fast and dominates all other features.
    Finally, in Phase II, the growth of $\bwsp$ slows in a sigmoidal manner, yet continues monotonically increasing, while the signal feature $\bwsig$ is greatly suppressed.
    }
    \label{fig:phases}
\end{figure}

In this section, we introduce our main theoretical result.
It states that
the spurious feature inevitably dominates even at the sample complexity threshold for learning XOR in isolation, \ie $O(d\cdot \text{polylog}(d))$ samples~\citep{glasgow2024sgd}. %
We will need the following scalings on our model parameters.
\begin{assumption} \label{asm:main}
    We require the following for large enough constant $C>0$: the learning rate $\log(d)d^{-C} \ll \eta\ll \log^{-3}(d)$, the width $\log^5(d)\ll p \ll d^{C}$, the initialization scale $d^{-C/2}\ll \theta\ll \log^{-5C}(d)$, the batch size $m\gg d\log^{6}(d)\theta^{-2}$, and the spurious correlation strength $\lambda\ll \log^{-1}(d)$.
\end{assumption}

The following theorem is our main result.

\begin{theorem} \label{thm:main}
    There exists a large enough constant $C>0$ such that the following holds.
    If \Asmref{main} is satisfied, then with probability at least $1-d^{-C}$, upon
    \begin{equation*}
        T \asymp
        \begin{cases*}
            \log(d)(\log\log(d))^{-1}\eta^{-1} & \text{if } $\theta \asymp \textnormal{polylog}^{-1}(d)$ \\
            \log(d)\eta^{-1} & \text{if } $\theta \asymp \textnormal{poly}^{-1}(d)$
        \end{cases*}
    \end{equation*}
    iterations of online minibatch SGD under the $\ell_\rho$ loss, we have
    \begin{equation*}
        \textnormal{Acc}_{\Xmaj}(f_{\rho^{(T)}}) \geq 1-d^{-C} \qquad \textnormal{Acc}_{\Xmin}(f_{\rho^{(T)}}) \leq d^{-C}.
    \end{equation*}
\end{theorem}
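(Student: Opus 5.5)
The plan is to follow the two-phase template of \cite{glasgow2024sgd}, but organize every phase around the spurious coordinate $\wsp$ and the group margins. First I replace minibatch SGD by population gradient descent plus a martingale error. With $m\gg d\log^6(d)\theta^{-2}$, a union bound over $T\le \mathrm{polylog}(d)\eta^{-1}$ steps and $p\ll d^C$ neurons shows that, with probability $1-d^{-C}$, each coordinate of $\nabla_{\bw}\widehat L-\nabla_{\bw}L$ is $o(\theta/\sqrt d)$ and the projection onto $\bwperp$ has norm $o(\theta)$. It then suffices to track the population dynamics of $(a,\norm{\bwsig},\norm{\bwopp},\wsp,\norm{\bwperp})$ per neuron. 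I compute these via the decomposition $\bx=\bz+\bs+\bxi$ and \Lemref{boolean_to_gaussian_delta}, which says the activation pattern $\mathbbm 1[\bw^\top\bx>0]$ averaged over $\bxi$ behaves like a Gaussian of variance $\norm{\bwperp}^2$ up to small error.

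\textbf{Phase I (Taylor regime).} While $\max|f_\rho|=o(1)$, I approximate $\ell^{(1)}_\rho(\bx)\approx -1+\tfrac12\gamma(\bx)$. The leading gradient term is then $-\E[y\,a\,\sigma'(\bw^\top\bx)\bx]$. Since $\E[y x_3]=1-2\lambda$, this gives $\dot w_{\rm sp}\approx a(1-2\lambda)\cdot\tfrac12$ and $\dot a\approx \tfrac12(1-2\lambda)\wsp$ plus a $\bw_{1:2}$ term. The XOR part has zero linear correlation and only contributes at order $\norm{\bw_{1:2}}^2/\norm{\bwperp}$, as in \cite{glasgow2024sgd}.

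Phase Ia is the short interval of length $O(\eta^{-1}\cdot o(1))$ in which the $(a,\wsp)$ ODE, a linear system with eigenvectors $a=\pm\wsp$, drives $\sgn(\wsp)=\sgn(a)$ for all but a negligible set of neurons. Some $\wsp$ bounce off zero here. I also need to show $\sgn(a)$ never changes afterwards. In Phase Ib, $|a|,|\wsp|$ grow like $e^{(1-2\lambda)\eta t/2}$ starting from scale $\theta$. Meanwhile $\norm{\bwsig}$ grows only at rate $O(\norm{\bw_{1:2}}/\sqrt d+\text{coupling})$ and $\norm{\bwperp}$ stays $\Theta(\theta)$.

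Phase Ib ends when $|f_\rho|=\Theta(1)$ on the majority group. This happens after $\eta^{-1}\log(1/\theta)$ iterations, which is $\log\log d\cdot\eta^{-1}$ when $\theta=\mathrm{polylog}^{-1}$ and $\log d\cdot\eta^{-1}$ when $\theta=\mathrm{poly}^{-1}$. This matches the two cases of $T$ up to the Phase II duration. I would verify the invariant $\norm{\bwsig}\le \theta\,\mathrm{polylog}(d)\ll |\wsp|$ by a comparison (Gronwall) argument.

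\textbf{Phase II (margin regime).} Once majority margins are $\Theta(1)$, $|\ell^{(1)}_\rho|$ on $\Xmaj$ decays like $e^{-\gamma}$. Minority points have weight $\lambda\ll\log^{-1}d$ and negative margin. The key computation is that the signal gradient on a neuron is $\E[\ell^{(1)}\, y\,\sigma'\,\bz]$. This is dominated by majority points, whose loss derivative is exponentially small in $\gamma\approx |\wsp|\cdot\E|a|$, while the minority contribution is at most $\lambda$. So $\norm{\bwsig}$ grows at rate at most $\eta(e^{-\Omega(\gamma_{\rm maj})}+\lambda)\norm{\bw}$. Meanwhile $\wsp$ keeps increasing monotonically, because its gradient has the sign $(1-\lambda)e^{-\gamma_{\rm maj}}-\lambda\cdot O(1)>0$ while $\gamma_{\rm maj}\lesssim\log(1/\lambda)$; it saturates sigmoidally. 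Over the remaining $\asymp\log(d)(\log\log d)^{-1}\eta^{-1}$ iterations I would show $\norm{\bwsig}+\norm{\bwopp}\le \theta\,\mathrm{polylog}(d)\cdot d^{o(1)}$ while $|a\wsp|\gtrsim \log\log d$.

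\textbf{Accuracy.} At time $T$, write $f(\bx)=x_3 A+R(\bx)$ with $A=\E_\rho[a\wsp\mathbbm 1[\cdot]]$. I bound $|R(\bx)|\le \E_\rho|a|(\norm{\bw_{1:2}}\cdot 2+|\bwperp^\top\bxi|)$. For this I need a concentration bound on $\bwperp^\top\bxi$ (Hoeffding, with $\norm{\bwperp}=O(\theta)$), holding simultaneously over neurons with probability $1-d^{-C}$ over $\bxi$. This gives $|R|\ll A$, so $\sgn f=x_3$ except on a $d^{-C}$ fraction of points, which yields both accuracy claims.

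\textbf{Main obstacle.} The main obstacle is Phase II. I must show simultaneously that (i) the signal remains suppressed even though minority points keep large loss, (ii) $\bwperp$ does not grow, and (iii) the activation patterns (which depend on $\wsp x_3$ versus the other terms) stay controlled once $|\wsp|\gg\norm{\bw_{1:2}}$. My first attempt would be a potential argument: track the ratio $\norm{\bwsig}/|\wsp|$ and show it is nonincreasing up to $1+O(\eta\lambda)$ factors per step. Since $\lambda T\eta\ll 1$ when $\lambda\ll\log^{-1}d$, the product of these factors stays $O(1)$.
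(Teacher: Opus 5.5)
Your skeleton matches the paper's: sign alignment in a very short Phase Ia (driven by $|a^{(0)}|=\theta\gg|\wsp^{(0)}|$), geometric growth of $\wsp$ under the first-order approximation, a margin-based Phase II in which $\wsp$ stays monotone because $e^{-\bar\gamma}\gg\lambda$ while $\bar\gamma\lesssim\log\log(d)$, and a final Hoeffding argument on $\bwperp^\top\bxi$. Ending the Taylor phase at $|f_\rho|=\Theta(1)$ rather than at $|\wsp|\asymp\theta\log^C(d)$ only redistributes the same total time $\eta^{-1}(\log(1/\theta)+\log(d)/\log\log(d))$. The gap is exactly the step you call the main obstacle. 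Neither mechanism you propose keeps $\bwsig$, $\bwopp$, $\bwperp$ small once $\wsp$ is large.

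\textbf{Your crude bound is too weak.} The bound $\eta(e^{-\Omega(\gamma)}+\lambda)\norm{\bw}$ is additive in $\norm{\bw}\approx|\wsp|$. But $\wsp$ itself moves by $\eta|\wsp|(1-\lambda-\psi(\bar\gamma))\asymp\eta|\wsp|e^{-\bar\gamma}$ per step. Summing $\eta e^{-\bar\gamma}|\wsp|$ over Phase II therefore gives something of the same order as the final $|\wsp|$, and far more if the exponent is $c\bar\gamma$ with $c<1$. So this bound cannot give $\norm{\bwsig}\ll|\wsp|$, and the same holds for $\bwperp$, where you even need $\norm{\bwperp}\log^{1/2}(d)\ll|\wsp|$.

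\textbf{The ratio potential does not rescue it.} The majority group moves an individual neuron's signal component by $\eta|a|e^{-\bar\gamma}\bigl|\frac{2}{p}\sum_S a'\bw'^\top\bz_0\bigr|$. This is driven by the other neurons, with either sign. It exceeds the relative growth rate $\eta e^{-\bar\gamma}$ of $\wsp$ once $\wsp^2\gtrsim1$, so $\norm{\bwsig}/|\wsp|$ is not nonincreasing up to $1+O(\eta\lambda)$. Conversely, the minority group is not an $O(\lambda)$ source to be absorbed by $\lambda\eta T\ll1$. If you bound it crudely by $\lambda$, it is again additive in $|\wsp|$. With only $\lambda\ll\log^{-1}(d)$, it then does not survive a polylogarithmic amplification or the $\log^{1/2}(d)$ factor needed for $\bwperp$.

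\textbf{The missing idea is the paper's pairing argument on top of margin concentration} (\Lemref{phase2_margin}, \Lemref{sigopp_lp}, \Lemref{wperp_lp_2}).
\begin{itemize}
\item Once every neuron has $|\wsp|\ge C(\norm{\bwsig}+\norm{\bwopp}+\norm{\bwperp}\log^{1/2}(d))$, on $\Etest$ the active set depends only on $x_3$. Hence $\gamma(\bx)\approx\bar\gamma$ on $\Xmaj$ and $\gamma(\bx)\approx-\bar\gamma$ on $\Xmin$.
\item Pair $\bz_0+\bs+\bxi$ with $-\bz_0+\bs+\bxi$ (same group, same probability, same active set). The signal gradient then sees only $\psi(-\gamma(\bx))-\psi(-\gamma(\bx'))$. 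By \Lemref{sigmoid_lipschitz} this is $\lesssim e^{-\bar\gamma}\max|a|\norm{\bw_{1:2}}$ for \emph{both} groups, because the sigmoid is equally saturated at $\pm\bar\gamma$.
\item The remaining activation-flip term is $\exp(-\Omega(\wsp^2/\norm{\bwperp}^2))$ by Hoeffding. The same argument handles $\bwperp$ via $\bxi\mapsto-\bxi$.
\item This gives multiplicative recurrences $\norm{\Delta\bwsig}\lesssim\eta e^{-\bar\gamma}\max\wsp^2\norm{\bwsig}$ plus minibatch noise of size $\eta\theta\log^{-1}(d)d^{-1/2}$, which is what forces $m\gg d\log^6(d)\theta^{-2}$.
\item Track the maximizing neuron, not a per-neuron ratio. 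Its growth factor telescopes against $\Delta(\wsp^2)\asymp\eta e^{-\bar\gamma}\wsp^2$, so $\sum_t\eta e^{-\bar\gamma}\wsp^2\lesssim\log\log(d)$.
\item The non-spurious components therefore grow by only $\mathrm{polylog}(d)$. This suffices because they start at $\theta\log^{2C}(d)d^{-1/2}$, while $|\wsp|\gtrsim\theta\log^C(d)$ throughout.
\end{itemize}
Your extended Phase Ib needs this same multiplicative structure once $|\wsp|\gg\theta\log^{1/2}(d)$. There the first-order signal gradient vanishes, and what remains is exactly this correction: use the leave-two-out bound of \Lemref{l0_lp_norm} plus Hoeffding, or a paired second-order term.

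Two smaller points. First, alignment holds for every neuron, since $|\wsp^{(0)}|\lesssim\theta\log^{1/2}(d)d^{-1/2}$ on $\Einit$. You need it for every neuron in the final sign argument, not all but a negligible set. Second, a target of $\theta\,\mathrm{polylog}(d)\,d^{o(1)}$ for $\norm{\bwsig}$ does not ensure $\norm{\bwsig}\ll|\wsp|$ when $\theta\asymp\mathrm{polylog}^{-1}(d)$. The argument above keeps the factor $d^{-1/2}$.
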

\Thmref{main} implies that the majority group accuracy tends to $1$ while the minority group accuracy tends to $0$ as $d \to \infty$, and it is proved at the end of \Secref{phase2_inductive_step}.
\begin{remark} \normalfont
    Recall that since we are running online minibatch SGD, the sample complexity is $m\cdot T$ for batch size $m$ and $T$ iterations.
    \Thmref{main} holds in two regimes demarcated by the initialization scale $\theta$.
    If $\theta\asymp\textnormal{polylog}^{-1}(d)$, we show that the spurious feature dominates at the sample complexity for learning XOR, up to $\log\log$ factors, with standard $d\cdot\textnormal{polylog}(d)$ batch size.
    On the other hand, if $\theta\asymp \textnormal{poly}^{-1}(d)$ we precisely match the $O(\log(d)\eta^{-1})$ iterations of~\cite{glasgow2024sgd}, though we require a large $\textnormal{poly}(d)$ batch size.\footnote{This is a strong result from a sample complexity perspective, as one observes $\textnormal{poly}(d)$ more data than~\cite{glasgow2024sgd}, but the XOR signal is still not learned. Our perspective is that the small initialization scale slows down XOR feature learning even more.}
\end{remark}
\begin{remark} \normalfont
    Our theoretical analysis requires the strength of the spurious correlation to decay sufficiently fast, \ie $\lambda \ll \log^{-1}(d)$.
    This corresponds to an ``extreme'' correlation case where the proportion of minority group data decays to zero.
    This setting is well-studied in both the empirical~\citep{lee2023diversify, pagliardini2023agree} and theoretical~\citep{lai2024sharp, park2026spurious} literature, and it is closely related to out-of-domain generalization~\citep{koh2021wilds}.
    Notably, the condition $\lambda\ll \log^{-1}(d)$ is only required for our Phase II analysis, while our Phase I analysis holds with constant correlation strength.
\end{remark}

Our analysis involves a precise characterization of the feature learning dynamics in three distinct phases, illustrated in \Figref{phases}.
This characterization may be of independent interest.
We briefly summarize each phase here and provide details in \Secref{proof_sketch}.
\begin{itemize}
    \item Phase Ia (\Figref{phase1a}) is a very short initial phase lasting only $\TIa\asymp \log^{1/2}(d)d^{-1/2}\eta^{-1}$ iterations.
    By the end of Phase Ia, we have sign alignment between the spurious features and second-layer weights, \ie $\sgn(a)=\sgn(\wsp)$ for all neurons $(a,\bw)$.
    \item Phase Ib (\Figref{phase1b}) lasts $\TIb\asymp \log\log(d)\eta^{-1}$ iterations, wherein sign alignment implies a geometric lower bound on spurious feature growth.
    By the end of Phase Ib, the margins on majority and minority group data concentrate in the following (informally stated) way: with high probability over $\Unif(\Xmaj)$, a datum $\bxmaj$ has \emph{positive} margin proportional to $\wsp^2$; similarly, over $\Unif(\Xmin)$, a datum $\bxmin$ has \emph{negative} margin proportional to $-\wsp^2$. On this event, we already have $\sgn(f_\rho(\bx))=x_3$, \ie total dependence on the spurious correlation.
    \item Phase II (\Figref{phase2}) lasts $\TII \asymp \frac{1}{\eta}\left( \frac{\log(d)}{\log\log(d)} + \log\left(\frac{1}{\theta}\right) \right)$ iterations, \ie most of the training period.
    Two key phenomena occur during this phase.
    First, large majority group margin leads to a sigmoidal slowdown of the growth of the spurious feature, while the condition $\lambda\ll \log^{-1}(d)$ ensures that it remains monotonically increasing.
    Second, the spurious feature becomes massive enough to exponentially suppress the growth of $\norm{\bwsig}$, inhibiting acquisition of the signal feature.
\end{itemize}

\section{Proof Sketch} \label{sec:proof_sketch}
We now sketch the proof of \Thmref{main}, treating Phase I and Phase II separately.
In this section, we let $C>0$ denote a sufficiently large constant which does not vary from line to line.

\subsection{Phase I Proof Sketch} \label{sec:proof_sketch_phase1}

In Phase I, the neural network is small, and thus the loss $\ell_\rho(\bx)$ is well-approximated by a first-order Taylor expansion about $f_\rho=0$, \ie $\ell_0(\bx)\coloneqq -2\log(\tfrac{1}{2})-y(\bx)f_\rho(\bx)$.
Let us define the $L_0$ population loss by $L_0 := \E_{\bx}[\ell_0(\bx)]$.
Importantly, this first-order approximation is only an analysis technique, and \emph{does not} represent a linearization of the model.
That is, we still train a nonlinear model with the standard $\ell_\rho$ loss, but we bound feature norms by their growth under $L_0$ and the deviation between the $L_0$ and $L_\rho$ gradients.

The goal of Phase I is to verify the conditions of \Lemref{end_of_phase1} for all neurons $(a,\bw)$: that $\sgn(a)=\sgn(\wsp)$ and the spurious feature dominates in the sense that $\frac{1}{C}\norm{\bwsp}\geq \norm{\bwsig}+\norm{\bwopp}+\norm{\bwperp}\log^{1/2}(d)$.
Once both conditions hold, each active neuron contributes a term matching $\sgn(x_3)$ to $f_\rho(\bx)$ such that $\sgn(f_\rho(\bx))=x_3$ with high probability over $\bx \sim P_d(\lambda)$ --- that is, the network predicts entirely via the spurious feature on typical test data.

\paragraph{Why the spurious feature grows exponentially fast.} Under the $L_0$ approximation, \Lemref{sp_l0} yields the population gradient $-\bwsp^\top\nabla_{\bw} L_0 = a\wsp (\tfrac{1}{2}+\frac{\epsw}{4}-\lambda)$, where $\epsw$ is an asymmetry term that is exponentially small in $\norm{\bwsp}$ (\Lemref{v_ub}) and hence negligible.
Since $\lambda < \tfrac{1}{2}$, once $\sgn(a)=\sgn(\wsp)$ occurs, \Lemref{wsp_phase1_induction} gives the clean geometric recurrence
\begin{equation*}
    \wsptplus - \wspt \asymp \eta\sgn(\azero)(|\wspt|+\theta),
\end{equation*}
where we used that $\sgn(\at)=\sgn(\azero)$ (\ie the sign of $a$ does not change) by \Lemref{phase1_norms_sgn}.
While $\sgn(\azero)$ and $\sgn(\wspzero)$ may be initially misaligned, we show that they align (and remain aligned) after at most $\TIa\asymp \log^{1/2}(d)d^{-1/2}\eta^{-1}$ iterations.
Once aligned, $\norm{\bwsp}$ grows geometrically, reaching $\norm{\bwsp}\asymp \theta\log^C(d)$ after at most an additional $\TIb\asymp \log\log(d)\eta^{-1}$ iterations.

\paragraph{Why the signal feature grows slowly, but is not yet suppressed.} Under the $L_0$ approximation, \Lemref{sigopp_l0} yields the population gradient $-\bwsig^\top\nabla_{\bw}L_0=\frac{\sqrt{2}}{4}|a|\norm{\bwsig}\P_{\bxi} (|\bw^\top\bxi+\bw^\top\be_3|\leq\sqrt{2}\norm{\bwsig})$.
Unraveling the probability term requires a delicate ``perturbed Berry-Esseen'' lemma analogous to~\cite[Lemma B.4]{glasgow2024sgd}, which we present a modified version of in \Lemref{boolean_to_gaussian_delta}.
Ultimately, we get the recurrence that is stated in \Lemref{wsig_phase1_induction}, that is,
\begin{equation*}
    \norm{\bwsigtplus - \bwsigt} \lesssim \eta\norm{\bwsigt} + \eta\theta\log^C(d)d^{-1/2}.
\end{equation*}
This has the same geometric structure as the $\bwsp$ recurrence --- both grow with multiplicative rate $1+\Theta(\eta)$ --- but the additive term for $\bwsig$ is smaller by a factor of $d^{-1/2}$.
There is no suppression of signal growth at this stage: both the spurious and signal features grow geometrically, with the spurious feature simply having a much larger additive ``head start''.
Summing this recursion over $\TI\asymp\log\log(d)\eta^{-1}$ iterations yields $\norm{\bwsig}\lesssim \theta\log^{2C}(d)d^{-1/2}$, which is only a polylogarithmic factor larger than the initialization upper bound.
The opposing component $\bwopp$ follows similarly (\Lemref{wopp_phase1_induction}).

\paragraph{Why the orthogonal components stay controlled.} It remains to track $\norm{\bwperp}$ and $\norm{\bwperp}_\infty$.
While this may seem like a formality, it is one of the more mathematically involved parts of our analysis.
For $\norm{\bwperp}$, the key observation is that the $L_0$ gradient $\nabla_{\bwperp} L_0$ is nearly parallel to $\bwperp$.
The gradient takes the form $\E_{\bxi}[\bxi\cdot h(\bwperp^\top\bxi)]$ for a certain function $h$; for Gaussian inputs $\bg$, Stein's lemma gives $\E_{\bg}[\bg\cdot h(\bwperp^\top\bg)]=\E_{\bg}[h'(\bwperp^\top\bg)]\cdot \bwperp$ which is exactly parallel to $\bwperp$.
The deviation from Gaussianity is controlled by a Lindeberg exchange argument (\Lemref{lindeberg}).
For the component parallel to $\bwperp$, the gradient involves truncated moments of the form $\E_{\bxi}[|\bw^\top\bxi|\cdot \ind(|\bw^\top\bxi|\geq k)]$.
\Lemref{berry_esseen_tail} approximates these by their Gaussian counterparts uniformly over $k$, with error proportional to $\norm{\bwperp}_\infty$.
The contributions from the signal and opposing directions nearly cancel in the Gaussian approximation, giving $\norm{\bwperptplus - \bwperpt}\lesssim \eta\theta\log^{-2C}(d)$ (\Lemref{wperp_phase1_induction}).
Controlling $\norm{\bwperp}_\infty$ requires a separate coordinate-wise analysis (\Lemref{wperp_inf_phase1_induction}), which shows $\norm{\bwperptplus - \bwperpt}_\infty\lesssim \eta\theta\log^{3C}(d)d^{-1/2}$ and keeps the \Lemref{berry_esseen_tail} approximation sharp throughout Phase I.

\paragraph{Controlling approximation errors and closing the induction.} The $L_0$ approximations above are useful only if $\norm{\nabla_{\bw} \widehat{L}_\rho - \nabla_{\bw} L_0}$ remains negligible.
While~\cite{glasgow2024sgd} handle this using a leave-one-out symmetrization, the presence of the spurious correlation requires us to take a more complex \emph{leave-two-out} symmetrization instead (\Lemref{l0_lp_norm}).
Moreover, the new approximation error for $\bwsp$ is much larger, but taking the initialization scale $\theta\ll \log^{-5C}(d)$ turns out to be sufficient to analyze the Phase I dynamics (\Lemref{phase1_l0_lp_norm}).
The minibatch fluctuations $\norm{\nabla_{\bw} \widehat{L}_\rho - \nabla_{\bw} L_\rho}$ are then controlled by Hoeffding's inequality (\Lemref{empirical_concentration_ii}) with batch size $m\gg d\log^{7C}(d)$.
All five component bounds --- on $\norm{\bwsig}$, $\norm{\bwopp}$, $\norm{\bwsp}$, $\norm{\bwperp}$, and $\norm{\bwperp}_\infty$ --- are maintained jointly on a single inductive hypothesis (\Defref{phase1_hypothesis}) over $\TI=\TIa+\TIb$ iterations.

\subsection{Phase II Proof Sketch} \label{sec:proof_sketch_phase2}
By the end of Phase I, we have $\sgn(f_\rho(\bx))=x_3$ with high probability --- meaning that the network predicts entirely via the spurious feature. %
The goal of Phase II is to show this remains true for a further $O(\log(d)\eta^{-1})$ iterations, matching the sample complexity needed to learn the XOR signal in isolation~\citep{glasgow2024sgd}.
This is nontrivial because spurious feature growth slows down in Phase II, and we must rule out the possibility that the signal feature eventually overtakes it.
The key technical challenge of Phase II is that the neural network is too large for the $L_0$ approximation of Phase I to be valid, so we must bound the original $L_\rho$ gradient by characterizing the margin on typical data.

Recall that $\psi(x)\coloneqq 1/(1+e^{-x})$ denotes the sigmoid and $\gamma(\bx)\coloneqq y(\bx)f_\rho(\bx)$ denotes the margin.
Then, we have the following decomposition, which enables us to analyze the $L_\rho$ gradient:
\begin{equation*}
    \nabla_{\bw} L_\rho = p\E_{\bx}\left[\ell^{(1)}_\rho(\bx)\cdot y(\bx)\nabla_{\bw} f_\rho(\bx) \right] = 2p\E_{\bx} \big[\psi(-\gamma(\bx)) \cdot -y(\bx)\nabla_{\bw} f_\rho(\bx) \big] = 2\E_{\bx} \big[\psi(-\gamma(\bx)) \cdot \nabla_{\bw} p\ell_0(\bx) \big].
\end{equation*}

\paragraph{Why the margins are ``equal and opposite''.} The central structural fact of Phase II is that the classification margins concentrate tightly about certain average values of $\pm \wsp^2$, where the sign is positive for the majority group and negative for the minority group (see \Figref{lambda01_margin}).
Define
\begin{equation*}
    \gamma_+\coloneqq \frac{1}{p}\sum_{(a,\bw)\in S^+} (\wsp)^2 \qquad \gamma_- \coloneqq \frac{1}{p}\sum_{(a,\bw)\in S^-} (\wsp)^2.
\end{equation*}
Then, \eg in the case where $y(\bx)=1$, \Lemref{phase2_margin} states that $\gamma(\bxmaj)\approx \gamma_+$ and $\gamma(\bxmin)\approx -\gamma_-$ with high probability over $\bxmaj\sim\Unif(\Xmaj)$ and $\bxmin\sim \Unif(\Xmin)$.
This holds because the active ReLUs are the same for all majority group and minority group points, respectively.
The key object we track in Phase II is the average of the positive and negative margins, defined by $\bar{\gamma}\coloneqq \tfrac{1}{2}(\gamma_++\gamma_-)$.

\paragraph{Why the spurious feature continues to grow, despite slowing down.} With this margin behavior in hand, we can compute $\nabla_{\bwsp} L_\rho$ directly using the decomposition above (\Lemref{sp_lp}).
After controlling error terms in \Lemref{wsp_phase2_induction}, we obtain the recurrence
\begin{equation} \label{eq:wsp_recurrence}
    \wsptplus - \wspt \asymp \eta \wspt \left(1-\lambda - \psi(\bgammat) \right).
\end{equation}
This recurrence induces a sigmoidal structure: when $\bar{\gamma}\ll 1$, we have $\psi(\bar{\gamma})\approx \tfrac{1}{2}$, so $\norm{\bwsp}$ continues to increase exponentially.
Once $\bar{\gamma}\gg 1$, we have $1-\psi(\bar{\gamma})\approx e^{-\bar{\gamma}}$, so growth decelerates dramatically as the margins become large.
We show in \Lemref{phase2_norms} that Phase II lasts $\TII\asymp \tfrac{1}{\eta}\big(\tfrac{\log(d)}{\log\log(d)}+\log(\tfrac{1}{\theta})\big)$ iterations until $\bar{\gamma}\asymp \log\log(d)$.
Crucially, monotone growth of $\bar{\gamma}$ throughout Phase II requires $e^{-\bar{\gamma}}-\lambda \geq 0$, \ie $\lambda \ll \log^{-1}(d)$.
This is the only place where we require the ``extreme'' correlation condition.

\paragraph{Why the signal feature is suppressed.} The key difference from Phase I is that signal growth is no longer merely slow --- it is actively suppressed by the growing spurious feature. 
Specifically, we show that both terms in the signal gradient bound of \Lemref{sigopp_lp} become exponentially small in $\wsp^2$: the first term via our margin-based analysis, and the second term via Hoeffding's inequality.
Combining the terms in \Lemref{sigopp_phase2_induction} yields
\begin{equation*}
    \norm{\bwsigtplus - \bwsigt} \lesssim \eta e^{-\bgammat}\Big(\max_{(\at,\bwt)}(\wspt)^2\norm{\bwsigt} \Big) + \eta\theta\log^{-1}(d)d^{-1/2}.
\end{equation*}
As $\bar{\gamma}$ grows monotonically throughout Phase II, the exponential suppression only strengthens over time, guaranteeing that $\norm{\bwsig}$ grows by only a polylogarithmic factor (\Lemref{phase2_norms_wsig}).
The same decay governs the $\norm{\bwperp}$ and $\norm{\bwperp}_\infty$ inductions (\Lemref{wperp_phase2_induction} and \Lemref{wi_phase2_induction} respectively).

\paragraph{Closing the induction.} As in Phase I, all five component bounds are maintained jointly as an inductive hypothesis (\Defref{phase2_hypothesis}) across $\TII$ iterations.
Minibatch concentration requires a larger batch size $m\gg d\log^6(d)\theta^{-2}$ to keep the component norms proportional to the initialization scale $\theta$; note that this could be $\text{poly}(d)$ if $\theta\asymp \text{poly}^{-1}(d)$.
The Phase II induction concludes that $\sgn(f_\rho(\bx))=x_3$ with high probability, such that $\text{Acc}_{\Xmaj}(f_\rho)\geq 1-d^{-C}$ and $\text{Acc}_{\Xmin}(f_\rho)\leq d^{-C}$. %

\begin{figure}[t]
    \centering
    \begin{subfigure}[b]{0.245\textwidth}
        \centering
        \includegraphics[width=\textwidth, height=0.15\textheight]{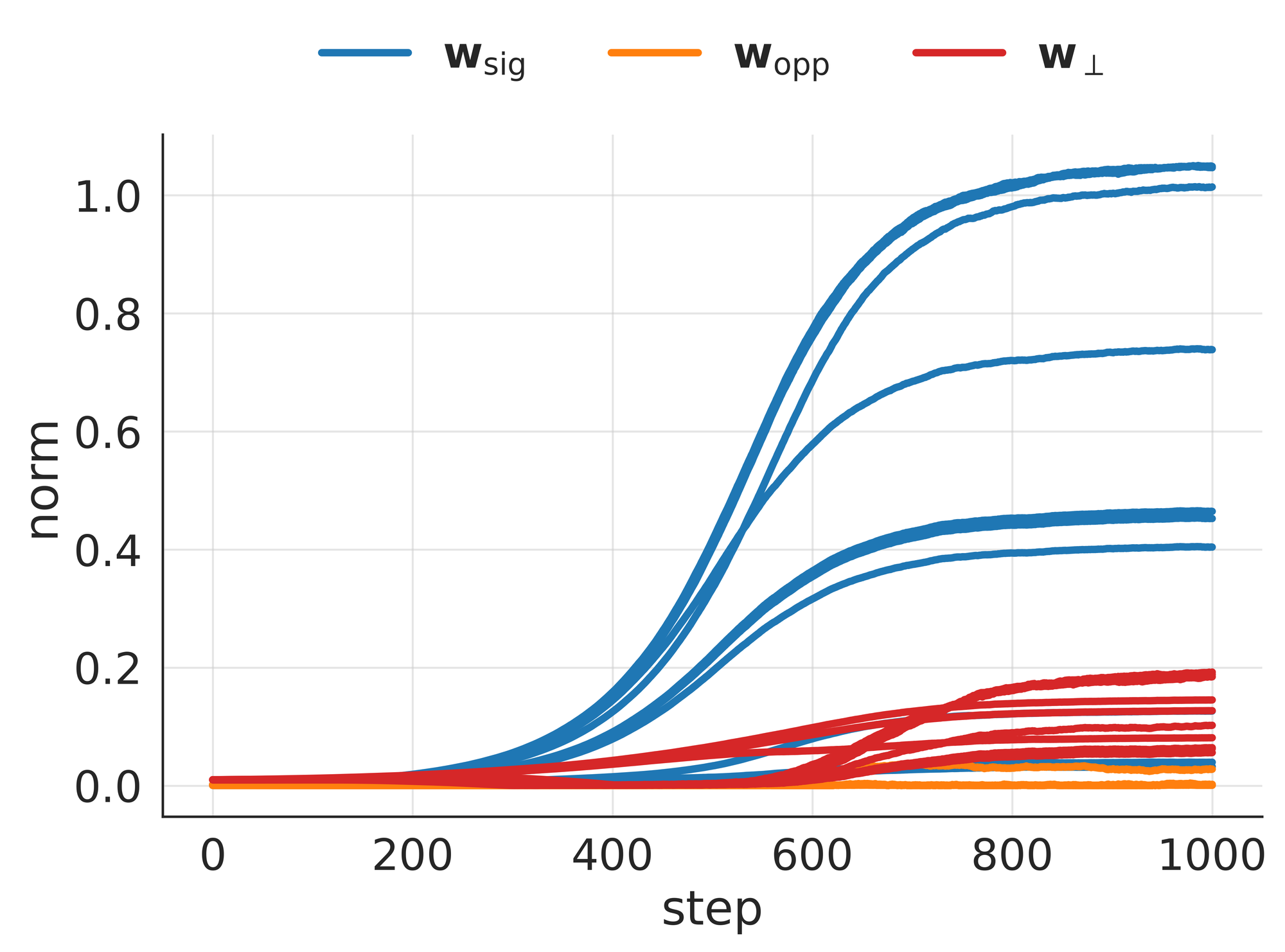}
        \caption{No spurious feature}
        \label{fig:glasgow}
    \end{subfigure}
    \begin{subfigure}[b]{0.245\textwidth}
        \centering
        \includegraphics[width=\textwidth, height=0.15\textheight]{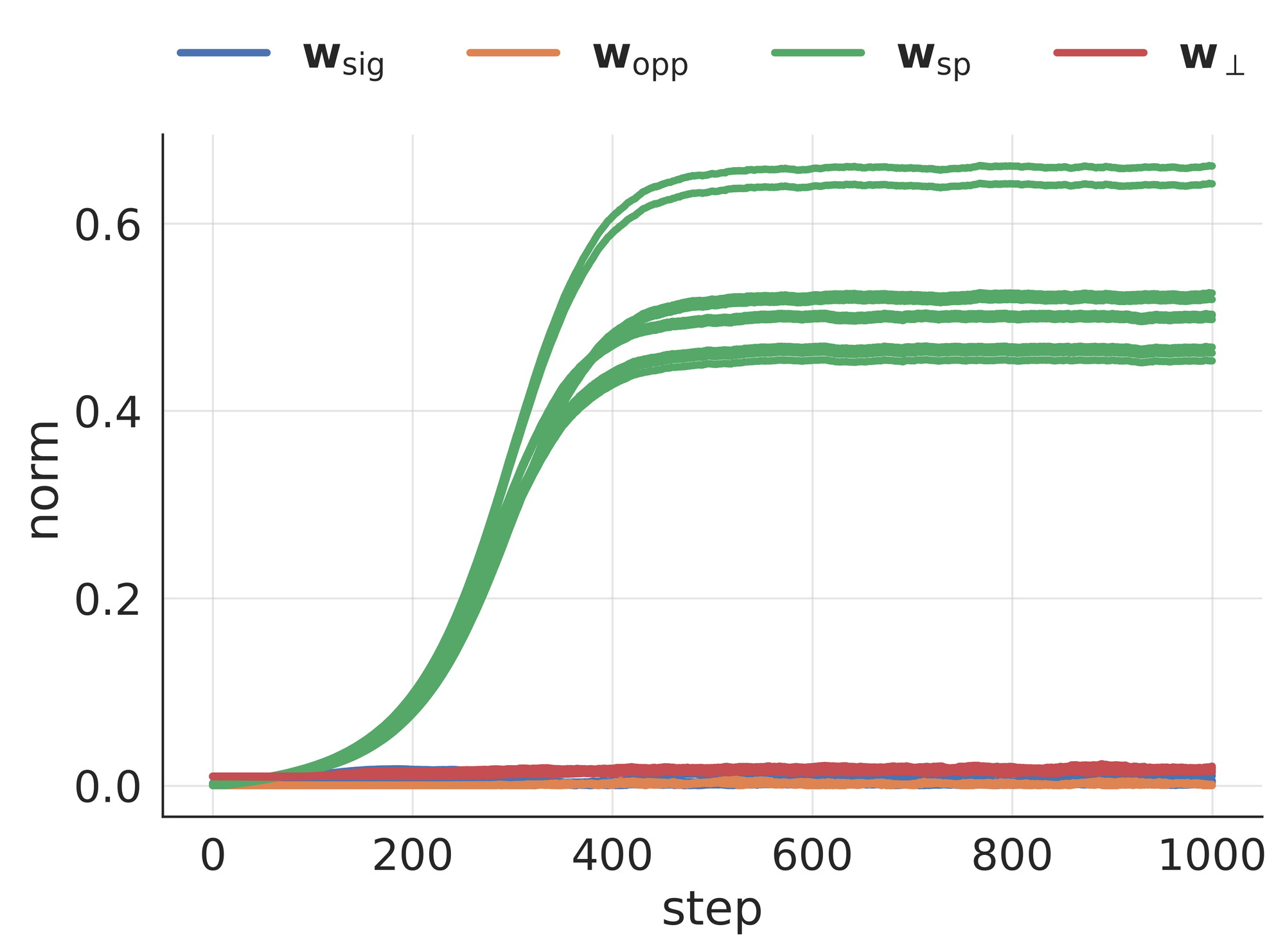}
        \caption{$\lambda=0.1$}
        \label{fig:lambda01}
    \end{subfigure}
    \begin{subfigure}[b]{0.245\textwidth}
        \centering
        \includegraphics[width=\textwidth, height=0.15\textheight]{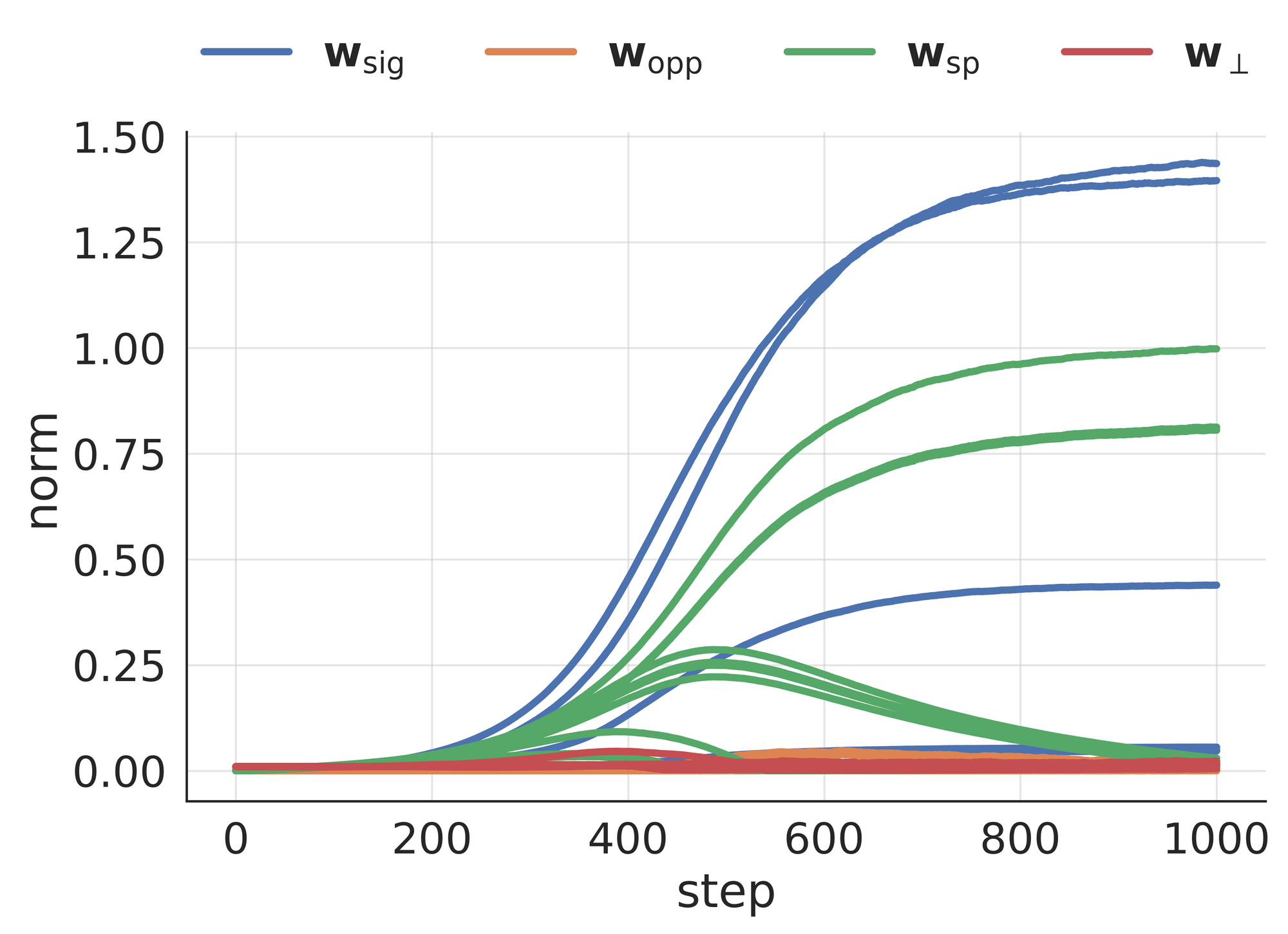}
        \caption{$\lambda=0.15$}
        \label{fig:lambda015}
    \end{subfigure}
    \begin{subfigure}[b]{0.245\textwidth}
        \centering
        \includegraphics[width=\textwidth, height=0.15\textheight]{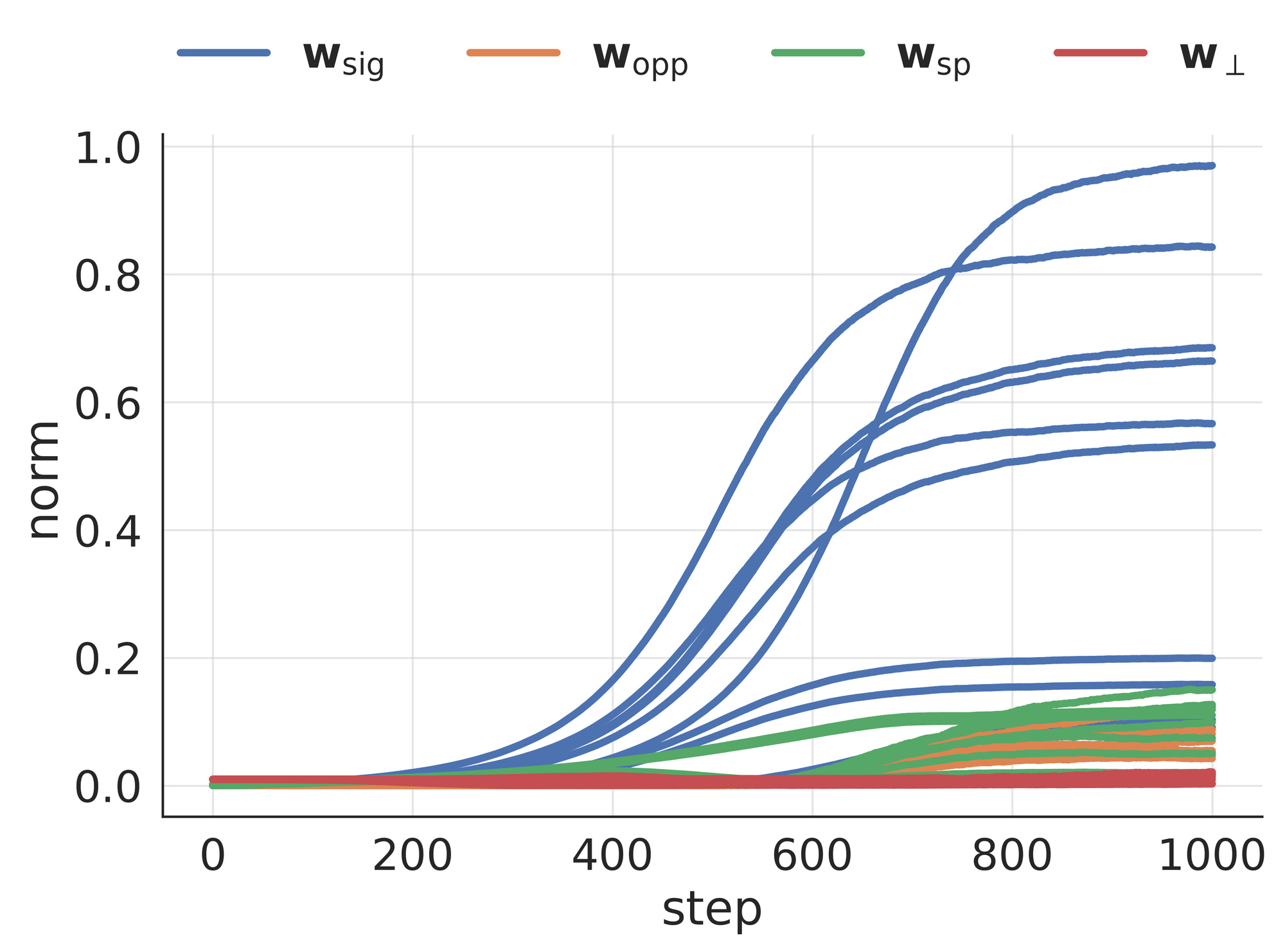}
        \caption{$\lambda=0.2$}
        \label{fig:lambda02}
    \end{subfigure}
    \caption{\textbf{XOR signal can be learned for large enough $\lambda$.} We display the results of training runs with dimension $d=100$, learning rate $\eta=0.05$, width $p=10$, initialization scale $\theta=0.01$, and batch size $m=5000$. For each of the $p=10$ neurons, we plot $\norm{\bwsig}$, $\norm{\bwopp}$, $\norm{\bwsp}$, and $\norm{\bwperp}$ (defined in \Secref{feature_learning_analysis}). We find that for large enough $\lambda$ --- corresponding to $\lambda\gg \log^{-1}(d)$ according to our theory --- the XOR signal can overtake the spurious feature. The network decomposes into disjoint signal and spurious subnetworks, with the size of the signal subnetwork increasing in $\lambda$.}
    \label{fig:lambdas}
\end{figure}

\section{The Constant $\lambda$ Case: When can the XOR Signal be Learned?}

A remaining question is what happens when the ``extreme'' correlation condition $\lambda\ll \log^{-1}(d)$ is \emph{not} satisfied, \eg the spurious correlation strength $\lambda$ is a constant.
In particular, Phase II would not satisfy the requisite properties for monotonic growth of $\norm{\bwsp}$.
Observe that the Phase II spurious feature recurrence (\Eqref{wsp_recurrence}) has a fixed point $\psi(\bar{\gamma})=1-\lambda$.
Since $\bar{\gamma}$ initially grows monotonically, if $\lambda$ were constant we would eventually reach a value which overshoots this fixed point. %
Then, the right-hand side of the recurrence would become negative, \ie $\wsptplus - \wspt \approx -\eta\wspt$, such that $\norm{\bwsp}$ would begin to decrease geometrically!
If it decreased enough, the exponential suppression of the signal feature would weaken, which may provide an opportunity for the XOR signal to be learned later in training.

In \Figref{lambdas}, we validate this hypothesis through simulation.
We contrast signal learning in the standard XOR model without a spurious feature~\citep{glasgow2024sgd} (\Figref{glasgow}) with our setting, varying the strength of the spurious correlation.
We find that the XOR signal can be learned for large enough $\lambda$, as the non-monotonic evolution of $\bwsp$ reduces signal suppression, enabling several neurons to pick up $\bwsig$ around step $500$.
\Figref{lambda015} shows that $\norm{\bwsp}$ increases geometrically --- in line with our theory --- then decreases geometrically.
An important observation about \Figref{lambda015} and \Figref{lambda02} is that the neurons decompose into disjoint \emph{subnetworks} which emphasize only one of the signal or spurious feature --- that is, the large $\norm{\bwsig}$ lines in \Figref{lambda015} correspond precisely to neurons with small $\norm{\bwsp}$, and vice versa.
Moreover, the spurious feature is not fully ``forgotten'' even for large $\lambda$.

The dynamics of the constant correlation case have a substantial impact on margin behavior and worst-group accuracy.
In \Figref{margins}, we plot the average margin over the majority and minority groups in each minibatch: while the minority group has highly negative margin at small values of $\lambda$ (implying low minority group accuracy), the picture is more benign at large values of $\lambda$ (implying high minority group accuracy).
\Figref{margins} also suggests that a theoretical analysis of this case will require a different approach --- margin concentration is the critical structural fact of our Phase II analysis, and it clearly does not hold for large $\lambda$.
Nevertheless, the ideas in our Phase II analysis could be adapted in future work together with a more fine-grained characterization of the margin.

\begin{figure}[t]
    \centering
    \begin{subfigure}[b]{0.325\textwidth}
        \centering
        \includegraphics[width=\textwidth, height=0.18\textheight]{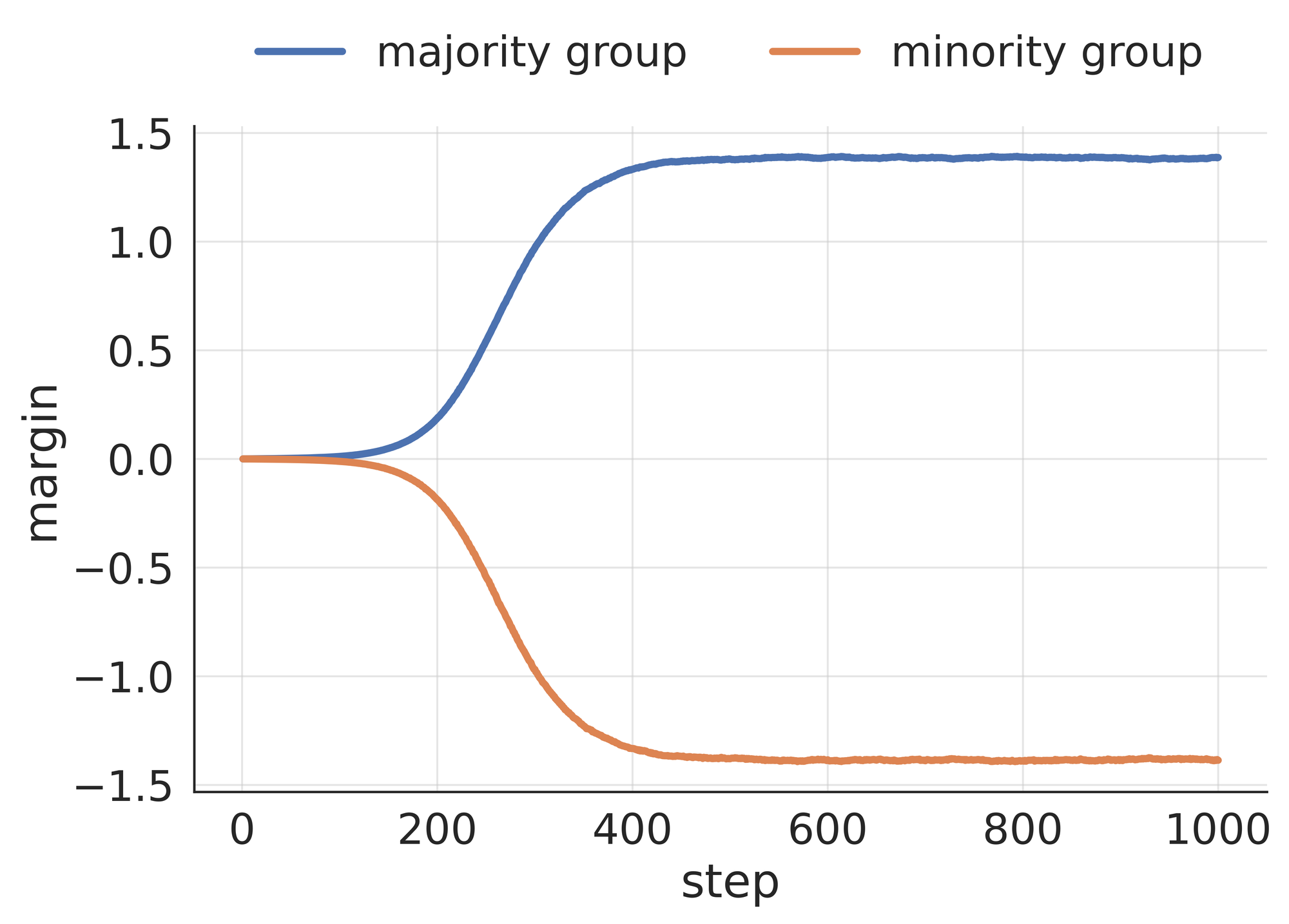}
        \caption{$\lambda=0.1$}
        \label{fig:lambda01_margin}
    \end{subfigure}
    \begin{subfigure}[b]{0.325\textwidth}
        \centering
        \includegraphics[width=\textwidth, height=0.18\textheight]{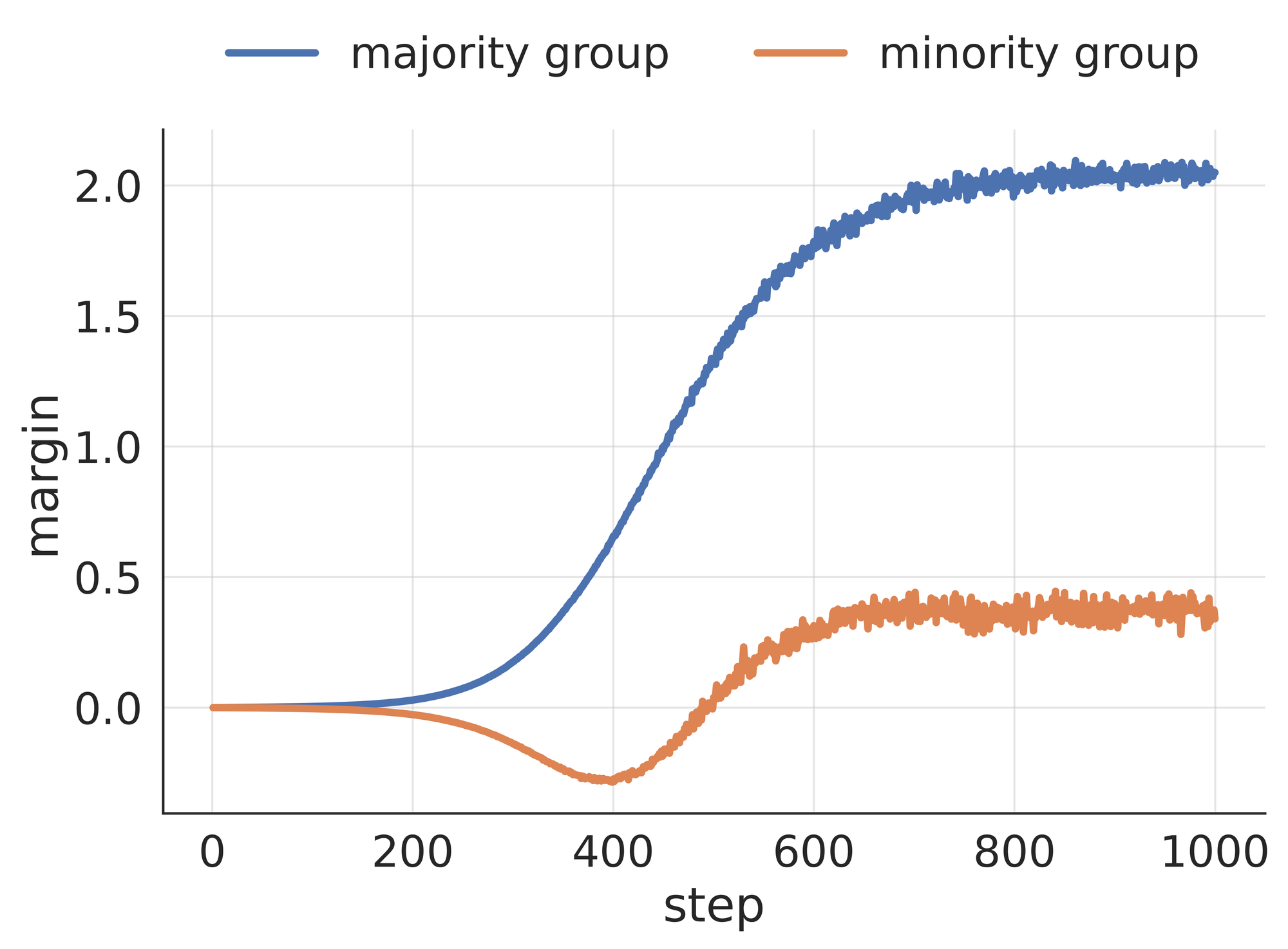}
        \caption{$\lambda=0.15$}
        \label{fig:lambda015_margin}
    \end{subfigure}
    \begin{subfigure}[b]{0.325\textwidth}
        \centering
        \includegraphics[width=\textwidth, height=0.18\textheight]{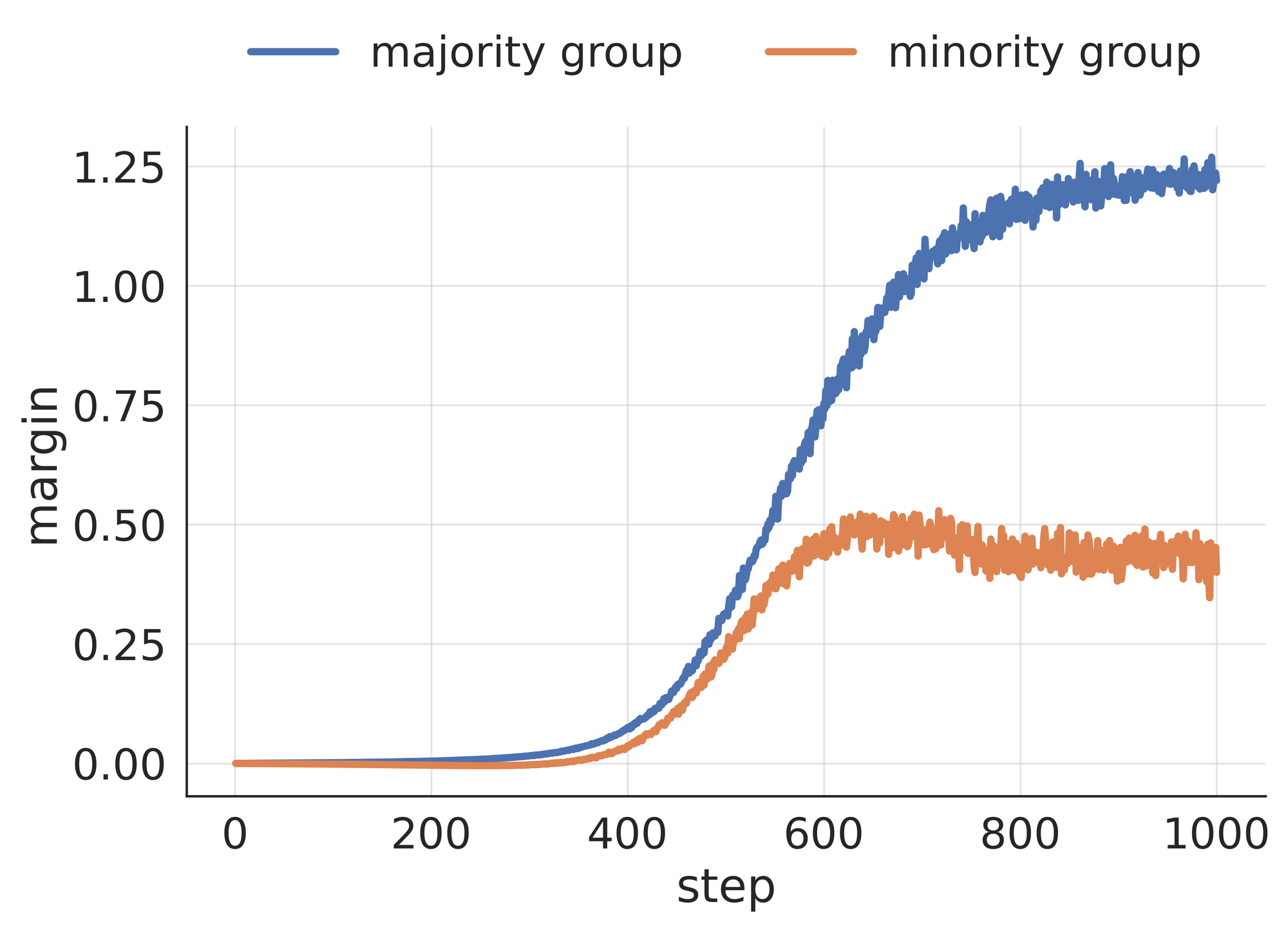}
        \caption{$\lambda=0.2$}
        \label{fig:lambda02_margin}
    \end{subfigure}
    \caption{\textbf{``Equal and opposite'' margins only hold for small $\lambda$.} We display the results of training runs with dimension $d=100$, learning rate $\eta=0.05$, width $p=50$, initialization scale $\theta=0.01$, and batch size $m=5000$.
    We plot the average per-group margin in each minibatch.
    We find that for small enough $\lambda$ --- corresponding to $\lambda \ll \log^{-1}(d)$ according to our theory --- the margins concentrate about certain averages of $\pm \wsp^2$, where the sign is positive for the majority group and negative for the minority group.
    For larger $\lambda$, the minority group margin exhibits improved behavior, eventually leading to perfect classification on both the majority and minority groups.}
    \label{fig:margins}
\end{figure}

\section{Discussion} \label{sec:discussion}
In this paper, we presented the first end-to-end theoretical characterization of spurious feature learning for two-layer ReLU neural networks trained by online minibatch SGD on the logistic loss.
Our results provide a foundation for rigorous understanding of spurious feature learning.
Of particular note, our insights formalize and/or theoretically corroborate the four main findings of~\cite{qiu2024complexity} as follows:
\begin{itemize}
    \item \emph{Easier spurious features lead to slower signal feature emergence}: We formally prove this statement, with precise rates, for a linear spurious feature and quadratic signal feature.
    \item \emph{Learning may not separate into ``spurious'' and ``signal'' phases}: Our results imply that spurious and signal learning are entangled in Phase Ia (and in the constant $\lambda$ case).
    \item \emph{Spurious features are retained after the signal feature is learned}: We show empirically that the spurious feature is preserved even when $\lambda$ is large and constant.
    \item \emph{Spurious and signal features are learned by disjoint subnetworks}: We show empirically that the model decomposes into signal and spurious subnetworks in the constant $\lambda$ case.
\end{itemize}

Overall, we believe our theory could eventually inform the design of robustness-advancing interventions.
For example, the length of our Phase I constitutes a lower bound on the number of iterations necessary before the early-stopped checkpoint of \emph{Just Train Twice}~\citep{liu2021just} is saved --- otherwise, the misclassifications may not correlate well with the minority group.
Likewise, our observation that disjoint signal and spurious subnetworks emerge in the constant-correlation case justifies the assumption of \emph{Deep Feature Reweighting}~\citep{kirichenko2023last}, and suggests that it works by identifying and upweighting the spurious subnetwork.
Finally, \emph{group/class-balancing}, or throwing away majority group/class data, has also been observed to improve minority group accuracy~\citep{idrissi2022simple, chaudhuri2023why, labonte2024group}. This mechanism increases the effective value of $\lambda$, which our Phase I theory predicts would lead to slower initial acquisition of the spurious feature.

\section*{Acknowledgements}

We thank Jacob Abernethy for compute assistance, and Surbhi Goel and Steve Mussmann for insightful conversations.
We also thank Xinchen Zhang for contributions to exploratory simulations.
VM gratefully acknowledges the support of the NSF (through award CCF-2239151 and award IIS-2212182), an Adobe Data Science Research Award, and an Amazon Research Award.

\clearpage

\printbibliography

\clearpage

\appendix

\renewcommand{\partname}{}
\renewcommand{\thepart}{}

\part*{Appendix}
\addcontentsline{toc}{part}{Appendix}

\etocsettocstyle{\section*{Table of Contents}}{}  %
\etocsettocdepth{subsection}   %

\localtableofcontents           %

\clearpage

\section{Notation}
\label{sec:notation}

We summarize the notation used throughout this paper.
Unless otherwise stated, all asymptotic
notation is with respect to $d \to \infty$.

\subsection*{General Mathematical Notation}

\begin{longtable}{p{0.28\textwidth} p{0.66\textwidth}}
\toprule
\textbf{Symbol} & \textbf{Description} \\
\midrule
\endhead
\bottomrule
\endfoot

$\mathbf{X}$, $\bx$, $x$
    & Uppercase bold: matrix; lowercase bold: vector; italic: scalar \\[3pt]

$\be_k$
    & $k$-th standard basis vector (dimension clear from context) \\[3pt]

$\bzero$
    & All-zeros vector (dimension clear from context) \\[3pt]

$\|\cdot\|$
    & Euclidean ($\ell_2$) norm of a vector \\[3pt]

$\|\cdot\|_\infty$
    & $\ell_\infty$ norm of a vector \\[3pt]

$\|\cdot\|_p$
    & $\ell_p$ norm of a vector for $1\leq p < \infty$ \\[3pt]

$\mathbb{P}_{\bx}$, $\mathbb{E}_{\bx}$
    & Probability and expectation with respect to random vector $\bx$ \\[3pt]

$\ind(\cdot)$
    & Indicator function of an event \\[3pt]

$\sgn(\cdot)$
    & Sign function \\[3pt]

$[t]$
    & $\{1, 2, \ldots, t\}$ for $t \in \mathbb{N}$ \\[3pt]

$A\Delta B$
    & Symmetric difference of sets: $A\Delta B \coloneqq (A\setminus B)\cup (B\setminus A)$ \\[3pt]

$\mathrm{Proj}_{\by}(\bx)$
    & Projection of $\bx$ onto $\by$:
      $\mathrm{Proj}_{\by}(\bx)\coloneqq \frac{\by^\top \bx}{\|\by\|^2}\,\by$ \\[3pt]

\midrule
\multicolumn{2}{l}{\textit{Asymptotic notation (all with respect to $d \to \infty$)}} \\[3pt]

$x \ll y$
    & $x = o(y)$ \\[3pt]

$x \lesssim y$
    & $x = O(y)$ \\[3pt]

$x \gg y$
    & $x = \omega(y)$ \\[3pt]

$x \gtrsim y$
    & $x = \Omega(y)$ \\[3pt]

$x \asymp y$
    & $x = \Theta(y)$ \\[3pt]

\midrule
\multicolumn{2}{l}{\textit{Probability distributions}} \\[3pt]

$\Unif(\{\pm 1\})$
    & Univariate Rademacher distribution \\[3pt]

$\Unif(\{\pm 1\}^{d})$
    & Multivariate Rademacher distribution \\[3pt]

$\mathrm{Unif}(\mathbb{S}^{d-1}(\theta))$
    & Uniform distribution on the $\ell_2$-sphere in $\mathbb{R}^d$ of radius $\theta$ \\[3pt]

$\mathcal{N}(\bmu, \mathbf{\Sigma})$
    & Multivariate Gaussian with mean $\bmu$ and covariance $\mathbf{\Sigma}$ \\[3pt]

$\mathcal{N}(\mu, \sigma^2)$
    & Univariate Gaussian with mean $\mu$ and variance $\sigma^2$ \\[3pt]

$\Phi(\cdot)$, $\phi(\cdot)$
    & Standard Gaussian CDF and PDF, respectively \\[3pt]

\bottomrule
\end{longtable}

\subsection*{Problem Setting}

\begin{longtable}{p{0.28\textwidth} p{0.66\textwidth}}
\toprule
\textbf{Symbol} & \textbf{Description} \\
\midrule
\endhead
\bottomrule
\endfoot

$d$
    & Data dimension \\[3pt]

$\bmu_1 = \be_1 - \be_2$
    & Positive signal direction for class $+1$ \\[3pt]

$\bmu_2 = \be_1 + \be_2$
    & Positive signal direction for class $-1$ \\[3pt]

$\lambda \in (0, \Lambda]$, $\Lambda<\tfrac{1}{2}$
    & Spurious correlation strength; smaller $\lambda$ indicates stronger spurious correlation; $\lambda\to \tfrac{1}{2}$ not allowed \\[3pt]

$P_d(\lambda)$
    & Data distribution parameterized by $\lambda$ (\Eqref{XOR-spurious-correlation}) \\[3pt]

$\bxi \sim \mathrm{Unif}(\bzero^3 \times \{\pm 1\}^{d-3})$
    & Noise vector, independent of $\bmu_1, \bmu_2, \be_3$ \\[3pt]

$\bz \coloneqq x_1 \be_1 + x_2 \be_2$
    & Signal component of $\bx$ \\[3pt]

$\bs \coloneqq x_3 \be_3$
    & Spurious component of $\bx$ \\[3pt]

$\bx \coloneqq \bz + \bs + \bxi$
    & Shorthand decomposition of a data point \\[3pt]

$y(\bx) \coloneqq -x_1 x_2$
    & XOR target function (depends only on $\bz$) \\[3pt]

$\Xmaj$
    & Majority group: $\{\bx \in \{\pm 1\}^d : y(\bx) = x_3\}$ \\[3pt]

$\Xmin$
    & Minority group: $\{\bx \in \{\pm 1\}^d : y(\bx) = -x_3\}$ \\[3pt]

$f^\star(\bx)$
    & Optimal (max-margin) neural network classifier \\[3pt]

$f^{\mathrm{sp}}(\bx) \coloneqq x_3$
    & Spurious predictor; achieves $1-\lambda$ average accuracy and $0$ worst-group accuracy \\[3pt]

\bottomrule
\end{longtable}

\subsection*{Neural Network and Training}

\begin{longtable}{p{0.28\textwidth} p{0.66\textwidth}}
\toprule
\textbf{Symbol} & \textbf{Description} \\
\midrule
\endhead
\bottomrule
\endfoot

$p$
    & Number of neurons (network width) \\[3pt]

$(a_j, \bw_j)$
    & $j$-th neuron: scalar output weight $a_j \in \mathbb{R}$, vector input weight $\bw_j \in \mathbb{R}^d$; we often drop the $j$ indexing \\[3pt]

$\rho$
    & Empirical distribution over neurons $\{(a_j, \bw_j)\}_{j=1}^p$;
      written $\rho^{(t)}$ at iteration $t$ \\[3pt]

$f_\rho(\bx)$
    & Network output:
      $\mathbb{E}_{(a,\bw)\sim\rho}[a\,\sigma(\bw^\top \bx)]
       = \frac{1}{p}\sum_{j=1}^p a_j\,\sigma(\bw_j^\top \bx)$ \\[3pt]

$\sigma(\alpha) \coloneqq \max(0,\alpha)$
    & ReLU activation function \\[3pt]

$\eta$
    & SGD step size/learning rate \\[3pt]

$m$
    & Minibatch size \\[3pt]

$r \sim \Unif(\{\pm 1\})$
    & Rademacher variable used only for initializations \\[3pt]

$\theta$
    & Initialization scale: $\|\bw^{(0)}\| = \theta$ and $|a^{(0)}| = \theta$ \\[3pt]

$M^{(t)} \sim P^m_d(\lambda)$
    & $t$-th minibatch of size $m$ \\[3pt]

$\psi(u) \coloneqq 1/(1+e^{-u})$
    & Sigmoid function \\[3pt]

$\gamma(\bx) \coloneqq y(\bx)\,f_\rho(\bx)$
    & Classification margin of datum $\bx$ \\[3pt]

$\gammaplust$
    & Margin on the positive neurons at iteration $t$ \\[3pt]

$\gammaminust$
& Margin on the negative neurons at iteration $t$ \\[3pt]

$\bgammat$
& Average margin: $\tfrac{1}{2}(\gammaplust + \gammaminust)$ \\[3pt]

$h(\gamma) \coloneqq -2\log(\psi(\gamma))$
    & Logistic loss as a function of margin \\[3pt]

$\ell_\rho(\bx) \coloneqq h(\gamma(\bx))$
    & Per-sample logistic loss \\[3pt]

$\ell_\rho^{(1)}(\bx) \coloneqq h'(\gamma(\bx))$
    & First derivative of the per-sample loss \\[3pt]

$L_\rho \coloneqq \mathbb{E}_{\bx}[\ell_\rho(\bx)]$
    & Population logistic loss \\[3pt]

$\widehat{L}_{\rhot}$
    & Empirical loss on the $t$-th minibatch $M^{(t)}$ \\[3pt]

$\nabla_{\bw} L$
    & $p$-scaled gradient of $L$ with respect to $\bw$:
      $\nabla_{\bw} L\coloneqq p\cdot\tfrac{\partial L}{\partial \bw}$ \\[3pt]

$\partial_u L$
    & $p$-scaled partial derivative of $L$ with respect to scalar $u \in \{a, w_i\}$:
      $\partial_u L \coloneqq p \cdot \tfrac{\partial L}{\partial u}$ \\[3pt]

\bottomrule
\end{longtable}

\subsection*{Weight Decomposition}

Each neuron weight vector $\bw$ is decomposed into four orthogonal components:
$\bw =\bwsig + \bwopp + \bwsp + \bwperp$,
where $\bw_{1:2} := \bwsig + \bwopp$.

\begin{longtable}{p{0.28\textwidth} p{0.66\textwidth}}
\toprule
\textbf{Symbol} & \textbf{Description} \\
\midrule
\endhead
\bottomrule
\endfoot

$\bwsig$
    & Core (signal) feature component, aligned with the XOR-correct
      classification direction; $\bmu_1 \parallel \bwsig$
      for $S^+$ neurons and $\bmu_2 \parallel \bwsig$
      for $S^-$ neurons \\[3pt]

$\bwopp$
    & Opposing feature component, aligned opposite to the signal direction \\[3pt]

$\bwsp \coloneqq w_3\be_3$
    & Spurious feature component (vector) \\[3pt]

$\wsp \coloneqq w_3$
    & Spurious feature component (scalar); projection onto $\be_3$ \\[3pt]

$\bwperp$
    & Orthogonal noise component; perpendicular to
      $\mathrm{span}\{\bmu_1, \bmu_2, \be_3\}$ \\[3pt]

$\bw_{1:2} \coloneqq \bwsig + \bwopp$
    & Combined projection onto $\mathrm{span}\{\bmu_1, \bmu_2\}$ \\[3pt]

$S^+$
    & Positive neurons: $S^+ \coloneqq \{(a, \bw) : \sgn(a) = +1\}$ \\[3pt]

$S^-$
    & Negative neurons: $S^- \coloneqq \{(a, \bw) : \sgn(a) = -1\}$ \\[3pt]

$\bsmaj$
    & Spurious direction aligned with the majority group:
      $\bsmaj = \be_3$ if $(a,\bw) \in S^+$,
      and $\bsmaj = -\be_3$ if $(a,\bw) \in S^-$ \\[3pt]

\bottomrule
\end{longtable}

\subsection*{Analysis-Specific Notation}

\begin{longtable}{p{0.28\textwidth} p{0.66\textwidth}}
\toprule
\textbf{Symbol} & \textbf{Description} \\
\midrule
\endhead
\bottomrule
\endfoot

$C> 0$
    & Positive constant chosen large enough, which does not vary from line to line \\[3pt]

$c<1, C'<C-3$
    & Other constants, which do not vary from line to line \\[3pt]

$\ell_0(\bx)$
    & First-order Taylor approximation of $\ell_\rho(\bx)$ about $f_\rho = 0$; $\ell_0(\bx)=-2\log(\frac{1}{2})-y(\bx)f_\rho(\bx)$ \\[3pt]

$L_0 \coloneqq \E_{\bx}[\ell_0(\bx)]$
    & Population loss under the $\ell_0$ approximation \\[3pt]

$\ell_0^{(1)}(\bx) = -1$
    & Shorthand for the (constant) first derivative of $\ell_0$ \\[3pt]

$\epsw$
    & Asymmetry term in the spurious feature gradient (\Lemref{sp_l0});
      decays exponentially in $\|\bwsp\|$ \\[3pt]

$\delta$
    & Error tolerance on the margin \\[3pt]

\midrule
\multicolumn{2}{l}{\textit{High-probability events}} \\[3pt]

$\Einitj$
    & The initialization event for the $j$-th neuron \\[3pt]

$\Einit \coloneqq \bigcap_{j=1}^p \Einitj$
    & The initialization event over all neurons ($p$ clear from context) \\[3pt]

$\Ebatcht$
    & The concentration event for the $t$-th minibatch \\[3pt]

$\Ebatch\coloneqq \bigcap_{t=1}^T \Ebatcht$
    & The concentration event over all minibatches ($T$ clear from context) \\[3pt]

$\Etrain \coloneqq \Einit \cap \Ebatch$
    & The train event over all neuron initializations and minibatches \\[3pt]

$\Etestj$
    & The concentration event for the $j$-th neuron \\[3pt]

$\Etest \coloneqq \bigcap_{j=1}^p \Etestj$
    & The test event over all neurons ($p$ clear from context) \\[3pt]

\midrule
\multicolumn{2}{l}{\textit{Training phases}} \\[3pt]

$\TI = \TIa + \TIb$
    & Total number of iterations in Phase~I \\[3pt]

$\TIa \asymp \log^{1/2}(d)d^{-1/2}\eta^{-1}$
    & Duration of Phase~Ia (sign alignment of $\wsp$ and $a$) \\[3pt]

$\TIb \asymp \log\log(d)\,\eta^{-1}$
    & Duration of Phase~Ib (growth until $\wsp$ dominates) \\[3pt]

$\scriptstyle \TII\asymp \tfrac{1}{\eta}\big(\tfrac{\log(d)}{\log\log(d)}+\log(\tfrac{1}{\theta})\big)$
    & Total number of iterations in Phase~II (until $\bar{\gamma}\asymp \log\log(d)$) \\[3pt]

$G_{t_1 \to t_2}$
    & Growth factor from iteration $t_1$ to $t_2$ \\[3pt]

\midrule
\multicolumn{2}{l}{\textit{Technical lemma notation}} \\[3pt]

$B \coloneqq \{i : |\Delta_i| \geq \theta d^{-1/2}\}$
    & ``Bad'' index set for Berry-Esseen approximation arguments \\[3pt]

$\bv_B$, $\bv_{\setminus B}$
    & Restrictions of vector $\mathbf{v}$ to indices in $B$ and not in $B$, respectively \\[3pt]

$\Delta(\bv, k)$
    & Truncated moment approximation error (\Lemref{berry_esseen_tail}) \\[3pt]

$\Psi(k)$
    & $\E_{G \sim \mathcal{N}(0,\|\bwperp\|^2)}[|G|\,\ind(|G| \geq k)]$ \\[3pt]

$\bx_{\setminus i}$
    & $\bx - x_i \be_i$ (vector with $i$-th coordinate zeroed) \\[3pt]

$\bx_{1:2}$, $\bx_{3:}$
    & Shorthands: $\bx_{1:2} = (x_1, x_2, \bzero)$
      and $\bx_{3:} = (0, 0, x_3, \bxi)$ \\[3pt]

\bottomrule
\end{longtable}

\clearpage

\section{\texorpdfstring{$L_0$}{L0} Analysis} \label{sec:l0_analysis}
In this section, we provide analysis when the neural network is small and hence $\ell_\rho$ is well-approximated by a first-order Taylor expansion about $f_\rho=0$, \ie
\begin{equation*}
    \ell_0(\bx) \coloneqq -2\log(\tfrac{1}{2})-y(\bx)f_\rho(\bx).
\end{equation*}
In~\Secref{l0_computation}, we derive the $L_0$ population partial derivatives for any neuron $(a,\bw)$.
Then, in~\Secref{error_term_bound}, we provide a bound for a certain asymmetry term in the $\bwsp$ partial derivative.

\subsection{\texorpdfstring{$\nabla L_0$}{L0 Gradient} Computation} \label{sec:l0_computation}

Let us begin with computation of the $L_0$ population partial derivatives (c.f.~Lemma C.4 of~\cite{glasgow2024sgd}).
Recall we define the positive neurons by $S^+\coloneqq \{(a,\bw):\sgn(a)=1\}$ and the ``negative neurons'' by $S^-\coloneqq \{(a,\bw):\sgn(a)=-1\}$.
Moreover, for any of the $p$ neurons $(a,\bw)$, we have
\begin{equation*}
    \nabla_{\bw} L_0 \coloneqq \E_{\bx} \left[\frac{\partial}{\partial\bw}p\ell_0(\bx)\right],
\end{equation*}
and that $\bmu_1 \parallel \bwsig$ if $(a,\bw)\in S^+$ while $\bmu_2 \parallel \bwsig$ if $(a,\bw)\in S^-$.
\begin{lemma}\label{lem:sigopp_l0}
    For any neuron $(a,\bw)$, we have
    \begin{align*}
        -\bwsig^\top\nabla_{\bw} L_0 &= \frac{\sqrt{2}}{4}|a|\norm{\bwsig}\P_{\bxi} \left(|\bw^\top\bxi+\bw^\top\be_3|\leq\sqrt{2}\norm{\bwsig}\right) \\
        -\bwopp^\top\nabla_{\bw} L_0 &= -\frac{\sqrt{2}}{4}|a|\norm{\bwopp}\P_{\bxi} \left(|\bw^\top\bxi+\bw^\top\be_3|\leq\sqrt{2}\norm{\bwopp} \right).
    \end{align*}
\end{lemma}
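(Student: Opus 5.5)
The plan is to compute $\nabla_{\bw} L_0$ in closed form and then project it onto $\bwsig$ and $\bwopp$. Since $\ell_0(\bx)=-2\log(\tfrac12)-y(\bx)f_\rho(\bx)$ and $\nabla_{\bw}$ carries the factor $p$, only the neuron's own term in $f_\rho$ survives. Using the convention $\sigma'(0)=0$, this gives
\begin{equation*}
    -\nabla_{\bw}L_0=a\,\E_{\bx}\big[y(\bx)\ind(\bw^\top\bx>0)\,\bx\big].
\end{equation*}
I first treat $a\ge 0$, where $\bwsig=\tfrac12\bmu_1\bmu_1^\top\bw$. Set $\alpha\coloneqq\bmu_1^\top\bw$, so that $\norm{\bwsig}=|\alpha|/\sqrt2$ and $\bwsig^\top\bx=\tfrac12\alpha\,\bmu_1^\top\bz$. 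The key observation is that $\bmu_1^\top\bz=0$ whenever $\bz=\pm\bmu_2$. Hence only the branches $\bz=\pm\bmu_1$ of \Eqref{XOR-spurious-correlation} contribute. On those branches $y=-x_1x_2=1$ and $\bwsig^\top\bx=\pm\alpha$.

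Next I write $u\coloneqq w_3x_3+\bw^\top\bxi$. On $\bz=\pm\bmu_1$ we have $\bw^\top\bx=\pm\alpha+u$. The data distribution assigns the same probability $q(x_3)$ to $(\bz,x_3)=(\bmu_1,x_3)$ and to $(-\bmu_1,x_3)$: it is $\tfrac14-\tfrac\lambda4$ for $x_3=1$ and $\tfrac\lambda4$ for $x_3=-1$. I therefore pair the two signs of $\bz$ for each fixed $x_3$, which gives
\begin{equation*}
    -\bwsig^\top\nabla_{\bw}L_0 = a\,\alpha\sum_{x_3=\pm1}q(x_3)\Big[\P_{\bxi}(u>-\alpha)-\P_{\bxi}(u>\alpha)\Big]
    = a|\alpha|\sum_{x_3=\pm1}q(x_3)\,\P_{\bxi}\big(u\in(-|\alpha|,|\alpha|]\big).
\end{equation*}
This pairing is the crux, since it converts a difference of tail probabilities into the probability of a window.

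The remaining step is to remove the dependence on $x_3$, so that the weights $q(1)+q(-1)=\tfrac14$ collapse. Since $\bxi$ is symmetric, the conditional law of $u$ given $x_3=-1$, namely that of $-w_3+\bw^\top\bxi$, equals the law of $-(w_3+\bw^\top\bxi)$. The window event $\{|u|\le|\alpha|\}$ is symmetric under $u\mapsto -u$, so both values of $x_3$ give $\P_{\bxi}(|\bw^\top\bxi+\bw^\top\be_3|\le\sqrt2\norm{\bwsig})$. Combining $|\alpha|=\sqrt2\norm{\bwsig}$ with the total weight $\tfrac14$ then yields the factor $\tfrac{\sqrt2}{4}|a|\norm{\bwsig}$.

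The main obstacle I expect is the boundary atoms of $u$, since $\bw^\top\bxi$ has a lattice distribution. The half-open window $(-|\alpha|,|\alpha|]$ becomes $[-|\alpha|,|\alpha|)$ after the reflection, so the two $x_3$ branches do not literally match the closed event in the statement. I would resolve this by fixing the subgradient convention so that the boundary contributions $\P(u=\pm|\alpha|)$ are split symmetrically, or simply by noting that such atoms are null for generic $\bw$ (e.g.\ almost surely at initialization). I would state the identity under that convention.

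For $\bwopp$ with $a\ge0$, I run the same computation with $\bwopp=\tfrac12\bmu_2\bmu_2^\top\bw$. Now only the branches $\bz=\pm\bmu_2$ contribute, and there $y=-1$; the branch probabilities are again $\tfrac14-\tfrac\lambda4$ and $\tfrac\lambda4$ split by $x_3$. This produces the minus sign and $\norm{\bwopp}$. For $a<0$, the roles of $\bmu_1$ and $\bmu_2$ swap in the definitions of $\bwsig$ and $\bwopp$. The labels on the relevant branches flip as well, and this flip cancels against $\sgn(a)=-1$. The prefactor is therefore $|a|$ in every case, which matches the statement.
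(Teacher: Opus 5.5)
Your proposal is correct and follows essentially the same route as the paper: restrict to the branches $\bz\parallel\bwsig$, and pair $\pm\bz_0$ at fixed $x_3$ so that the difference of $\sigma'$ terms becomes a window probability. Then use the symmetry of $\bxi$ to merge the two $x_3$ branches into one probability with total weight $\tfrac14$; the $\bwopp$ and $a<0$ cases are the same sign bookkeeping as in the paper.

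The boundary-atom issue you raise is genuine, and the paper's proof silently shares it: its casework on $\sigma'$ also produces half-open windows, which it then writes with $\leq$. However, no choice of $\sigma'(0)$ yields the closed window exactly. For instance, a symmetric split gives $\P(|V|<k)+\tfrac12\P(|V|=k)$, with $V=\bw^\top\bxi+\bw^\top\be_3$ and $k=\sqrt{2}\norm{\bwsig}$. So the identity as stated holds only when $\P(|V|=k)=0$, which is to say only through your genericity argument and not through a convention.
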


\begin{proof}
    For the $\bwsig$ component, we have by definition of $L_0$ and $\ell_0$ that
    \begin{align*}
        -\bwsig^\top\nabla_{\bw} L_0 &= -\bwsig^\top\E_{\bx} \left[\frac{\partial}{\partial\bw}p\ell_0(\bx)\right] \\
        &= \bwsig^\top \E_{\bx}\left[ay(\bx)\sigma'(\bw^\top\bx)\bx \right].
    \end{align*}
    where $y(\bx) \coloneqq -x_1 x_2$ is the XOR function.
    Recall that $\bz := x_1 \be_1 + x_2 \be_2 \in \{\pm \bmu_1, \pm \bmu_2\}$.
    Then, since $y(\bx)=y(\bz)$ for any $\bz\in\{\pm\bmu_1,\pm\bmu_2\}$ and $\bwsig\perp \bx-\bz$, we have
    \begin{equation} \label{eq:grad_decomp}
        -\bwsig^\top\nabla_{\bw} L_0 = \E_{\bx, \bz}\left[ay(\bz)\sigma'(\bw^\top\bx)\bwsig^\top \bz \right].
    \end{equation}
    Depending on $\sgn(a)$, we either have ($\bmu_1\parallel\bwsig$ and $\bmu_2\parallel\bwopp$) or ($ \bmu_1\parallel\bwopp$ and $\bmu_2\parallel \bwsig$).
    Moreover, by definition $\bmu_1 \perp \bmu_2$.
    Hence, $\bwsig^\top \bz = 0$ unless $\bz = \pm \bmu_1$ and $\bmu_1 \parallel \bwsig$ or $\bz = \pm \bmu_2$ and $\bmu_2 \parallel \bwsig$ --- in either case, $\bwsig^\top \bz \neq 0$ only if $\bz \parallel \bwsig$.
    This gives us
    \begin{equation*}
        -\bwsig^\top\nabla_{\bw} L_0 = \frac{1}{2}\E_{\substack{\bx, \bz \\\bz\parallel\bwsig}}\left[ay(\bz)\sigma'(\bw^\top\bx)\bwsig^\top \bz\right].
    \end{equation*}
    Now, recall that we defined $\bx \coloneqq \bz + \bs + \bxi$ and that $\bxi$ is independent of $\bz$ and $\bs$.
    Therefore, we can take an iterated expectation over the pair $(\bz,\bs)$ and then over $\bxi$.
    Since we are conditioning on the case where $\bz \parallel \bwsig$, the only cases we have to consider are $\bz = \pm \bmu_1$ for a neuron where $\bwsig \parallel \bmu_1$, and $\bz = \pm \bmu_2$ for a neuron where $\bwsig \parallel \bmu_2$.
    Define $\bsmaj \coloneqq \be_3$ if $(a,\bw)\in S^+$ and $\bsmaj \coloneqq - \be_3$ if $(a,\bw)\in S^-$.
    Then, by Equation~\eqref{eq:XOR-spurious-correlation}, it is easy to see that, conditioned on $\bz \parallel \bwsig$,
    \begin{equation*}
        (\bz, \bs) = \begin{cases}
            (\bz_0, \bsmaj) \text{ w.p. } (1 - \lambda)/2 \\
            (\bz_0, - \bsmaj) \text{ w.p. } \lambda/2 \\
            (- \bz_0, \bsmaj)\text{ w.p. } (1 - \lambda)/2 \\
            (- \bz_0, - \bsmaj) \text{ w.p. } \lambda/2.
        \end{cases}
    \end{equation*}
    In the above, we defined $\bz_0 \in \{\bmu_1,\bmu_2\}$ depending on which case the neuron lies in.
    Substituting this into our expression gives us
    \begin{align*}
        -\bwsig^\top\nabla_{\bw} L_0 &= \frac{1}{2}a\bigg( \\
        &\qquad \left(\frac{1}{2}-\frac{\lambda}{2}\right) \cdot \E_{\bxi} \left[y(\bz_0)\cdot \bwsig^\top\bz_0 \cdot \sigma'\left(\bwsig^\top\bz_0 +\bw^\top\bsmaj+\bw^\top\bxi \right) \right] \\
        &\qquad +\frac{\lambda}{2} \cdot \E_{\bxi}\left[y(\bz_0)\cdot \bwsig^\top\bz_0 \cdot \sigma'\left(\bwsig^\top\bz_0+\bw^\top(-\bsmaj)+\bw^\top\bxi\right)\right] \\
        &\qquad +\left(\frac{1}{2}-\frac{\lambda}{2}\right) \cdot \E_{\bxi}\left[y(-\bz_0)\cdot \bwsig^\top(-\bz_0) \cdot \sigma'\left(\bwsig^\top(-\bz_0)+\bw^\top\bsmaj+\bw^\top\bxi\right) \right] \\
        &\qquad +\frac{\lambda}{2} \cdot\E_{\bxi}\left[y(-\bz_0)\cdot \bwsig^\top(-\bz_0) \cdot \sigma'\left(\bwsig^\top(-\bz_0)+\bw^\top(-\bsmaj)+\bw^\top\bxi \right)\right]\bigg).
    \end{align*}
    Since $y(\bz_0)=y(-\bz_0)$, we can take $y(\bz_0)$ out as a common factor. Moreover, we have $ay(\bz_0)>0$ as $\bz_0\parallel\bwsig$.
    This yields
    \begin{align*}
        -\bwsig^\top\nabla_{\bw} L_0 &= \frac{1}{4}|a|\cdot \bwsig^\top\bz_0 \cdot \Big( \\
        &\qquad(1-\lambda) \cdot \E_{\bxi} \left[\sigma'\left(\bwsig^\top\bz_0+\bw^\top\bsmaj+\bw^\top\bxi\right)\right] +\lambda \cdot \E_{\bxi}\left[\sigma'\left(\bwsig^\top\bz_0-\bw^\top\bsmaj+\bw^\top\bxi \right) \right] \\
        &\qquad-(1-\lambda) \cdot \E_{\bxi}\left[\sigma'\left(-\bwsig^\top\bz_0+\bw^\top\bsmaj+\bw^\top\bxi \right) \right] -\lambda \cdot  \E_{\bxi}\left[\sigma'\left(-\bwsig^\top\bz_0-\bw^\top\bsmaj+\bw^\top\bxi \right) \right] \Big).
    \end{align*}
    Collecting the terms that are multiplied by $\lambda$, we have
    \begin{align*}
        -\bwsig^\top\nabla_{\bw} L_0 &= \frac{1}{4}|a|\cdot \bwsig^\top\bz_0 \cdot \Big( \\
        &\qquad \E_{\bxi} \left[\sigma'\left(\bwsig^\top\bz_0+\bw^\top\bsmaj+\bw^\top\bxi\right)-\sigma'\left(-\bwsig^\top\bz_0+\bw^\top\bsmaj+\bw^\top\bxi \right) \right] \\
        &\qquad+\lambda\cdot \E_{\bxi}\left[\sigma'\left(-\bwsig^\top\bz_0+\bw^\top\bsmaj+\bw^\top\bxi\right)-\sigma'\left(\bwsig^\top\bz_0+\bw^\top\bsmaj+\bw^\top\bxi \right) \right] \\
        &\qquad+\lambda\cdot \E_{\bxi}\left[\sigma'\left(\bwsig^\top\bz_0-\bw^\top\bsmaj+\bw^\top\bxi\right)-\sigma'\left(-\bwsig^\top\bz_0-\bw^\top\bsmaj+\bw^\top\bxi\right) \right] \Big).
    \end{align*}
    Performing casework on $\sigma'$ gives
    \begin{align*}
        -\bwsig^\top\nabla_{\bw} L_0 &= \frac{1}{4}|a|\cdot \bwsig^\top\bz_0 \cdot \Big( \\
        &\qquad(1-\lambda)\cdot \P_{\bxi} \left(|\bw^\top\bxi+\bw^\top\bsmaj|\leq|\bwsig^\top\bz_0|\right) \cdot \sgn(\bwsig^\top\bz_0)\\
        &\qquad+\lambda\cdot \P_{\bxi}\left(|\bw^\top\bxi-\bw^\top\bsmaj|\leq|\bwsig^\top\bz_0|\right) \cdot \sgn(\bwsig^\top\bz_0) \Big).
    \end{align*}
    Using $\sgn(\bwsig^\top\bz_0)\bwsig^\top\bz_0=|\bwsig^\top\bz_0|$, then $|\bwsig^\top\bz_0|= \norm{\bz_0}\norm{\bwsig} = \sqrt{2} \norm{\bwsig}$ for $\bz_0\parallel\bwsig$ and $\bz_0 \in \{\bmu_1,\bmu_2\}$, we obtain
    \begin{align*}
        -\bwsig^\top\nabla_{\bw} L_0 &= \frac{\sqrt{2}}{4}|a|\norm{\bwsig}\Big( \\
        &\qquad(1-\lambda)\cdot \P_{\bxi} \left(|\bw^\top\bxi+\bw^\top\bsmaj|\leq\sqrt{2}\norm{\bwsig} \right) \\
        &\qquad+\lambda \cdot \P_{\bxi}\left(|\bw^\top\bxi-\bw^\top\bsmaj|\leq\sqrt{2}\norm{\bwsig} \right) \Big).
    \end{align*}
    
    Note that if $\lambda=\tfrac{1}{2}$ we already recover the result of~\cite[Lemma C.4]{glasgow2024sgd}.
    Moreover, since $\bxi$ is symmetric about $0$, we have
    \begin{equation*}
        \P_{\bxi}\left(|\bw^\top\bxi-\bw^\top\bsmaj|\leq\sqrt{2}\norm{\bwsig}\right)=\P_{\bxi}\left(|\bw^\top\bxi+\bw^\top\bsmaj|\leq\sqrt{2}\norm{\bwsig}\right)=\P_{\bxi}\left(|\bw^\top\bxi+\bw^\top\be_3|\leq\sqrt{2}\norm{\bwsig} \right).
    \end{equation*}
    Therefore,
    \begin{equation*}
        -\bwsig^\top\nabla_{\bw} L_0 = \frac{\sqrt{2}}{4}|a|\norm{\bwsig}\P_{\bxi} \left(|\bw^\top\bxi+\bw^\top\be_3|\leq\sqrt{2}\norm{\bwsig} \right),
    \end{equation*}
    which proves the first equation.
    
    For the $\bwopp$ component, the same calculations with $ay(\bz)<0$ for $\bz\parallel\bwopp$ give
    \begin{equation*}
        -\bwopp^\top\nabla_{\bw} L_0 = -\frac{\sqrt{2}}{4}|a|\norm{\bwopp}\P_{\bxi} \left(|\bw^\top\bxi+\bw^\top\be_3|\leq\sqrt{2}\norm{\bwopp} \right),
    \end{equation*}
    which proves the second equation.
    This completes the proof of the lemma.
\end{proof}

Now we will show the lemma for the $\bwsp$ component.
\begin{lemma}\label{lem:sp_l0}
    For any neuron $(a,\bw)$, define
    \begin{align*}
        A &= \min(\{\pm\bw^\top\bmu_1,\pm\bw^\top\bmu_2\}) \\
        B &= \min(\{\pm\bw^\top\bmu_1,\pm\bw^\top\bmu_2\} \setminus \{A\})
    \end{align*}
    and
    \begin{equation*}
        \epsw =
        \begin{cases}
            \P_{\bxi}\left(\bw^\top\bxi-\wsp \in [A,B]\right)-\P_{\bxi}\left(\bw^\top\bxi+\wsp \in [A,B]\right) & \textnormal{if } \sgn(w_1)=\sgn(w_2) \\
            \P_{\bxi}\left(\bw^\top\bxi+\wsp \in [A,B]\right)-\P_{\bxi}\left(\bw^\top\bxi-\wsp \in [A,B]\right) & \textnormal{if } \sgn(w_1)\neq\sgn(w_2)
        \end{cases}.
    \end{equation*}
    We then have
    \begin{equation*}
        -\bwsp^\top\nabla_{\bw} L_0 = a\wsp\left(\frac{1}{2}+\frac{\epsw}{4}-\lambda\right).
    \end{equation*}
\end{lemma}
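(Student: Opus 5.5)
The plan is to compute $-\bwsp^\top\nabla_{\bw}L_0=\E_{\bx}[a\,y(\bx)\,\sigma'(\bw^\top\bx)\,\wsp x_3]$ directly, mirroring the proof of \Lemref{sigopp_l0}. I would condition on $\bz\in\{\pm\bmu_1,\pm\bmu_2\}$, each of probability $\tfrac14$. By \Eqref{XOR-spurious-correlation}, given $\bz$ we have $x_3=y(\bz)$ with probability $1-\lambda$ and $x_3=-y(\bz)$ with probability $\lambda$, so $y(\bx)x_3=+1$ or $-1$ respectively.

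Note $y(\pm\bmu_1)=+1$ and $y(\pm\bmu_2)=-1$. Writing $G\coloneqq\bw^\top\bxi$, which is symmetric about $0$, this gives
\begin{align*}
-\bwsp^\top\nabla_{\bw}L_0=\frac{a\wsp}{4}\Big[&(1-\lambda)\textstyle\sum_{\bz=\pm\bmu_1}\P(G+\wsp>-\bw^\top\bz)+(1-\lambda)\sum_{\bz=\pm\bmu_2}\P(G-\wsp>-\bw^\top\bz)\\
&-\lambda\textstyle\sum_{\bz=\pm\bmu_1}\P(G-\wsp>-\bw^\top\bz)-\lambda\sum_{\bz=\pm\bmu_2}\P(G+\wsp>-\bw^\top\bz)\Big].
\end{align*}

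Next I would simplify each $\pm$ pair. For fixed shift $s$ and $c\ge 0$, we have $\P(G+s>c)+\P(G+s>-c)=1+\P(G+s\in[-c,c])-\P(G+s=\ldots)$ boundary terms, which I treat as measure-zero or absorbed into the closed-interval convention. Equivalently, I would use the identity $\P(G+s>-c)=1-\P(G+s\le -c)$. The symmetry $G\overset{d}{=}-G$ then lets me flip $\P(G+s\le -c)=\P(G-s\ge c)$. This reduces each pair to $1$ plus a signed probability that $G\pm\wsp$ lies in an interval whose endpoints are among $\pm|\bw^\top\bmu_1|,\pm|\bw^\top\bmu_2|$.

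The constant parts contribute $\tfrac14\big(2(1-\lambda)-2\lambda\big)=\tfrac12-\lambda$, which is exactly the main term. It remains to show the leftover interval probabilities collapse to $\epsw/4$. The $\lambda$-dependence should cancel: the $\lambda$-group is the $(1-\lambda)$-group with $\wsp\mapsto-\wsp$. By symmetry of $G$, that group's leftover equals the negative of the first group's leftover. Hence the total leftover is $(1-\lambda)L+\lambda L=L$.

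The step I expect to be the main obstacle is identifying $L$ with $\epsw$, i.e.\ the casework on the ordering of $|\bw^\top\bmu_1|$ and $|\bw^\top\bmu_2|$. Note that $\sgn(w_1)=\sgn(w_2)$ iff $|\bw^\top\bmu_2|\ge|\bw^\top\bmu_1|$. In that case $A=-|\bw^\top\bmu_2|$ and $B=-|\bw^\top\bmu_1|$. The $\bmu_1$-pair then contributes an interval event for $G+\wsp$ on $[-|\bw^\top\bmu_1|,|\bw^\top\bmu_1|]$, and the $\bmu_2$-pair one for $G-\wsp$ on the wider interval. After using symmetry of $G$ to rewrite the two-sided intervals, the overlapping parts cancel and only the strip $[A,B]$ survives. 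This should leave $\P(G-\wsp\in[A,B])-\P(G+\wsp\in[A,B])$. The other sign case follows identically with the roles of $\bmu_1,\bmu_2$ swapped, which flips the sign and matches the definition of $\epsw$. I would carry out this bookkeeping carefully with explicit CDF differences, checking that boundary atoms of the discrete $G$ are handled consistently with the closed interval $[A,B]$.
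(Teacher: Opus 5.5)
Your route is essentially the paper's. You expand over $\bz\in\{\pm\bmu_1,\pm\bmu_2\}$ and majority/minority, and use the reflection symmetry of $G=\bw^\top\bxi$ to make the $\lambda$-terms recombine. The paper states this as $\P(\bw^\top\bz+G>-y(\bz)\wsp)+\P(\bw^\top\bz+G>y(\bz)\wsp)=1$, which is exactly your oddness of the leftover under $\wsp\mapsto-\wsp$. Both then finish with interval casework on $|\bw^\top\bmu_1|$ versus $|\bw^\top\bmu_2|$.

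However, the first pair identity you wrote is false: $\P(G+s>c)+\P(G+s>-c)$ is not $1+\P(G+s\in[-c,c])$. At $s=0$ the left side is $1$ up to an atom, while the right side is $1+\P(|G|\le c)$. Your description of the $\bmu_1$-pair as contributing a two-sided event for $G+\wsp$ on $[-|\bw^\top\bmu_1|,|\bw^\top\bmu_1|]$ inherits this error. It is not cosmetic. Such a leftover is even in $\wsp$, because $\P(G-s\in[-c,c])=\P(G+s\in[-c,c])$. The $\lambda$-group would then reproduce the leftover instead of negating it, giving $(1-2\lambda)L'$ with $L'\neq 0$ at $\wsp=0$, which cannot equal $\epsw$.

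The identity you offer as ``equivalent'' is the correct one and should be used throughout. For $c\ge 0$, $\P(G+s>c)+\P(G+s>-c)=1+\P(G+s>c)-\P(G-s\ge c)$, a difference of one-sided tails at the same threshold, odd in $s$. With $c_i=|\bw^\top\bmu_i|$, the $\bmu_1$-pair (shift $+\wsp$) and the $\bmu_2$-pair (shift $-\wsp$) combine to $L=[\P(G+\wsp>c_1)-\P(G+\wsp>c_2)]-[\P(G-\wsp>c_1)-\P(G-\wsp>c_2)]$. For $c_1\le c_2$ this is $\P(G+\wsp\in(c_1,c_2])-\P(G-\wsp\in(c_1,c_2])$. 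Reflecting $G\mapsto-G$ sends $(c_1,c_2]$ to $[-c_2,-c_1)=[A,B)$ and swaps $\pm\wsp$, which gives exactly $\epsw$ up to boundary atoms (which the paper also ignores). The case $c_1>c_2$ is identical with the sign flipped.

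Note that the interval events with endpoints among $\pm c_i$ appear only after combining the two pairs, not pair by pair. The result $a\wsp\big(\tfrac12-\lambda+\tfrac{L}{4}\big)$ then follows as you describe.
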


\begin{remark} \normalfont
    The term $\epsw$ is an intermediate term resulting from a slight asymmetry in the spurious feature gradient.
    In \Secref{error_term_bound}, we show that it is exponentially decreasing in $\norm{\bwsp}$.
\end{remark}

\begin{proof}
    We have (c.f.~\Eqref{grad_decomp})
    \begin{equation}\label{eq:grad_decomp_sp}
        -\bwsp^\top\nabla_{\bw} L_0 = \E_{\bx, \bz, \bs}\left[ay(\bz)\sigma'(\bw^\top\bx)\bwsp^\top \bs \right].
    \end{equation}
    Observe from Equation~\eqref{eq:XOR-spurious-correlation} that $\bs = y(\bz) \be_3$ with probability $1 - \lambda$ and $\bs = -y(\bz) \be_3$ with probability $\lambda$.
    Recall our definitions of the majority group as $\Xmaj \coloneqq \{\bx\in \{\pm 1\}^d:y(\bx)=x_3\}$ and the minority group as $\Xmin \coloneqq \{\bx \in \{\pm 1\}^d:y(\bx)=-x_3\}$.
    Equivalently, $\Xmaj \coloneqq \{\bx\in \{\pm 1\}^d:\bs = y(\bz) \be_3\}$ and $\Xmin \coloneqq \{\bx \in \{\pm 1\}^d: \bs = -y(\bz) \be_3\}$.
    Taking expectations only over $\bs$, we have
    \begin{align*}
        -\bwsp^\top\nabla_{\bw} L_0 &= a\Big( \\
        &\qquad (1-\lambda) \cdot \E_{\bz,\bxi \mid \bx\in\Xmaj}\left[y(\bz)\cdot \sigma'\left(\bw^\top\bz+\bw^\top(y(\bz)\be_3)+\bw^\top\bxi \right) \cdot \bwsp^\top(y(\bz)\be_3) \right] \\
        &\qquad + \lambda \cdot \E_{\bz,\bxi\mid \bx\in\Xmin}\left[y(\bz)\cdot \sigma'\left(\bw^\top\bz+\bw^\top(-y(\bz)\be_3)+\bw^\top\bxi\right)\cdot \bwsp^\top(-y(\bz)\be_3) \right] \Big).
    \end{align*}
    Since $y(\bz)^2=1$, we have
    \begin{align*}
        -\bwsp^\top\nabla_{\bw} L_0 &= a\wsp ( \\
        &\qquad (1-\lambda) \E_{\bz,\bxi \mid \bx\in\Xmaj }[\sigma'(\bw^\top\bz+y(\bz)\wsp+\bw^\top\bxi)] \\
        &\qquad - \lambda \E_{\bz,\bxi \mid \bx\in\Xmin }[\sigma'(\bw^\top\bz-y(\bz)\wsp+\bw^\top\bxi)]).
    \end{align*}
    Using that the conditional distribution of $\bz,\bxi$ is the same for the majority and minority groups, and noting that $\sigma'(u) = \ind(u \geq 0)$, we have
    \begin{align*}
        -\bwsp^\top\nabla_{\bw} L_0 &= a\wsp \Big( \\
        &\qquad (1-\lambda) \cdot \P_{\bz,\bxi}\left(\bw^\top\bz+\bw^\top\bxi>-y(\bz)\wsp\right) \\
        &\qquad - \lambda \cdot \P_{\bz,\bxi}\left(\bw^\top\bz+\bw^\top\bxi>y(\bz)\wsp\right)\Big).
    \end{align*}
    Recall that $\bxi$ is a symmetric random variable and independent of $\bz$. 
    Therefore, for a fixed $\bz$ we have
    \begin{align}
        \P_{\bxi}\big(\bw^\top\bz+\bw^\top\bxi>&-y(\bz)\wsp\big)+\P_{\bxi}\left(\bw^\top(-\bz)+\bw^\top\bxi>y(\bz)\wsp\right) \nonumber \\
        &= \P_{\bxi}\left(\bw^\top\bz+\bw^\top\bxi>-y(\bz)\wsp\right)+\P_{\bxi}\left(\bw^\top(-\bz)+\bw^\top(-\bxi)>y(\bz)\wsp\right) \nonumber \\
        &= \P_{\bxi}\left(\bw^\top\bz+\bw^\top\bxi>-y(\bz)\wsp\right)+\P_{\bxi}\left(\bw^\top\bz+\bw^\top\bxi\leq -y(\bz)\wsp\right) \nonumber \\
        &= 1. \label{eq:wsp_probs_cancel}
    \end{align}
    Moreover, it is immediate from Equation~\eqref{eq:XOR-spurious-correlation} that $\bz$ is also a symmetric random variable. Using this along with the fact that $y(\bz) = y(-\bz)$, the above gives us
    \begin{equation*}
        \P_{\bxi}\left(\bw^\top\bz+\bw^\top\bxi>-y(\bz)\wsp\right)+\P_{\bxi}\left(\bw^\top\bz+\bw^\top\bxi>y(\bz)\wsp \right) = 1.
    \end{equation*}
    This implies that
    \begin{equation*}
        -\bwsp^\top\nabla_{\bw} L_0 = a\wsp\left(\P_{\bz,\bxi}\left(\bw^\top\bz+\bw^\top\bxi>-y(\bz)\wsp\right)-\lambda\right).
    \end{equation*}
    It remains to show that
    \begin{equation}
        \P_{\bz,\bxi}\left(\bw^\top\bz+\bw^\top\bxi>-y(\bz)\wsp \right) = \frac{1}{2} + \frac{\epsw}{4}. \label{eq:half_plus_epsilon}
    \end{equation}
    We begin by writing out the probabilities over $\bz$.
    It is immediate from Equation~\eqref{eq:XOR-spurious-correlation} that $\bz$ is uniformly distributed over $\{\pm \bmu_1, \pm \bmu_2\}$.
    Therefore, we have
    \begin{align*}
        \P_{\bz,\bxi}\left(\bw^\top\bz+\bw^\top\bxi>-y(\bz)\wsp\right) &= \frac{1}{4}\Big( \\
        &\qquad \P_{\bxi}\left(\bw^\top\bxi+\wsp > -\bw^\top\bmu_1\right) + \P_{\bxi}\left(\bw^\top\bxi+\wsp > \bw^\top\bmu_1\right) \\
        &\qquad+ \P_{\bxi}\left(\bw^\top\bxi-\wsp > -\bw^\top\bmu_2\right) + \P_{\bxi}\left(\bw^\top\bxi-\wsp > \bw^\top\bmu_2\right) \Big).
    \end{align*}
    Taking complements and using that $\bxi$ is symmetric about $0$, we have
    \begin{align*}
        \P_{\bz,\bxi}\left(\bw^\top\bz+\bw^\top\bxi>-y(\bz)\wsp\right) &= \frac{1}{2} + \frac{1}{4} \Big( \\
        &\qquad- \P_{\bxi}\left(\bw^\top\bxi+\wsp \leq -\bw^\top\bmu_1\right) - \P_{\bxi}\left(\bw^\top\bxi+\wsp \leq \bw^\top\bmu_1\right) \\
        &\qquad+ \P_{\bxi}\left(\bw^\top\bxi+\wsp \leq -\bw^\top\bmu_2\right) + \P_{\bxi}\left(\bw^\top\bxi+\wsp \leq \bw^\top\bmu_2\right)\Big).
    \end{align*}
    For notational convenience, let us define
    \begin{align*}
        \epsw &\coloneqq - \P_{\bxi}\left(\bw^\top\bxi+\wsp \leq -\bw^\top\bmu_1 \right) - \P_{\bxi}\left(\bw^\top\bxi+\wsp \leq \bw^\top\bmu_1 \right) \\
        &\qquad+ \P_{\bxi}\left(\bw^\top\bxi+\wsp \leq -\bw^\top\bmu_2\right)+ \P_{\bxi}\left(\bw^\top\bxi+\wsp \leq \bw^\top\bmu_2\right).
    \end{align*}
    We will show that $\epsw$ defined here is equal to the formula in the statement of \Lemref{sp_l0}.
    Let us write $\epsw$ in interval notation to find that certain segments cancel.
    Let $A\leq B \leq C \leq D$ be the ordering on $\{\pm \bw^\top\bmu_1,\pm\bw^\top\bmu_2\}$ that is a function of $\bw$ (\ie $A$ is the smallest element, $B$ is the second smallest, and so on).
    Note that $A=-D$ and $B=-C$.
    Moreover, if $\sgn(w_1)=\sgn(w_2)$, then $A\in\{\pm \bw^\top\bmu_2\}$, whereas if $\sgn(w_1)\neq\sgn(w_2)$, then $A\in\{\pm\bw^\top\bmu_1\}$.
    We now handle these cases separately.
    If $\sgn(w_1)=\sgn(w_2)$, we have
    \begin{align*}
        \epsw &= 2\P_{\bxi}\left(\bw^\top\bxi+\wsp\leq A\right) + \P_{\bxi}\left(A\leq \bw^\top\bxi+\wsp\leq D\right) - 2\P_{\bxi}\left(\bw^\top\bxi+\wsp\leq A\right) \\
        &\qquad - \P_{\bxi}\left(A\leq \bw^\top\bxi + \wsp \leq B\right) - \P_{\bxi}\left(A\leq \bw^\top\bxi + \wsp \leq C \right) \\
        &= \P_{\bxi}\left(C\leq \bw^\top\bxi+\wsp\leq D \right) - \P_{\bxi}\left(A\leq \bw^\top\bxi+\wsp\leq B \right).
    \end{align*}
    Finally, since $A=-D$ and $B=-C$ and $\bxi$ is symmetric around $0$, we have
    \begin{equation*}
        \epsw = \P_{\bxi}\left(A\leq \bw^\top\bxi -\wsp \leq B\right) - \P_{\bxi}\left(A\leq \bw^\top\bxi + \wsp \leq B\right).
    \end{equation*}
    Likewise, if $\sgn(w_1)\neq\sgn(w_2)$, we have
    \begin{align*}
        \epsw &= 2\P_{\bxi}\left(\bw^\top\bxi+\wsp\leq A\right) + \P_{\bxi}\left(A\leq \bw^\top\bxi + \wsp \leq B\right) + \P_{\bxi}\left(A\leq \bw^\top\bxi + \wsp \leq C \right) \\
        &\qquad - 2\P_{\bxi}\left(\bw^\top\bxi+\wsp\leq A \right) - \P_{\bxi}\left(A\leq \bw^\top\bxi+\wsp\leq D \right) \\
        &= \P_{\bxi}\left(A\leq \bw^\top\bxi+\wsp\leq B \right) - \P_{\bxi}\left(C\leq \bw^\top\bxi+\wsp\leq D \right).
    \end{align*}
    Finally, since $\bxi$ is symmetric about $0$ with $A=-D$ and $B=-C$, we have
    \begin{equation*}
        \epsw = \P_{\bxi}\left(A\leq \bw^\top\bxi +\wsp \leq B \right) - \P_{\bxi}\left(A\leq \bw^\top\bxi - \wsp \leq B \right).
    \end{equation*}
    This completes the proof of the lemma.
\end{proof}

Next, we will show the lemma for the $\bwperp$ component.
\begin{lemma} \label{lem:perp_l0}
    For any neuron $(a,\bw)$, we have
    \begin{align*}
        -\bwperp^\top\nabla_{\bw} L_0 &= \frac{1}{8}|a|\E_{\bxi}\Big[|\bw^\top\bxi| \Big( \\
        &\qquad \ind \left(|\bw^\top\bxi| \geq |\sqrt{2}\norm{\bwsig}+\wsp| \right) +\ind\left(|\bw^\top\bxi| \geq |\sqrt{2}\norm{\bwsig}-\wsp| \right) \\
        &\qquad -\ind\left(|\bw^\top\bxi| \geq |\sqrt{2}\norm{\bwopp}+\wsp| \right) -\ind\left(|\bw^\top\bxi| \geq |\sqrt{2}\norm{\bwopp}-\wsp| \right) \Big) \Big].
    \end{align*}
\end{lemma}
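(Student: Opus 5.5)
The plan is to mirror the proof of \Lemref{sigopp_l0}. First I reduce to a single scalar random variable $u\coloneqq \bw^\top\bxi=\bwperp^\top\bxi$; this identity holds because $\bxi$ is supported on coordinates $4,\dots,d$. Exactly as in \Eqref{grad_decomp}, and using $\bwperp\perp \bz,\bs$, I write
\begin{equation*}
    -\bwperp^\top\nabla_{\bw} L_0=\E_{\bz,\bs,\bxi}\left[a\,y(\bz)\,\sigma'(\bw^\top\bz+\bw^\top\bs+u)\,u\right].
\end{equation*}
I then take an iterated expectation, first over $(\bz,\bs)$ and then over $\bxi$, which is independent of $(\bz,\bs)$. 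Since $\bz$ is uniform on $\{\pm\bmu_1,\pm\bmu_2\}$, I split into two groups. The first is $\bz\in\{\pm \bz_0\}$ with $\bz_0\parallel\bwsig$, where $a\,y(\bz)=|a|$ and $\bw^\top\bz=\pm\sqrt{2}\norm{\bwsig}$ up to a fixed sign choice of $\bz_0$. The second is $\bz\parallel\bwopp$, where $a\,y(\bz)=-|a|$ and $\bw^\top\bz=\pm\sqrt2\norm{\bwopp}$. Within each $\bz$, I condition on majority versus minority as in the proof of \Lemref{sp_l0}: $\bs=\bsmaj$-type with probability $1-\lambda$ and the opposite with probability $\lambda$. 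This adds $\pm\wsp$ to the offset.

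The key scalar identity is this: for any constant $k$ and symmetric mean-zero $u$,
\begin{equation*}
    \E_u[u\,\sigma'(u+k)]=\tfrac12\E_u\left[|u|\,\ind(|u|\geq |k|)\right].
\end{equation*}
To prove it, take $k\geq 0$. Since $\E[u]=0$, we get $\E[u\ind(u\geq -k)]=-\E[u\ind(u<-k)]=\E[|u|\ind(u<-k)]$, and by symmetry this is half of the two-sided truncated moment. The case $k<0$ is immediate, as the expression is then $\E[u\ind(u\geq|k|)]$. The only subtlety is the atoms of the discrete $u$ at $\pm|k|$ together with the convention $\sigma'(0)$. I would check that the stated form with $\geq$ is the one compatible with the convention used in \Lemref{sigopp_l0}; this boundary bookkeeping is the step I expect to need the most care. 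If necessary, I would note that the atom at $u=|k|$ contributes $|k|\P(u=|k|)$ on one side only, and handle it as in the casework on $\sigma'$ there.

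Applying the identity, each $(\bz,\bs)$ cell contributes $\tfrac12\,\mathrm{sgn}\cdot|a|\,\E[|u|\ind(|u|\ge|k|)]$. In the signal group the offsets $k$ are as follows. The majority cells with $\pm\bz_0$ give $|k|\in\{|\sqrt2\norm{\bwsig}+\wsp|,\,|\sqrt2\norm{\bwsig}-\wsp|\}$, one each. The minority cells flip the sign of $\wsp$, so they give the same pair in swapped order. The cell probabilities are $(1-\lambda)/4$ and $\lambda/4$. Hence each of the two indicators receives total weight $\tfrac12\big(\tfrac{1-\lambda}{4}+\tfrac{\lambda}{4}\big)=\tfrac18$, and $\lambda$ cancels. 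One point needs checking here: the factor $\tfrac12$ coming from $\bz\parallel\bwsig$ versus $\bz\parallel\bwopp$ is already absorbed into the $\tfrac14$ per-$\bz$ probability. The opposing group is identical with $\norm{\bwopp}$ in place of $\norm{\bwsig}$ and an overall minus sign from $a\,y(\bz)=-|a|$. Summing the two groups gives exactly the four-indicator formula in the statement.
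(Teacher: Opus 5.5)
Your proposal is correct and is essentially the paper's argument. The only difference is that the paper obtains your per-cell identity by pairing $\bxi$ with $-\bxi$ and doing casework on $\sigma'(k+u)-\sigma'(k-u)$, rather than via $\E[u]=0$; it then expands over the same $(\bz,\bs)$ cells, so $\lambda$ cancels and each indicator gets weight $\tfrac18$.

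On the boundary issue you flag: no convention for $\sigma'(0)$ makes the identity hold with $\geq$ for both signs of $k$ when $u$ has atoms at $\pm|k|$, since one sign always yields $|u|>|k|$. Because $k$ and $-k$ carry unequal weights $\tfrac{1-\lambda}{4}$ and $\tfrac{\lambda}{4}$, this does not cancel. The paper's casework makes the same approximation without comment. It is harmless, since the atom contribution $|k|\,\P(|u|=|k|)$ is within the Berry--Esseen error incurred wherever the lemma is later used.
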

\begin{proof}
    We have (c.f.~\Eqref{grad_decomp})
    \begin{equation} \label{eq:grad_decomp_perp}
        -\bwperp^\top\nabla_{\bw} L_0 = \E_{\bx, \bz, \bxi}\left[ay(\bz)\sigma'(\bw^\top\bx)\bwperp^\top \bxi \right].
    \end{equation}
    Recall that $\bx = \bz + \bs + \bxi$.
    Noting that $\bxi$ is symmetric around $0$, we perform a symmetrization argument to get
    \begin{align*}
        -\bwperp^\top\nabla_{\bw} L_0 &= \frac{1}{2}a\E_{\bz, \bs, \bxi}\left[y(\bz) \cdot \sigma'\left(\bw^\top\bz+\bw^\top\bs+\bw^\top\bxi\right) \cdot \bwperp^\top \bxi \right] \\
        &\qquad + \frac{1}{2}a\E_{\bz, \bs, \bxi}\left[y(\bz)\cdot \sigma'\left(\bw^\top\bz+\bw^\top\bs+\bw^\top(-\bxi) \right) \cdot\bwperp^\top (-\bxi) \right] \\
        &= \frac{1}{2}a\E_{\bz, \bs, \bxi}\left[y(\bz)\cdot \bwperp^\top \bxi \cdot \left(\sigma'\left(\bw^\top\bz+\bw^\top\bs+\bw^\top\bxi\right )-\sigma'\left(\bw^\top\bz+\bw^\top\bs-\bw^\top\bxi\right)\right)\right].
    \end{align*}
    Noting that $\sigma'(u) = \ind(u \geq 0)$ and performing casework on $\sigma'(\cdot)$ gives
    \begin{equation*}
        -\bwperp^\top\nabla_{\bw} L_0 = \frac{1}{2}a\E_{\bz,\bs, \bxi}\left[y(\bz)\cdot \bwperp^\top \bxi \cdot \ind\left(|\bw^\top\bxi| \geq|\bw^\top\bz+\bw^\top\bs|\right)\cdot \sgn(\bwperp^\top\bxi) \right].
    \end{equation*}
    Since $\sgn(\bwperp^\top\bxi)\bwperp^\top\bxi=|\bwperp^\top\bxi|$, we obtain
    \begin{equation*}
        -\bwperp^\top\nabla_{\bw} L_0 = \frac{1}{2}a\E_{\bz, \bs, \bxi}\left[y(\bz) \cdot |\bwperp^\top \bxi| \cdot \ind\left(|\bw^\top\bxi|\geq|\bw^\top\bz+\bw^\top\bs|\right)\right].
    \end{equation*}
    Since $\bxi$ is independent of $(\bz,\bs)$, we can expand the expectation over $(\bz,\bs)$ using the joint distribution defined in Equation~\eqref{eq:XOR-spurious-correlation} to obtain
    \begin{align*}
        -\bwperp^\top\nabla_{\bw} L_0 &= \frac{1}{8}a\E_{\bxi}\Big[|\bw^\top\bxi| \Big( \\
        & \qquad \ind\left(|\bw^\top\bxi| \geq |\bw^\top\bmu_1+\wsp|\right)+\ind\left(|\bw^\top\bxi| \geq |\bw^\top\bmu_1-\wsp|\right) \\
        & \qquad -\ind\left(|\bw^\top\bxi| \geq |\bw^\top\bmu_2+\wsp| \right) -\ind\left(|\bw^\top\bxi| \geq |\bw^\top\bmu_2-\wsp|\right) \Big) \Big].
    \end{align*}
    If $(a,\bw)\in S^+$ we have $\bmu_1\parallel \bwsig$, $\bmu_2\parallel\bwopp$, and $a>0$.
    Since $\bw^\top \bmu_1 = \bwsig^\top\bmu_1=\sqrt{2}\norm{\bwsig}$ and $\bw^\top \bmu_2 = \bwopp^\top\bmu_2=\sqrt{2}\norm{\bwopp}$ in this case, we have
    \begin{align} 
        -\bwperp^\top\nabla_{\bw} L_0 &= \frac{1}{8}|a|\E_{\bxi}\Big[|\bw^\top\bxi| \Big( \nonumber \\
        & \qquad \ind\left(|\bw^\top\bxi| \geq |\sqrt{2}\norm{\bwsig}+\wsp|\right) +\ind\left(|\bw^\top\bxi| \geq |\sqrt{2}\norm{\bwsig}-\wsp|\right) \nonumber \\
        & \qquad -\ind\left(|\bw^\top\bxi| \geq |\sqrt{2}\norm{\bwopp}+\wsp| \right) -\ind\left(|\bw^\top\bxi| \geq |\sqrt{2}\norm{\bwopp}-\wsp| \right) \Big) \Big]. \label{eq:perp_norms} 
    \end{align}
    On the other hand, if $(a,\bw)\in S^-$ we have $\bmu_2\parallel \bwsig$, $\bmu_1\parallel\bwopp$, and $a<0$.
    Since $\bw^\top \bmu_2 = \bwsig^\top\bmu_2=\sqrt{2}\norm{\bwsig}$ and $\bw^\top \bmu_1 = \bwopp^\top\bmu_1=\sqrt{2}\norm{\bwopp}$ in this case, simplification results in the same expression as \Eqref{perp_norms}.
    This completes the proof.
\end{proof}

Finally, we repeat the analysis for any individual coordinate of $\bwperp$.
\begin{lemma} \label{lem:wi_l0}
    For any neuron $(a,\bw)$ and $i>3$, we have
    \begin{align*}
        -w_i\partial_{w_i} L_0 &= \frac{1}{8}|a||w_i| \Big( \nonumber \\
        &\qquad \P_{\bxi}\left(|w_i|\geq |\sqrt{2}\norm{\bwsig}+\wsp+\bw^\top\bxi_{\setminus i}|\right)+\P_{\bxi}\left(|w_i|\geq |\sqrt{2}\norm{\bwsig}-\wsp+\bw^\top\bxi_{\setminus i}| \right) \nonumber \\
        &\qquad -\P_{\bxi}\left(|w_i|\geq |\sqrt{2}\norm{\bwopp}+\wsp+\bw^\top\bxi_{\setminus i}| \right)-\P_{\bxi}\left(|w_i|\geq |\sqrt{2}\norm{\bwopp}-\wsp+\bw^\top\bxi_{\setminus i}| \right) \Big).
    \end{align*}
\end{lemma}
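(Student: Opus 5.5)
The plan is to mirror the proof of \Lemref{perp_l0}, replacing the projection onto $\bwperp$ by the single coordinate $w_i\be_i$ with $i>3$. Since $\be_i\perp\bz,\bs$, the decomposition used in \Eqref{grad_decomp} gives
\begin{equation*}
    -w_i\partial_{w_i} L_0 = \E_{\bz,\bs,\bxi}\left[a y(\bz)\sigma'(\bw^\top\bx)\, w_i\xi_i\right].
\end{equation*}
Instead of symmetrizing the whole vector $\bxi$, I will symmetrize only the coordinate $\xi_i$. I write $\bw^\top\bx = \bw^\top\bz+\wsp x_3+\bw^\top\bxi_{\setminus i}+w_i\xi_i$ and use that $\xi_i$ is Rademacher and independent of $(\bz,\bs,\bxi_{\setminus i})$. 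Averaging over $\xi_i=\pm1$ gives
\begin{equation*}
    \tfrac12 w_i\left(\sigma'(u+w_i)-\sigma'(u-w_i)\right), \qquad u\coloneqq \bw^\top\bz+\wsp x_3+\bw^\top\bxi_{\setminus i}.
\end{equation*}
Casework on $\sigma'=\ind(\cdot\geq0)$ shows this difference equals $\sgn(w_i)\ind(|u|\leq|w_i|)$, up to measure-zero boundary conventions. Hence
\begin{equation*}
    -w_i\partial_{w_i}L_0=\tfrac12 a|w_i|\,\E\left[y(\bz)\ind(|u|\le|w_i|)\right].
\end{equation*}

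Next I expand over $(\bz,\bs)$ using \Eqref{XOR-spurious-correlation}. For $\bz=\pm\bmu_1$ we have $y=+1$, and for $\bz=\pm\bmu_2$ we have $y=-1$. The point that needs care is that the weights are not $\lambda$-dependent in the final form. For fixed $\bz$, the pair $(\bz,\bs)$ and $(-\bz,-\bs)$ give $u$ values that are negatives of each other in the $\bz,\bs$ part. By symmetry of $\bxi_{\setminus i}$, the probabilities $\P(|\bw^\top\bz+\wsp x_3+\bw^\top\bxi_{\setminus i}|\le|w_i|)$ and $\P(|-\bw^\top\bz-\wsp x_3+\bw^\top\bxi_{\setminus i}|\le|w_i|)$ coincide. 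Grouping $(\bz_0,\bsmaj)$ with $(-\bz_0,\bsmaj)$, and using that this pair is equivalent in law to $(\bz_0,-\bsmaj)$, each of the four sign patterns $\pm\bw^\top\bz_0\pm\wsp$ ends up with total weight $\tfrac{1-\lambda}{4}+\tfrac{\lambda}{4}=\tfrac14$. The result is
\begin{equation*}
    \tfrac18 a|w_i|\sum_{\pm}\left[\P\left(|\bw^\top\bmu_1\pm\wsp+\bw^\top\bxi_{\setminus i}|\le|w_i|\right)-\P\left(|\bw^\top\bmu_2\pm\wsp+\bw^\top\bxi_{\setminus i}|\le|w_i|\right)\right],
\end{equation*}
which matches the $\tfrac18$ prefactor in the statement.

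Finally, I split by $\sgn(a)$ exactly as at the end of \Lemref{perp_l0}. For $(a,\bw)\in S^+$, $\bw^\top\bmu_1=\sqrt2\norm{\bwsig}$ and $\bw^\top\bmu_2=\sqrt2\norm{\bwopp}$ up to sign, and $a=|a|$. A sign flip of $\bw^\top\bmu_k$ is harmless: it can be absorbed by applying $\bxi_{\setminus i}\mapsto-\bxi_{\setminus i}$ together with the $\pm\wsp$ pairing. For $S^-$, the roles of $\bmu_1$ and $\bmu_2$ swap and $a=-|a|$. These two sign changes cancel, so both cases give the stated formula.

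The main obstacle is the bookkeeping in the $\lambda$-cancellation and the sign-absorption step. One must check that reflecting $\bxi_{\setminus i}$, which is legitimate since it is symmetric and independent of $\xi_i$, simultaneously maps $\bw^\top\bz_0\mapsto\pm\sqrt2\norm{\cdot}$ and correctly exchanges the $\pm\wsp$ terms. I would write out the four $(\bz,\bs)$ atoms for one class explicitly to confirm this.
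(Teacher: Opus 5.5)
Your proposal is correct and follows the paper's route: you symmetrize in the single coordinate $\xi_i$, reduce $\sigma'(u+w_i)-\sigma'(u-w_i)$ to $\sgn(w_i)\ind(|u|\le|w_i|)$, expand over the atoms of $(\bz,\bs)$, and finish with the $S^+/S^-$ split. You also usefully make explicit two steps the paper leaves implicit, namely the $\lambda$-cancellation via $\bxi_{\setminus i}\mapsto-\bxi_{\setminus i}$ and the sign ambiguity in $\bw^\top\bmu_k=\pm\sqrt{2}\norm{\cdot}$; the one imprecision, which the paper shares, is calling the tie event $\{|u|=|w_i|\}$ measure-zero, since under Boolean $\bxi$ it is ruled out only for generic $\bw$ and not by the law of $\bxi$.
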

\begin{proof}
    We have (c.f.~\Eqref{grad_decomp})
    \begin{equation} \label{eq:grad_decomp_wi}
        -w_i \partial_{w_i} L_0 = \E_{\bx}\left[ay(\bx)\sigma'(\bw^\top\bx)w_ix_i \right].
    \end{equation}
    Noting that $x_i$ is symmetric around $0$ (it is a single coordinate of $\bxi$), we perform a similar symmetrization to \Lemref{perp_l0} to get
    \begin{equation*}
        -w_i \partial_{w_i} L_0 = \frac{1}{2}a \E_{\bx}\left[y(\bx) \cdot w_ix_i \cdot \left(\sigma'\left(\bw^\top\bx_{\setminus i}+w_ix_i\right)-\sigma'\left(\bw^\top\bx_{\setminus i}-w_ix_i\right)\right)\right]
    \end{equation*}
    Noting that $\sigma'(u) = \ind(u \geq 0)$ and performing casework on $\sigma'(\cdot)$ gives us
    \begin{equation*}
        -w_i\partial_{w_i} L_0 = \frac{1}{2}a \E_{\bx}\left[y(\bx)\cdot w_ix_i \cdot \ind\left(|w_i| \geq |\bw^\top\bx_{\setminus i}|\right) \cdot \sgn(w_ix_i) \right].
    \end{equation*}
    Since $\sgn(w_ix_i)w_ix_i=|w_ix_i|=|w_i|$, we obtain
    \begin{equation*}
        -w_i\partial_{w_i} L_0 = \frac{1}{2}a |w_i| \E_{\bx}\left[y(\bx) \cdot \ind\left(|w_i| \geq |\bw^\top\bx_{\setminus i}|\right)\right].
    \end{equation*}
    Since $\bxi$ is independent of $(\bz,\bs)$ and symmetric, we can expand the expectation over $(\bz,\bs)$ using the joint distribution defined in Equation~\eqref{eq:XOR-spurious-correlation} to obtain
    \begin{align*}
        -w_i\partial_{w_i} L_0 &= \frac{1}{8}a|w_i| \Big( \\
        &\qquad \P_{\bxi}\left(|w_i|\geq |\bw^\top\bmu_1+\wsp+\bw^\top\bxi_{\setminus i}|\right)+\P_{\bxi}\left(|w_i|\geq |\bw^\top\bmu_1-\wsp+\bw^\top\bxi_{\setminus i}| \right) \\
        &\qquad -\P_{\bxi}\left(|w_i|\geq |\bw^\top\bmu_2+\wsp+\bw^\top\bxi_{\setminus i}|\right)-\P_{\bxi}\left(|w_i|\geq |\bw^\top\bmu_2-\wsp+\bw^\top\bxi_{\setminus i}| \right) \Big).
    \end{align*}
    If $(a,\bw)\in S^+$ we have $\bmu_1\parallel \bwsig$, $\bmu_2\parallel\bwopp$, and $a>0$.
    Since $\bw^\top \bmu_1 = \bwsig^\top\bmu_1=\sqrt{2}\norm{\bwsig}$ and $\bw^\top \bmu_2 = \bwopp^\top\bmu_2=\sqrt{2}\norm{\bwopp}$ in this case, we have
    \begin{align} 
        -w_i\partial_{w_i} L_0 &= \frac{1}{8}|a||w_i| \Big( \nonumber \\
        &\qquad \P_{\bxi}\left(|w_i|\geq |\sqrt{2}\norm{\bwsig}+\wsp+\bw^\top\bxi_{\setminus i}|\right)+\P_{\bxi}\left(|w_i|\geq |\sqrt{2}\norm{\bwsig}-\wsp+\bw^\top\bxi_{\setminus i}|\right) \nonumber \\
        &\qquad -\P_{\bxi}(|w_i|\geq |\sqrt{2}\norm{\bwopp}+\wsp+\bw^\top\bxi_{\setminus i}|)-\P_{\bxi}\left(|w_i|\geq |\sqrt{2}\norm{\bwopp}-\wsp+\bw^\top\bxi_{\setminus i}|\right) \Big). \label{eq:wi_norms}
    \end{align}
    On the other hand, if $(a,\bw)\in S^-$ we have $\bmu_2\parallel \bwsig$, $\bmu_1\parallel\bwopp$, and $a<0$.
    Since $\bw^\top \bmu_2 = \bwsig^\top\bmu_2=\sqrt{2}\norm{\bwsig}$ and $\bw^\top \bmu_1 = \bwopp^\top\bmu_1=\sqrt{2}\norm{\bwopp}$ in this case, simplification results in the same expression as \Eqref{wi_norms}.
    This completes the proof of the lemma.
\end{proof}

\subsection{\texorpdfstring{$\epsw$}{} Asymmetry Term Bound} \label{sec:error_term_bound}

In this section, we will show that the asymmetry term $\epsw$ from \Lemref{sp_l0} decays exponentially in $\norm{\bwsp}$.
Recall that
\begin{align*}
    A &= \min(\{\pm\bw^\top\bmu_1,\pm\bw^\top\bmu_2\}) \\
    B &= \min(\{\pm\bw^\top\bmu_1,\pm\bw^\top\bmu_2\} \setminus \{A\})
\end{align*}
and
\begin{equation*}
    \epsw \coloneqq
    \begin{cases}
        \P_{\bxi}\left(\bw^\top\bxi-\wsp \in [A,B]\right)-\P_{\bxi}\left(\bw^\top\bxi+\wsp \in [A,B]\right) & \textnormal{if } \sgn(w_1)=\sgn(w_2) \\
        \P_{\bxi}\left(\bw^\top\bxi+\wsp \in [A,B]\right)-\P_{\bxi}\left(\bw^\top\bxi-\wsp \in [A,B]\right) & \textnormal{if } \sgn(w_1)\neq\sgn(w_2)
    \end{cases}.
\end{equation*}

We begin with a tight bound on $|\epsw|$ using the Berry-Esseen theorem.
\begin{lemma} \label{lem:v_clt}
    For any neuron $(a,\bw)$, write $\phi(x)\coloneqq\frac{1}{\norm{\bwperp}\sqrt{2\pi}}\exp\left(\frac{-x^2}{2\norm{\bwperp}^2}\right)$ for the probability density function of the Gaussian distribution $\Nc(0, \norm{\bwperp}^2)$.
    Then, we have
    \begin{equation*}
        |\epsw| \lesssim \phi(\wsp) \int_A^B \exp\left(\frac{-u^2}{2\norm{\bwperp}^2}\right)\left|\sinh\left(\frac{u\wsp}{\norm{\bwperp}^2}\right)\right|du + \frac{\norm{\bwperp}_3^3}{\norm{\bwperp}^3_2}.
    \end{equation*}
\end{lemma}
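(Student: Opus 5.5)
Note that $\bw^\top\bxi = \bwperp^\top\bxi$, because $\bxi$ is supported on coordinates $4,\dots,d$. So $\epsw$ is, up to sign, the difference $\P_{\bxi}(\bwperp^\top\bxi - \wsp\in[A,B]) - \P_{\bxi}(\bwperp^\top\bxi + \wsp\in[A,B])$. The plan is to replace $\bwperp^\top\bxi$ by a Gaussian $G\sim\Nc(0,\norm{\bwperp}^2)$ and then compute the Gaussian difference exactly.

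\textbf{Step 1: Berry--Esseen reduction.} The variable $\bwperp^\top\bxi=\sum_{i>3} w_i\xi_i$ is a sum of independent, mean-zero, non-identically distributed terms. Its variance is $\norm{\bwperp}^2$ and its third absolute moments sum to $\norm{\bwperp}_3^3$. The non-i.i.d.\ Berry--Esseen theorem then bounds the Kolmogorov distance between its CDF and that of $G$ by $C\norm{\bwperp}_3^3/\norm{\bwperp}_2^3$. Each of the two probabilities in $\epsw$ is a difference of two CDF values at shifted endpoints $A\pm\wsp$ and $B\pm\wsp$. Replacing all four CDF values by their Gaussian versions therefore costs at most $4C\norm{\bwperp}_3^3/\norm{\bwperp}_2^3$, which is the second term of the bound.

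\textbf{Step 2: Gaussian computation.} It remains to bound
\begin{equation*}
\left|\int_A^B \bigl(\phi(u+\wsp)-\phi(u-\wsp)\bigr)\,du\right|,
\end{equation*}
using $\P(G-\wsp\in[A,B])=\int_A^B\phi(u+\wsp)\,du$ and $\P(G+\wsp\in[A,B])=\int_A^B\phi(u-\wsp)\,du$. Expanding the squares gives
\begin{equation*}
\phi(u\pm\wsp)=\frac{1}{\norm{\bwperp}\sqrt{2\pi}}\exp\left(\frac{-u^2-\wsp^2}{2\norm{\bwperp}^2}\right)\exp\left(\mp\frac{u\wsp}{\norm{\bwperp}^2}\right).
\end{equation*}
The difference is therefore $-2\phi(\wsp)\exp(-u^2/(2\norm{\bwperp}^2))\sinh(u\wsp/\norm{\bwperp}^2)$. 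Moving the absolute value inside the integral gives the first term, up to the constant $2$ absorbed into $\lesssim$.

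\textbf{Where care is needed.} The main point is Step 1. Berry--Esseen must be the Lyapunov-type version for independent non-identical summands, and it must be applied uniformly over the shifted endpoints. This is fine because it is a sup-norm CDF bound, but closed intervals require the bound for $\P(\le x)$ and $\P(<x)$ alike; the Kolmogorov bound covers left limits, so this holds.

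Two degenerate cases need a separate word. If $\bwperp=0$, the right-hand side is interpreted as infinite, so the claim is vacuous. The sign cases in the definition of $\epsw$ only flip the sign of the difference, and the absolute value removes this.
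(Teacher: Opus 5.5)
Your proposal is correct and follows the paper's proof. Both arguments rewrite $\epsw$ through four CDF values of $\bwperp^\top\bxi$ at $A\pm\wsp$ and $B\pm\wsp$, pass to $\Nc(0,\norm{\bwperp}^2)$ via Berry--Esseen at cost $\norm{\bwperp}_3^3/\norm{\bwperp}_2^3$, and factor $\phi(u+\wsp)-\phi(u-\wsp)=-2\phi(\wsp)e^{-u^2/(2\norm{\bwperp}^2)}\sinh(u\wsp/\norm{\bwperp}^2)$ before moving the absolute value inside the integral. Your two-sided Kolmogorov bound, which also handles the left limits at the closed-interval endpoints, is slightly more careful than the paper, which writes only the one-sided inequality $\epsw\lesssim\cdots$ before taking absolute values.
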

\begin{proof}
    Without loss of generality let $\sgn(w_1)=\sgn(w_2)$.
    (Note that the case $\sgn(w_1) \neq \sgn(w_2)$ proceeds identically because the expression for $|\epsw|$ is the same in both cases and $\bw^\top \bxi$ does not depend on $w_1$ and $w_2$.)
    Write $\Phi(x)\coloneqq \P_{G\sim\Nc(0,\norm{\bwperp}^2)}(G\leq x)$ for the cumulative distribution function of the centered Gaussian with variance $\norm{\bwperp}^2$.
    We apply \Thmref{berry_esseen} with $\bv \coloneqq \bwperp$ to obtain
    \begin{align*}
        \epsw &\coloneqq \P_{\bxi}\left(\bw^\top\bxi-\wsp \in [A,B]\right)-\P_{\bxi}\left(\bw^\top\bxi+\wsp \in [A,B]\right) \\
        &= \P_{\bxi}\left(\bwperp^\top \bxi \leq B + \wsp\right) - \P_{\bxi}\left(\bwperp^\top \bxi \leq A + \wsp\right) \\
        &\qquad - \P_{\bxi}\left(\bwperp^\top \bxi \leq B - \wsp\right) + \P_{\bxi}\left(\bwperp^\top \bxi \leq A - \wsp \right) \\
        &\lesssim \Phi(B+\wsp)-\Phi(A+\wsp)-\Phi(B-\wsp)+\Phi(A-\wsp) + \frac{\norm{\bwperp}_3^3}{\norm{\bwperp}_2^3}.
    \end{align*}
    In particular,
    \begin{align*}
        \epsw &\lesssim \int_{A+\wsp}^{B+\wsp}\phi(u)du-\int_{A-\wsp}^{B-\wsp}\phi(u)du + \frac{\norm{\bwperp}_3^3}{\norm{\bwperp}_2^3} \\
        &=\int_A^B \phi(u+\wsp)-\phi(u-\wsp)du + \frac{\norm{\bwperp}_3^3}{\norm{\bwperp}_2^3},
    \end{align*}
    where in the last step we changed variables.
    This expression is convenient to see that $\epsw\approx 0$ when $A,B\approx 0$ or $\wsp \approx 0$.
    To formalize this intuition, we expand and factorize to obtain
    \begin{align*}
        \int_A^B \phi(u+\wsp)-\phi(u-\wsp)du &=\int_A^B \left(\frac{1}{\norm{\bwperp}\sqrt{2\pi}}e^{\frac{-(u+\wsp)^2}{2\norm{\bwperp}^2}}-\frac{1}{\norm{\bwperp}\sqrt{2\pi}}e^{\frac{-(u-\wsp)^2}{2\norm{\bwperp}^2}}\right) du \\
        &=\int_A^B \frac{1}{\norm{\bwperp}\sqrt{2\pi}}e^{\frac{-u^2}{2\norm{\bwperp}^2}}e^{\frac{-\wsp^2}{2\norm{\bwperp}^2}}\left( e^{\frac{-u\wsp}{\norm{\bwperp}^2}} - e^{\frac{u\wsp}{\norm{\bwperp}^2}} \right) du \\
        &= -2\phi(\wsp)\int_A^B e^{\frac{-u^2}{2\norm{\bwperp}^2}} \sinh\left(\frac{u\wsp}{\norm{\bwperp}^2}\right)du.
    \end{align*}
    Taking the absolute value and using the triangle inequality, we have
    \begin{equation*}
        |\epsw| \lesssim \phi(\wsp) \int_A^B \exp\left(\frac{-u^2}{2\norm{\bwperp}^2}\right)\left|\sinh\left(\frac{u\wsp}{\norm{\bwperp}^2}\right)\right|du + \frac{\norm{\bwperp}_3^3}{\norm{\bwperp}_2^3},
    \end{equation*}
    as desired.
    This completes the proof of the lemma.
\end{proof}

Now, let us show a more sophisticated upper bound on $|\epsw|$ which implies exponential decay in $\norm{\bwsp}$.
\begin{lemma} \label{lem:v_ub}
    For any neuron $(a,\bw)$ we have
    \begin{equation*}
        |\epsw| \lesssim  \frac{\norm{\bw_{1:2}}}{\norm{\bwperp}}\exp\left(\frac{-\norm{\bwsp}^2+2\sqrt{2}\norm{\bw_{1:2}}\norm{\bwsp}}{2\norm{\bwperp}^2}\right) + \frac{\norm{\bwperp}_3^3}{\norm{\bwperp}_2^3}.
    \end{equation*}
\end{lemma}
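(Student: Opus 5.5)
We start from \Lemref{v_clt}, which already supplies the additive Berry--Esseen error $\norm{\bwperp}_3^3/\norm{\bwperp}_2^3$. It therefore suffices to bound the Gaussian term
\begin{equation*}
    I \coloneqq \phi(\wsp) \int_A^B \exp\left(\frac{-u^2}{2\norm{\bwperp}^2}\right)\left|\sinh\left(\frac{u\wsp}{\norm{\bwperp}^2}\right)\right|du
\end{equation*}
by a constant multiple of $\frac{\norm{\bw_{1:2}}}{\norm{\bwperp}}\exp\big(\frac{-\wsp^2+2\sqrt{2}\norm{\bw_{1:2}}|\wsp|}{2\norm{\bwperp}^2}\big)$. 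The plan is a crude sup-times-length estimate on the integral.

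First we bound the domain. Since $A,B\in\{\pm\bw^\top\bmu_1,\pm\bw^\top\bmu_2\}$ and $|\bw^\top\bmu_k|\le \norm{\bw_{1:2}}\norm{\bmu_k}=\sqrt{2}\norm{\bw_{1:2}}$ by Cauchy--Schwarz (only the first two coordinates of $\bw$ enter), we get $[A,B]\subseteq[-\sqrt{2}\norm{\bw_{1:2}},\sqrt{2}\norm{\bw_{1:2}}]$. In particular $B-A\le 2\sqrt{2}\norm{\bw_{1:2}}$.

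Next we bound the integrand pointwise. Drop $e^{-u^2/(2\norm{\bwperp}^2)}\le 1$ and use $|\sinh(x)|\le \tfrac12 e^{|x|}$. For $|u|\le\sqrt{2}\norm{\bw_{1:2}}$ this gives integrand $\le \tfrac12\exp\big(\sqrt{2}\norm{\bw_{1:2}}|\wsp|/\norm{\bwperp}^2\big)$. Multiplying by the interval length and by $\phi(\wsp)=\frac{1}{\norm{\bwperp}\sqrt{2\pi}}e^{-\wsp^2/(2\norm{\bwperp}^2)}$ yields
\begin{equation*}
    I \lesssim \frac{\norm{\bw_{1:2}}}{\norm{\bwperp}}\exp\left(\frac{-\wsp^2+2\sqrt{2}\norm{\bw_{1:2}}|\wsp|}{2\norm{\bwperp}^2}\right).
\end{equation*}
This is exactly the claimed form, since $\norm{\bwsp}=|\wsp|$. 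Combining it with \Lemref{v_clt} finishes the proof.

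There is no real obstacle; the only points needing care are the following. The case $\sgn(w_1)\neq\sgn(w_2)$ is covered, as already noted in \Lemref{v_clt}. The constant in the exponent must match: $|\sinh x|\le e^{|x|}/2$ gives $\sqrt{2}\norm{\bw_{1:2}}|\wsp|/\norm{\bwperp}^2=2\sqrt{2}\norm{\bw_{1:2}}|\wsp|/(2\norm{\bwperp}^2)$, as required. Finally, we assume $\bwperp\neq 0$ so that $\phi$ is defined.
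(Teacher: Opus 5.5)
Your proof is correct and follows the paper's argument essentially step for step. Both start from \Lemref{v_clt}, bound $e^{-u^2/(2\norm{\bwperp}^2)}\le 1$ and $|\sinh x|$ by a constant times $e^{|x|}$, and take sup-times-length over $[A,B]$ with $|u|\le\sqrt{2}\norm{\bw_{1:2}}$. The only cosmetic difference is that the paper uses $B-A\le |A|$ and evaluates the supremum at $|A|$ before bounding $|A|\le\sqrt{2}\norm{\bw_{1:2}}$, whereas you use the cruder length $2\sqrt{2}\norm{\bw_{1:2}}$, which only changes the constant.
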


\begin{proof}
    Beginning from the statement of \Lemref{v_clt}, we have
    \begin{equation*}
        |\epsw|\lesssim \phi(\wsp)\int_A^B \exp\left(\frac{-u^2}{2\norm{\bwperp}^2}\right) \left|\sinh\left(\frac{u\wsp}{\norm{\bwperp}^2}\right)\right|du + \frac{\norm{\bwperp}_3^3}{\norm{\bwperp}_2^3}.
    \end{equation*}
    Then using the identity $|\sinh(x)|\leq\cosh(x)\leq \exp(|x|)$ and taking the uniform upper bound we have
    \begin{align*}
        |\epsw|&\lesssim \phi(\wsp)\int_A^B \exp\left(\frac{-u^2}{2\norm{\bwperp}^2}\right) \exp\left(\frac{|u\wsp|}{\norm{\bwperp}^2}\right) du + \frac{\norm{\bwperp}_3^3}{\norm{\bwperp}_2^3} \\
        &\leq \phi(\wsp)(B-A)\sup_{u\in[A,B]}\exp\left(\frac{-u^2}{2\norm{\bwperp}^2}\right) \exp\left(\frac{|u\wsp|}{\norm{\bwperp}^2}\right) + \frac{\norm{\bwperp}_3^3}{\norm{\bwperp}_2^3}.
    \end{align*}
    Since $\exp\left(\frac{-u^2}{2\norm{\bwperp}^2}\right)\leq 1$ and $\sup_{u\in[A,B]}\exp\left(\frac{|u\wsp|}{\norm{\bwperp}^2}\right)= \exp\left(\frac{|A|\norm{\bwsp}}{\norm{\bwperp}^2}\right)$, we have
    \begin{align*}
        |\epsw|&\lesssim \phi(\wsp) (B-A) \exp\left(\frac{|A|\norm{\bwsp}}{\norm{\bwperp}^2}\right) + \frac{\norm{\bwperp}_3^3}{\norm{\bwperp}_2^3} \\
        &\lesssim \frac{B-A}{\norm{\bwperp}}\exp\left(\frac{-\norm{\bwsp}^2+2|A|\norm{\bwsp}}{2\norm{\bwperp}^2}\right) + \frac{\norm{\bwperp}_3^3}{\norm{\bwperp}_2^3}.
    \end{align*}
    Finally since $B-A\leq |A|\leq \sqrt{2}\norm{\bw_{1:2}}$ we obtain
    \begin{equation*}
        |\epsw| \lesssim \frac{\norm{\bw_{1:2}}}{\norm{\bwperp}}\exp\left(\frac{-\norm{\bwsp}^2+2\sqrt{2}\norm{\bw_{1:2}}\norm{\bwsp}}{2\norm{\bwperp}^2}\right) + \frac{\norm{\bwperp}_3^3}{\norm{\bwperp}_2^3},
    \end{equation*}
    which completes the proof of the lemma.
\end{proof}

\clearpage

\section{\texorpdfstring{$L_\rho$}{Lp} Analysis} \label{sec:lp_analysis}
In this section, we directly analyze the dynamics of the neural network trained under the standard logistic loss $\ell_\rho$, \ie
\begin{equation*}
    \ell_\rho(\bx) = -2\log\left(\frac{1}{1+\exp(-y(\bx)f_\rho(\bx))}\right).
\end{equation*}
In \Secref{l0_lp_error_analysis}, we bound the deviation between the $L_0$ and $L_\rho$ population gradients.
In~\Secref{grad_lrho}, we directly characterize the $L_\rho$ population gradients assuming conditions on the empirical margin of the data that will hold later in our analysis.

\subsection{\texorpdfstring{$\nabla L_\rho - \nabla L_0$}{L0 vs Lp Gradient} Error Analysis} \label{sec:l0_lp_error_analysis}
In this section, we will bound the absolute values of the coordinates of $\nabla L_\rho-\nabla L_0$.
We will find this difference to be small throughout Phase I of our analysis.
However, in Phase II the neural network is too large for this $L_0$-approximation to be useful, and we must directly analyze the $L_\rho$ gradient (see \Secref{grad_lrho}).

We define $\gamma(\bx) \coloneqq y(\bx) f_\rho(\bx)$ as the margin of data point $\bx$, and the composite notation $\ell_\rho(\bx) \coloneqq h(\gamma(\bx))$ where $h(\gamma) \coloneqq -2 \log (\psi(\gamma))$ (where $\psi(u)\coloneqq 1/(1+e^{-u})$ denotes the sigmoid).
Accordingly, we define $\ell^{(1)}_\rho(\bx) \coloneqq h'(\gamma(\bx))$.
Similarly, note that $\ell_0(\bx) = -2\log(\tfrac{1}{2}) - \gamma(\bx)$ and so we can define $\ell_0^{(1)}(\bx) \coloneqq -1$ as shorthand.

First, we require the following generalization of~\cite[Lemma C.5]{glasgow2024sgd}.
\begin{lemma} \label{lem:c5_extension}
    Define $g_{ij}:\{\pm1\}^d\to\{\pm1\}^d$ to flip the $i^{th}$ and $j^{th}$ bits of the input $\bx$ and indices $(i,j)$ are chosen such that $y(g(\bx)) = y(\bx)$.\footnote{Note that this is the case as long as $i, j \in \{1,2\}$ \emph{or} both $i,j \notin \{1,2\}$.}
    For any $\bx\in\{\pm 1\}^d$, we have
    \begin{equation*}
        |\ell^{(1)}_\rho(\bx)-\ell^{(1)}_\rho(g_{ij}(\bx))|\leq \E_{(a,\bw) \sim \rho}[|aw_i|+|aw_j|].
    \end{equation*}
\end{lemma}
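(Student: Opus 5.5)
The plan is to reduce the claim to two elementary facts. First, the map $\gamma\mapsto h'(\gamma)$ is $1$-Lipschitz in an appropriate sense. Second, the margin changes by at most the stated amount when two bits are flipped.

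We have $h(\gamma)=-2\log\psi(\gamma)$, so $h'(\gamma)=-2(1-\psi(\gamma))=-2\psi(-\gamma)$. Then $h''(\gamma)=2\psi(\gamma)(1-\psi(\gamma))\le \tfrac12$, so $h'$ is $\tfrac12$-Lipschitz. Hence
\begin{equation*}
    |\ell^{(1)}_\rho(\bx)-\ell^{(1)}_\rho(g_{ij}(\bx))| \le \tfrac12\,|\gamma(\bx)-\gamma(g_{ij}(\bx))|.
\end{equation*}
By the hypothesis $y(g_{ij}(\bx))=y(\bx)$ and $|y|=1$, this equals $\tfrac12|f_\rho(\bx)-f_\rho(g_{ij}(\bx))|$. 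The footnoted condition is exactly what makes this step legitimate: we do not have to handle a label flip, which would destroy any Lipschitz bound.

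Next we bound the change in $f_\rho$ neuron by neuron. We have
\begin{equation*}
    |f_\rho(\bx)-f_\rho(g_{ij}(\bx))|\le \E_{(a,\bw)\sim\rho}\big[|a|\,|\sigma(\bw^\top\bx)-\sigma(\bw^\top g_{ij}(\bx))|\big].
\end{equation*}
Since $\sigma$ is $1$-Lipschitz, each summand is at most $|a|\,|\bw^\top(\bx-g_{ij}(\bx))|$. Flipping bits $i$ and $j$ gives $\bx-g_{ij}(\bx)=2x_i\be_i+2x_j\be_j$. By the triangle inequality, $|\bw^\top(\bx-g_{ij}(\bx))|\le 2|w_i|+2|w_j|$. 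Combining with the factor $\tfrac12$ yields $\E_\rho[|aw_i|+|aw_j|]$, as claimed.

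There is no substantive obstacle here. The only care needed is the constant, namely checking that the factor $2$ in the loss convention exactly cancels the factor $2$ from the bit flip, so the bound is $\tfrac12\cdot 2=1$ times the expectation. The case $i=j$, if it is allowed, is trivial. The argument mirrors \cite[Lemma C.5]{glasgow2024sgd}, with two flipped coordinates in place of one.
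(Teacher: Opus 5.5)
Your proposal is correct and follows the paper's proof essentially line for line: $\tfrac{1}{2}$-Lipschitzness of $h'$ (from $h''\le\tfrac{1}{2}$), the hypothesis $y(g_{ij}(\bx))=y(\bx)$ to pass from margins to $f_\rho$, and then the triangle inequality with the $1$-Lipschitz ReLU and $\bx-g_{ij}(\bx)=2x_i\be_i+2x_j\be_j$. The one slip is your opening description of $h'$ as ``$1$-Lipschitz in an appropriate sense''; it is harmless, since you then derive and use the correct constant $\tfrac{1}{2}$, exactly as the paper does.
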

\begin{proof}
    Recall that $\ell^{(1)}_\rho(\bx) \coloneqq h'(\gamma(\bx))$, and note that $h'(\gamma)$ is $\frac{1}{2}$-Lipschitz with respect to $\gamma$.\footnote{\cite{glasgow2024sgd} has $2$-Lipschitz but $\tfrac{1}{2}$ is actually the tight constant; it follows by noting that $h''(\gamma) = \frac{2e^{-\gamma}}{(1 + e^{-\gamma})^2} \leq \tfrac{1}{2}$.}
    Since we have assumed that $y(\bx) = y(g_{ij}(\bx))$, this gives us
    \begin{equation}\label{eq:gij_lipschitz}
        |\ell^{(1)}_\rho(\bx)-\ell^{(1)}_\rho(g_{ij}(\bx))|\leq\frac{1}{2}|f_\rho(\bx)-f_\rho(g_{ij}(\bx))|.
    \end{equation}
    Applying the triangle inequality and $1$-Lipschitzness of the ReLU, we have
    \begin{align*}
        |f_\rho(\bx)-f_\rho(g_{ij}(\bx))|&= \Big|\E_{(a,\bw) \sim \rho}\left[a\cdot \left(\sigma(\bw^\top\bx)-\sigma(\bw^\top g_{ij}(\bx)) \right) \right] \Big| \\
        &\leq \E_{(a,\bw) \sim \rho}\left[|a|\cdot |\bw^\top(\bx-g_{ij}(\bx))|\right].
    \end{align*}
    By definition of $g_{ij}$ we have $|\bw^\top(\bx-g_{ij}(\bx))|=|2w_i x_i + 2w_j x_j|$, and using the triangle inequality once again gives
    \begin{equation*}
        \E_{(a,\bw) \sim \rho}\left[|a|\cdot|\bw^\top(\bx-g_{ij}(\bx))|\right]\leq2\E_{(a,\bw) \sim \rho}[|aw_i|+|aw_j|].
    \end{equation*}
    Substituting this into \Eqref{gij_lipschitz} completes the proof of the lemma.
\end{proof}

We will first show a bound on all components of $\nabla L_\rho-\nabla L_0$ which is effective when the network is small.
\begin{lemma} \label{lem:l0_lp_norm_small}
    Suppose $\E_{(a,\bw) \sim \rho}[\norm{a\bw}]\leq d^{C}$ for a constant $C>0$.
    Then, for any neuron $(a,\bw)$ and any $i \in [d]$, we have
    \begin{equation*}
        \frac{2}{|a|}|\partial_{w_i} L_\rho-\partial_{w_i} L_0| \leq \log(d) \cdot\E_{(a,\bw) \sim \rho}[\norm{a\bw}]+d^{-C},
    \end{equation*}
    where $\partial_{w_i} L$ denotes the $p$-scaled partial derivative of $L$ with respect to the $i$-th coordinate of $\bw$.
\end{lemma}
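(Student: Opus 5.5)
We write the difference of partial derivatives as an expectation and bound the loss-derivative difference pointwise via the margin. For any coordinate $i$, the ($p$-scaled) partial derivative gives
\begin{equation*}
    \partial_{w_i} L_\rho - \partial_{w_i} L_0 = \E_{\bx}\Big[\big(\ell^{(1)}_\rho(\bx)-\ell^{(1)}_0(\bx)\big)\, y(\bx)\, a\,\sigma'(\bw^\top\bx)\, x_i\Big],
\end{equation*}
since $\ell_\rho(\bx)=h(\gamma(\bx))$ and $\ell_0(\bx) = -2\log(\tfrac12)-\gamma(\bx)$ with $\partial \gamma(\bx)/\partial w_i = \tfrac{1}{p} y(\bx) a\sigma'(\bw^\top\bx)x_i$. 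Because $|y(\bx)|=|x_i|=1$ and $|\sigma'|\le 1$, this gives
\begin{equation*}
    \frac{1}{|a|}|\partial_{w_i} L_\rho-\partial_{w_i} L_0| \le \E_{\bx}\big[|h'(\gamma(\bx))-h'(0)|\big],
\end{equation*}
using $h'(0)=-1=\ell_0^{(1)}$. By the $\tfrac12$-Lipschitzness of $h'$ (as in the proof of \Lemref{c5_extension}), the right-hand side is at most $\tfrac12\E_{\bx}[|f_\rho(\bx)|]$. It therefore suffices to show
\begin{equation*}
    \E_{\bx}|f_\rho(\bx)| \le \log(d)\,\E_{\rho}\norm{a\bw} + d^{-C}.
\end{equation*}

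For this, $|f_\rho(\bx)|\le \E_\rho[|a||\bw^\top\bx|]$, so we split $\bw^\top\bx = \bw_{1:2}^\top\bz + \wsp x_3 + \bwperp^\top\bxi$. The first two terms are at most $\sqrt2\norm{\bw}+\norm{\bw}\lesssim\norm{\bw}$ deterministically. For the noise term, Hoeffding's inequality gives $|\bwperp^\top\bxi|\le \norm{\bw}\sqrt{2C'\log d}$ except with probability $2d^{-C'}$. Taking a union bound over the $p\le d^{C}$ neurons (the width is polynomial by \Asmref{main}) defines a good event $E$ of probability at least $1-d^{-C''}$, on which $|f_\rho(\bx)|\lesssim \sqrt{\log d}\,\E_\rho\norm{a\bw}$. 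The constant is absorbed because $\sqrt{\log d}\ll \log d$ for large $d$.

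On the complement $E^c$, we use the crude bound $|h'(\gamma)-h'(0)|\le 1$ (indeed $h'(\gamma)\in(-2,0)$). Alternatively, $|f_\rho(\bx)|\le \sqrt d\,\E_\rho\norm{a\bw}\le d^{C+1/2}$ by the hypothesis. Either way, the contribution from $E^c$ is at most $\P(E^c)\lesssim d^{-C}$ once $C'$ is chosen large enough relative to $C$. Combining the two pieces and multiplying by $2$ gives the claim. The bound is uniform in $i$ because every coordinate enters only through $|x_i|=1$.

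The only delicate point is bookkeeping of constants: the union bound over neurons, and matching the $d^{-C}$ tail to the stated $C$. The constant-one bound on $|h'|$ on $E^c$ makes this harmless, so we expect no genuine obstacle. In fact, applying Hoeffding to the mixture directly avoids the union bound entirely. Writing $f_\rho(\bx)$ via Jensen as $\E_\rho|a||\bw^\top\bx|$ and taking $\E_{\bx}$ inside, the subgaussian tail gives $\E_{\bx}|\bw^\top\bx|\lesssim\norm{\bw}$ directly. This version does not even need the $d^{-C}$ term, which I would keep only as slack.
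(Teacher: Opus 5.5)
Your proof is correct. Its first half, writing the difference as an expectation of $(\ell^{(1)}_\rho(\bx)-\ell^{(1)}_0(\bx))\,y(\bx)\,a\,\sigma'(\bw^\top\bx)\,x_i$ and using the $\tfrac12$-Lipschitzness of $h'$ to reduce to $\tfrac12\E_{\bx}|f_\rho(\bx)|$, is exactly the paper's. The two routes diverge only in how $\E_{\bx}|f_\rho(\bx)|$ is controlled.

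The paper splits on the event $\{|f_\rho(\bx)|\le\log(d)\,\E_{(a,\bw)\sim\rho}\norm{a\bw}\}$. It bounds the probability of the complement by McDiarmid applied to $f_\rho$ as a whole (\Lemref{frho_bounds_iv}). It then pays for the complement with the deterministic bound $|f_\rho(\bx)|\le\sqrt{d}\,\E_{(a,\bw)\sim\rho}\norm{a\bw}$ (\Lemref{frho_bounds_iii}), which is the only place the hypothesis $\E_{(a,\bw)\sim\rho}\norm{a\bw}\le d^{C}$ enters.

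Your main route uses per-neuron Hoeffding on $\bwperp^\top\bxi$ plus a union bound over neurons. That requires $p\le\mathrm{poly}(d)$, which comes from the ambient scalings and is not a hypothesis of this lemma. In exchange, your uniform bound $|h'(\gamma)-h'(0)|<1$ on the bad event makes the norm hypothesis unnecessary.

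Your closing remark is the best of the three routes. The bound $\E_{\bx}|f_\rho(\bx)|\le\E_{(a,\bw)\sim\rho}\big[|a|\,\E_{\bx}|\bw^\top\bx|\big]$ needs neither a width bound nor the norm hypothesis. It also shows that the $\log(d)$ factor and the $d^{-C}$ term are pure slack.

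You do not need a subgaussian tail there. Under $P_d(\lambda)$ the vector $\bx$ is isotropic (one checks $\E[x_1x_3]=\E[x_2x_3]=0$), so Cauchy--Schwarz gives $\E_{\bx}|\bw^\top\bx|\le\norm{\bw}$. The resulting inequality $\E_{\bx}|f_\rho(\bx)|\le\E_{(a,\bw)\sim\rho}\norm{a\bw}$ is precisely the paper's own \Lemref{frho_bounds_ii}. So after the shared reduction, the lemma in fact follows from that result in one line.
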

\begin{proof}
    Define $\Delta_{\bx}\coloneqq (\ell^{(1)}_\rho(\bx)-\ell^{(1)}_0(\bx))\cdot y(\bx)\sigma'(\bw^\top\bx)$ as shorthand.
    We have by the triangle inequality and $|x_i|=1$ that
    \begin{equation*}
        \frac{1}{|a|}|\partial_{w_i}L_\rho-\partial_{w_i}L_0| = |\E_{\bx}[\Delta_{\bx}x_i]| \leq \E_{\bx}[|\Delta_{\bx}|].
    \end{equation*}
    Now, note that $\ell^{(1)}_\rho(\bx) = h'(\gamma(\bx))$ and $\ell^{(1)}_0(\bx) = -1 = h'(0)$.
    Therefore, again using the $\frac{1}{2}$-Lipschitzness of $h'(\gamma)$ and $\sigma'(u)\leq 1$ for the ReLU activation, we have
    \begin{equation*}
        \E_{\bx}[|\Delta_{\bx}|] \leq \frac{1}{2}\E_{\bx}[|\gamma(\bx)|] \leq \frac{1}{2}\E_{\bx}[|f_\rho(\bx)|].
    \end{equation*}
    Splitting on the event that $|f_\rho(\bx)|\leq \log(d) \E_{(a,\bw) \sim \rho}[\norm{a\bw}]$ and using \Lemref{frho_bounds_iii}, we have for a fixed $\bx$ that
    \begin{align*}
        |f_\rho(\bx)| &\leq \log(d)\cdot \E_{(a,\bw) \sim \rho}[\norm{a\bw}] \cdot \ind\left(|f_\rho(\bx)|\leq \log(d) \E_{(a,\bw) \sim \rho}[\norm{a\bw}]\right)\\
        &\qquad +\sqrt{d}\cdot \E_{(a,\bw) \sim \rho}[\norm{a\bw}] \cdot \ind\left(|f_\rho(\bx)|\geq \log(d) \E_{(a,\bw) \sim \rho}[\norm{a\bw}]\right) \\
        &\leq \log(d) \cdot\E_{(a,\bw) \sim \rho}[\norm{a\bw}]+\sqrt{d}\cdot \E_{(a,\bw) \sim \rho}[\norm{a\bw}] \cdot \ind\left(|f_\rho(\bx)|\geq \log(d) \E_{(a,\bw) \sim \rho}[\norm{a\bw}]\right).
    \end{align*}
    Bringing back the expectation over $\bx$, we have
    \begin{align*}
        \frac{1}{2}\E_{\bx}[|f_\rho(\bx)|] &\leq \frac{1}{2}\log(d)\cdot\E_{(a,\bw) \sim \rho}[\norm{a\bw}]+\frac{1}{2}\sqrt{d}\cdot\E_{(a,\bw) \sim \rho}[\norm{a\bw}] \cdot \P_{\bx}\left(|f_\rho(\bx)|\geq \log(d) \E_{(a,\bw) \sim \rho}[\norm{a\bw}]\right) \\
        &\leq \frac{1}{2}\log(d)\cdot\E_{(a,\bw)\sim\rho}[\norm{a\bw}] + d^{-C},
    \end{align*}
    where the last step follows from $\E_{(a,\bw) \sim \rho}[\norm{a\bw}]\leq d^{C}$ and \Lemref{frho_bounds_iv} (choosing the constant in \Lemref{frho_bounds_iv} greater than $2C+1$).
    This completes the proof of the lemma.
\end{proof}

Let us show the key lemma of this section.
It is a modified version of~\cite[Lemma C.3]{glasgow2024sgd} which accounts for the spurious correlation.
In contrast to~\cite{glasgow2024sgd}, for which a leave-one-out analysis is sufficient, we must proceed with a slightly more complicated leave-\emph{two}-out technique.
\begin{lemma} \label{lem:l0_lp_norm}
    Recall that we write $\bx\coloneqq \bz+\bs+\bxi$.
    Define $\bxloo\coloneqq \bx-x_i\be_i$ and suppose $\E_{(a,\bw) \sim \rho}[\norm{a\bw}]\leq d^{C}$ for a constant $C>0$.
    Then, for any neuron $(a,\bw)$, we have
    \begin{align*}
    &\frac{2}{|a|}|\partial_{w_i}L_\rho-\partial_{w_i}L_0| \\
        &\leq
        \begin{cases}
            \E_{(a,\bw) \sim \rho}[|aw_1|+|aw_2|]+\log(d) \cdot \P_{\bx}\left(|\bw^\top(\bs+\bxi)|<|\bw^\top\bz|\right) \cdot \E_{(a,\bw) \sim \rho}[\norm{a\bw}] + d^{-C} & i \in\{1,2\} \\
            \E_{(a,\bw) \sim \rho}[|aw_i|]+\log(d) \cdot \P_{\bx}\left(|\bw^\top\bxloo|<|w_i|\right) \cdot\E_{(a,\bw) \sim \rho}[\norm{a\bw}] + d^{-C} & i > 3.
        \end{cases} \nonumber
    \end{align*}
\end{lemma}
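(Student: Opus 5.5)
We follow the leave-one-out template of \cite[Lemma C.3]{glasgow2024sgd}, replacing the single bit flip by the two-bit flip $g_{ij}$ of \Lemref{c5_extension}. The flip is needed because flipping $x_1$ alone changes $y(\bx)$, whereas flipping both $x_1$ and $x_2$ preserves it. As in \Lemref{l0_lp_norm_small}, set $\Delta_{\bx}\coloneqq(\ell^{(1)}_\rho(\bx)-\ell^{(1)}_0(\bx))\,y(\bx)$. Then
\begin{equation*}
    \tfrac{1}{|a|}\bigl(\partial_{w_i}L_\rho-\partial_{w_i}L_0\bigr)=\E_{\bx}\bigl[\Delta_{\bx}\,\sigma'(\bw^\top\bx)\,x_i\bigr].
\end{equation*}

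\textbf{Case $i>3$.} Pair $\bx$ with $\bx'\coloneqq\bxloo-x_i\be_i$; this is the single flip of coordinate $i$, which preserves $y$ since $i\notin\{1,2\}$. Averaging the expectation over the pair gives
\begin{equation*}
    \tfrac12\E_{\bx}\Bigl[x_i\bigl(\Delta_{\bx}\sigma'(\bw^\top\bx)-\Delta_{\bx'}\sigma'(\bw^\top\bx')\bigr)\Bigr].
\end{equation*}
We then add and subtract $\Delta_{\bx'}\sigma'(\bw^\top\bx)$ and treat the two resulting pieces separately.

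The first piece is $\tfrac12|\Delta_{\bx}-\Delta_{\bx'}|\le \tfrac12|\ell^{(1)}_\rho(\bx)-\ell^{(1)}_\rho(\bx')|$, since $\ell^{(1)}_0\equiv-1$ and $y$ is unchanged by the flip. By (the one-bit version of) \Lemref{c5_extension}, this is at most $\E_\rho[|aw_i|]$ up to constants.

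The second piece is $\tfrac12|\Delta_{\bx'}|\cdot|\sigma'(\bw^\top\bx)-\sigma'(\bw^\top\bx')|$. The indicator difference is nonzero only when the sign of $\bw^\top\bxloo\pm w_i$ changes, i.e.\ on the event $\{|\bw^\top\bxloo|<|w_i|\}$. On this event we bound $|\Delta_{\bx'}|\le\tfrac12|f_\rho(\bx')|$, exactly as in \Lemref{l0_lp_norm_small}. We then split on $\{|f_\rho(\bx')|\le\log(d)\E_\rho[\norm{a\bw}]\}$:
\begin{itemize}
    \item On this good event, the contribution is $\log(d)\,\E_\rho[\norm{a\bw}]$ times $\P_{\bx}(|\bw^\top\bxloo|<|w_i|)$.
    \item Off it, we use the crude bound $\sqrt d\,\E_\rho[\norm{a\bw}]$ from \Lemref{frho_bounds_iii}, together with the tail probability from \Lemref{frho_bounds_iv} (with a large constant) and the assumption $\E_\rho[\norm{a\bw}]\le d^C$. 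This yields the additive $d^{-C}$.
\end{itemize}
Multiplying through by $2$ gives the stated form.

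\textbf{Case $i\in\{1,2\}$.} Here we use $\bx'\coloneqq g_{12}(\bx)$, i.e.\ $\bz\mapsto-\bz$ with $\bs,\bxi$ fixed. This is legitimate because $y(-\bz)=y(\bz)$, and because the distribution of $(\bz,\bs)$ in \Eqref{XOR-spurious-correlation} is invariant under $\bz\mapsto-\bz$: the probabilities depend only on whether $\bz\in\{\pm\bmu_1\}$ or $\{\pm\bmu_2\}$ and on $x_3$. The factor $x_i$ flips sign, so the same symmetrization applies.
\begin{itemize}
    \item The first piece is controlled by \Lemref{c5_extension} with $(i,j)=(1,2)$, giving $\E_\rho[|aw_1|+|aw_2|]$.
    \item In the second piece, $\sigma'(\bw^\top\bz+\bw^\top(\bs+\bxi))\neq\sigma'(-\bw^\top\bz+\bw^\top(\bs+\bxi))$ only when $|\bw^\top(\bs+\bxi)|<|\bw^\top\bz|$. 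This produces the probability factor in the statement, with the same good/bad split in $|f_\rho|$.
\end{itemize}

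\textbf{Main obstacle.} The delicate step is the good/bad split in the second term. We need the factor $\P(|\bw^\top\bxloo|<|w_i|)$ multiplying the $\log(d)$ bound, while the bad-event contribution must come out purely additive. One must check that bounding $\E[\ind(A)\,\ind(|f_\rho|\le\cdot)]\le \P(A)$ and $\E[\ind(|f_\rho|>\cdot)\,\sqrt d\,\E_\rho\norm{a\bw}]\le d^{-C}$ separately suffices. It also matters that \Lemref{frho_bounds_iv} applies to $\bx'$, which holds because $\bx'$ has the same law as $\bx$ under the flip.
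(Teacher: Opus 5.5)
Your proposal is correct and follows the paper's proof. It uses the two-bit flip $g_{12}$ (which preserves both $y$ and the law of $\bx$) for $i\in\{1,2\}$, the leave-one-out flip for $i>3$, \Lemref{c5_extension} for the loss-derivative difference, and the $\log(d)$ good/bad split on $|f_\rho|$ from \Lemref{l0_lp_norm_small}; the only difference is cosmetic, in that you add and subtract $\Delta_{\bx'}\sigma'(\bw^\top\bx)$ whereas the paper folds $\sigma'$ into $\Delta_{\bx}$ and splits on whether the activation pattern changes under the flip, which yields the same bound.
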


\begin{proof}
    Let us begin with the case $i \in \{1,2\}$.
    The symmetry required by~\cite{glasgow2024sgd} is that flipping one bit does not change the marginal, \ie $\P(\bxloo +x_i\be_i)=\P(\bxloo -x_i\be_i)$.
    While this no longer holds due to the spurious correlation, it is in fact the case that flipping \textit{two} specific bits does not change the marginal.
    Define $g \coloneqq g_{12}$ to flip the first two bits of $\bx$, \eg $g(\bz+\bs+\bxi)=-\bz+\bs+\bxi$.
    It is immediate from Equation~\eqref{eq:XOR-spurious-correlation} that $\P(\bx)=\P(g(\bx))$.
    Once again, define $\Delta_{\bx}\coloneqq (\ell^{(1)}_\rho(\bx)-\ell^{(1)}_0(\bx))\cdot y(\bx) \sigma'(\bw^\top\bx)$ as shorthand.
    We have
    \begin{align}
        \frac{1}{|a|}|\partial_{w_i}L_\rho-\partial_{w_i}L_0| &= |\E_{\bx}[\Delta_{\bx}x_i]| \nonumber \\
        &=\frac{1}{2}|\E_{\bx}[\Delta_{\bx}x_i]+\E_{\bx}[\Delta_{g(\bx)}g(\bx)_i]| \nonumber \\
        &=\frac{1}{2}|\E_{\bx}[(\Delta_{\bx}-\Delta_{g(\bx)})x_i]|, \label{eq:delta_diff}
    \end{align}
    where we used $g(\bx)_i=-x_i$ for $i \in \{1,2\}$ by definition.
    Notice that if $|\bw^\top(\bs+\bxi)|\geq|\bw^\top\bz|$ then $\sigma'(\bw^\top\bx)=\sigma'(\bw^\top g(\bx))$ as the sign of $\bw^\top \bx$ is specified by the sign of $\bw^\top (\bs+\bxi)$.
    Under this assumption, we have
    \begin{equation*}
        |\Delta_{\bx}-\Delta_{g(\bx)}|=\big|\big(\ell^{(1)}_\rho(\bx)-\ell^{(1)}_\rho(g(\bx))\big)-\big(\ell_0^{(1)}(\bx)-\ell_0^{(1)}(g(\bx))\big)\big|=|\ell^{(1)}_\rho(\bx)-\ell^{(1)}_\rho(g(\bx))|,
    \end{equation*}
    where in the last step we used $\ell_0^{(1)}(\bx)=-1$ and $y(\bx)=y(g(\bx))$ by definition.
    On the other hand, we have a coarser bound using the triangle inequality and $|x_i|=1$ as follows:
    \begin{equation*}
        \frac{1}{2}|\E_{\bx}[(\Delta_{\bx}-\Delta_{g(\bx)})x_i]|\leq \frac{1}{2}\E_{\bx}[|\Delta_{\bx}|+|\Delta_{g(\bx)}|]=\E_{\bx}[|\Delta_{\bx}|],
    \end{equation*}
    where the last step holds since each $\bx$ is counted exactly twice.
    Returning to \Eqref{delta_diff} and using the triangle inequality once again, we have
    \begin{align*}
        \frac{1}{2}|\E_{\bx}[(\Delta_{\bx}-\Delta_{g(\bx)})x_i]| &\leq \frac{1}{2}\Big|\E_{\bx}\left[\ind\left(|\bw^\top(\bs+\bxi)|\geq|\bw^\top\bz|\right) \cdot(\Delta_{\bx}-\Delta_{g(\bx)})x_i \right] \Big| \\
        &\qquad +\frac{1}{2}\Big|\E_{\bx}\left[\ind\left(|\bw^\top(\bs+\bxi)|<|\bw^\top\bz|\right)\cdot(\Delta_{\bx}-\Delta_{g(\bx)})x_i\right]\Big| \\
        &\leq\frac{1}{2}\sup_{\bx\in\{\pm1\}^d}|\ell^{(1)}_\rho(\bx)-\ell^{(1)}_\rho(g(\bx))| \\
        &\qquad+\E_{\bx}\left[\ind\left(|\bw^\top(\bs+\bxi)|<|\bw^\top\bz|\right)\cdot |\Delta_{\bx}|\right].
    \end{align*}
    By \Lemref{c5_extension} we have $\sup_{\bx\in\{\pm1\}^d}|\ell^{(1)}_\rho(\bx)-\ell^{(1)}_\rho(g(\bx))|\leq\E_{(a,\bw) \sim \rho}[|aw_1|+|aw_2|]$.
    The remainder of the proof follows in the same way as \Lemref{l0_lp_norm_small}.
    For the $i>3$ case we can use the same leave-one-out symmetrization as~\cite{glasgow2024sgd} instead of the $g_i$ symmetrization.
    This completes the proof of the lemma.
\end{proof}

\subsection{\texorpdfstring{$\nabla L_\rho$}{Lp Gradient} Computation} \label{sec:grad_lrho}
In this section, we will derive some $L_\rho$ population gradients assuming conditions on the empirical margin of the data which will hold during our Phase II.
Let us first recall some notation.
We defined the $p$-scaled $L_\rho$ gradient as
\begin{equation*}
    \nabla_{\bw} L_\rho \coloneqq p\E_{\bx} \left[\frac{\partial}{\partial\bw}\ell_\rho(\bx)\right].
\end{equation*}
Recall that we denote the majority group as $\Xmaj \coloneqq \{\bx\in\{\pm 1\}^d:y(\bx)=x_3\}$ and the minority group as $\Xmin \coloneqq \{\bx\in\{\pm 1\}^d:y(\bx)=-x_3\}$.
Additionally, $\psi(u)\coloneqq 1/(1+e^{-u})$ denotes the sigmoid and $\gamma(\bx)\coloneqq y(\bx)f_\rho(\bx)$ denotes the margin.
Moreover, we let $C>0$ denote a sufficiently large constant which does not change from line to line.
In the remainder of this section, we will often reference the event $\Etest$ defined in \Defref{events_test}; for a point $\bx=(\bz,\bs,\bxi)$, we will equivalently write that $\bx$ satisfies $\Etest$ and $\bxi$ satisfies $\Etest$, as the event is solely a property of $\bxi$. 

Our key decomposition of the $L_\rho$ gradient is as follows:
\begin{align}
    \nabla_{\bw} L_\rho &= p\E_{\bx}\left[\ell^{(1)}_\rho(\bx)y(\bx)\nabla_{\bw} f_\rho(\bx)\right] \nonumber \\
    &= p\E_{\bx}\left[ \frac{-2y(\bx)\exp(-\gamma(\bx))}{1+\exp(-\gamma(\bx))} \nabla_{\bw} f_\rho(\bx)\right] \nonumber \\
    &= 2p\E_{\bx} \left[\psi(-\gamma(\bx)) \cdot (-y(\bx)\nabla_{\bw} f_\rho(\bx)) \right] \nonumber \\
    &= 2\E_{\bx} \left[\psi(-\gamma(\bx)) \cdot \nabla_{\bw} p\ell_0(\bx) \right]. \label{eq:lrho_gradient_decomp}
\end{align}
Hence, the $L_\rho$ gradient may be analyzed as a product of a margin term and an $\nabla L_0$ term.
We first show an upper bound on the $L_\rho$ gradient for the $\bwsig$ and $\bwopp$ components for neural network parameters under the condition that the margins for the majority and minority groups are approximately ``equal and opposite''.
\begin{lemma} \label{lem:sigopp_lp}
    Assume there exists $\gamma>0$ such that
    \begin{equation} \label{eq:equal_and_opposite_sigopp}
        \gamma(\bx) =
        \begin{cases*}
            (1\pm o(1))\cdot \gamma & $\forall \bx\in\Xmaj$ satisfying $\Etest$ \\
            (-1\pm o(1))\cdot \gamma & $\forall \bx\in \Xmin$ satisfying $\Etest$,
        \end{cases*}
    \end{equation}
    where the event $\Etest$ is defined in \Defref{events_test}.
    If all neurons $(a,\bw)$ satisfy $\frac{1}{C}\norm{\bwsp}\geq \norm{\bwsig}+\norm{\bwopp}+\norm{\bwperp}\log^{1/2}(d)$, then for any neuron $(a,\bw)$ and any $c<1$ we have
    \begin{align*}
        -\bwsig^\top\nabla_{\bw} L_\rho &\lesssim |a|\norm{\bwsig}\Big(e^{-c\gamma}\Big(\max_{(a,\bw)}|a|\norm{\bwsig}\Big) +\P_{\bxi}\left(|\bw^\top\bxi+\bw^\top\be_3| \leq \sqrt{2}\norm{\bwsig}\right) + d^{-C}\Big) \\
        -\bwopp^\top\nabla_{\bw} L_\rho &\lesssim |a|\norm{\bwopp}\Big(e^{-c\gamma}\Big(\max_{(a,\bw)}|a|\norm{\bwopp} \Big)+\P_{\bxi}\left(|\bw^\top\bxi+\bw^\top\be_3| \leq \sqrt{2}\norm{\bwopp}\right) + d^{-C}\Big).
    \end{align*}
\end{lemma}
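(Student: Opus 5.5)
Starting from the decomposition \Eqref{lrho_gradient_decomp}, I would write
\[
-\bwsig^\top\nabla_{\bw}L_\rho=2\E_{\bx}\big[\psi(-\gamma(\bx))\cdot a y(\bz)\sigma'(\bw^\top\bx)\bwsig^\top\bz\big].
\]
As in the proof of \Lemref{sigopp_l0}, only $\bz=\pm\bz_0$ with $\bz_0\parallel\bwsig$ contribute, and $ay(\bz_0)\bwsig^\top\bz_0\cdot\sgn=|a|\sqrt2\norm{\bwsig}$. Pairing $\bx=\bz_0+\bs+\bxi$ with $g_{12}(\bx)=-\bz_0+\bs+\bxi$, which has equal probability, lets me write the integrand as
\[
|a|\sqrt2\norm{\bwsig}\big(\psi(-\gamma(\bx))\sigma'(\bw^\top\bx)-\psi(-\gamma(g(\bx)))\sigma'(\bw^\top g(\bx))\big).
\]
I would then add and subtract $\psi(-\gamma(\bx))\sigma'(\bw^\top g(\bx))$, which splits the term into two pieces. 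The first is $\psi(-\gamma(\bx))\big(\sigma'(\bw^\top\bx)-\sigma'(\bw^\top g(\bx))\big)$. It is nonzero only when $|\bw^\top(\bs+\bxi)|\le\sqrt2\norm{\bwsig}$, and since $\psi\le1$ and $\bxi$ is symmetric this gives the term $\P_{\bxi}(|\bw^\top\bxi+\bw^\top\be_3|\le\sqrt2\norm{\bwsig})$ exactly as in \Lemref{sigopp_l0}. The second is $\big(\psi(-\gamma(\bx))-\psi(-\gamma(g(\bx)))\big)\sigma'(\cdot)$, and this is where the margin enters.

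For the second piece I would split on whether $\bxi$ satisfies $\Etest$. Off $\Etest$, the probability is at most $d^{-C}$ by the definition of the test event; multiplied by the bounded integrand, this gives the $d^{-C}$ term. On $\Etest$, $\bx$ and $g(\bx)$ lie in the same group, because $y$ and $x_3$ are unchanged. So by hypothesis \Eqref{equal_and_opposite_sigopp} both margins are $(\pm1\pm o(1))\gamma$. By the mean value theorem,
\[
|\psi(-\gamma(\bx))-\psi(-\gamma(g(\bx)))|\le \sup\psi'\cdot|f_\rho(\bx)-f_\rho(g(\bx))|,
\]
where the supremum of $\psi'$ is taken on the segment between the two margins.

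For the majority group, the margin is about $+\gamma$, so $\psi'\lesssim e^{-(1-o(1))\gamma}\le e^{-c\gamma}$. As in \Lemref{c5_extension}, the difference of network outputs is at most $2\E_\rho[|a||\bw^\top\bz_0|]\lesssim\max|a|\norm{\bwsig}$. The reason is that flipping $\bz$ only changes $\bw_{1:2}^\top\bz$, and for the active neurons the relevant component is the $\sigma$-direction, bounded by $\norm{\bwsig}$ plus $\norm{\bwopp}$. For $\bwopp$ the symmetric statement would be used. I would in fact bound the $\bz$-pairs along each neuron's own $\mu$ direction. This yields the $e^{-c\gamma}\max|a|\norm{\bwsig}$ term.

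The main obstacle is the minority group. There the margin is about $-\gamma$, so $\psi'$ is not small, and the Lipschitz argument alone fails. I would first try to use that the minority group has weight $\lambda$, and that for minority points the ReLU pattern is governed by $\bw^\top\bs$, because $\norm{\bwsp}$ dominates. Consequently $\sigma'(\bw^\top\bx)=\sigma'(\bw^\top g(\bx))$ unless the small-probability event $|\bw^\top(\bs+\bxi)|\le\sqrt2\norm{\bw_{1:2}}$ occurs. Outside that event, $\psi(-\gamma)$ near $1$ means the difference $\psi(-\gamma(\bx))-\psi(-\gamma(g(\bx)))$ is at most $e^{-c\gamma}$ times the output difference, since $\psi'(u)\approx e^{-|u|}$ when $|u|\approx\gamma$, regardless of sign. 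Indeed $\psi'$ is symmetric, so $\psi'(\pm\gamma)\le e^{-c\gamma}$ in both groups, and this observation resolves the case. The remaining case-work is absorbed into the probability term and the $d^{-C}$ term. The $\bwopp$ bound follows identically, with the sign flip only helping since we need an upper bound.
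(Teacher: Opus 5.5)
Your proposal is correct and is essentially the paper's proof: you pair $\bz_0$ with $-\bz_0$ at fixed $(\bs,\bxi)$ and split via $ac-bd=a(c-d)+(a-b)d$, so the $\sigma'$ difference gives the probability term, and on $\Etest$ you control the $\psi$ difference by \Lemref{sigmoid_lipschitz}, with $d^{-C}$ off $\Etest$. The minority-group ``obstacle'' you raise is not real, since \Lemref{sigmoid_lipschitz} only needs the two margins to share a sign, which is exactly how the paper handles those terms; your ReLU-Lipschitz bound on $|f_\rho(\bx)-f_\rho(g(\bx))|$ replaces the paper's common-active-set computation and gives the same estimate (in both versions, neurons of the opposite sign contribute $\norm{\bw_{1:2}}$ rather than just $\norm{\bwsig}$, which is harmless downstream).
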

\begin{proof}
    Combining \Eqref{grad_decomp} and \Eqref{lrho_gradient_decomp}, we have
    \begin{equation*}
        -\bwsig^\top\nabla_{\bw} L_\rho = 2\E_{\bx, \bz}\left[\psi(-\gamma(\bx)) \cdot ay(\bz)\sigma'(\bw^\top\bx)\bwsig^\top \bz\right]
    \end{equation*}
    Following \Lemref{sigopp_l0}, we have $\bwsig^\top\bz\neq 0$ only if $\bz\parallel \bwsig$, which yields
    \begin{equation*}
        -\bwsig^\top\nabla_{\bw} L_\rho = \E_{\substack{\bx, \bz \\\bz\parallel\bwsig}}\left[\psi(-\gamma(\bx)) \cdot ay(\bz)\sigma'(\bw^\top\bx)\bwsig^\top \bz \right].
    \end{equation*}
    Following \Lemref{sigopp_l0} again, define $\bz_0\in\{\bmu_1,\bmu_2\}$ depending on whether $\bwsig \parallel \bmu_1$ or $\bwsig \parallel \bmu_2$.
    Using the distribution of $(\bz,\bs)$, we have $y(\bz_0)=y(-\bz_0)$ and $ay(\bz_0)>0$ as $\bz_0\parallel \bwsig$.
    Recall that we define $\bsmaj \coloneqq \be_3$ if $(a,\bw)\in S^+$ and $\bsmaj \coloneqq - \be_3$ if $(a,\bw)\in S^-$ (all as in \Lemref{sigopp_l0}).
    Therefore, we have
    \begin{align} \label{eq:wsig_lp_expansion}
        -\bwsig^\top\nabla_{\bw} L_\rho &= \frac{1}{2}|a|\cdot \bwsig^\top\bz_0 \cdot \Big( \nonumber \\
        & \qquad (1-\lambda) \E_{\bxi} \left[\psi(-\gamma(\bz_0+\bsmaj+\bxi))\cdot \sigma'(\bwsig^\top\bz_0 +\bw^\top\bsmaj+\bw^\top\bxi) \right] \nonumber \\
        &\qquad +\lambda \E_{\bxi}\left[\psi(-\gamma(\bz_0-\bsmaj+\bxi))\cdot \sigma'(\bwsig^\top\bz_0-\bw^\top\bsmaj+\bw^\top\bxi) \right] \nonumber \\
        &\qquad -(1-\lambda) \E_{\bxi}\left[\psi(-\gamma(-\bz_0+\bsmaj+\bxi))\cdot \sigma'(-\bwsig^\top\bz_0+\bw^\top\bsmaj+\bw^\top\bxi)\right] \nonumber \\
        &\qquad -\lambda \E_{\bxi}\left[\psi(-\gamma(-\bz_0-\bsmaj+\bxi))\cdot \sigma'(-\bwsig^\top\bz_0-\bw^\top\bsmaj+\bw^\top\bxi)\right]\Big).
    \end{align}
    Recall from \Lemref{sigopp_l0} that by casework on $\sigma'$ and symmetry of $\bxi$ about zero, we have
    \begin{align*}
        & \E_{\bxi} \left[|\sigma'(\bwsig^\top\bz_0+\bw^\top\bsmaj+\bw^\top\bxi)-\sigma'(-\bwsig^\top\bz_0+\bw^\top\bsmaj+\bw^\top\bxi)| \right] \\
        &\qquad  =\P_{\bxi}\left(|\bw^\top\bxi+\bw^\top\bsmaj| \leq \sqrt{2}\norm{\bwsig} \right) =\P_{\bxi}\left(|\bw^\top\bxi+\bw^\top\be_3| \leq \sqrt{2}\norm{\bwsig} \right), \\
        & \E_{\bxi}\left[|\sigma'(\bwsig^\top\bz_0-\bw^\top\bsmaj+\bw^\top\bxi)-\sigma'(-\bwsig^\top\bz_0-\bw^\top\bsmaj+\bw^\top\bxi)|\right] \\
        &\qquad = \P_{\bxi}\left(|\bw^\top\bxi-\bw^\top\bsmaj| \leq \sqrt{2}\norm{\bwsig} \right) = \P_{\bxi}\left(|\bw^\top\bxi+\bw^\top\be_3| \leq \sqrt{2}\norm{\bwsig} \right).
    \end{align*}
    Now let us analyze the difference between the margin terms.
    For the majority group terms, using \Eqref{equal_and_opposite_sigopp} with \Lemref{sigmoid_lipschitz} yields
    \begin{align*}
        &|\psi(-\gamma(\bz_0+\bsmaj+\bxi)) - \psi(-\gamma(-\bz_0+\bsmaj+\bxi))| \\
        &\qquad \lesssim \exp\big(-\min(|\gamma(\bz_0+\bsmaj+\bxi)|,|\gamma(-\bz_0+\bsmaj+\bxi)|) \big) \cdot |\gamma(\bz_0+\bsmaj+\bxi) - \gamma(-\bz_0+\bsmaj+\bxi)| \\
        &\qquad \lesssim \exp\big(-\min((1\pm o(1)\cdot\gamma, (1\pm o(1))\cdot\gamma))\big) \cdot |f(\bz_0+\bsmaj+\bxi)-f(-\bz_0+\bsmaj+\bxi)| \\
        &\qquad \lesssim \exp(-c\gamma)\cdot |f(\bz_0+\bsmaj+\bxi)-f(-\bz_0+\bsmaj+\bxi)|,
    \end{align*}
    for any $c<1$ and $\bxi$ satisfying the event $\Etest$.
    Under this same event, using the norm bound in the statement of the lemma, we have $\sgn(\bw^\top\bx)=\sgn(\wsp x_3)$ for all neurons $(a,\bw)$ following \Lemref{end_of_phase1}.
    Hence, the set $S$ of active ReLUs are the same for any two points in $\Xmaj$ and any two points in $\Xmin$ that satisfy the event $\Etest$.
    Therefore, for any $\bxi$ satisfying the event $\Etest$, we have
    \begin{align*}
        &|f(\bz_0+\bsmaj+\bxi)-f(-\bz_0+\bsmaj+\bxi)| \\
        &\qquad = \bigg|\frac{1}{p}\sum_{(a,\bw)} a \sigma(\bw^\top(\bz_0+\bsmaj+\bxi))-\frac{1}{p}\sum_{(a,\bw)} a \sigma(\bw^\top(-\bz_0+\bsmaj+\bxi)) \bigg| \\
        &\qquad = \bigg|\frac{1}{p}\sum_{(a,\bw)\in S} a \bw^\top(\bz_0+\bsmaj+\bxi)-\frac{1}{p}\sum_{(a,\bw)\in S} a\bw^\top(-\bz_0+\bsmaj+\bxi) \bigg| \\
        &\qquad = \bigg|\frac{2}{p}\sum_{(a,\bw)\in S} a \bw^\top\bz_0 \bigg| \\
        &\qquad \lesssim \max_{(a,\bw)} |a|\norm{\bwsig}.
    \end{align*}
    Using $|ac-bd|\leq |ac - bc| + |bc - bd| \leq |a-b| + |c-d|$ for $0\leq a,b,c,d\leq 1$, we then have
    \begin{align*}
        &\E_{\bxi}\Big[\psi(-\gamma(\bz_0+\bsmaj+\bxi))\cdot \sigma'(\bwsig^\top\bz_0 +\bw^\top\bsmaj+\bw^\top\bxi) \\
        &\qquad -\psi(-\gamma(-\bz_0+\bsmaj+\bxi))\cdot \sigma'(-\bwsig^\top\bz_0+\bw^\top\bsmaj+\bw^\top\bxi)\Big] \\
        &\lesssim e^{-c\gamma}\Big(\max_{(a,\bw)}|a|\norm{\bwsig}\Big)+\P_{\bxi}\left(|\bw^\top\bxi+\bw^\top\be_3| \leq \sqrt{2}\norm{\bwsig}\right)+ d^{-C},
    \end{align*}
    where the final $d^{-C}$ follows in the case that $\bxi$ does not satisfy the event $\Etest$, using $\P_{\bxi}[\Etest^{\textsf{C}}]\leq d^{-C}$ by \Lemref{events}.
    An identical argument and inequality results for the minority group terms in \Eqref{wsig_lp_expansion}.
    Combining the majority group and minority group terms in \Eqref{wsig_lp_expansion} with $|\bwsig^\top\bz_0|=\sqrt{2}\norm{\bwsig}$, we obtain
    \begin{equation*}
        -\bwsig^\top\nabla_{\bw} L_\rho \lesssim |a|\norm{\bwsig}\Big(e^{-c\gamma}\Big(\max_{(a,\bw)}|a|\norm{\bwsig}\Big) +\P_{\bxi}\left(|\bw^\top\bxi+\bw^\top\be_3| \leq \sqrt{2}\norm{\bwsig}\right) + d^{-C}\Big),
    \end{equation*}
    as desired.
    Repeating a similar analysis for $\bwopp$ gives
    \begin{equation*}
        -\bwopp^\top\nabla_{\bw} L_\rho \lesssim |a|\norm{\bwopp}\Big(e^{-c\gamma}\Big(\max_{(a,\bw)}|a|\norm{\bwopp}\Big)+\P_{\bxi}\left(|\bw^\top\bxi+\bw^\top\be_3| \leq \sqrt{2}\norm{\bwopp}\right) + d^{-C}\Big).
    \end{equation*}
    This completes the proof of the lemma.
\end{proof}

Next, we provide the lemma for the $\wsp$ component under the same margin concentration condition.
This lemma recovers \Lemref{sp_l0} when $\psi(\gamma)=\tfrac{1}{2}$, and can be thought of as its generalization to the $L_\rho$ gradient.
\begin{lemma} \label{lem:sp_lp}
    Assume there exists $\gamma,\epsilon>0$ such that
    \begin{equation} \label{eq:equal_and_opposite_sp}
        \gamma(\bx) =
        \begin{cases*}
            (1\pm \epsilon)\cdot \gamma & $\forall \bx\in \Xmaj$ satisfying $\Etest$\\
            (-1\pm \epsilon)\cdot \gamma & $\forall \bx\in \Xmin$ satisfying $\Etest$,
        \end{cases*}
    \end{equation}
    where the event $\Etest$ is defined in \Defref{events_test}.
    We then have
    \begin{equation*}
        \Big|-\bwsp^\top\nabla_{\bw} L_\rho -
        a\wsp \left( (1-\lambda-\psi(\gamma)) \cdot \left( 1 + \frac{\epsw}{2}\right) + \lambda\psi(\gamma)\epsw \right)\Big| \lesssim |a||\wsp|(\epsilon\gamma +d^{-C}).
    \end{equation*}
\end{lemma}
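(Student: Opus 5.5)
Combining \Eqref{grad_decomp_sp} with the decomposition \Eqref{lrho_gradient_decomp}, and conditioning on the group exactly as in the proof of \Lemref{sp_l0}, I would write
\begin{align*}
    -\bwsp^\top\nabla_{\bw} L_\rho &= 2a\wsp\Big((1-\lambda)\,\E_{\bz,\bxi}\big[\psi(-\gamma(\bx^{\textnormal{maj}}))\,\sigma'(\bw^\top\bz+y(\bz)\wsp+\bw^\top\bxi)\big] \\
    &\qquad -\lambda\,\E_{\bz,\bxi}\big[\psi(-\gamma(\bx^{\textnormal{min}}))\,\sigma'(\bw^\top\bz-y(\bz)\wsp+\bw^\top\bxi)\big]\Big),
\end{align*}
where $\bx^{\textnormal{maj}}=\bz+y(\bz)\be_3+\bxi$ and $\bx^{\textnormal{min}}=\bz-y(\bz)\be_3+\bxi$. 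The factor $2$ is the one in \Eqref{lrho_gradient_decomp}, and the $L_0$ version in \Lemref{sp_l0} is the special case $\psi\equiv\tfrac12$. The plan is to replace the margin factors by constants and then reuse the probability identities already proved in \Lemref{sp_l0}.

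\textbf{Step 1: replace the margin factors by constants.} On $\Etest$, the hypothesis \Eqref{equal_and_opposite_sp} gives $\gamma(\bx^{\textnormal{maj}})=(1\pm\epsilon)\gamma$ and $\gamma(\bx^{\textnormal{min}})=(-1\pm\epsilon)\gamma$. Since $\psi$ is $\tfrac14$-Lipschitz,
\begin{equation*}
    \psi(-\gamma(\bx^{\textnormal{maj}}))=1-\psi(\gamma)\pm O(\epsilon\gamma),\qquad \psi(-\gamma(\bx^{\textnormal{min}}))=\psi(\gamma)\pm O(\epsilon\gamma).
\end{equation*}
Off $\Etest$ both factors are in $[0,1]$ and $\P_{\bxi}(\Etest^{\textsf{C}})\le d^{-C}$ by \Lemref{events}. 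Because $\sigma'\in\{0,1\}$, each substitution costs at most $O(\epsilon\gamma+d^{-C})$ inside the bracket. After multiplying by $|a||\wsp|$, this is exactly the allowed error $|a||\wsp|(\epsilon\gamma+d^{-C})$.

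\textbf{Step 2: evaluate the remaining probabilities.} After Step 1 the main term is
\begin{equation*}
    2a\wsp\big((1-\lambda)(1-\psi(\gamma))\,P_+-\lambda\psi(\gamma)\,P_-\big),
\end{equation*}
with $P_\pm\coloneqq\P_{\bz,\bxi}(\bw^\top\bz+\bw^\top\bxi>\mp y(\bz)\wsp)$. From the proof of \Lemref{sp_l0}, namely \Eqref{wsp_probs_cancel} and \Eqref{half_plus_epsilon}, we have $P_+=\tfrac12+\tfrac{\epsw}{4}$ and $P_-=1-P_+=\tfrac12-\tfrac{\epsw}{4}$. Substituting gives
\begin{equation*}
    a\wsp\Big((1-\lambda)(1-\psi(\gamma))\big(1+\tfrac{\epsw}{2}\big)-\lambda\psi(\gamma)\big(1-\tfrac{\epsw}{2}\big)\Big).
\end{equation*}

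\textbf{Step 3: match the stated form.} The remaining work is algebra. Write $(1-\lambda)(1-\psi)=(1-\lambda-\psi)+\lambda\psi$. The constant parts then combine to $(1-\lambda-\psi)$, since $\lambda\psi-\lambda\psi=0$. The coefficient of $\epsw/2$ is $(1-\lambda-\psi)+\lambda\psi+\lambda\psi$. This yields $(1-\lambda-\psi)(1+\tfrac{\epsw}{2})+\lambda\psi\epsw$, which is the claimed expression.

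\textbf{Main obstacle.} Step 1 is where the most care is needed. The conclusion must hold uniformly over $\bz$ and over the realized pattern of active ReLUs, and the error must stay multiplicative in $|a||\wsp|$. This is fine because the margin factors are bounded by $1$ and the $\sigma'$ indicators are bounded by $1$, so the pointwise substitution error does not depend on which ReLUs are active.
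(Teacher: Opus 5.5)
Your proposal is correct and follows the paper's proof in its essentials. It uses the same group decomposition obtained from \Eqref{lrho_gradient_decomp}, the same Lipschitz replacement of the margin factors on $\Etest$ with a $d^{-C}$ charge off it, the same identities \Eqref{wsp_probs_cancel} and \Eqref{half_plus_epsilon}, and the same final algebra.

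The one difference is bookkeeping, and yours is slightly cleaner. You bound $|\psi(-\gamma(\bx))-\psi(-\gamma)|$ pointwise and integrate it against $\sigma'\le 1$ under the unconditional distribution. This avoids the paper's detour through probabilities conditioned on $\Etest$, and with it the appeal to symmetry of $\bxi\mid\bxi\in\Etest$ and the conversion bound \Eqref{etest_diff}.
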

\begin{proof}
    Combining \Eqref{grad_decomp_sp} and \Eqref{lrho_gradient_decomp}, we have
    \begin{equation*}
        -\bwsp^\top\nabla_{\bw} L_\rho = 2\E_{\bx,\bz,\bs}\left[\psi(-\gamma(\bx))\cdot ay(\bz)\sigma'(\bw^\top\bx)\bwsp^\top\bs \right].
    \end{equation*}
    By definition of $\bs$, and noting that $\bx \in \Xmaj$ with probability $1 - \lambda$ and $\bx \in \Xmin$ with probability $\lambda$, we have
    \begin{align*}
        -\bwsp^\top\nabla_{\bw} L_\rho &= 2a\Big( \\
        &\qquad (1-\lambda) \E_{\bz,\bxi \mid \bx\in\Xmaj}\left[\psi(-\gamma(\bx))\cdot y(\bz)\cdot \sigma'(\bw^\top\bz+\bw^\top(y(\bz)\be_3)+\bw^\top\bxi)\cdot \bwsp^\top(y(\bz)\be_3) \right] \\
        &\qquad  + \lambda \E_{\bz,\bxi \mid \bx\in\Xmin}\left[\psi(-\gamma(\bx))\cdot y(\bz)\cdot \sigma'(\bw^\top\bz+\bw^\top(-y(\bz)\be_3)+\bw^\top\bxi)\cdot \bwsp^\top(-y(\bz)\be_3) \right] \Big).
    \end{align*}
    Above, we used that the conditional distribution of $(\bz,\bxi)$ is the same for majority and minority groups (already noted in the proof of~\Lemref{sp_l0}).
    Since $y(\bz)^2=1$, we obtain
    \begin{align} \label{eq:wsp_lp_popgrad}
        -\bwsp^\top\nabla_{\bw} L_\rho &= 2a\wsp \Big( \nonumber \\
        &\qquad (1-\lambda) \E_{\bz,\bxi \mid \bx\in\Xmaj}\left[\psi(-\gamma(\bx))\cdot \sigma'(\bw^\top\bz+y(\bz)\wsp+\bw^\top\bxi) \right] \nonumber \\
        &\qquad - \lambda \E_{\bz,\bxi \mid \bx\in\Xmin}\left[\psi(-\gamma(\bx))\cdot\sigma'(\bw^\top\bz-y(\bz)\wsp+\bw^\top\bxi) \right] \Big).
    \end{align}
    \paragraph{Notation.}
    Let us use the shorthand $\bxi\in \Etest$ to denote that $\bxi$ satisfies the event $\Etest$. (Note that $\Etest$ is used to denote an event and a set interchangeably; this is an overloading of notation which is valid since $\bxi$ is uniformly distributed).
    Recall that we defined $\bx := \bz + \bs + \bxi$, and that the event $\bx \in \Xmaj$ is defined only on $\bz, \bs$ (in particular, $\bx \in \Xmaj \iff x_3 = - x_1 x_2$, \ie $s_3 = -z_1 z_2$).
    We write $(\bz,\bs)\in\Xmaj$ as shorthand.
    
    Recall that $\bxi\sim\Unif(0^3\times\{\pm 1\}^{d-3})$ and $\bxi$ is independent of $(\bz,\bs)$.
    Considering the majority group conditional expectation term in Equation~\eqref{eq:wsp_lp_popgrad}, we can therefore write
    \begin{align}
        &\E_{\bz,\bxi \mid \bx\in\Xmaj}\left[\psi(-\gamma(\bx))\cdot \sigma'(\bw^\top\bz+y(\bz)\wsp+\bw^\top\bxi) \right] \nonumber\\
        &\qquad= \frac{1}{2^{d-3}} \left[\sum_{\bxi \in 0^3\times\{\pm 1\}^{d-3}} \E_{(\bz, \bs) \in \Xmaj} \left[ \psi(-\gamma(\bz + \bs + \bxi)) \cdot \sigma'(\bw^\top \bz + y(\bz) \wsp + \bw^\top \bxi) \right] \right] \nonumber\\
        &\qquad= \frac{1}{2^{d-3}} \left[\sum_{\bxi \in \Etest} \E_{(\bz, \bs) \in \Xmaj} \left[ \psi(-\gamma(\bz + \bs + \bxi)) \cdot \sigma'(\bw^\top \bz + y(\bz) \wsp + \bw^\top \bxi) \right] \right] \nonumber\\
        &\qquad\qquad + \frac{1}{2^{d-3}} \left[\sum_{\bxi \notin \Etest} \E_{(\bz, \bs) \in \Xmaj} \left[ \psi(-\gamma(\bz + \bs + \bxi)) \cdot \sigma'(\bw^\top \bz + y(\bz) \wsp + \bw^\top \bxi) \right]\right].\label{eq:wsp_lp_popgrad_intermediate}
    \end{align}
    Now, since $\bxi\sim\Unif(0^3\times\{\pm 1\}^{d-3})$ and $\bxi$ is independent of $(\bz,\bs)$, the conditional distribution of $\bxi\mid \bxi \in \Etest$ is uniform over the set $\Etest$; moreover, $\P_{\bxi}(\bxi \in \Etest) = \frac{|\Etest|}{2^{d-3}}$.
    This means that 
    \begin{align*}
        &\P_{\bxi}(\bxi \in \Etest) \cdot \E_{\bz,\bxi \mid \bx\in\Xmaj, \bxi\in\Etest}\left[\psi(-\gamma(\bx))\cdot \sigma'(\bw^\top\bz+y(\bz)\wsp+\bw^\top\bxi)\right] \\
        &\qquad = \frac{|\Etest|}{2^{d-3}} \cdot \frac{1}{|\Etest|} \cdot \left[\sum_{\bxi \in \Etest}\E_{(\bz, \bs) \in \Xmaj} \left[ \psi(-\gamma(\bz + \bs + \bxi)) \cdot \sigma'(\bw^\top \bz + y(\bz) \wsp + \bw^\top \bxi) \right] \right],
    \end{align*}
    and similarly,
    \begin{align*}
        &\P_{\bxi}(\bxi \notin \Etest) \cdot \E_{\bz,\bxi \mid \bx\in\Xmaj, \bxi\in\Etest}\left[\psi(-\gamma(\bx))\cdot \sigma'(\bw^\top\bz+y(\bz)\wsp+\bw^\top\bxi)\right] \\
        &\qquad = \frac{|\Etest^C|}{2^{d-3}} \cdot \frac{1}{|\Etest^C|} \cdot \left[\sum_{\bxi \notin \Etest}\E_{(\bz, \bs) \in \Xmaj} \left[ \psi(-\gamma(\bz + \bs + \bxi)) \cdot \sigma'(\bw^\top \bz + y(\bz) \wsp + \bw^\top \bxi) \right] \right].
    \end{align*}
    Substituting these into the display in Equation~\eqref{eq:wsp_lp_popgrad_intermediate} yields
    \begin{align} \label{eq:etest_conditional}
        & \E_{\bz,\bxi \mid \bx\in\Xmaj}\left[\psi(-\gamma(\bx))\cdot \sigma'(\bw^\top\bz+y(\bz)\wsp+\bw^\top\bxi)\right] \nonumber \\
        &\qquad =\P_{\bxi}(\bxi\in \Etest) \cdot \E_{\bz,\bxi \mid \bx\in\Xmaj, \bxi\in\Etest}\left[\psi(-\gamma(\bx))\cdot \sigma'(\bw^\top\bz+y(\bz)\wsp+\bw^\top\bxi)\right] \nonumber \\
        &\qquad\qquad + \P_{\bxi}(\bxi\notin \Etest) \cdot \E_{\bz,\bxi \mid \bx\in\Xmaj, \bxi\notin \Etest}\left[\psi(-\gamma(\bx))\cdot \sigma'(\bw^\top\bz+y(\bz)\wsp+\bw^\top\bxi)\right].
    \end{align}
    For the first term in \Eqref{etest_conditional}, we use the condition in \Eqref{equal_and_opposite_sp} and the fact that $\sigma'(\cdot) \geq 0$ to obtain the sandwich inequality
    \begin{align*}
        &\psi(-(1 + \epsilon)\gamma)\cdot \E_{\bz,\bxi | \bx\in\Xmaj,\bxi\in\Etest}\left[ \sigma'(\bw^\top\bz+y(\bz)\wsp+\bw^\top\bxi) \right] \\
        &\qquad \leq \E_{\bz,\bxi \mid \bx\in\Xmaj, \bxi\in\Etest}\left[\psi(-\gamma(\bx))\cdot \sigma'(\bw^\top\bz+y(\bz)\wsp+\bw^\top\bxi)\right] \\
        &\qquad\leq \psi(-(1 -\epsilon)\gamma)\cdot \E_{\bz,\bxi | \bx\in\Xmaj,\bxi\in\Etest}\left[ \sigma'(\bw^\top\bz+y(\bz)\wsp+\bw^\top\bxi) \right].
    \end{align*}
    By $\frac{1}{4}$-Lipschitzness of $\psi$, we then have $|\psi(-\gamma)-\psi(-(1\pm \epsilon)\gamma))| \lesssim \epsilon \gamma$.
    Combined with $\sigma'(\cdot)\leq 1$ and the above display, we obtain the two-sided inequality
    \begin{align} \label{eq:etest_conditional_term1}
        & \Big| \E_{\bz,\bxi \mid \bx\in\Xmaj, \bxi\in\Etest}\left[\psi(-\gamma(\bx))\cdot \sigma'(\bw^\top\bz+y(\bz)\wsp+\bw^\top\bxi)\right] \nonumber \\
        &\qquad - \psi(-\gamma) \cdot \E_{\bz,\bxi \mid \bx\in\Xmaj, \bxi\in\Etest}\left[\sigma'(\bw^\top\bz+y(\bz)\wsp+\bw^\top\bxi)\right] \Big| \lesssim \epsilon \gamma.
    \end{align}
    For the second term in \Eqref{etest_conditional}, we use $\psi(\cdot)\leq 1$ and $\sigma'(\cdot)\leq 1$ to obtain
    \begin{equation} \label{eq:etest_conditional_term2}
        \E_{\bz,\bxi \mid \bx\in\Xmaj, \bxi\notin \Etest}\left[\psi(-\gamma(\bx))\cdot \sigma'(\bw^\top\bz+y(\bz)\wsp+\bw^\top\bxi)\right] \leq 1.
    \end{equation}
    We will now substitute \Eqref{etest_conditional_term1} and \Eqref{etest_conditional_term2} into \Eqref{etest_conditional}.
    Using the triangle inequality and $\P_{\bxi}[\bxi\notin \Etest]\leq d^{-C}$ by \Lemref{events}, we obtain
    \begin{align} \label{eq:test_conditional_xmaj}
        & \Big| \E_{\bz,\bxi \mid \bx\in\Xmaj}\left[\psi(-\gamma(\bx))\cdot \sigma'(\bw^\top\bz+y(\bz)\wsp+\bw^\top\bxi)\right] \nonumber \\
        &\qquad - \psi(-\gamma) \cdot \E_{\bz,\bxi \mid \bx\in\Xmaj, \bxi\in\Etest}\left[\sigma'(\bw^\top\bz+y(\bz)\wsp+\bw^\top\bxi)\right] \Big| \lesssim \epsilon\gamma + d^{-C}.
    \end{align}
    An identical series of steps to the above follows for the minority group.
    Substituting the result into \Eqref{wsp_lp_popgrad} yields
    \begin{align}\label{eq:gradient-sp-marginsremoved}
        \Big|-\bwsp^\top\nabla_{\bw} L_\rho &- 2a\wsp \Big( \nonumber \\
        & (1-\lambda)\cdot \psi(-\gamma)\cdot \E_{\bz,\bxi | \bx\in\Xmaj,\bxi\in\Etest}\left[ \sigma'(\bw^\top\bz+y(\bz)\wsp+\bw^\top\bxi) \right] \nonumber\\
        & - \lambda \cdot \psi(\gamma)\cdot \E_{\bz,\bxi | \bx\in \Xmin,\bxi\in\Etest}\left[\sigma'(\bw^\top\bz-y(\bz)\wsp+\bw^\top\bxi) \right] \Big)\Big| \lesssim |a| |\wsp| (\epsilon\gamma + d^{-C}).
    \end{align}
    The remainder of the proof will simplify the expression on the left-hand side of \Eqref{gradient-sp-marginsremoved} to obtain the desired statement of the lemma.
    These steps will resemble corresponding steps in the proof of~\Lemref{sp_l0}.
    Using $\psi(-\gamma) = 1 - \psi(\gamma)$ and the definition of $\sigma$, we obtain
    \begin{align*}
        &(1-\lambda) \cdot \psi(-\gamma) \cdot \E_{\bz,\bxi | \bx\in\Xmaj,\bxi\in\Etest}\left[\sigma'(\bw^\top\bz+y(\bz)\wsp+\bw^\top\bxi)\right] \\
        & \qquad - \lambda \cdot \psi(\gamma)\cdot \E_{\bz,\bxi | \bx\in\Xmin,\bxi\in\Etest}\left[\sigma'(\bw^\top\bz-y(\bz)\wsp+\bw^\top\bxi)\right]\\
        &= (1-\lambda) \cdot (1-\psi(\gamma)) \cdot \P_{\bz,\bxi | \bx\in\Xmaj,\bxi\in\Etest}\left(\bw^\top\bz+\bw^\top\bxi>-y(\bz)\wsp\right) \\
        & \qquad - \lambda \cdot \psi(\gamma) \cdot \P_{\bz,\bxi | \bx\in\Xmin,\bxi\in\Etest}\left(\bw^\top\bz+\bw^\top\bxi>y(\bz)\wsp\right).
    \end{align*}
    Note that $\bxi\mid\bxi\in \Etest$ is symmetric about $0$ since $\bxi$ is symmetric and enters $\Etest$ only through the term $|\bw^\top\bxi|$, which is invariant under the mapping $\bxi\mapsto-\bxi$.
    Then, following \Eqref{wsp_probs_cancel} and using that both $\bz$ and $\bxi\mid\bxi\in\Etest$ are symmetric about $0$, we have
    \begin{align*}
        & \P_{\bz,\bxi|\bxi\in\Etest}\left(\bw^\top\bz+\bw^\top\bxi>-y(\bz)\wsp \right) - \P_{\bz,\bxi|\bxi\in\Etest}\left(\bw^\top\bz+\bw^\top\bxi>y(\bz)\wsp\right) \\
        &\qquad = 2\P_{\bz,\bxi|\bxi\in\Etest}\left(\bw^\top\bz+\bw^\top\bxi>-y(\bz)\wsp \right) - 1.
    \end{align*}
    Using that the conditional and unconditional distributions of $(\bz,\bxi)$ coincide according to \Eqref{XOR-spurious-correlation}, the left-hand side of \Eqref{gradient-sp-marginsremoved} simplifies to
    \begin{align} 
        &2a\wsp \Big( \nonumber \\
        &\qquad (1-\lambda-\psi(\gamma)) \cdot \P_{\bz,\bxi|\bxi\in\Etest}\left(\bw^\top\bz+\bw^\top\bxi>-y(\bz)\wsp \right) \nonumber \\
        &\qquad + \lambda \cdot \psi(\gamma) \cdot (2\P_{\bz,\bxi|\bxi\in\Etest}\left(\bw^\top\bz+\bw^\top\bxi>-y(\bz)\wsp \right) - 1)\Big) \nonumber \\
        &= 2a\wsp \Big( (1-\lambda-\psi(\gamma)+2\lambda\psi(\gamma)) \cdot \P_{\bz,\bxi|\bxi\in\Etest}\left(\bw^\top\bz+\bw^\top\bxi>-y(\bz)\wsp \right) - \lambda\psi(\gamma) \Big) \label{eq:wsp_lp_almostfinal}
    \end{align}
    By the law of total probability, we have for any event $A$ that
    \begin{align*}
        \P_{\bxi}(A) &= \P_{\bxi}(\bxi\in \Etest) \cdot \P_{\bxi}(A\mid \bxi \in \Etest) + \P_{\bxi}(\bxi\notin \Etest) \cdot \P_{\bxi}(A\mid \bxi\notin \Etest)\\
        &= \P_{\bxi}(A \mid \bxi \in \Etest) - \P_{\bxi}(\bxi \notin \Etest) \cdot \P_{\bxi}(A | \bxi \in \Etest) + \P_{\bxi}(\bxi \notin \Etest) \cdot \P_{\bxi}(A \mid \bxi \notin \Etest).
    \end{align*}
    Rearranging and taking absolute values on both sides then gives us
    \begin{align}\label{eq:etest_diff}
        \Big|\P_{\bxi}(A) - \P_{\bxi}(A\mid \bxi\in \Etest)\Big| \leq \P_{\bxi}(\bxi \notin \Etest) \leq d^{-C}.
    \end{align}
    By \Eqref{half_plus_epsilon}, we have $\P_{\bz,\bxi}\left(\bw^\top\bz+\bw^\top\bxi>-y(\bz)\wsp\right)=\tfrac{1}{2}+\frac{\epsw}{4}$.
    Hence, by \Eqref{etest_diff} we have
    \begin{equation*}
        \bigg| \P_{\bz,\bxi|\bxi\in\Etest}\left(\bw^\top\bz+\bw^\top\bxi>-y(\bz)\wsp\right) - \left(\frac{1}{2}+\frac{\epsw}{4}\right)\bigg| \leq d^{-C}.
    \end{equation*}
    Applying this to \Eqref{wsp_lp_almostfinal}, we finally have
    \begin{align*}
        & \bigg| 2a\wsp \Big( (1-\lambda-\psi(\gamma)+2\lambda\psi(\gamma)) \cdot \P_{\bz,\bxi|\bxi\in\Etest}\left(\bw^\top\bz+\bw^\top\bxi>-y(\bz)\wsp \right) - \lambda\psi(\gamma) \Big) \\
        &\qquad - a\wsp \left( (1-\lambda-\psi(\gamma)) \cdot \left(1+\frac{\epsw}{2}\right) + \lambda\psi(\gamma)\epsw \right)\bigg| \lesssim |a||\wsp| d^{-C},
    \end{align*}
    where the error term can be combined with the right-hand side of \Eqref{gradient-sp-marginsremoved}.
    This completes the proof of the lemma.
\end{proof}

Next, we will show the lemma for the $\bwperp$ component, again under the margin concentration condition.

\begin{lemma} \label{lem:wperp_lp_2}
    Assume there exists $\gamma>0$ such that
    \begin{equation} \label{eq:equal_and_opposite_wperp_2}
        \gamma(\bx) =
        \begin{cases*}
            (1\pm o(1))\cdot \gamma & $\forall \bx\in\Xmaj$ satisfying $\Etest$\\
            (-1\pm o(1))\cdot \gamma & $\forall \bx\in \Xmin$ satisfying $\Etest$,
        \end{cases*}
    \end{equation}
    where the event $\Etest$ is defined in \Defref{events_test}.
    If all neurons $(a,\bw)$ satisfy $\frac{1}{C}\norm{\bwsp}\geq \norm{\bwsig}+\norm{\bwopp}+\norm{\bwperp}\log^{1/2}(d)$, then for any neuron $(a,\bw)$ and any $c<1$ we have
    \begin{equation*}
        \frac{1}{|a|}\norm{\nabla_{\bwperp} L_\rho} \lesssim e^{-c\gamma}|a|\norm{\bwperp} + \sqrt{\P_{\bxi}\left(|\bw^\top\bxi|\geq|\bw^\top\bz+\bw^\top\bs|\right)} + d^{1/2-C}.
    \end{equation*}
\end{lemma}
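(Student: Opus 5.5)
I will follow the template of \Lemref{sigopp_lp}, starting from the decomposition \Eqref{lrho_gradient_decomp} restricted to the orthogonal coordinates:
\begin{equation*}
    \nabla_{\bwperp} L_\rho = 2\E_{\bx}\left[\psi(-\gamma(\bx))\cdot(-a\,y(\bx)\sigma'(\bw^\top\bx)\bxi)\right].
\end{equation*}
As in \Lemref{perp_l0}, I will symmetrize over $\bxi\mapsto-\bxi$. This is legitimate because $\Etest$ is invariant under this map, and because $\bz,\bs$ are unchanged. The symmetrization gives
\begin{equation*}
    \nabla_{\bwperp} L_\rho=-a\E_{\bx}\Big[y(\bz)\bxi\Big(\psi(-\gamma(\bz+\bs+\bxi))\sigma'(\bw^\top(\bz+\bs)+\bw^\top\bxi)-\psi(-\gamma(\bz+\bs-\bxi))\sigma'(\bw^\top(\bz+\bs)-\bw^\top\bxi)\Big)\Big].
\end{equation*}
I will then use the elementary bound $|ac-bd|\le |a-b|c+b|c-d|$ to split the bracket into two pieces:
\begin{itemize}
    \item a \emph{margin-difference} term, $|\psi(-\gamma(\bz+\bs+\bxi))-\psi(-\gamma(\bz+\bs-\bxi))|$;
    \item an \emph{activation-difference} term, $|\sigma'(\bw^\top\bx)-\sigma'(\bw^\top(\bz+\bs-\bxi))|$.
\end{itemize}
The activation difference is exactly the indicator $\ind(|\bw^\top\bxi|\ge|\bw^\top\bz+\bw^\top\bs|)$.

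\textbf{Activation term.} I will bound its contribution to the norm by duality. For a unit vector $\bu\perp\{\be_1,\be_2,\be_3\}$, Cauchy--Schwarz gives
\begin{equation*}
    \E\big[|\bu^\top\bxi|\,\ind(\cdot)\big]\le \sqrt{\E[(\bu^\top\bxi)^2]}\sqrt{\P(\cdot)}=\sqrt{\P_{\bxi}(|\bw^\top\bxi|\ge|\bw^\top\bz+\bw^\top\bs|)}.
\end{equation*}
Taking the supremum over such $\bu$ produces the square-root term, with prefactor $|a|$.

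\textbf{Margin term.} On the event that both $\bxi$ and $-\bxi$ lie in $\Etest$, both points belong to the same group, since $\bz$ and $\bs$ are unchanged. By the margin assumption \Eqref{equal_and_opposite_wperp_2}, both margins then have magnitude $(1\pm o(1))\gamma$. \Lemref{sigmoid_lipschitz} therefore gives
\begin{equation*}
    |\psi(-\gamma(\bx))-\psi(-\gamma(\bx'))|\lesssim e^{-c\gamma}|f_\rho(\bx)-f_\rho(\bx')|.
\end{equation*}
Under the norm hypothesis, \Lemref{end_of_phase1} shows that the active set $S$ of ReLUs is determined by $\sgn(\wsp x_3)$. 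Consequently
\begin{equation*}
    f_\rho(\bz+\bs+\bxi)-f_\rho(\bz+\bs-\bxi)=\frac{2}{p}\sum_{(a',\bw')\in S}a'\bw'^\top_{\perp}\bxi.
\end{equation*}
Pairing this against $\bu^\top\bxi$ and using Cauchy--Schwarz with near-isotropy of $\bxi$ bounds the contribution by $|a|\,e^{-c\gamma}\max|a'|\norm{\bw'_\perp}$. Reading the statement's $|a|\norm{\bwperp}$ as this max over neurons, as in \Lemref{sigopp_lp}, this matches the claimed term.

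\textbf{Complement of $\Etest$.} On the complement of $\Etest$, I bound everything crudely by $|\bu^\top\bxi|\le\sqrt d$. Combined with $\P(\Etest^{\mathsf C})\le d^{-C}$ from \Lemref{events}, this yields the $d^{1/2-C}$ term.

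The main obstacle is the margin term: the $\bxi$-dependence of $f_\rho$ difference must be controlled uniformly over directions $\bu$ without losing a factor of $p$ or $\sqrt d$. I will handle this by keeping the difference linear in $\bxi$ on $\Etest$, so that only a second moment is needed rather than a sup bound.
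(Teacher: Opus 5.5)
Your proposal is correct and follows essentially the same route as the paper: symmetrize over $\bxi\mapsto-\bxi$, split via $|ac-bd|\le|a-b|+|c-d|$, control the margin difference with \Lemref{sigmoid_lipschitz}, the common active set on $\Etest$ and Cauchy--Schwarz, control the activation difference by Cauchy--Schwarz, and pay $d^{1/2-C}$ off $\Etest$. Your reading of the $e^{-c\gamma}|a|\norm{\bwperp}$ term as a maximum over neurons is also what the paper's own argument actually yields, since it bounds the average $\frac{2}{p}\sum_{(a',\bw')\in S}|a'|\,\norm{\bw'_{\perp}}$ by that term.
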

\begin{proof}
    By the dual norm characterization we have
    \begin{equation*}
        \norm{\nabla_{\bwperp} L_\rho} = \sup_{\norm{\bv}=1}|\bv^\top\nabla_{\bwperp} L_\rho |.
    \end{equation*}
    Combining \Eqref{grad_decomp_perp} and \Eqref{lrho_gradient_decomp}, we have
    \begin{equation*}
        \norm{\nabla_{\bwperp} L_\rho} = 2 \sup_{\norm{\bv}=1} \Big| \E_{\bx,\bxi}\left[\psi(-\gamma(\bx)\cdot a y(\bx) \sigma'(\bw^\top\bx) \bv^\top\bxi \right] \Big|.
    \end{equation*}
    Using that $\bxi$ is symmetric about zero, we have
    \begin{align*}
        \norm{\nabla_{\bwperp} L_\rho} &= \sup_{\norm{\bv}=1} \Big| \E_{\bz,\bs,\bxi}\Big[ay(\bz)\bv^\top\bxi \Big( \\
        &\qquad \psi(-\gamma(\bz+\bs+\bxi)) \cdot \sigma'(\bw^\top(\bz+\bs+\bxi)) \\
        &\qquad - \psi(-\gamma(\bz+\bs-\bxi)) \cdot \sigma'(\bw^\top(\bz+\bs-\bxi)) \Big) \Big] \Big|.
    \end{align*}
    Bringing the absolute value inside by Jensen's inequality, and using $|y(\bz)|=1$ and $|ac-bd|\leq |a-b|+|c-d|$ for $0\leq a,b,c,d\leq 1$, we have
    \begin{align*}
        \norm{\nabla_{\bwperp} L_\rho} &\leq |a| \sup_{\norm{\bv}=1} \E_{\bz,\bs,\bxi}\Big[|\bv^\top\bxi| \cdot \Big( \\
        &\qquad |\psi(-\gamma(\bz+\bs+\bxi)) - \psi(-\gamma(\bz+\bs-\bxi))| \\
        &\qquad + |\sigma'(\bw^\top(\bz+\bs+\bxi))-\sigma'(\bw^\top(\bz+\bs-\bxi))| \Big) \Big].
    \end{align*}
    Recall from \Lemref{perp_l0} that by casework on $\sigma'$ we have
    \begin{equation*}
        |\sigma'(\bw^\top\bz+\bw^\top\bs+\bw^\top\bxi)-\sigma'(\bw^\top\bz+\bw^\top\bs-\bw^\top\bxi)|=\ind\left(|\bw^\top\bxi|\geq|\bw^\top\bz+\bw^\top\bs| \right).
    \end{equation*}
    Using \Eqref{equal_and_opposite_wperp_2} with \Lemref{sigmoid_lipschitz}, we have
    \begin{align*}
        & |\psi(-\gamma(\bz+\bs+\bxi))-\psi(-\gamma(\bz+\bs-\bxi))| \\
        &\lesssim \exp\big(-\min(|\gamma(\bz+\bs+\bxi)|, |\gamma(\bz+\bs-\bxi))| \big) \cdot |\gamma(\bz+\bs+\bxi)-\gamma(\bz+\bs-\bxi)| \\
        &\lesssim \exp\big(-\min((1\pm o(1))\cdot\gamma,(1\pm o(1))\cdot\gamma)) \big) \cdot |f(\bz+\bs+\bxi)-f(\bz+\bs-\bxi)| \\
        &\lesssim \exp(-c\gamma) \cdot |f(\bz+\bs+\bxi)-f(\bz+\bs-\bxi)|,
    \end{align*}
    for any $c<1$ and $\bxi$ satisfying the event $\Etest$.
    Under this same event, using the norm bound in the statement of the lemma, we have $\sgn(\bw^\top\bx)=\sgn(\wsp x_3)$ for all neurons $(a,\bw)$ following \Lemref{end_of_phase1}.
    Hence, the set $S$ of active ReLUs are the same for any two points in $\Xmaj$ and any two points in $\Xmin$ that satisfy the event $\Etest$.
    Therefore, for any $\bxi$ satisfying the event $\Etest$, we have
    \begin{align*}
        & |f(\bz+\bs+\bxi)-f(\bz+\bs-\bxi)| \\
        &\qquad = \bigg|\frac{1}{p}\sum_{(a,\bw)} a \sigma(\bw^\top(\bz_0+\bsmaj+\bxi))-\frac{1}{p}\sum_{(a,\bw)} a \sigma(\bw^\top(\bz_0+\bsmaj-\bxi)) \bigg| \\
        &\qquad = \bigg|\frac{1}{p}\sum_{(a,\bw)\in S} a \bw^\top(\bz_0+\bsmaj+\bxi)-\frac{1}{p}\sum_{(a,\bw)\in S} a\bw^\top(\bz_0+\bsmaj-\bxi) \bigg| \\
        &\qquad = \frac{2}{p}\sum_{(a,\bw)\in S} |a| |\bw^\top\bxi|.
    \end{align*}
    On the other hand, if $\bxi$ does not satisfy the event $\Etest$ then we can use $|\bv^\top\bxi|\leq \sqrt{d}$ and $0\leq \psi(\cdot)\leq 1$ to find
    \begin{equation*}
        |\bv^\top\bxi|\cdot |\psi(-\gamma(\bz+\bs+\bxi))-\psi(-\gamma(\bz+\bs-\bxi))| \leq \sqrt{d}.
    \end{equation*}
    As in \Lemref{sp_lp}, let us use the shorthand $\bxi\in\Etest$ to denote that $\bxi$ satisfies the event $\Etest$.
    Then we have
    \begin{align}
        \frac{1}{|a|}\norm{\nabla_{\bwperp} L_\rho} &\lesssim \sup_{\norm{\bv}=1} \frac{2e^{-c\gamma}}{p}\sum_{(a,\bw)\in S} |a|\cdot \E_{\bxi\mid\bxi\in\Etest}\left[|\bv^\top\bxi| \cdot|\bw^\top\bxi|\right] \nonumber \\
        &\qquad + \E_{\bxi}\left[\left|\bv^\top\bxi \right|\cdot \ind\left(|\bw^\top\bxi|\geq|\bw^\top\bz+\bw^\top\bs|\right)\right] + d^{1/2-C}, \label{eq:wperp_lp}
    \end{align}
    where the final $d^{1/2-C}$ follows in the case that $\bxi$ does not satisfy the event $\Etest$, using $\P_{\bxi}[\Etest^{\textsf{C}}]\leq d^{-C}$ by \Lemref{events}.
    For any $\bq\in\R^{d-3}$ we have $\E_{\bxi}[(\bq^\top\bxi)^2]=\norm{\bq}^2$ by isotropy of $\bxi$.
    When we condition on $\Etest$, this equality may not hold precisely, but we can use the following fact (similarly to \Lemref{sp_lp}).
    Because $\P_{\bxi}(\Etest) \geq 1 - d^{-C}$ by \Lemref{events} and the distribution of $\bxi$ is uniform, we have
    \begin{align*}
    \E_{\bxi}[(\bq^\top\bxi)^2] &= \frac{1}{2^{d-3}} \sum_{\bxi \in \{0,1\}^{d-3}} (\bq^\top\bxi)^2 \\
    &= \frac{1}{2^{d-3}} \sum_{\bxi \in \Etest} (\bq^\top\bxi)^2 + \frac{1}{2^{d-3}} \sum_{\bxi \notin \Etest} (\bq^\top\bxi)^2 \\
    &\geq \P(\bxi \in \Etest) \E_{\bxi \mid \Etest}[(\bq^\top\bxi)^2] \\
    &\geq (1 - d^{-C}) \E_{\bxi \mid \Etest}[(\bq^\top\bxi)^2],
    \end{align*}
    which then gives us
    \begin{equation*}
        \E_{\bxi \mid \Etest}[(\bq^\top\bxi)^2] \leq \frac{\E_{\bxi}[(\bq^\top\bxi)^2]}{1 - d^{-C}} \lesssim \E_{\bxi}[(\bq^\top\bxi)^2] = \norm{\bq}^2.
    \end{equation*}
    For the first term in \Eqref{wperp_lp}, we have by the Cauchy-Schwarz inequality that
    \begin{equation*}
        \E_{\bxi\mid\bxi\in\Etest}\left[|\bv^\top\bxi| \cdot|\bw^\top\bxi|\right] \leq \sqrt{\E_{\bxi\mid\bxi\in\Etest}\left[(\bv^\top\bxi)^2 \right]} \cdot \sqrt{\E_{\bxi\mid\bxi\in\Etest}\left[(\bw^\top\bxi)^2 \right]} \lesssim \norm{\bv}\norm{\bwperp}.
    \end{equation*}
    Therefore, using $|S|\leq p$ we have
    \begin{equation*}
        \sup_{\norm{\bv}=1} \frac{2e^{-c\gamma}}{p}\sum_{(a,\bw)\in S} |a|\cdot \E_{\bxi\mid\bxi\in\Etest}\left[|\bv^\top\bxi| \cdot|\bw^\top\bxi|\right] \lesssim \sup_{\norm{\bv}=1} \frac{2e^{-c\gamma}}{p}\sum_{(a,\bw)\in S} |a| \norm{\bv}\norm{\bwperp} \lesssim e^{-c\gamma} |a|\norm{\bwperp}.
    \end{equation*}
    Then, for the second term in \Eqref{wperp_lp}, we have by the Cauchy-Schwarz inequality that 
    \begin{align*}
        \sup_{\norm{\bv}=1} \E_{\bxi}\left[|\bv^\top\bxi |\cdot \ind\left(|\bw^\top\bxi|\geq|\bw^\top\bz+\bw^\top\bs|\right)\right] &\leq \sup_{\norm{\bv}=1} \sqrt{\E_{\bxi}\left[(\bv^\top\bxi)^2 \right]}\cdot \sqrt{\E_{\bxi}\left[\ind\left(|\bw^\top\bxi|\geq|\bw^\top\bz+\bw^\top\bs|\right) \right]} \\
        &= \sup_{\norm{\bv}=1} \norm{\bv} \cdot \sqrt{\P_{\bxi}\left(|\bw^\top\bxi|\geq|\bw^\top\bz+\bw^\top\bs|\right)} \\
        &= \sqrt{\P_{\bxi}\left(|\bw^\top\bxi|\geq|\bw^\top\bz+\bw^\top\bs|\right)}.
    \end{align*}
    Putting it together with subadditivity of the supremum we have
    \begin{equation*}
        \frac{1}{|a|}\norm{\nabla_{\bwperp} L_\rho} \lesssim e^{-c\gamma}|a|\norm{\bwperp}+\sqrt{\P_{\bxi}\left(|\bw^\top\bxi|\geq|\bw^\top\bz+\bw^\top\bs|\right)} + d^{1/2-C}.
    \end{equation*}
    This completes the proof of the lemma.
\end{proof}

Finally, we repeat the analysis for any individual coordinate of $\bwperp$ under the same margin concentration condition.
\begin{lemma} \label{lem:wi_lp}
    Assume there exists $\gamma>0$ such that
    \begin{equation} \label{eq:equal_and_opposite_wi}
        \gamma(\bx) =
        \begin{cases*}
            (1\pm o(1))\cdot \gamma & $\forall \bx\in\Xmaj$ satisfying $\Etest$ \\
            (-1\pm o(1))\cdot \gamma & $\forall \bx\in \Xmin$ satisfying $\Etest$,
        \end{cases*}
    \end{equation}
    where the event $\Etest$ is defined in \Defref{events_test}.
    If all neurons $(a,\bw)$ satisfy $\frac{1}{C}\norm{\bwsp}\geq \norm{\bwsig}+\norm{\bwopp}+\norm{\bwperp}\log^{1/2}(d)$, then for any neuron $(a,\bw)$, any $c<1$, and any $i>3$ we have
    \begin{align*}
        -w_i\partial_{w_i} L_\rho &\lesssim |a||w_i|\Big(e^{-c\gamma}\Big(\max_{(a,\bw)}|a||w_i|\Big) + d^{-C} + \Big( \nonumber \\
        &\qquad \P_{\bxi}\left(|w_i|\geq |\sqrt{2}\norm{\bwsig}+\wsp+\bw^\top\bxi_{\setminus i}|\right)+\P_{\bxi}\left(|w_i|\geq |\sqrt{2}\norm{\bwsig}-\wsp+\bw^\top\bxi_{\setminus i}|\right) \nonumber \\
        &\qquad +\P_{\bxi}\left(|w_i|\geq |\sqrt{2}\norm{\bwopp}+\wsp+\bw^\top\bxi_{\setminus i}|\right)+\P_{\bxi}\left(|w_i|\geq |\sqrt{2}\norm{\bwopp}-\wsp+\bw^\top\bxi_{\setminus i}|\right)\Big)\Big).
    \end{align*}
\end{lemma}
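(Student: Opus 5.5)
The plan is to mirror the proof of \Lemref{sigopp_lp}, using the coordinate-wise symmetrization from \Lemref{wi_l0} in place of the $\bz\mapsto-\bz$ pairing. Combining \Eqref{grad_decomp_wi} with \Eqref{lrho_gradient_decomp}, I would first write
\begin{equation*}
    -w_i\partial_{w_i}L_\rho = 2\E_{\bx}\left[\psi(-\gamma(\bx))\cdot a y(\bx)\sigma'(\bw^\top\bx)w_ix_i\right].
\end{equation*}
Since $x_i$ is a symmetric coordinate of $\bxi$ independent of the rest, I would pair $\bx=\bx_{\setminus i}+x_i\be_i$ with $\bx'=\bx_{\setminus i}-x_i\be_i$. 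Both have the same label and group membership, so $y(\bx)=y(\bx')$. This gives
\begin{equation*}
    -w_i\partial_{w_i}L_\rho = a\E_{\bx}\left[y(\bx)w_ix_i\left(\psi(-\gamma(\bx))\sigma'(\bw^\top\bx)-\psi(-\gamma(\bx'))\sigma'(\bw^\top\bx')\right)\right].
\end{equation*}
I would then take absolute values and apply $|ac-bd|\le|a-b|+|c-d|$ for $a,b,c,d\in[0,1]$. This bounds the quantity by $|a||w_i|$ times the expectation of $|\psi(-\gamma(\bx))-\psi(-\gamma(\bx'))|$ plus $|\sigma'(\bw^\top\bx)-\sigma'(\bw^\top\bx')|$.

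For the activation-difference term, casework on $\sigma'$ as in \Lemref{wi_l0} gives exactly $\ind(|w_i|\ge|\bw^\top\bx_{\setminus i}|)$. I would then expand over the distribution of $(\bz,\bs)$. The four atoms $\bw^\top\bz\in\{\pm\sqrt2\norm{\bwsig},\pm\sqrt2\norm{\bwopp}\}$ combined with $\pm\wsp$ (for $S^+$ and $S^-$ alike) reduce, via the symmetry of $\bxi_{\setminus i}$, to the four probabilities in the statement. Because we only need an upper bound, the group weights can be bounded by $1$, and all four terms enter with a plus sign, unlike the signed expression in \Lemref{wi_l0}.

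For the margin-difference term, I would split on whether $\bxi\in\Etest$. The complement costs at most $d^{-C}$ by \Lemref{events}, using $\psi\le1$. On $\Etest$, there is one point to check: whether $\Etest$ is stable under flipping a single coordinate $x_i$, or alternatively that $\bx'$ is also good. I expect this to follow from the definition of $\Etest$ (a bound on $|\bw^\top\bxi|$ for each neuron), since flipping $x_i$ changes $\bw^\top\bxi$ by at most $2|w_i|$, which is negligible relative to $\norm{\bwsp}$. I would adjust constants in $\Etest$ if needed, or just absorb the bad set into the $d^{-C}$. With both points in the same group and both satisfying \Eqref{equal_and_opposite_wi}, \Lemref{sigmoid_lipschitz} gives a bound of $e^{-c\gamma}|\gamma(\bx)-\gamma(\bx')|=e^{-c\gamma}|f_\rho(\bx)-f_\rho(\bx')|$. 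By the norm condition and \Lemref{end_of_phase1}, the active set $S$ is identical for $\bx$ and $\bx'$, so $f_\rho$ is linear on $S$ and
\begin{equation*}
    |f_\rho(\bx)-f_\rho(\bx')| = \Big|\frac{2}{p}\sum_{S}a w_i x_i\Big| \le 2\max_{(a,\bw)}|a||w_i|.
\end{equation*}

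Summing the three contributions gives the claimed bound. The main obstacle is the step on the margin-difference term: justifying that $\bx'$ lies in the good event whenever $\bx$ does, which the margin condition requires at both points. Everything else is a direct transcription of the earlier $L_0$ and $L_\rho$ computations.
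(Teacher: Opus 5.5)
Your proposal is correct and follows the paper's proof essentially step for step: the same pairing of $\bx$ with its $i$-th bit flip, the split via $|ac-bd|\le|a-b|+|c-d|$, the indicator $\ind(|w_i|\ge|\bw^\top\bx_{\setminus i}|)$ expanded over $(\bz,\bs)$ with all plus signs, and the margin term bounded by \Lemref{sigmoid_lipschitz} together with the identical active set. On the point you flagged, the paper simply asserts that $\Etest$ is closed under one-bit flips of $\bxi$, since its definition (\Defref{events_test_j}) builds in single flips. Your fallback is also valid and arguably more careful: absorb the event $\{\bxi\in\Etest,\ \bxi'\notin\Etest\}$ into the $d^{-C}$ term, using that the flipped vector $\bxi'$ is again uniform.
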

\begin{proof}
    Combining \Eqref{grad_decomp_wi} and \Eqref{lrho_gradient_decomp}, we have
    \begin{equation*}
        -w_i \partial_{w_i} L_\rho = 2\E_{\bx}\left[\psi(-\gamma(\bx))\cdot ay(\bx)\sigma'(\bw^\top\bx)w_ix_i \right] = 2aw_i\E_{\bx}\left[\psi(-\gamma(\bx))\cdot y(\bx)\sigma'(\bw^\top\bx)x_i \right].
    \end{equation*}
    Using that $x_i$ is symmetric about zero (it is a single coordinate of $\bxi$), we have
    \begin{align*}
        -w_i \partial_{w_i} L_\rho &= aw_i \Big( \\
        &\qquad \E_{\bx}\left[\psi(-\gamma(\bx_{\setminus i}+\bx_i))\cdot y(\bx)\cdot \sigma'(\bw^\top\bx_{\setminus i}+w_ix_i) \right] \\
        &\qquad -\E_{\bx}\left[\psi(-\gamma(\bx_{\setminus i}-\bx_i))\cdot y(\bx)\cdot \sigma'(\bw^\top\bx_{\setminus i}-w_ix_i) \right] \Big).
    \end{align*}
    Recall from \Lemref{wi_l0} that by casework on $\sigma'$ we have
    \begin{equation*}
        \E_{\bx}\left[|\sigma'(\bw^\top\bx_{\setminus i}+w_ix_i)-\sigma'(\bw^\top\bx_{\setminus i}-w_ix_i)|\right] = \P_{\bx}\left(|w_i|\geq |\bw^\top\bx_{\setminus i}| \right).
    \end{equation*}
    Using \Eqref{equal_and_opposite_wi} with \Lemref{sigmoid_lipschitz}, we have
    \begin{align*}
        &|\psi(-\gamma(\bx_{\setminus i}+\bx_i))-\psi(-\gamma(\bx_{\setminus i}-\bx_i))| \\
        &\lesssim \exp\big(-\min(|\gamma(\bx_{\setminus i}+\bx_i)|, |\gamma(\bx_{\setminus i}-\bx_i)| \big) \cdot |\gamma(\bx_{\setminus i}+\bx_i)-\gamma(\bx_{\setminus i}-\bx_i)| \\
        &\lesssim \exp\big(-\min((1\pm o(1))\cdot\gamma, (1\pm o(1))\cdot \gamma \big) \cdot |f(\bx_{\setminus i}+\bx_i)-f(\bx_{\setminus i}-\bx_i)| \\
        &\lesssim \exp(-c\gamma)\cdot |f(\bx_{\setminus i}+\bx_i)-f(\bx_{\setminus i}-\bx_i)|,
    \end{align*}
    for any $c<1$ and $\bxi$ satisfying the event $\Etest$ (by definition, if $\bxi$ satisfies $\Etest$ then any one-bit-flip of $\bxi$ also satisfies $\Etest$).
    Under this same event, using the norm bound in the statement of the lemma, we have $\sgn(\bw^\top\bx)=\sgn(\wsp x_3)$ for all neurons $(a,\bw)$ following \Lemref{end_of_phase1}.
    Hence, the set $S$ of active ReLUs are the same for any two points in $\Xmaj$ and any two points in $\Xmin$ that satisfy the event $\Etest$.
    Therefore, for any $\bxi$ satisfying the event $\Etest$, we have
    \begin{align*}
        & |f(\bx_{\setminus i}+\bx_i)-f(\bx_{\setminus i}-\bx_i)| \\
        &\qquad = \bigg|\frac{1}{p}\sum_{(a,\bw)} a \sigma(\bw^\top(\bx_{\setminus i}+\bx_i))-\frac{1}{p}\sum_{(a,\bw)} a \sigma(\bw^\top(\bx_{\setminus i}-\bx_i)) \bigg| \\
        &\qquad = \bigg|\frac{1}{p}\sum_{(a,\bw)\in S} a \bw^\top(\bx_{\setminus i}+\bx_i)-\frac{1}{p}\sum_{(a,\bw)\in S} a\bw^\top(\bx_{\setminus i}-\bx_i) \bigg| \\
        &\qquad = \bigg|\frac{2}{p}\sum_{(a,\bw)\in S} a w_i x_i \bigg| \\
        &\qquad \lesssim \max_{(a,\bw)} |a||w_i|.
    \end{align*}
    Using $|ac-bd|\leq |a-b| + |c-d|$ for $0\leq a,b,c,d\leq 1$, we then have
    \begin{align*}
        & \E_{\bx}\big[\psi(-\gamma(\bx_{\setminus i}+\bx_i))\cdot y(\bx)\sigma'(\bw^\top\bx_{\setminus i}+w_ix_i) -\psi(-\gamma(\bx_{\setminus i}-\bx_i))\cdot y(\bx)\sigma'(\bw^\top\bx_{\setminus i}-w_ix_i) \big] \\
        &\qquad \lesssim e^{-c\gamma}\Big(\max_{(a,\bw)}|a||w_i|\Big)+\P_{\bxi}\left(|w_i|\geq |\bw^\top\bx_{\setminus i}|\right) + d^{-C},
    \end{align*}
    where the final $d^{-C}$ follows in the case that $\bxi$ does not satisfy the event $\Etest$, using $\P_{\bxi}[\Etest^{\textsf{C}}]\leq d^{-C}$ by \Lemref{events}.
    In particular,
    \begin{equation*}
        -w_i \partial_{w_i} L_\rho \lesssim |a||w_i|\Big(e^{-c\gamma}\Big(\max_{(a,\bw)}|a||w_i| \Big)+\P_{\bxi}\left(|w_i|\geq |\bw^\top\bx_{\setminus i}| \right) + d^{-C}\Big).
    \end{equation*}
    We already simplified $\P_{\bxi}\left(|w_i|\geq |\bw^\top\bx_{\setminus i}| \right)$ in \Lemref{wi_l0}; substituting the result (but with all terms positive since there is no $y(\bx)$ term), we obtain
    \begin{align*}
        -w_i\partial_{w_i} L_\rho &\lesssim |a||w_i|\Big(e^{-c\gamma}\Big(\max_{(a,\bw)}|a||w_i|\Big)+ d^{-C} + \Big( \nonumber \\
        &\qquad \P_{\bxi}\left(|w_i|\geq |\sqrt{2}\norm{\bwsig}+\wsp+\bw^\top\bxi_{\setminus i}|\right)+\P_{\bxi}\left(|w_i|\geq |\sqrt{2}\norm{\bwsig}-\wsp+\bw^\top\bxi_{\setminus i}|\right) \nonumber \\
        &\qquad +\P_{\bxi}\left(|w_i|\geq |\sqrt{2}\norm{\bwopp}+\wsp+\bw^\top\bxi_{\setminus i}|\right)+\P_{\bxi}\left(|w_i|\geq |\sqrt{2}\norm{\bwopp}-\wsp+\bw^\top\bxi_{\setminus i}| \right) \Big)\Big).
    \end{align*}
    This completes the proof of the lemma.
\end{proof}

\clearpage

\section{Phase I Induction} \label{sec:phase1_induction}
Recall that in Phase I, the neural network is small and hence the logistic loss function $\ell_\rho$ is well-approximated by a first-order Taylor expansion about $f_\rho=0$, \ie
\begin{equation*}
    \ell_0(\bx) \coloneqq -2\log(\tfrac{1}{2})-y(\bx)f_\rho(\bx).
\end{equation*}
In this section, we perform an induction that holds throughout Phase I to characterize SGD feature learning in the first $\TI\asymp \log\log(d)\eta^{-1}$ iterations, where $\eta\ll 1$ is the learning rate.

We prove some preliminary lemmas, including our definition of the inductive hypothesis, in \Secref{phase1_induction_prelim}.
We use these properties to bound deviation between the gradient of the minibatch loss ($\widehat{L}_\rho$) and the gradient of the population first-order approximation of the loss ($L_0$) in \Secref{phase1_l0_lp_error_analysis}.
Then, we show the inductive step in \Secref{phase1_induction_step}.

Throughout this section, we write $\rho\coloneqq\rho^{(t)}$ as shorthand where appropriate, and we let $C > 0$ denote a sufficiently large constant which does not change from line to line.
We will work in high probability under the events $\Einit$, $\Etrain$, and $\Etest$ established in \Defref{events} and \Lemref{events}.

\subsection{Technical Preliminaries} \label{sec:phase1_induction_prelim}
In Phase I, we will show for all neurons $(a,\bw)$ that the norm of the spurious component $\norm{\bwsp}$ grows exponentially fast, while the norm of the signal component $\norm{\bwsig}$, opposing component $\norm{\bwopp}$, and orthogonal component $\norm{\bwperp}$ remain close to their initialization.
In order to lower bound the growth of $\norm{\bwsp}$ we require \emph{sign alignment}, \ie $\sgn(a)=\sgn(\wsp)$.
To achieve this condition, we divide Phase I into two sub-phases (we omit the superscripts $(t)$ for brevity, but include them in formal lemmas and proofs).
During Phase Ia, sign alignment may or may not be satisfied, but by the end of Phase Ia we can guarantee that $\sgn(a) = \sgn(\wsp)$ for all neurons $(a,\bw)$ with high probability.
We show that Phase Ia lasts $\TIa\asymp \log^{1/2}(d)d^{-1/2}\eta^{-1}$ iterations.
Then, Phase Ib lasts $\TIb \asymp \log\log(d)\eta^{-1}$ iterations, during which the spurious component grows very fast.
By the end of Phase Ib, dominance of the spurious component $\bwsp$ of every neuron is guaranteed (in the sense described informally above, and made formal by \Lemref{end_of_phase1}).

The ultimate goal of Phase I is to show that $\sgn(f_\rho(\bx))=x_3$ with high probability; that is, the prediction is uniformly dominated by the shortcut spurious correlation.
Our first lemma establishes a sufficient condition for this phenomenon.
Recall that we define the ``positive'' neurons by $S^+\coloneqq \{(a,\bw):\sgn(a)=1\}$ and the ``negative'' neurons by $S^-\coloneqq \{(a,\bw):\sgn(a)=-1\}$.
\begin{lemma} \label{lem:end_of_phase1}
    Suppose $|S^+|,|S^-|>0$.
    Under the event $\Etest$ (\Defref{events_test}), if all neurons $(a,\bw)$ satisfy $\sgn(\wsp)=\sgn(a)$ and $\frac{1}{C}\norm{\bwsp}\geq \norm{\bwsig}+\norm{\bwopp}+\norm{\bwperp}\log^{1/2}(d)$, then $\sgn(f_\rho(\bx))=x_3$.
\end{lemma}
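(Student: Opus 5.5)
The plan is to determine, neuron by neuron, which ReLUs fire on $\bx=\bz+\bs+\bxi$, and then read off the sign of $f_\rho(\bx)$. Write the preactivation as
\[
\bw^\top\bx=\wsp x_3+\bw_{1:2}^\top\bz+\bwperp^\top\bxi .
\]
We have $|\bw_{1:2}^\top\bz|\le\sqrt{2}\,(\norm{\bwsig}+\norm{\bwopp})$, since $\bz\in\{\pm\bmu_1,\pm\bmu_2\}$ and $\norm{\bmu_k}=\sqrt2$. For the noise term we use the event $\Etest$ from \Defref{events_test}. We assume it contains, for every neuron, the Hoeffding-type bound $|\bwperp^\top\bxi|\lesssim\norm{\bwperp}\log^{1/2}(d)$; this holds with probability $1-d^{-C}$ per neuron and survives a union bound over $p\le d^{C}$ neurons. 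Taking the constant $C$ in the hypothesis larger than both $\sqrt2$ and the Hoeffding constant, the assumed inequality $\frac1C\norm{\bwsp}\ge\norm{\bwsig}+\norm{\bwopp}+\norm{\bwperp}\log^{1/2}(d)$ gives
\[
|\bw_{1:2}^\top\bz+\bwperp^\top\bxi|<|\wsp| .
\]
Hence $\sgn(\bw^\top\bx)=\sgn(\wsp x_3)$, and moreover $\bw^\top\bx=\wsp x_3\,(1\pm O(1/C))$ whenever it is positive.

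Next we combine this with sign alignment $\sgn(\wsp)=\sgn(a)$. Suppose $x_3=1$. A neuron in $S^+$ has $\wsp>0$, so it is active and contributes $a\,\sigma(\bw^\top\bx)\ge a|\wsp|(1-O(1/C))>0$. A neuron in $S^-$ has $\wsp<0$, so $\bw^\top\bx<0$ and it contributes $0$. Therefore
\[
f_\rho(\bx)=\frac1p\sum_{S^+}a\,\sigma(\bw^\top\bx)>0 .
\]
The sum is nonempty because $|S^+|>0$, and each term is strictly positive because $\wsp\ne0$, which follows from the norm hypothesis (if $\wsp=0$ then every component vanishes). Symmetrically, if $x_3=-1$, only the $S^-$ neurons fire, each contributes a negative amount, and $|S^-|>0$ gives $f_\rho(\bx)<0$. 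In both cases $\sgn(f_\rho(\bx))=x_3$. Note that nothing about $\bz$ or $y(\bx)$ enters the argument.

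The main obstacle is bookkeeping rather than mathematics: we must verify that $\Etest$ indeed provides the uniform-over-neurons bound $|\bwperp^\top\bxi|\le c_0\norm{\bwperp}\log^{1/2}(d)$ with a constant $c_0<C$. If $\Etest$ is instead phrased via $|\bw^\top\bxi|$ (which equals $|\bwperp^\top\bxi|$, since $\bxi\perp\bmu_1,\bmu_2,\be_3$), the same argument applies verbatim. We also need the strict inequality $|\bw_{1:2}^\top\bz+\bwperp^\top\bxi|<|\wsp|$, so that no neuron sits exactly at the ReLU kink; choosing $C$ large enough gives this strictness.
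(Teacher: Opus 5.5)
Your proof is correct and follows the paper's argument essentially step for step. Under $\Etest$, which (since $\bx_{\setminus i}+\bx_i=\bx$) directly gives $|\bwperp^\top\bx|<C\norm{\bwperp}\log^{1/2}(d)$ with the same constant $C$ as the hypothesis, the norm condition with $C>\sqrt2$ forces $\sgn(\bw^\top\bx)=\sgn(\wsp x_3)$. Sign alignment then makes exactly the $S^+$ neurons fire when $x_3=1$ and exactly the $S^-$ neurons fire when $x_3=-1$.

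One small nit: $\wsp\neq 0$ for neurons in $S^\pm$ comes from sign alignment with $a\neq 0$, not from the norm hypothesis, which on its own would still permit $\bw=\bzero$.
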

\begin{proof}
    The definition of the event $\Etest$ is that $|\bwperp^\top\bx| < C \norm{\bwperp}\log^{1/2}(d)$ for all neurons $(a,\bw)$.
    Under this event, the norm bound in the statement of the lemma gives
    \begin{equation*}
        |\wsp x_3| = \|\bwsp\| \geq C \|\bwsig\| + C \|\bwopp\| + C \|\bwperp\| \log^{1/2}(d) > \frac{C}{\sqrt{2}} |w_1x_1+w_2x_2| + |\bwperp^\top \bx|,
    \end{equation*}
    wherein choosing $C>\sqrt{2}$ implies $\sgn(\bw^\top\bx)=\sgn(\wsp x_3)$.
    Moreover, the sign alignment condition $\sgn(\wsp)=\sgn(a)$ gives
    \begin{equation} \label{eq:wsp_dominates}
        \sgn(a\sigma(\bw^\top\bx)) =
        \begin{cases}
            x_3 & \sgn(\wsp)=x_3 \\
            0 & \sgn(\wsp)\neq x_3.
        \end{cases}
    \end{equation}
    Since $|S^+|,|S^-|>0$ we may write
    \begin{equation*}
        pf_\rho(\bx)=\sum_{(a,\bw)\in S^+} a \sigma(\bw^\top\bx) + \sum_{(a,\bw)\in S^-} a \sigma(\bw^\top\bx).
    \end{equation*}
    By \Eqref{wsp_dominates}, if $x_3=1$ then each term in the $S^+$ summation is positive while the entire $S^-$ summation is zero, telling us that $\sgn(f_\rho(\bx)) = 1$ in this case.
    On the other hand if $x_3=-1$ then each term in the $S^-$ summation is negative while the entire $S^+$ summation is zero, telling us that $\sgn(f_\rho(\bx)) = -1$ in this case.
    This completes the proof of the lemma.
\end{proof}

Throughout Phase I, we will require certain scaling factors to be satisfied.
\begin{definition} \label{def:phase1_scalings}
    We say the \emph{Phase I scalings} are satisfied if the following conditions are met:
    \crefalias{enumi}{definition}
    \begin{enumerate}[label=(\roman*), ref=\thedefinition(\roman*)]
        \item \label{def:phase1_scalings_lr} The learning rate $\log\log(d)d^{-C}\ll \eta\ll \log^{-3}(d)$.
        \item \label{def:phase1_scalings_iter} The iteration $t\lesssim \log\log(d)\eta^{-1}$.
        \item \label{def:phase1_scalings_width} The width $\log(d)\ll p \ll d^{C}$.
        \item \label{def:phase1_scalings_init} The initialization scale $\theta\ll \log^{-5C}(d)$.
        \item \label{def:phase1_scalings_batch} The batch size $m\gg d\log^{7C}(d)$.
    \end{enumerate}
\end{definition}

\begin{remark} \normalfont
    For clarity, we list the limiting usage of each scaling (\ie the result which requires the tightest application of each item in \Defref{phase1_scalings}):
    \begin{itemize}
        \item \Defref{phase1_scalings_lr} is limited by \Lemref{phase1_norms_a}. The upper bound helps with the balancedness condition $|a|\approx \norm{\bw}$, while the lower bound ensures $t\lesssim d^C$.
        \item \Defref{phase1_scalings_iter} is limited by \Lemref{phase1_norms} and \Lemref{phase1_norms_scalar}. It is essentially the duration of Phase I.
        \item \Defref{phase1_scalings_width} is limited by \Lemref{phase1_basecase} --- it ensures the event $\Einit$ occurs with high probability.
        \item \Defref{phase1_scalings_init} is limited by \Lemref{wperp_phase1_induction}. Throughout this section, the initialization scale $\theta$ acts as a free parameter which we can make sufficiently small.
        \item \Defref{phase1_scalings_batch} is limited by \Lemref{phase1_l0_lp_final}, where it is used in Hoeffding's inequality to concentrate the empirical gradient.
    \end{itemize}
\end{remark}

We can now define the Phase I inductive hypothesis.
Note that the Phase I ``base case'' is essentially given by the event $\Einit$ (\Defref{events_init}).
\begin{definition} \label{def:phase1_hypothesis}
    A neural network $f_\rho$ is said to obey the \emph{Phase I inductive hypothesis at iteration $t\geq 1$} if the following conditions are met for all neurons $(\at,\bwt)$:
    \crefalias{enumi}{definition}
    \begin{enumerate}[label=(\roman*), ref=\thedefinition(\roman*)]
        \item \label{def:phase1_hypothesis_wsp} $\wspt-\wsptminus = \eta \left( 1\pm o(\log^{-2}(d)) \right) \cdot \left(\tfrac{1}{2}-\lambda \right) \cdot\sgn(\azero)(|\wsptminus|+\theta)$.
        \item \label{def:phase1_hypothesis_wsig} $\norm{\bwsigt-\bwsigtminus} \lesssim \eta (\norm{\bwsigtminus} + \theta \log^C(d)d^{-1/2} )$.
        \item \label{def:phase1_hypothesis_wopp} $\norm{\bwoppt-\bwopptminus} \lesssim \eta (\norm{\bwopptminus}+\theta\log^C(d)d^{-1/2})$.
        \item \label{def:phase1_hypothesis_wperp} $\norm{\bwperpt-\bwperptminus} \lesssim \eta \theta\log^{-2C}(d)$.
        \item \label{def:phase1_hypothesis_wperp_inf} $\norm{\bwperpt-\bwperptminus}_\infty \lesssim \eta\theta\log^{3C}(d)d^{-1/2}$.
        \item \label{def:phase1_hypothesis_a} $|\at|\leq\norm{\bwt}$.
    \end{enumerate}
\end{definition}

\begin{remark} \normalfont
    The inductive hypothesis defined above implies that each of $\norm{\bwsp}$, $\norm{\bwsig}$, and $\norm{\bwopp}$ grow (at most) geometrically with additive factor, while $\norm{\bwperp}$ only grows linearly.
    The additive factor for $\norm{\bwsp}$, which is on the order $\theta$, is extremely large --- polynomially larger in $d$ than the additive factor of $\norm{\bwsig}$, which is on the order $\theta d^{-1/2}$.
    Moreover, the sign of $\wsp$ growth is modulated by $\sgn(\azero)$.
    Given these recurrences, we can sketch the proof that this inductive hypothesis holds throughout Phase I (\Propref{phase1}): Phase Ia lasts $\TIa \asymp \log^{1/2}(d)d^{-1/2}\eta^{-1}$ iterations, after which $\sgn(\azero)=\sgn(\wsp)$ for all neurons $(a,\bw)$.
    Phase Ib lasts $\TIb \asymp \log\log(d)\eta^{-1}$ iterations, wherein the spurious feature grows quickly and ultimately ensures $\norm{\bwsp}\asymp \theta \log^C(d)$.
    The non-spurious weights grow by only a polylogarithmic factor during this time.
\end{remark}

The next lemma controls the norms of all the neuron components during Phase I relative to the initialization scale $\theta$ assuming that the inductive hypothesis holds up until that point.
Notice that the bounds hold for any $t\leq \TI$; for example, we have $\norm{\bwzero}\asymp\norm{\bwperpzero}$ but $\norm{\bw^{(\TI)}}\asymp \norm{\bwsp^{(\TI)}}$, and \Lemref{phase1_norms_w} holds in either case.
A second remark is that \Lemref{phase1_norms_wsp} will eventually become tight such that $\norm{\bwspt}\asymp \theta\log^C(d)$, as we show in \Propref{phase1}.
\begin{lemma} \label{lem:phase1_norms}
    Suppose the Phase I scalings are satisfied (\Defref{phase1_scalings}).
    Under the event $\Einit$ (\Defref{events_init}), if the neural network $f_\rho$ obeys the Phase I inductive hypothesis for all iterations $k\leq t$ (\Defref{phase1_hypothesis}), then the following conditions hold on all neurons $(\at,\bwt)$:
    \crefalias{enumi}{lemma}
    \begin{enumerate}[label=(\roman*), ref=\thelemma(\roman*)]
        \item \label{lem:phase1_norms_wsp} $\norm{\bwspt} \lesssim \theta \log^C(d)$.
        \item \label{lem:phase1_norms_wsig} $\norm{\bwsigt},\norm{\bwoppt} \lesssim \theta\log^{2C}(d)d^{-1/2}$.
        \item \label{lem:phase1_norms_wperp} $\norm{\bwperpt} \asymp \theta$ with $\norm{\bwperpt-\bwperpzero}\lesssim \theta\log^{-C}(d)$.\footnote{The latter condition is important: it corresponds to the requirement on $\bDelta$ in \Lemref{boolean_to_gaussian_delta}.}
        \item \label{lem:phase1_norms_wperp_inf} $\norm{\bwperpt}_\infty \lesssim \theta\log^{4C}(d)d^{-1/2}$.
        \item \label{lem:phase1_norms_w} $\norm{\bwt} \asymp \norm{\bwspt} + \norm{\bwperpt} \lesssim \theta \log^C(d)$.
    \end{enumerate}
\end{lemma}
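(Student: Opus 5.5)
Since every item of \Defref{phase1_hypothesis} bounds a one-step increment, the plan is to unroll each recurrence from initialization and sum over $k\le t\lesssim \log\log(d)\eta^{-1}$ (\Defref{phase1_scalings_iter}). The base values come from $\Einit$ (\Defref{events_init}). I will take it to give $|\wspzero|\lesssim \theta\log^{1/2}(d)d^{-1/2}$, $\norm{\bwsigzero},\norm{\bwoppzero}\lesssim \theta\log^{1/2}(d)d^{-1/2}$, $\norm{\bwperpzero}\asymp\theta$, and $\norm{\bwperpzero}_\infty\lesssim\theta\log^{1/2}(d)d^{-1/2}$; these are the standard coordinate concentration bounds for $\Unif(\mathbb{S}^{d-1}(\theta))$ with $p\ll d^C$.

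For \Lemref{phase1_norms_wsp}, \Defref{phase1_hypothesis_wsp} gives $|\wspt|+\theta\le(1+\eta)(|\wsptminus|+\theta)$. Iterating yields $|\wspt|+\theta\le (1+\eta)^t\cdot 2\theta\le 2\theta e^{\eta t}$. Since $\eta t\lesssim\log\log(d)$, we get $e^{\eta t}\le\log^{C}(d)$ once $C$ exceeds the implicit constant, and the claim follows.

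For \Lemref{phase1_norms_wsig}, write $u_t\coloneqq\norm{\bwsigt}+\theta\log^C(d)d^{-1/2}$. The triangle inequality with \Defref{phase1_hypothesis_wsig} gives $u_t\le(1+O(\eta))u_{t-1}$. Hence $u_t\le e^{O(\eta t)}u_0\lesssim\log^{C}(d)\cdot\theta\log^{C}(d)d^{-1/2}$, and $\bwopp$ is identical. For \Lemref{phase1_norms_wperp}, \Defref{phase1_hypothesis_wperp} summed over $t$ steps gives
\begin{equation*}
\norm{\bwperpt-\bwperpzero}\lesssim t\eta\theta\log^{-2C}(d)\lesssim\theta\log\log(d)\log^{-2C}(d)\le\theta\log^{-C}(d).
\end{equation*}
Combined with $\norm{\bwperpzero}\asymp\theta$, this gives $\norm{\bwperpt}\asymp\theta$. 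For \Lemref{phase1_norms_wperp_inf}, \Defref{phase1_hypothesis_wperp_inf} summed gives an increment of at most $t\eta\theta\log^{3C}(d)d^{-1/2}\lesssim\theta\log\log(d)\log^{3C}(d)d^{-1/2}$. Adding the initial $\ell_\infty$ bound yields $\lesssim\theta\log^{4C}(d)d^{-1/2}$.

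Finally, for \Lemref{phase1_norms_w}, I use orthogonality of the decomposition: $\norm{\bwt}^2=\norm{\bwsigt}^2+\norm{\bwoppt}^2+\norm{\bwspt}^2+\norm{\bwperpt}^2$. The signal and opposing parts are $O(\theta\log^{2C}(d)d^{-1/2})=o(\theta)$, so they are negligible against $\norm{\bwperpt}\asymp\theta$. Therefore $\norm{\bwt}\asymp\norm{\bwspt}+\norm{\bwperpt}\lesssim\theta\log^C(d)$. The only real obstacle is bookkeeping of the constant $C$. The exponent produced by $e^{O(\eta t)}$ depends on the hidden constant in $t\lesssim\log\log(d)\eta^{-1}$, and it must be dominated by $\log^{C}(d)$ while the powers used in the hypothesis stay consistent. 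I would fix the Phase I length first and then choose $C$ large relative to it, noting that $\log\log(d)$ factors are absorbed into one extra power of $\log(d)$.
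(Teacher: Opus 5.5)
Your proposal is correct and follows essentially the same route as the paper: unroll each increment bound of the Phase I inductive hypothesis from the initialization values (which the paper also takes from $\Einit$ via the $\ell_\infty$ bound on $\bwzero$), absorb the $e^{O(\eta t)}=\log^{O(1)}(d)$ growth and the $\log\log(d)$ factors into the constant $C$, and conclude (v) by orthogonality of the four components. The only cosmetic difference is in (i): you bound $|\wspt|+\theta\le(1+\eta)(|\wsptminus|+\theta)$ directly by the triangle inequality, which handles sign misalignment automatically, whereas the paper assumes $\sgn(\wsp^{(k)})=\sgn(\azero)$ to get an upper bound and then invokes \Lemref{phase1_recursion}.
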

\begin{proof}
    For \Lemref{phase1_norms_wsp}, suppose $\sgn(\azero)=\sgn(\wsp^{(k)})$ for all $k \leq t$ to obtain an upper bound, as otherwise $\norm{\bwspt}$ will be strictly smaller.\footnote{In \Propref{phase1} we will show that this alignment occurs naturally by the end of Phase Ia.}
    In this case, since \Defref{phase1_hypothesis_wsp} holds for all $k\leq t$, we can apply \Lemref{phase1_recursion} with $w=\wsp$, $z=\theta$, $\mu=\tfrac{1}{2}-\lambda$, and $\delta=\log^{-2}(d)$.
    This gives
    \begin{equation*}
        \norm{\bwspt}= \left(1\pm 3\eta t \log^{-2}(d)\right)\cdot \left(1+\eta\left(\frac{1}{2}-\lambda\right)\right) ^t\cdot(\norm{\bwspzero}+\theta)-\theta.
    \end{equation*}
    Using $t\lesssim \log\log(d)\eta^{-1}$ we have $\eta t \log^{-2}(d))\ll1$.
    Moreover, $\left(1+\eta\left(\tfrac{1}{2}-\lambda\right)\right)^t\leq \log^C(d)$ by the inequality $e^z \geq 1 + z$.
    Under the event $\Einit$ (see Definition~\ref{def:events}) we have $\norm{\bwspzero} \leq \norm{\bwzero}_{\infty} \lesssim \theta \log^{1/2}(d)d^{-1/2}$.
    Hence, $\norm{\bwspt}\lesssim \theta(\log^C(d)-1)\lesssim \theta\log^C(d)$.
    
    For \Lemref{phase1_norms_wsig}, since \Defref{phase1_hypothesis_wsig} holds for all $k\leq t$, we can apply the upper bound of \Lemref{phase1_recursion} with $w=\bwsig$, $z=\theta\log^C(d)d^{-1/2}$, $\delta=0$, and $\mu=c_0$ for a constant $c_0>0$ resulting from the $\lesssim$ in \Defref{phase1_hypothesis_wsig}.
    This gives
    \begin{equation*}
        \norm{\bwsigt} \leq (1+c_0\eta)^t (\norm{\bwsigzero}+\theta\log^C(d)d^{-1/2} )-\theta\log^C(d)d^{-1/2}.
    \end{equation*}
    Using $t\lesssim \log\log(d)\eta^{-1}$, we have $(1+c_0\eta)^t\leq \log^C(d)$.
    Under the event $\Einit$ we have $\norm{\bwsigzero} \lesssim \theta \log^{1/2}(d)d^{-1/2}$.
    Hence, $\norm{\bwsigt}\lesssim \theta\log^C(d)d^{-1/2} (\log^C(d)-1 )\lesssim \theta\log^{2C}(d)d^{-1/2}$.
    The result for $\bwopp$ follows in the exact same way.
    
    For \Lemref{phase1_norms_wperp}, since \Defref{phase1_hypothesis_wperp} holds for all $k\leq t$, we have both $\norm{\bwperpt}\lesssim \norm{\bwperpzero} + t\eta\theta\log^{-2C}(d)$ and $\norm{\bwperpt}\gtrsim \norm{\bwperpzero} - t\eta\theta \log^{-2C}(d)$.
    Using $t\lesssim \log\log(d)\eta^{-1}$ we find $t\eta\theta \log^{-2C}(d)\lesssim \theta\log^{-C}(d)$, and under the event $\Einit$ we have $\norm{\bwperpzero}\asymp \theta$.
    Hence, we have both $\norm{\bwperpt}\lesssim \theta + \theta\log^{-C}(d)$ and $\norm{\bwperpt}\gtrsim \theta - \theta\log^{-C}(d)$; it follows that $\norm{\bwperpt}\asymp \theta$ with $\norm{\bwperpt-\bwperpzero}\lesssim \theta\log^{-C}(d)$.
    
    For \Lemref{phase1_norms_wperp_inf}, since \Defref{phase1_hypothesis_wperp_inf} holds for all $k\leq t$, we can apply the triangle inequality $t$ times under the event $\Einit$ to obtain $\norm{\bwperpt}_\infty \lesssim \norm{\bwperpzero}_\infty+t\eta\theta\log^{3C}(d)d^{-1/2} \lesssim \theta\log^{4C}(d)d^{-1/2}$.
    
    For \Lemref{phase1_norms_w}, by the triangle inequality we have $\norm{\bwt}\leq \norm{\bwsigt}+\norm{\bwoppt}+\norm{\bwspt}+\norm{\bwperpt}$.
    By \Lemref{phase1_norms_wsig} and \Lemref{phase1_norms_wperp} we have $\norm{\bwsigt}, \norm{\bwoppt}\ll \norm{\bwperpt}$, which implies the upper bound $\norm{\bwt}\lesssim \norm{\bwspt}+\norm{\bwperpt}$.
    We then have the lower bound by $\norm{\bwt}\geq \norm{\bwsp+\bwperp}\geq \tfrac{1}{\sqrt{2}}(\norm{\bwsp}+\norm{\bwperp})$.
    
    Thus, we have proved all the parts of the lemma.
\end{proof}

Next, we present the partner lemma to \Lemref{phase1_norms} which controls behavior of the scalar weight $a$ for any neuron $(a,\bw)$.
It requires minibatch concentration and thus holds over the event $\Etrain$.
\begin{lemma} \label{lem:phase1_norms_scalar}
    Suppose the Phase I scalings are satisfied (\Defref{phase1_scalings}).
    Under the event $\Etrain$ (\Defref{events_train}), if the neural network $f_\rho$ obeys the Phase I inductive hypothesis for all iterations $k\leq t$ (\Defref{phase1_hypothesis}), then the following conditions hold on all neurons $(\at,\bwt)$:
    \crefalias{enumi}{lemma}
    \begin{enumerate}[label=(\roman*), ref=\thelemma(\roman*)]
        \item \label{lem:phase1_norms_a} $|\at|=\left(1\pm o(\log^{-2}(d))\right)\cdot\norm{\bwt} = \left(1\pm o(\log^{-2}(d))\right)\cdot (\norm{\bwspt} + \norm{\bwperpt}) \lesssim \theta \log^C(d)$.
        \item \label{lem:phase1_norms_sgn} $\sgn(\at)=\sgn(\azero)$.
        \item \label{lem:phase1_norms_aw} $\E_{(a,\bw)\sim\rho}[\norm{\at\bwt}]\lesssim \theta^2\log^{2C}(d)$.
    \end{enumerate}
\end{lemma}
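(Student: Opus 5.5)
The central tool is the standard balancedness identity for two-layer ReLU networks under gradient flow, corrected here for discrete SGD steps. Write $g_a\coloneqq \partial_{a}\widehat{L}_{\rho}$ and $\mathbf{g}_{\bw}\coloneqq\nabla_{\bw}\widehat{L}_{\rho}$. By positive homogeneity of the ReLU we have $a\,g_a = \bw^\top \mathbf{g}_{\bw}$ exactly, for every minibatch, since both equal $\frac{1}{m}\sum_{\bx} h'(\gamma(\bx))y(\bx)\,a\sigma(\bw^\top\bx)$. Expanding the updates then gives
\begin{equation*}
    (a^{(t+1)})^2-\norm{\bw^{(t+1)}}^2 = (\at)^2-\norm{\bwt}^2 + \eta^2\left(g_a^2-\norm{\mathbf{g}_{\bw}}^2\right).
\end{equation*}
At initialization $|a^{(0)}|=\norm{\bw^{(0)}}=\theta$, so the whole imbalance is a sum of $\eta^2$ terms. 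I will bound $|g_a|\lesssim\norm{\bw}\sqrt{d}$ and $\norm{\mathbf{g}_{\bw}}\lesssim |a|\sqrt d$ crudely using $|h'|\le 2$ and $\norm{\bx}=\sqrt d$, but this is too lossy. On $\Etrain$ I will instead compare with the population $L_0$ gradient plus the deviation terms already controlled in \Secref{phase1_l0_lp_error_analysis} (the minibatch concentration from $m\gg d\log^{7C}(d)$ and \Lemref{l0_lp_norm}). The coordinatewise forms in \Lemref{sigopp_l0}, \Lemref{sp_l0}, \Lemref{perp_l0}, together with the inductive hypothesis increments (\Defref{phase1_hypothesis}), give $\norm{\mathbf{g}_{\bw}}\lesssim |a|\cdot O(1)$ in the relevant directions. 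Similarly, $|g_a| = |\E[y\sigma(\bw^\top\bx)]|+\text{error}\lesssim \norm{\bw}$.

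With those per-step gradient bounds, the accumulated imbalance over $t\lesssim\log\log(d)\eta^{-1}$ steps is at most $\eta^2 t\,\max_k(\norm{\bw^{(k)}}^2+(a^{(k)})^2)\lesssim \eta\log\log(d)\,\theta^2\log^{2C}(d)$. I then divide by $\norm{\bwt}^2\gtrsim\theta^2$, using \Lemref{phase1_norms_wperp} and \Lemref{phase1_norms_w}, which keep $\norm{\bwt}\gtrsim\norm{\bwperpt}\asymp\theta$. This yields a relative error of $\eta\,\mathrm{polylog}(d)$. The hard point is making that relative error $o(\log^{-2}(d))$ with $\eta\ll\log^{-3}(d)$. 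The crude $\log^{2C}$ factor would kill this, so I must avoid it.

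To do so, I will bound the squared-gradient terms relative to the current norms rather than by the maximum: $\eta^2(g_a^2+\norm{\mathbf{g}_{\bw}}^2)\lesssim\eta^2\left((\at)^2+\norm{\bwt}^2\right)$. Writing $D_t=(\at)^2-\norm{\bwt}^2$ and $N_t=\norm{\bwt}^2$, the geometric growth of $N_t$ at rate $1+\Theta(\eta)$ from \Defref{phase1_hypothesis_wsp} means $\sum_k \eta^2N_k\lesssim \eta N_t$. Hence $|D_t|/N_t\lesssim\eta=o(\log^{-2}(d))$ exactly when $\eta\ll\log^{-3}(d)$. This is where the limiting use of \Defref{phase1_scalings_lr} appears. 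Taking square roots gives $|\at|=(1\pm o(\log^{-2}(d)))\norm{\bwt}$. Combining with \Lemref{phase1_norms_w} gives the middle expression, and \Lemref{phase1_norms_w} also gives the bound $\theta\log^C(d)$, which establishes (i).

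For (ii), since $|\at|\ge(1-o(1))\norm{\bwt}\gtrsim\theta>0$ for every $k\le t$, the value $a$ never reaches zero. It also cannot jump across zero in one step, because $|\eta g_a|\lesssim\eta\norm{\bw}\ll|a|$. Therefore $\sgn(\at)=\sgn(\azero)$.

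For (iii), I use $\norm{\at\bwt}=|\at|\norm{\bwt}\lesssim(\theta\log^C(d))^2$ from (i) and \Lemref{phase1_norms_w}, then average over neurons. I expect the main obstacle to be the relative, rather than absolute, control of $\norm{\mathbf{g}_{\bw}}$ by $|a|$ on $\Etrain$, and correctly chaining it with the induction at step $t$ without circularity. I will handle that by inducting (i) jointly over $k\le t$.
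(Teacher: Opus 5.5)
Your proposal is correct in outline and shares the paper's skeleton. Both use the exact identity $a\,\partial_a\widehat{L}_\rho=\bw^\top\nabla_{\bw}\widehat{L}_\rho$ from $1$-homogeneity and sum the $\eta^2$ imbalance increments starting from $|\azero|=\norm{\bwzero}=\theta$. Both then divide by $\norm{\bwt}^2\gtrsim\theta^2$, comparing past norms to the \emph{current} norm rather than to $\theta\log^C(d)$. Parts (ii) and (iii) then follow by the one-step sign argument and the product bound.

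The genuine difference is how the per-step increment is controlled. The bound you discarded as lossy is lossy only because you used $\norm{\bx}=\sqrt{d}$. The paper uses isotropy, $\sup_{\norm{\bv}=1}\E_{\bx}|\bx^\top\bv|\le 1$, to get $\norm{\nabla_{\bw}L_\rho}\le 2|a|$ and $|\partial_a L_\rho|\le 2\norm{\bw}$ (\Lemref{balanced_i}, \Lemref{balanced_ii}). With $\Ebatcht$ and $m\gg d\log^2(d)$ this gives the one-sided recursion $\norm{\bw^{(t+1)}}^2-(a^{(t+1)})^2\le\norm{\bwt}^2-(\at)^2+10\eta^2(\at)^2$ (\Lemref{balanced_iv}). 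The upper bound $|\at|\le\norm{\bwt}$ is just \Defref{phase1_hypothesis_a}.

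The paper then sums crudely, $\norm{\bwt}^2-(\at)^2\lesssim t\eta^2\max_{k<t}\norm{\bw^{(k)}}^2$. It controls the maximum using that $|\wsp|\lesssim\theta\log^{1/2}(d)d^{-1/2}$ before alignment and is monotone after it, so $\max_{k<t}\norm{\bw^{(k)}}\lesssim\norm{\bwt}$. The relative error is $t\eta^2\lesssim\eta\log\log(d)$, and this is where $\eta\ll\log^{-3}(d)$ is actually used.

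Your route instead reads $\eta\norm{\mathbf{g}_{\bw}^{(k)}}=\norm{\bw^{(k+1)}-\bw^{(k)}}\lesssim\eta(|\wsp^{(k)}|+\theta)\asymp\eta\norm{\bw^{(k)}}$ off \Defref{phase1_hypothesis} at iteration $k+1\le t$, then sums a geometric series. This is legitimate and gives the sharper relative error $O(\eta)$. That only needs $\eta\ll\log^{-2}(d)$, so your attribution of the $\log^{-3}(d)$ condition is off.

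Be clear that the inductive increments are doing the work. The lemmas you cite (\Lemref{sigopp_l0}, \Lemref{sp_l0}, \Lemref{perp_l0}) only control the gradient along $\bwsig,\bwopp,\bwsp,\bwperp$, not the rest of the $\perp$ block. The deviation bounds of \Secref{phase1_l0_lp_error_analysis} themselves consume $|a|\lesssim\theta\log^C(d)$, so they are usable only at earlier steps. The paper's a priori bound needs neither the increments nor any $L_0$ comparison.

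Two smaller points. First, your geometric-growth claim fails during Phase Ia, where $|\wsp|$ shrinks toward zero. There $\norm{\bw^{(k)}}\asymp\theta$, and the at most $O(\log^{1/2}(d)d^{-1/2}\eta^{-1}+1)$ such steps contribute $O(\eta\theta^2)$. So the $O(\eta)$ bound survives, but you need to say so.

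Second, combining with \Lemref{phase1_norms_w} gives only $|\at|\asymp\norm{\bwspt}+\norm{\bwperpt}$, not the middle equality with factor $1\pm o(\log^{-2}(d))$. Since $\norm{\bwt}^2=(1+o(1))(\norm{\bwspt}^2+\norm{\bwperpt}^2)$, the ratio $\norm{\bwt}/(\norm{\bwspt}+\norm{\bwperpt})$ is near $1/\sqrt{2}$ at iterations of Phase Ib where $|\wspt|\asymp\norm{\bwperpt}$. The paper's proof makes the same leap (the remainder follows from \Lemref{phase1_norms_w}), so this is a flaw in the statement rather than in your route. Only the $\asymp$ form and the final bound $\lesssim\theta\log^C(d)$ are actually justified.
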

\begin{proof}
    For \Lemref{phase1_norms_a}, for the upper bound we have $|\at|\leq \norm{\bwt}$ by \Defref{phase1_hypothesis_a}.
    For the lower bound, we denote as shorthand $E(t)\coloneqq\norm{\bwt}^2 - (\at)^2$ and note that $E(0)=0$ by definition.
    Under the event $\Etrain$, by \Lemref{balanced_iv} we have $E(t)\leq 10\eta^2 (\at)^2 + E(t-1)$ for all $t\ll d^C$ iterations.
    Using \Defref{phase1_hypothesis_a}, \Lemref{phase1_norms_wperp}, and \Lemref{phase1_norms_w}, we have by strong induction that $(a^{(k)})^2\lesssim \norm{\bwsp^{(k)}}^2+\theta^2$ for all $k \leq t$.
    In particular,
    \begin{equation*}
        E(t) \leq \sum_{k=1}^{t-1} 10\eta^2 (a^{(k)})^2 \lesssim t\eta^2 \cdot (\max_{k< t} \norm{\bwsp^{(k)}}^2+\theta^2). %
    \end{equation*}
    where the last inequality uses the inductive step.
    Under the event $\Etrain$, we have $\norm{\bwspzero}\lesssim \theta\log^{1/2}(d)d^{-1/2}$.
    Moreover, by \Defref{phase1_hypothesis_wsp}, we have that $\norm{\bwsp^{(k)}}$ is monotonically increasing if $\sgn(\wsp^{(k)})=\sgn(\azero)$ and monotonically decreasing otherwise (recall that $\sgn(\wsp)$ may flip).
    Thus, we can write
    \begin{equation} \label{eq:max_wsp}
        \max_{k<t}\norm{\bwsp^{(k)}}^2\lesssim \max(\theta^2 \log(d) d^{-1}, \norm{\bwspt}^2).
    \end{equation}
    If the maximum in \Eqref{max_wsp} is achieved by $\theta^2 \log(d) d^{-1}$ we have $E(t) \ll t\eta^2 \theta^2 \lesssim t\eta^2 \norm{\bwt}^2$ by \Lemref{phase1_norms_wperp}.
    On the other hand, if the maximum in \Eqref{max_wsp} is achieved by $\norm{\bwspt}^2$ we have $E(t) \ll t\eta^2(\norm{\bwspt}^2+\theta^2) \lesssim t\eta^2\norm{\bwt}^2$ by \Lemref{phase1_norms_wperp} and orthogonality of $\bwsp$ and $\bwperp$.
    In either case, applying $t\lesssim \log\log(d)\eta^{-1}$ and $\eta\ll \log^{-3}(d)$ we have
    \begin{equation*}
        (\at)^2=\norm{\bwt}^2-E(t)=\big(1-O(t\eta^2)\big)\cdot\norm{\bwt}^2=\big(1-o(\log^{-2}(d))\big)\cdot\norm{\bwt}^2,
    \end{equation*}
    so we have ultimately shown the desired lower bound $|\at|=\left(1-o(\log^{-2}(d))\right)\cdot\norm{\bwt}$.\footnote{Here we used the Taylor expansion $\sqrt{1-x}=1-x/2-O(x^2)$ for $x\ll 1$ such that $\sqrt{1-o(\log^{-2}(d))}=1-o(\log^{-2}(d))$.}
    The remainder of the result follows from \Lemref{phase1_norms_w}.

    For \Lemref{phase1_norms_sgn}, assume for a contradiction that $\sgn(\at)\neq\sgn(\azero)$.
    Then there exists some iteration $k< t$ where its sign flips, \ie
    \begin{equation} \label{eq:phase1_a_contradict_term1}
        |a^{(k)}-a^{(k+1)}|=|a^{(k)}|+|a^{(k+1)}|\gtrsim \norm{\bw^{(k)}} + \norm{\bw^{(k+1)}} \geq \norm{\bw^{(k)}},
    \end{equation}
    where the second-to-last step follows by \Lemref{phase1_norms_a}.
    On the other hand, by the definition of gradient descent and the triangle inequality we have
    \begin{equation*} 
        |a^{(k)}-a^{(k+1)}| \leq \eta |\partial_{a^{(k)}}\widehat{L}_\rho| \leq \eta |\partial_{a^{(k)}} L_\rho | + \eta |\partial_{a^{(k)}} \widehat{L}_\rho - \partial_{a^{(k)}} L_\rho |.
    \end{equation*}
    By \Lemref{balanced_ii} we have $|\partial_{a^{(k)}} L_\rho |\lesssim \norm{\bw^{(k)}}$.
    Under the event $\Etrain$, by \Lemref{empirical_concentration_iii} with $m\gg d\log^2(d)$, we have $|\partial_{a^{(k)}} \widehat{L}_\rho - \partial_{a^{(k)}} L_\rho | \ll \norm{\bw^{(k)}}$.
    Together with $\eta\ll 1$, we have
    \begin{equation} \label{eq:phase1_a_contradict_term2}
        |a^{(k)}-a^{(k+1)}|\lesssim \eta\norm{\bw^{(k)}} \ll \norm{\bw^{(k)}}.
    \end{equation}
    Therefore, \Eqref{phase1_a_contradict_term1} and \Eqref{phase1_a_contradict_term2} are contradictory.
    This implies $\sgn(\at)=\sgn(\azero)$.

    For \Lemref{phase1_norms_aw} we have by \Lemref{phase1_norms_w} and \Lemref{phase1_norms_a} that \begin{equation*}
        \E_{(a,\bw)\sim\rho}[\norm{\at\bwt}]=\frac{1}{p}\sum_{j=1}^p \norm{a_j \bw_j}\lesssim \theta^2\log^{2C}(d).
    \end{equation*}

    Thus, we have proved all parts of the lemma.
\end{proof}

\subsection{Phase I \texorpdfstring{$\nabla \widehat{L}_\rho - \nabla L_0$}{L0 vs Lp Gradient} Error Analysis} \label{sec:phase1_l0_lp_error_analysis}
In this section, we will leverage Phase I properties to show that $\norm{\nabla \widehat{L}_\rho - \nabla L_0}$ is sufficiently small, justifying the $L_0$ approximation.
First, we will show an extension of \Lemref{l0_lp_norm} that uses the central limit theorem and the Phase I inductive hypothesis to control the approximation error of the population gradient, $\norm{\nabla L_\rho-\nabla L_0}$.
This result is related to~\cite[Lemma C.17]{glasgow2024sgd}, though it depends on the \emph{leave-two-out} analysis we developed in \Lemref{l0_lp_norm}.

\begin{lemma} \label{lem:phase1_l0_lp_norm}
    Suppose the Phase I scalings are satisfied (\Defref{phase1_scalings}).
    Under the event $\Etrain$ (\Defref{events_train}), if $f_\rho$ obeys the Phase I inductive hypothesis for all iterations $k\leq t$ (\Defref{phase1_hypothesis}), then we have
    \begin{equation*}
        |\partial_{\wit}L_\rho-\partial_{\wit}L_0| \lesssim
        \begin{cases*}
            \theta^3 \log^{6C}(d)d^{-1/2} + \theta\log^C(d)d^{-C} & $i<3$ \\
            \theta^3 \log^{4C}(d) + \theta\log^C(d)d^{-C} & $i= 3$ \\
            \theta^3\log^{8C}(d)d^{-1/2} + \theta\log^C(d) d^{-C} & $i>3$
        \end{cases*}
    \end{equation*}
    for all neurons $(\at,\bwt)$.
\end{lemma}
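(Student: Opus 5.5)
Each case of Lemma~\ref{lem:phase1_l0_lp_norm} can be read off from Lemma~\ref{lem:l0_lp_norm} (and, for $i=3$, from Lemma~\ref{lem:l0_lp_norm_small}), once the Phase I norm bounds are plugged in and the relevant ``boundary'' probabilities are controlled. First we record that Lemma~\ref{lem:phase1_norms_aw} gives $\E_{\rho}[\norm{a\bw}]\lesssim\theta^2\log^{2C}(d)\le d^{C}$, so both lemmas apply. Lemma~\ref{lem:phase1_norms_a} gives $|a|\lesssim\theta\log^{C}(d)$, and the prefactor $|a|/2$ multiplies every bound. This is the source of the $\theta\log^C(d)d^{-C}$ term in all three cases.

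\textbf{Case $i=3$.} Here no symmetrization is available, because flipping $x_3$ changes the group marginal. We therefore use the crude bound of Lemma~\ref{lem:l0_lp_norm_small}:
\[
|\partial_{w_3}L_\rho-\partial_{w_3}L_0|\lesssim |a|\big(\log(d)\,\theta^2\log^{2C}(d)+d^{-C}\big)\lesssim \theta^3\log^{3C+1}(d)+\theta\log^C(d)d^{-C},
\]
which is absorbed into $\theta^3\log^{4C}(d)$.

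\textbf{Case $i<3$.} We apply the leave-two-out bound of Lemma~\ref{lem:l0_lp_norm}. For the first term, Lemmas~\ref{lem:phase1_norms_wsig} and~\ref{lem:phase1_norms_a} give $|w_1|,|w_2|\le\norm{\bw_{1:2}}\lesssim\theta\log^{2C}(d)d^{-1/2}$ and $|a|\lesssim\theta\log^C(d)$. Hence $\E_\rho[|aw_1|+|aw_2|]\lesssim\theta^2\log^{3C}(d)d^{-1/2}$. For the second term we need
\[
\P_{\bx}\big(|\bw^\top(\bs+\bxi)|<|\bw^\top\bz|\big)\lesssim\log^{C}(d)d^{-1/2}.
\]
Condition on $x_3$; then $\bw^\top(\bs+\bxi)=\pm\wsp+\bwperp^\top\bxi$, and we need the probability that this lands in an interval of length $2\sqrt2\norm{\bw_{1:2}}\lesssim\theta\log^{2C}(d)d^{-1/2}$. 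We bound it with Berry--Esseen (Theorem~\ref{thm:berry_esseen}), or more precisely the perturbed version Lemma~\ref{lem:boolean_to_gaussian_delta}, applied to $\bwperp$. The inputs are $\norm{\bwperp}\asymp\theta$ and $\norm{\bwperp}_\infty\lesssim\theta\log^{4C}(d)d^{-1/2}$, from Lemmas~\ref{lem:phase1_norms_wperp} and~\ref{lem:phase1_norms_wperp_inf}. The Gaussian density contributes at most interval length over $\theta$, i.e.\ $\log^{2C}(d)d^{-1/2}$. The Berry--Esseen error $\norm{\bwperp}_3^3/\norm{\bwperp}_2^3\le\norm{\bwperp}_\infty/\norm{\bwperp}$ contributes $\log^{4C}(d)d^{-1/2}$. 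Multiplying by $\log(d)\,\theta^2\log^{2C}(d)$ and then by $|a|$, both terms fit within $\theta^3\log^{6C}(d)d^{-1/2}$ after constant relabeling.

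\textbf{Case $i>3$.} The first term is $\E_\rho[|aw_i|]\le\max|a|\,\norm{\bwperp}_\infty\lesssim\theta^2\log^{5C}(d)d^{-1/2}$. For the second term we need
\[
\P\big(|\bw^\top\bxloo|<|w_i|\big),
\]
which is a small-ball probability of $\pm\wsp+\bw_{1:2}^\top\bz+\bwperp^\top\bxi_{\setminus i}$ in an interval of length $2|w_i|\lesssim\theta\log^{4C}(d)d^{-1/2}$. We condition on $(\bz,\bs)$ and apply the same Berry--Esseen anticoncentration to $\bwperp$ with coordinate $i$ removed; removing one coordinate does not change the norm to leading order. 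This gives $\lesssim\log^{4C}(d)d^{-1/2}$. Multiplying by $\log(d)\theta^2\log^{2C}(d)$ and by $|a|\lesssim\theta\log^{C}(d)$ yields $\theta^3\log^{8C}(d)d^{-1/2}$ up to constant bookkeeping.

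\textbf{Main obstacle.} The hard step is the uniform anticoncentration for a Boolean (non-Gaussian) sum with an arbitrary shift $\wsp$, whose size varies through Phase I, together with short intervals of width $\approx\theta d^{-1/2}$. Berry--Esseen is only sharp to order $\norm{\bwperp}_\infty/\norm{\bwperp}$, and this is why the $\ell_\infty$ control of Lemma~\ref{lem:phase1_norms_wperp_inf} is essential. Since the target bound tolerates polylog losses, I would first try plain Kolmogorov-distance Berry--Esseen: $\P(G\in I)\le|I|/(\sqrt{2\pi}\,\norm{\bwperp})$ plus $2\cdot$ the Berry--Esseen error, uniformly in the shift. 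The remaining work is tracking exponents of $\log(d)$ so they stay within the stated $4C$, $6C$, $8C$.
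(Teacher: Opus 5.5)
Your reduction is the same as the paper's: Lemma~\ref{lem:l0_lp_norm_small} for $i=3$ and the leave-two-out Lemma~\ref{lem:l0_lp_norm} for $i\neq 3$, with $|a|\lesssim\theta\log^C(d)$ and $\E_\rho[\norm{a\bw}]\lesssim\theta^2\log^{2C}(d)$ plugged in. Your $i=3$ and $i>3$ computations go through as written.

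The step that fails to give the stated bound is the $i<3$ small-ball estimate done with plain Berry--Esseen. There the Kolmogorov error $\norm{\bwperp}_3^3/\norm{\bwperp}^3\le\norm{\bwperp}_\infty/\norm{\bwperp}\lesssim\log^{4C}(d)d^{-1/2}$ dominates the Gaussian term $\log^{2C}(d)d^{-1/2}$. Multiplying by $\log(d)\,\E_\rho[\norm{a\bw}]$ and by $|a|$ then gives $\theta^3\log^{7C+1}(d)d^{-1/2}$, not $\theta^3\log^{6C}(d)d^{-1/2}$. To reach $6C$ you need the probability to be $\lesssim\log^{3C-1}(d)d^{-1/2}$. ``Constant relabeling'' is not available, because $C$ is the same fixed constant that sets the exponent $4C$ in the $\ell_\infty$ bound of Lemma~\ref{lem:phase1_norms_wperp_inf}.

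The fix is the tool you mention only in passing. The paper applies the perturbed Berry--Esseen Lemma~\ref{lem:boolean_to_gaussian_delta} with $\bvzero=\bwperpzero$, $\bDelta=\bwperpt-\bwperpzero$, $\mu=-\bw^\top\bs$ and $k=\sqrt{2}\norm{\bw_{1:2}}$. Its hypotheses come from $\Einit$ and the $\ell_2$ drift bound $\norm{\bwperpt-\bwperpzero}\lesssim\theta\log^{-C}(d)$ of Lemma~\ref{lem:phase1_norms_wperp}, not from $\ell_\infty$ control. It yields an error of order $d^{-1/2}$ with no polylogarithmic loss. Hence the probability is $\lesssim\log^{2C}(d)d^{-1/2}$ and the total is $\theta^3\log^{5C+1}(d)d^{-1/2}\lesssim\theta^3\log^{6C}(d)d^{-1/2}$.

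The paper uses the same lemma for $i>3$. There your plain Berry--Esseen loses nothing, because $|w_i|/\theta$ is already of order $\log^{4C}(d)d^{-1/2}$.

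Your weaker exponent would in fact suffice downstream: $\theta\ll\log^{-5C}(d)$ still gives $\theta^3\log^{7C+1}(d)d^{-1/2}\ll\theta d^{-1/2}$ in Lemma~\ref{lem:wsig_phase1_induction}. As written, however, it does not prove the $i<3$ case of the lemma.
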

\begin{proof}
    Let us omit $(t)$ superscripts for clarity and begin with the case $i<3$.
    Recall that we write $\bx\coloneqq \bz+\bs+\bxi$ and $\bw_{1:2}\coloneqq \bwsig+\bwopp$.
    Using the law of total probability and $|\bw^\top\bz|=\norm{\bz}\norm{\bwsig}=\sqrt{2}\norm{\bw_{1:2}}$, we have
    \begin{equation*}
        \P_{\bx}\left(|\bw^\top\bxi+\bw^\top\bs|<|\bw^\top\bz|\right) \leq \E_{\bs}\left[\P_{\bxi}\left(|\bw^\top\bxi+\bw^\top\bs|<\sqrt{2}\norm{\bw_{1:2}} \mid \bs\right)\right].
    \end{equation*}
    Let us apply \Lemref{boolean_to_gaussian_delta} to the right-hand side with $\bvzero=\bwperpzero$, $\bDelta=\bwperpt-\bwperpzero$, $\mu=-\bw^\top\bs$, and $k=\sqrt{2}\norm{\bw_{1:2}}$.
    Under the event $\Etrain$, the condition on $\bvzero$ is satisfied; we use \Lemref{phase1_norms_wperp} and $C>\tfrac{1}{2}$ for the condition on $\bDelta$, and \Lemref{phase1_norms_wsig} for the condition on $k$.
    Upper bounding $\exp\left(\tfrac{-\mu^2}{C\theta^2}\right)\leq 1$, we have
    \begin{equation*}
        \P_{\bx}\left(|\bw^\top\bxi+\bw^\top\bs|<|\bw^\top\bz|\right) \lesssim \frac{\norm{\bw_{1:2}}}{\theta} + \frac{1}{\sqrt{d}}.
    \end{equation*}
    By \Lemref{phase1_norms_wsig}, we have $\norm{\bw_{1:2}}\lesssim \theta \log^{2C}(d) d^{-1/2}$.
    Hence,
    \begin{equation} \label{eq:lp_l0_wsig}
        \P_{\bx}\left(|\bw^\top\bxi+\bw^\top\bs|<|\bw^\top\bz|\right) \lesssim \log^{2C}(d)d^{-1/2} + d^{-1/2} \lesssim \log^{2C}(d)d^{-1/2}.
    \end{equation}
    By \Lemref{phase1_norms_a} and \Lemref{phase1_norms_aw} we have $|a|\lesssim \theta\log^C(d)$ and $\E_{(a,\bw) \sim \rho}[\norm{a\bw}]\lesssim \theta^2\log^{2C}(d)$.
    Together with \Eqref{lp_l0_wsig}, we can apply \Lemref{l0_lp_norm} as follows:
    \begin{align*}
        & |\partial_{w_i}L_\rho-\partial_{w_i}L_0 | \\
        &\lesssim |a|\left(\E_{(a,\bw) \sim \rho}[|a w_1|+|a w_2|]+\log(d)\cdot\P_{\bx}\left(|\bw^\top\bxi+\bw^\top\bs|<|\bw^\top\bz|\right)\cdot\E_{(a,\bw) \sim \rho}[\norm{a\bw}]+ d^{-C}\right) \\
        &\lesssim \theta \log^C(d) \left( \theta^2\log^{3C}(d)d^{-1/2} + \theta^2\log^{4C+1}(d)d^{-1/2} + d^{-C} \right) \\
        &\lesssim \theta^3\log^{6C}(d)d^{-1/2} + \theta\log^C(d)d^{-C},
    \end{align*}
    where we used $C>1$.
    This completes the proof for the case $i<3$.
    
    Now, consider the case $i=3$.
    By \Lemref{phase1_norms_a} and \Lemref{phase1_norms_aw} we have $|a|\lesssim \theta\log^C(d)$ and $\E_{(a,\bw) \sim \rho}[\norm{a\bw}]\lesssim \theta^2\log^{2C}(d)$.
    Hence, we can apply \Lemref{l0_lp_norm_small} as follows:
    \begin{align*}
        |\partial_{w_i}L_\rho-\partial_{w_i}L_0| &\lesssim |a|\left(\log(d)\cdot \E_{(a,\bw) \sim \rho}[\norm{a\bw}] + d^{-C}\right) \\
        &\lesssim \theta \log^C(d) \left( \theta^2 \log^{2C+1}(d) + d^{-C} \right) \\
        &\lesssim \theta^3 \log^{4C}(d) + \theta\log^C(d)d^{-C},
    \end{align*}
    where we used $C>1$.
    This completes the proof for $i=3$.

    Finally, consider the case $i>3$ and recall $\bxloo\coloneqq \bx-x_i\be_i$.
    Using the law of total probability, we have
    \begin{equation*}
        \P_{\bx}\left(|\bw^\top\bxloo|<|w_i|\right) = \E_{\bz,\bs}\left[\P_{\bxi}\left(|\bw^\top\bxi_{\setminus i}+\bw^\top\bz+\bw^\top\bs|<|w_i| \mid \bz,\bs\right)\right].
    \end{equation*}
    Let us apply \Lemref{boolean_to_gaussian_delta} to the right-hand side with $\bvzero=\bwzero_{\perp\setminus i}$, $\bDelta=\bwt_{\perp\setminus i}-\bwzero_{\perp\setminus i}$, $\mu=-\bw^\top\bz-\bw^\top\bs$, and $k=|w_i|$.
    Under the event $\Etrain$, the condition on $\bvzero$ is satisfied; we use 
    \Lemref{phase1_norms_wperp} and $C>\tfrac{1}{2}$ for the condition on $\bDelta$, and \Lemref{phase1_norms_wperp_inf} for the condition on $k$.\footnote{While \Lemref{phase1_basecase} and \Lemref{phase1_norms_wperp} are stated for $\bwperp$, it is straightforward to see they hold for $\bw_{\perp\setminus i}$.}
    Upper bounding $\exp\left(\tfrac{-\mu^2}{C\theta^2}\right)\leq 1$, we have
    \begin{equation*}
        \P_{\bx}\left(|\bw^\top\bxloo|<|w_i|\right) \lesssim \frac{|w_i|}{\theta} + \frac{1}{\sqrt{d}}.
    \end{equation*}
    By \Lemref{phase1_norms_wperp_inf}, we have $\norm{\bwperp}_\infty\lesssim \theta\log^{4C}(d)d^{-1/2}$.
    Hence,
    \begin{equation} \label{eq:lp_l0_wi}
        \P_{\bx}\left(|\bw^\top\bxloo|<|w_i|\right) \lesssim \log^{4C}(d)d^{-1/2} + d^{-1/2} \lesssim \log^{4C}(d)d^{-1/2}. 
    \end{equation}
    By \Lemref{phase1_norms_a} and \Lemref{phase1_norms_aw} we have $|a|\lesssim \theta\log^C(d)$ and $\E_{(a,\bw) \sim \rho}[\norm{a\bw}]\lesssim \theta^2\log^{2C}(d)$.
    Together with \Eqref{lp_l0_wi}, we can apply \Lemref{l0_lp_norm} as follows:
    \begin{align*}
        |\partial_{w_i}L_\rho-\partial_{w_i}L_0| &\lesssim |a|\left(\E_{(a,\bw) \sim \rho}[|a w_i|]+\log(d)\cdot \P_{\bx}\left(|\bw^\top\bxloo|<|w_i|\right)\cdot \E_{(a,\bw) \sim \rho}[\norm{a\bw}]+ d^{-C}\right) \\
        &\lesssim \theta\log^C(d) \left(\theta^2\log^{5C}(d)d^{-1/2} + \theta^2\log^{6C+1}(d)d^{-1/2} + d^{-C} \right) \\
        &\lesssim \theta^3\log^{8C}(d)d^{-1/2} + \theta \log^C(d) d^{-C},
    \end{align*}
    where we used $C>1$.
    This completes the proof of the lemma.
\end{proof}

We can now obtain general bounds on the magnitude of each component of $\nabla_{\bw}\widehat{L}_\rho-\nabla_{\bw}L_0$, which is what we will ultimately need.
\begin{lemma} \label{lem:phase1_l0_lp_final}
    Suppose the Phase I scalings are satisfied (\Defref{phase1_scalings}).
    Under the event $\Etrain$ (\Defref{events_train}), if $f_\rho$ obeys the Phase I inductive hypothesis for all iterations $k\leq t$ (\Defref{phase1_hypothesis}), then we have
    \begin{equation*}
        |\partial_{\wit}\widehat{L}_\rho-\partial_{\wit}L_0| \lesssim
        \begin{cases*}
            \theta^3\log^{6C}(d)d^{-1/2}+\theta\log^{-2C}(d)d^{-1/2} & $i <3$ \\
            \theta^3\log^{4C}(d) + \theta\log^{-2C}(d)d^{-1/2} & $i= 3$ \\
            \theta^3\log^{8C}(d)d^{-1/2} + \theta\log^{-2C}(d)d^{-1/2} & $i>3$.
        \end{cases*}
    \end{equation*}
    for all neurons $(\at,\bwt)$.
\end{lemma}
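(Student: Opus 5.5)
The plan is a triangle inequality through the population logistic gradient:
\begin{equation*}
    |\partial_{\wit}\widehat{L}_\rho-\partial_{\wit}L_0| \leq |\partial_{\wit}\widehat{L}_\rho-\partial_{\wit}L_\rho| + |\partial_{\wit}L_\rho-\partial_{\wit}L_0|.
\end{equation*}
\Lemref{phase1_l0_lp_norm} already bounds the second term. Its bound has the form $\theta^3\log^{kC}(d)d^{-1/2}$ (or $\theta^3\log^{4C}(d)$ when $i=3$), plus the residual $\theta\log^C(d)d^{-C}$. Since $C$ is large, $d^{-C}\ll \log^{-3C}(d)d^{-1/2}$, so the residual is absorbed into $\theta\log^{-2C}(d)d^{-1/2}$. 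This gives the first summand in each of the three cases.

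The work is therefore the minibatch fluctuation, which should be $\lesssim \theta\log^{-2C}(d)d^{-1/2}$ uniformly over $i$ and over neurons. We have $\partial_{w_i}\widehat{L}_\rho$ as an average of $m$ i.i.d.\ terms $p\,\partial_{w_i}\ell_\rho(\bx)/p = a\,\ell^{(1)}_\rho(\bx)\,y(\bx)\,\sigma'(\bw^\top\bx)\,x_i$. Each term is bounded in absolute value by $2|a|$, because $|h'|\le 2$ and $|x_i|=1$. \Lemref{phase1_norms_a} gives $|a|\lesssim\theta\log^C(d)$. Under $\Etrain$, the batch concentration event (\Lemref{empirical_concentration_ii}) applies Hoeffding, and a union bound over the $d$ coordinates, the $p\ll d^C$ neurons and the $T\lesssim d^C$ iterations gives, with probability $1-d^{-C}$,
\begin{equation*}
    |\partial_{w_i}\widehat{L}_\rho-\partial_{w_i}L_\rho| \lesssim |a|\sqrt{\log(d)/m}.
\end{equation*}
Plugging in $m\gg d\log^{7C}(d)$ yields $\lesssim \theta\log^{C}(d)\log^{1/2}(d)\, d^{-1/2}\log^{-7C/2}(d)\le \theta\log^{-2C}(d)d^{-1/2}$ for $C\ge 1$. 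This is the stated batch-size scaling.

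One subtlety needs care. The weights at iteration $t$ depend on earlier minibatches but are independent of $M^{(t)}$. The Hoeffding step is therefore conditional on $\rho^{(t)}$, and the event $\Ebatcht$ must be defined relative to the current parameters, with the union bound taken over iterations rather than over parameter space. This is the main point to check against the definition of $\Etrain$. If $\Ebatcht$ instead controls only $\ell_2$ norms of the gradient error, I would use that the coordinatewise error is at most the full-vector error. I would then derive the coordinate bound directly from Hoeffding, which is needed because the norm version would lose a $\sqrt d$ factor. Summing the two contributions gives the three cases.
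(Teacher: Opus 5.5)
Your proposal is correct and follows the paper's proof. You use the triangle inequality through $\partial_{w_i}L_\rho$, invoke \Lemref{phase1_l0_lp_norm} for the population term (absorbing the $\theta\log^C(d)d^{-C}$ residual, as you note), and bound the minibatch term coordinatewise under $\Etrain$ using $|a|\lesssim\theta\log^C(d)$ and $m\gg d\log^{7C}(d)$. One citation slip: the coordinatewise bound you need is \Lemref{empirical_concentration_i}, not part (ii); its rate $|a|\log(d)/\sqrt{m}$ is slightly weaker than your $|a|\sqrt{\log(d)/m}$ but still suffices once $C>2$.
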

\begin{proof}
    Let us omit $(t)$ superscripts for clarity.
    By the triangle inequality we have for any $i\in [d]$ that
    \begin{equation*}
        |\partial_{w_i}\widehat{L}_\rho-\partial_{w_i}L_0| \leq |\partial_{w_i}\widehat{L}_\rho-\partial_{w_i}L_\rho| + |\partial_{w_i} L_\rho-\partial_{w_i}L_0|.
    \end{equation*}
    By \Lemref{phase1_norms_a} we have $|a|\lesssim \theta\log^C(d)$.
    Under the event $\Etrain$, by \Lemref{empirical_concentration_i} with $m\gg d\log^{7C}(d)$ and $C>2$, we have
    \begin{equation*}
        |\partial_{w_i}\widehat{L}_\rho-\partial_{w_i}L_\rho| \ll |\at| \log^{-3C}(d)d^{-1/2} \lesssim \theta \log^{-2C}(d) d^{-1/2}.
    \end{equation*}
    The result then follows directly from \Lemref{phase1_l0_lp_norm}.
\end{proof}

\subsection{Phase I Inductive Step} \label{sec:phase1_induction_step}
In this section, we show the inductive steps for \Defref{phase1_hypothesis} and ultimately obtain the Phase I result in \Propref{phase1}.
We first characterize the growth of $\bwsp$, constituting the inductive step for \Defref{phase1_hypothesis_wsp}.
\begin{lemma} \label{lem:wsp_phase1_induction}
    Suppose the Phase I scalings are satisfied (\Defref{phase1_scalings}).
    Under the event $\Etrain$, if $f_\rho$ obeys the Phase I inductive hypothesis for all iterations $k\leq t$ (\Defref{phase1_hypothesis}), then we have
    \begin{equation*}
        \wsptplus -\wspt = \eta \left( 1\pm o(\log^{-2}(d))\right) \cdot \left(\frac{1}{2}-\lambda \right) \cdot\sgn(\azero)(|\wspt|+\theta).
    \end{equation*}
    for all neurons $(\atplus,\bwtplus)$.
\end{lemma}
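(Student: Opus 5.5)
The plan is to write the SGD update for the spurious coordinate as
\begin{equation*}
    \wsptplus - \wspt = -\eta\,\partial_{\wspt}\widehat{L}_\rho = -\eta\,\partial_{\wspt} L_0 - \eta\big(\partial_{\wspt}\widehat{L}_\rho - \partial_{\wspt} L_0\big),
\end{equation*}
and treat the main term with \Lemref{sp_l0} and the error with \Lemref{phase1_l0_lp_final} ($i=3$ case). First I divide the identity of \Lemref{sp_l0} by $\wspt$. This gives $-\partial_{\wspt}L_0 = \at\left(\tfrac{1}{2}+\tfrac{\epsw}{4}-\lambda\right)$, which holds even when $\wspt$ is tiny or zero because the gradient expression is linear in $\wsp$ before dividing. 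By \Lemref{phase1_norms_sgn} and \Lemref{phase1_norms_a}, $\at = \sgn(\azero)(1\pm o(\log^{-2}(d)))(|\wspt| + \norm{\bwperpt})$. By \Lemref{phase1_norms_wperp}, $\norm{\bwperpt}\asymp\theta$, and in fact $\norm{\bwperpt} = \norm{\bwperpzero} \pm \theta\log^{-C}(d)$. Under $\Einit$, $\norm{\bwperpzero} = (1\pm o(\log^{-2}(d)))\theta$, since the other coordinates carry only $O(\theta\log^{1/2}(d)d^{-1/2})$ mass. So $|\at| = (1\pm o(\log^{-2}(d)))(|\wspt|+\theta)$, up to the precise normalization of the $\theta$ additive term. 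That is exactly the $\sgn(\azero)(|\wspt|+\theta)$ factor in the statement.

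Second, I show $\epsw = o(\log^{-2}(d))$, so that $\tfrac{1}{2}-\lambda+\tfrac{\epsw}{4} = (1\pm o(\log^{-2}(d)))(\tfrac{1}{2}-\lambda)$; this uses that $\tfrac12-\lambda$ is bounded below by a constant. I would use \Lemref{v_ub}. The Berry--Esseen term $\norm{\bwperp}_3^3/\norm{\bwperp}_2^3 \le \norm{\bwperp}_\infty/\norm{\bwperp} \lesssim \log^{4C}(d)d^{-1/2}$ by \Lemref{phase1_norms_wperp_inf} and \Lemref{phase1_norms_wperp}. For the Gaussian term, \Lemref{phase1_norms_wsig} gives $\norm{\bw_{1:2}}/\norm{\bwperp} \lesssim \log^{2C}(d)d^{-1/2}$. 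The exponent satisfies $(-\wsp^2 + 2\sqrt2\norm{\bw_{1:2}}|\wsp|)/(2\norm{\bwperp}^2) \le \norm{\bw_{1:2}}^2/\norm{\bwperp}^2 \ll 1$ after completing the square, so the exponential is $O(1)$. Hence $|\epsw| \lesssim \log^{4C}(d)d^{-1/2} = o(\log^{-2}(d))$. No exponential decay in $\wsp$ is needed here, so the argument is uniform in the sign of $\wsp$. This matters because Phase Ia allows misalignment and even sign flips of $\wsp$.

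Third, I control the error term. By \Lemref{phase1_l0_lp_final}, $|\partial_{\wspt}\widehat{L}_\rho - \partial_{\wspt}L_0| \lesssim \theta^3\log^{4C}(d) + \theta\log^{-2C}(d)d^{-1/2}$. Relative to the main term's scale $\theta$, this is $\theta^2\log^{4C}(d) + \log^{-2C}(d)d^{-1/2}$. The condition $\theta \ll \log^{-5C}(d)$ makes it $o(\log^{-2}(d))$ times $(\tfrac12-\lambda)(|\wspt|+\theta)$. Combining the three pieces gives the multiplicative $(1\pm o(\log^{-2}(d)))$ form.

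I expect the delicate step to be the first. The $i=3$ error bound in \Lemref{phase1_l0_lp_final} is much weaker than the $i\neq 3$ bounds, so it must be absorbed relative to $\theta$ rather than $|\wspt|$. This only works because the main term carries the additive $\theta$ coming from $|\at|\approx\norm{\bwperp}$. That forces an accurate accounting of $\norm{\bwperpt}$ versus $\theta$ to precision $o(\log^{-2}(d))$, which I would get from the drift bound $\theta\log^{-C}(d)$ in \Lemref{phase1_norms_wperp} together with the near-orthogonality of the initialization under $\Einit$.
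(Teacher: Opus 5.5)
Your three steps follow the paper's proof. \Lemref{sp_l0} handles the main term, \Lemref{v_ub} makes $\epsw$ negligible, and \Lemref{phase1_l0_lp_final} with $i=3$ controls the remaining error, absorbed against the additive $\theta$ via $\theta\ll\log^{-5C}(d)$. Steps two and three are correct. Completing the square in the exponent is a bit cleaner than the paper's appeal to $\norm{\bwsp}\lesssim\theta\log^{C}(d)$, and working with $\partial_{\wspt}$ avoids dividing by $\wspt$. Your estimate $\norm{\bwperpt}=(1\pm o(\log^{-2}(d)))\theta$ is also correct.

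The step that does not hold is the identity $|\at|=(1\pm o(\log^{-2}(d)))(|\wspt|+\norm{\bwperpt})$ in step one. You take it from \Lemref{phase1_norms_a}, and the paper's proof relies on the same equality. However, only $|\at|=(1\pm o(\log^{-2}(d)))\norm{\bwt}$ is actually proved there, by balancedness. The passage to $\norm{\bwspt}+\norm{\bwperpt}$ rests on \Lemref{phase1_norms_w}, which gives only $\asymp$.

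Since $\norm{\bwt}^2=(\wspt)^2+\norm{\bwperpt}^2+\norm{\bwt_{1:2}}^2$, your own estimate on $\norm{\bwperpt}$ gives $|\at|=(1\pm o(\log^{-2}(d)))\sqrt{(\wspt)^2+\theta^2}$. The ratio $(|\wspt|+\theta)/\sqrt{(\wspt)^2+\theta^2}$ equals $\sqrt{2}$ at $|\wspt|=\theta$. It is $1\pm o(\log^{-2}(d))$ only when $|\wspt|\ll\theta\log^{-2}(d)$ (all of Phase Ia) or when $|\wspt|\gg\theta\log^{2}(d)$. Every neuron crosses the window in between during Phase Ib. There the stated recurrence is off by a constant factor in the main term, so no refinement of your error terms can produce it.

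What your argument actually proves is the recurrence with $\sqrt{(\wspt)^2+\theta^2}$ in place of $|\wspt|+\theta$. Since $\tfrac{1}{\sqrt{2}}(|w|+\theta)\le\sqrt{w^2+\theta^2}\le|w|+\theta$, this weaker form still suffices downstream:
\begin{itemize}
\item the upper bound from \Lemref{phase1_recursion} used in \Lemref{phase1_norms_wsp} goes through unchanged;
\item the Phase Ia argument is unaffected;
\item the geometric lower bound in Phase Ib of \Propref{phase1} holds with a smaller constant.
\end{itemize}
You should state and carry that form (equivalently, allow a $\Theta(1)$ factor in the window) rather than the $1\pm o(\log^{-2}(d))$ version.
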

\begin{proof}
    We first compute the order of the intermediate term $\epswt$ defined in \Lemref{sp_l0}.
    By \Lemref{v_ub}, we have
    \begin{align*}
        |\epswt| &\lesssim \frac{\norm{\bwt_{1:2}}}{\norm{\bwperpt}}\exp\left(\frac{-\norm{\bwspt}^2+2\sqrt{2}\norm{\bwt_{1:2}}\norm{\bwspt}}{2\norm{\bwperpt}^2}\right) + \frac{\norm{\bwperp}_3^3}{\norm{\bwperp}^3} \\
        &\leq \frac{\norm{\bwt_{1:2}}}{\norm{\bwperpt}}\exp\left(\frac{2\sqrt{2}\norm{\bwt_{1:2}}\norm{\bwspt}}{2\norm{\bwperpt}^2}\right) + \frac{\norm{\bwperp}_\infty}{\norm{\bwperp}},
    \end{align*}
    where we used the basic inequalities $-\norm{\bwspt}^2\leq 0$ and $\norm{\bwperpt}_3^3\leq \norm{\bwperpt}_\infty \norm{\bwperpt}^2$.
    By \Lemref{phase1_norms_wsp}, \Lemref{phase1_norms_wsig}, \Lemref{phase1_norms_wperp}, and \Lemref{phase1_norms_wperp_inf} we have $\norm{\bwspt}\lesssim \theta\log^C(d)$, $\norm{\bwt_{1:2}}\lesssim \theta\log^{2C}(d)d^{-1/2}$, $\norm{\bwperpt}\asymp \theta$, and $\norm{\bwperpt}_\infty\lesssim \theta\log^{4C}(d)d^{-1/2}$.
    Using $e^u\lesssim 1+u$ for $0<u\ll 1$,\footnote{This can be seen via the following Taylor expansion: $e^u\leq 1+u+Cu^2\lesssim 1+u$ for $0<u\ll 1$.} we have
    \begin{align}
        |\epswt| &\lesssim \log^{2C}(d)d^{-1/2}\cdot \exp(C\log^{3C}(d)d^{-1/2}) + \log^{4C}(d)d^{-1/2} \nonumber \\
        &\lesssim \log^{2C}(d)d^{-1/2}\cdot (1+\log^{3C}(d)d^{-1/2})+\log^{4C}(d)d^{-1/2} \nonumber \\
        &\lesssim \log^{5C}(d)d^{-1/2}. \label{eq:wsp_phase1_eps}
    \end{align}
    By the definition of gradient descent, we have
    \begin{align}
        \wsptplus &= \wspt - \eta\partial_{\wspt}\widehat{L}_\rho \nonumber\\
        &= \wspt - \eta \frac{\bwsp^{(t)\top}\nabla_{\bwt} \widehat{L}_\rho}{\wspt} \nonumber\\
        &= \wspt - \eta \frac{\bwsp^{(t)\top}\nabla_{\bwt} L_0 + \bwsp^{(t)\top}(\nabla_{\bwt} \widehat{L}_\rho-\nabla_{\bwt} L_0)}{\wspt}. \label{eq:wsp_phase1_proj}
    \end{align}
    By \Lemref{sp_l0} we have
    \begin{equation*}
        -\bwsp^{(t)\top}\nabla_{\bwt}L_0 =  \frac{1}{2} \at\wspt \cdot \left(1+\frac{\epswt}{2}-2\lambda\right).
    \end{equation*}
    Using $|\epswt|\lesssim \log^{5C}(d)d^{-1/2}$ by \Eqref{wsp_phase1_eps} and recalling that we assumed $1/2 - \lambda > 0$, we obtain\footnote{This is also where we use the upper bound $\lambda\in (0,\Lambda]$ for a constant $\Lambda<\tfrac{1}{2}$, \ie we cannot have $\lambda\to \tfrac{1}{2}$.}
    \begin{equation*}
        -\bwsp^{(t)\top}\nabla_{\bwt}L_0 = \left( 1\pm o(\log^{-2}(d))\right) \cdot \left(\frac{1}{2}-\lambda \right) \cdot \at\wspt.
    \end{equation*}
    Moreover, by \Lemref{phase1_norms_wperp} and \Lemref{phase1_norms_a} we have $|\at|= \left(1\pm o(\log^{-2}(d))\right)\cdot  (|\wspt| + \theta)$, implying that
    \begin{equation} \label{eq:wsp_phase1_term_1}
        -\bwsp^{(t)\top}\nabla_{\bwt}L_0 = \left( 1\pm o(\log^{-2}(d))\right) \cdot \left(\frac{1}{2}-\lambda \right) \cdot\sgn(\at)\sgn(\wspt) \cdot (|\wspt|+\theta)|\wspt|.
    \end{equation}
    Next, by the Cauchy-Schwarz inequality and \Lemref{phase1_l0_lp_final} with $i=3$, we have
    \begin{align*} 
        |\bwsp^{(t)\top}(\nabla_{\bwt}\widehat{L}_\rho-\nabla_{\bwt}L_0)| &\leq |\wspt|\norm{\nabla_{\bwspt}\widehat{L}_\rho-\nabla_{\bwspt}L_0} \\
        &\lesssim |\wspt| (\theta^3 \log^{4C}(d)+\theta\log^{-2C}(d)d^{-1/2} ). 
    \end{align*}
    Using $\theta\ll \log^{-3C}(d)$ we have $\theta^3 \log^{4C}(d) \ll \theta \log^{-2}(d)$.
    We also have $\theta \log^{-2C}(d)d^{-1/2}\ll \theta \log^{-2}(d)$.
    Hence,
    \begin{equation} \label{eq:wsp_phase1_term_2}
        |\bwsp^{(t)\top}(\nabla_{\bwt}\widehat{L}_\rho-\nabla_{\bwt}L_0)| \ll \theta|\wspt|,
    \end{equation}
    in particular, \Eqref{wsp_phase1_term_2} is dominated by \Eqref{wsp_phase1_term_1}.
    Returning to \Eqref{wsp_phase1_proj}, we then have
    \begin{align*}
        \wsptplus -\wspt &= \frac{\eta}{\wspt} \left( 1\pm o(\log^{-2}(d))\right) \cdot \left(\frac{1}{2}-\lambda \right) \cdot\sgn(\at)\sgn(\wspt) \cdot (|\wspt|+\theta)|\wspt| \\
        &= \eta \left( 1\pm o(\log^{-2}(d))\right) \cdot \left(\frac{1}{2}-\lambda \right) \cdot\sgn(\at)(|\wspt|+\theta)
    \end{align*}
    Finally, by \Lemref{phase1_norms_sgn} we have $\sgn(\at)=\sgn(\azero)$.
    Hence,
    \begin{equation*}
        \wsptplus -\wspt = \eta \left( 1\pm o(\log^{-2}(d))\right) \cdot \left(\frac{1}{2}-\lambda \right) \cdot\sgn(\azero)(|\wspt|+\theta).
    \end{equation*}
    This completes the proof of the lemma.
\end{proof}

We will now upper bound the growth of $\bwsig$, constituting the inductive step for \Defref{phase1_hypothesis_wsig}.
\begin{lemma} \label{lem:wsig_phase1_induction}
    Suppose the Phase I scalings are satisfied (\Defref{phase1_scalings}).
    Under the event $\Etrain$ (\Defref{events_train}), if $f_\rho$ obeys the Phase I inductive hypothesis for all iterations $k\leq t$ (\Defref{phase1_hypothesis}), then we have
    \begin{equation*}
        \norm{\bwsigtplus-\bwsigt} \lesssim \eta \norm{\bwsigt} + \eta\theta\log^C(d)d^{-1/2}
    \end{equation*}
    for all neurons $(\atplus,\bwtplus)$.
\end{lemma}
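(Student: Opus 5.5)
The approach is to split the SGD step on the signal component into its $L_0$ part and the error incurred by $\widehat{L}_\rho$. The update on $\bwsig$ is the projection of $-\eta\nabla_{\bwt}\widehat{L}_\rho$ onto $\mathrm{span}(\bmu_1)$ or $\mathrm{span}(\bmu_2)$, depending on $\sgn(\at)=\sgn(\azero)$ (\Lemref{phase1_norms_sgn}). Because this sign does not change, the projection defining $\bwsig$ is fixed throughout. We then write
\begin{equation*}
    \norm{\bwsigtplus-\bwsigt}\leq \eta\norm{\tfrac12\bmu\bmu^\top\nabla_{\bwt}L_0}+\eta\norm{\tfrac12\bmu\bmu^\top(\nabla_{\bwt}\widehat{L}_\rho-\nabla_{\bwt}L_0)},
\end{equation*}
with $\bmu\parallel\bwsig$. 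The two terms are bounded separately.

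\textbf{Population term.} The $L_0$ gradient in the $\bmu$ direction is one-dimensional. By \Lemref{sigopp_l0}, its magnitude is exactly $\frac{\sqrt2}{4}|a|\,\P_{\bxi}(|\bw^\top\bxi+\bw^\top\be_3|\leq\sqrt2\norm{\bwsig})$ after dividing by $\norm{\bwsig}$. If $\bwsig=0$, the computation in that lemma still gives the component directly, and it vanishes. We bound the probability crudely by $1$ and use $|\at|\lesssim\theta\log^C(d)\ll1$ (\Lemref{phase1_norms_a}). This makes the term at most $\eta|a|\lesssim\eta$, which is not yet proportional to $\norm{\bwsig}$.

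To get a bound proportional to $\norm{\bwsig}$, we apply \Lemref{boolean_to_gaussian_delta} to the probability, as in the proof of \Lemref{phase1_l0_lp_norm}, with $\bvzero=\bwperpzero$, $\bDelta=\bwperpt-\bwperpzero$, $\mu=-\wsp$ and $k=\sqrt2\norm{\bwsig}$. This gives $\P\lesssim\norm{\bwsig}/\theta+d^{-1/2}$. Multiplying by $|a|\lesssim\theta\log^C(d)$ yields
\begin{equation*}
    \lesssim \eta\big(\log^C(d)\norm{\bwsig}+\theta\log^C(d)d^{-1/2}\big).
\end{equation*}
The extra $\log^C(d)$ factor on $\norm{\bwsig}$ is the delicate point. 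The lemma as stated needs a constant multiplicative rate, and a rate like $(1+\eta\log^C)^{\TI}$ would be too large. We avoid it by using the exact formula from \Lemref{sigopp_l0}: the factor multiplying $\norm{\bwsig}$ is $|a|\cdot\P\leq|a|\lesssim\theta\log^C(d)\ll1$. So the growth coefficient is at most $|a|\ll1$. We keep the Berry--Esseen bound only to control the additive part when $\norm{\bwsig}$ is tiny. Concretely, we bound the population term by $\eta|a|\min(1,\norm{\bwsig}/\theta+d^{-1/2})\lesssim\eta\norm{\bwsig}+\eta\theta\log^C(d)d^{-1/2}$.

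\textbf{Error term.} The projection onto $\bmu$ involves only coordinates $1,2$. Its norm is therefore at most $\sqrt2\max_{i<3}|\partial_{w_i}\widehat{L}_\rho-\partial_{w_i}L_0|$. By \Lemref{phase1_l0_lp_final}, this is $\lesssim\theta^3\log^{6C}(d)d^{-1/2}+\theta\log^{-2C}(d)d^{-1/2}$. Since $\theta\ll\log^{-5C}(d)$, both pieces are $\lesssim\theta\log^C(d)d^{-1/2}$, so the error term is absorbed into the additive term.

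\textbf{Main obstacle.} The hardest step is verifying the hypotheses of \Lemref{boolean_to_gaussian_delta} with a shift $\mu=\pm\wsp$, which can be as large as $\theta\log^C(d)$. The bound must hold uniformly in $\mu$; the lemma's factor $\exp(-\mu^2/C\theta^2)\leq1$ handles this. We also need the perturbation bound $\norm{\bDelta}\lesssim\theta\log^{-C}(d)$ from \Lemref{phase1_norms_wperp} and the bound on $k$ from \Lemref{phase1_norms_wsig}. With these checked, the two terms combine to give the claimed recurrence.
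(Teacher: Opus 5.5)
There is a genuine gap in your population term. Your closing inequality
\[
\eta|a|\min\bigl(1,\norm{\bwsig}/\theta+d^{-1/2}\bigr)\lesssim\eta\norm{\bwsig}+\eta\theta\log^C(d)d^{-1/2}
\]
fails in exactly the regime that matters. Throughout Phase I, $\norm{\bwsig}/\theta\lesssim\log^{2C}(d)d^{-1/2}\ll 1$, so the left side equals $\eta(|a|/\theta)\norm{\bwsig}+\eta|a|d^{-1/2}$. Once Phase Ib has grown $|\wsp|$ to order $\theta\log^C(d)$, \Lemref{phase1_norms_a} gives $|a|/\theta\asymp\log^C(d)$. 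That is precisely the factor you correctly diagnosed as fatal.

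Your claim that \emph{the growth coefficient is at most $|a|\ll 1$} mixes up two quantities. The inner product is $\bwsig^\top\nabla_{\bw}L_0=\frac{\sqrt2}{4}|a|\norm{\bwsig}\,\P(\cdot)$. The step size is that quantity divided by $\norm{\bwsig}$, namely $\frac{\sqrt2}{4}\eta|a|\,\P(\cdot)$. Bounding $\P\le 1$ therefore gives a step of size $\eta|a|$ with no factor of $\norm{\bwsig}$. Any such factor must come from anti-concentration, and that costs $1/\theta$.

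Absorbing $\log^C(d)\norm{\bwsig}$ into the additive term via the a priori bound of \Lemref{phase1_norms_wsig} does not rescue the argument. The additive term would become $\theta\log^{3C}(d)d^{-1/2}$. That no longer reproduces \Defref{phase1_hypothesis_wsig}, and it breaks the $\log^{2C}$ bound the induction relies on.

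The missing idea sits in the step you treat as routine. Do not discard the factor $\exp(-\mu^2/(C\theta^2))$ from \Lemref{boolean_to_gaussian_delta} with $\mu=-\wsp$. Keeping it gives
\[
\P_{\bxi}\bigl(|\bw^\top\bxi+\bw^\top\be_3|\le\sqrt2\norm{\bwsig}\bigr)\lesssim\frac{\norm{\bwsig}}{\theta}\exp\bigl(-\wsp^2/(C\theta^2)\bigr)+d^{-1/2}.
\]
By \Lemref{phase1_norms_a} and \Lemref{phase1_norms_wperp}, $|a|\asymp|\wsp|+\norm{\bwperp}\asymp|\wsp|+\theta$. Combined with $\sup_{y\ge 0}y\,e^{-y^2/C}\lesssim 1$, this yields $|a|\frac{\norm{\bwsig}}{\theta}e^{-\wsp^2/(C\theta^2)}\lesssim\norm{\bwsig}$. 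The remaining piece is $|a|d^{-1/2}\lesssim\theta\log^C(d)d^{-1/2}$.

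This is how the paper closes the step. The same large $\wsp$ that inflates $|a|$ also pushes $\bw^\top\bx$ far from the ReLU threshold relative to the noise scale $\theta$. That makes the event where the signal direction flips the activation exponentially rare, and the two effects cancel to an $O(1)$ rate. Your treatment of the $\widehat{L}_\rho-L_0$ error via \Lemref{phase1_l0_lp_final} is fine and matches the paper.
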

\begin{proof}
    By the definition of gradient descent, we have
    \begin{align}
        \bwsigtplus &= \bwsigt - \eta\nabla_{\bwsigt}\widehat{L}_\rho \nonumber\\
        &= \bwsigt - \eta \frac{\bwsig^{(t)\top}\nabla_{\bwt} \widehat{L}_\rho}{\norm{\bwsigt}^2} \bwsigt \nonumber\\
        &= \bwsigt - \eta \frac{\bwsig^{(t)\top}\nabla_{\bwt} L_0 + \bwsig^{(t)\top}(\nabla_{\bwt} \widehat{L}_\rho-\nabla_{\bwt} L_0)}{\norm{\bwsigt}^2} \bwsigt . \label{eq:wsig_phase1_proj}
    \end{align}
    By \Lemref{sigopp_l0}, we have
    \begin{equation*}
        |\bwsig^{(t)\top} \nabla_{\bwt} L_0 | = \frac{\sqrt{2}}{4}|\at|\norm{\bwsigt} \cdot \P_{\bxi}\left(|\bw^{(t)\top}\bxi+\bw^{(t)\top}\be_3|\leq\sqrt{2}\norm{\bwsigt} \right).
    \end{equation*}
    Let us apply \Lemref{boolean_to_gaussian_delta} with $\bvzero=\bwperpzero$, $\bDelta=\bwperpt-\bwperpzero$, $\mu=-\bw^{(t)\top} \be_3$, and $k= \sqrt{2}\norm{\bwsigt}$.
    Under the event $\Etrain$, the condition on $\bvzero$ is satisfied; we use \Lemref{phase1_norms_wperp} and $C>\tfrac{1}{2}$ for the condition on $\bDelta$, and \Lemref{phase1_norms_wsig} for the condition on $k$.
    Using $|\bw^{(t)\top} \be_3|=\norm{\bwspt}$, we have
    \begin{equation} \label{eq:wsig_berry_esseen}
        |\bwsig^{(t)\top} \nabla_{\bwt} L_0| 
        \lesssim |\at|\norm{\bwsigt} \left( \frac{\norm{\bwsigt}}{\theta}\exp\left(\frac{-\norm{\bwspt}^2}{C\theta^2} \right) + \frac{1}{\sqrt{d}} \right).
    \end{equation}
    Now, note that $\max_{y,z\in \R}\frac{y}{z}\exp\Big( \tfrac{-y^2}{Cz^2} \Big)=\sqrt{\tfrac{C}{2e}}$ which is a constant.
    In particular,
    \begin{equation} \label{eq:wsig_exp_term1}
        \norm{\bwspt}\frac{\norm{\bwsigt}}{\theta} \exp\left(\frac{-\norm{\bwspt}^2}{C\theta^2}\right) \lesssim \norm{\bwsigt}.
    \end{equation}
    Moreover,
     \begin{equation} \label{eq:wsig_exp_term2}
        \norm{\bwperpt}\frac{\norm{\bwsigt}}{\theta} \exp\left(\frac{-\norm{\bwspt}^2}{C\theta^2}\right) \lesssim \norm{\bwsigt},
    \end{equation}
    where we used $\exp\Big(\frac{-\norm{\bwspt}^2}{C\theta^2}\Big)\leq 1$.
    By \Lemref{phase1_norms_a}, we have $|\at|\asymp\norm{\bwspt}+\norm{\bwperpt}\lesssim \theta\log^C(d)$.
    Combining \Eqref{wsig_berry_esseen}, \Eqref{wsig_exp_term1}, and \Eqref{wsig_exp_term2} we have
    \begin{equation} \label{eq:wsig_phase1_term_1}
        |\bwsig^{(t)\top} \nabla_{\bwt} L_0| \lesssim \norm{\bwsigt}^2+\norm{\bwsigt}\theta\log^C(d)d^{-1/2}.
    \end{equation}
    Next, by the Cauchy-Schwarz inequality and \Lemref{phase1_l0_lp_final} with $i<3$, we have
    \begin{align}
        |\bwsig^{(t)\top}(\nabla_{\bwt}\widehat{L}_\rho-\nabla_{\bwt}L_0)| &\leq \norm{\bwsigt} \cdot \norm{\nabla_{\bwsigt}\widehat{L}_\rho-\nabla_{\bwsigt}L_0} \nonumber \\
        &\lesssim \norm{\bwsigt} \cdot (\theta^3 \log^{6C}(d)d^{-1/2} + \theta\log^{-2C}(d) d^{-1/2} ) \nonumber \\
        &\lesssim \norm{\bwsigt} \cdot \theta d^{-1/2}, \label{eq:wsig_phase1_term_2}
    \end{align}
    where we used $\theta \ll \log^{-3C}(d)$.
    Returning to \Eqref{wsig_phase1_proj} and combining \Eqref{wsig_phase1_term_1} and \Eqref{wsig_phase1_term_2}, we have
    \begin{align*}
        \norm{\bwsigtplus -\bwsigt} &\lesssim \eta\frac{\norm{\bwsigt} + \theta \log^C(d) d^{-1/2}+ \theta d^{-1/2}}{\norm{\bwsigt}} \norm{\bwsigt} \\
        &\lesssim \eta\norm{\bwsigt}+\eta\theta\log^{C}(d) d^{-1/2}.
    \end{align*}
    This completes the proof of the lemma.
\end{proof}

Similarly, we can upper bound the growth of $\bwopp$, constituting the inductive step for \Defref{phase1_hypothesis_wopp}.
\begin{lemma} \label{lem:wopp_phase1_induction}
    Suppose the Phase I scalings are satisfied (\Defref{phase1_scalings}).
    Under the event $\Etrain$ (\Defref{events_train}), if $f_\rho$ obeys the Phase I inductive hypothesis for all iterations $k\leq t$ (\Defref{phase1_hypothesis}), then we have
    \begin{align*}
        \norm{\bwopptplus - \bwoppt} \lesssim \eta\norm{\bwoppt}+\eta\theta\log^C(d)d^{-1/2}
    \end{align*}
    for all neurons $(\atplus, \bwtplus)$.
\end{lemma}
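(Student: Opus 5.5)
We mirror the proof of \Lemref{wsig_phase1_induction} essentially verbatim, replacing $\bwsig$ by $\bwopp$ throughout. First, write the SGD update projected onto the $\bwopp$ direction:
\begin{equation*}
    \bwopptplus = \bwoppt - \eta \frac{\bwopp^{(t)\top}\nabla_{\bwt} L_0 + \bwopp^{(t)\top}(\nabla_{\bwt} \widehat{L}_\rho-\nabla_{\bwt} L_0)}{\norm{\bwoppt}^2} \bwoppt .
\end{equation*}
The update is parallel to $\bwoppt$, because $\bwopp$ is the projection onto a one-dimensional span (of $\bmu_2$ or $\bmu_1$ depending on $\sgn(a)$). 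Moreover, $\sgn(\at)=\sgn(\azero)$ by \Lemref{phase1_norms_sgn}, so this direction does not switch between iterations. It therefore suffices to bound the two scalar numerators.

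\textbf{First numerator.} By the second equation of \Lemref{sigopp_l0},
\begin{equation*}
    |\bwopp^{(t)\top}\nabla_{\bwt}L_0| = \frac{\sqrt{2}}{4}|\at|\norm{\bwoppt}\,\P_{\bxi}\left(|\bw^{(t)\top}\bxi+\bw^{(t)\top}\be_3|\leq\sqrt{2}\norm{\bwoppt}\right).
\end{equation*}
The sign of this term is irrelevant, since we only need an upper bound on the magnitude. We bound the probability with \Lemref{boolean_to_gaussian_delta}, taking $\bvzero=\bwperpzero$, $\bDelta=\bwperpt-\bwperpzero$, $\mu=-\bw^{(t)\top}\be_3$, and $k=\sqrt{2}\norm{\bwoppt}$. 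The hypotheses are checked as follows: the condition on $\bvzero$ holds under $\Etrain$; the condition on $\bDelta$ follows from \Lemref{phase1_norms_wperp}; and the condition on $k$ follows from the $\bwopp$ part of \Lemref{phase1_norms_wsig}. This yields
\begin{equation*}
    |\bwopp^{(t)\top}\nabla_{\bwt}L_0|\lesssim |\at|\norm{\bwoppt}\left(\frac{\norm{\bwoppt}}{\theta}e^{-\norm{\bwspt}^2/(C\theta^2)}+d^{-1/2}\right).
\end{equation*}
Next, substitute $|\at|\asymp\norm{\bwspt}+\norm{\bwperpt}$ from \Lemref{phase1_norms_a}. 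Because $y\,e^{-y^2/(Cz^2)}/z$ is bounded by a constant, the product $|\at|\cdot\frac{\norm{\bwoppt}}{\theta}e^{-\norm{\bwspt}^2/(C\theta^2)}$ is $\lesssim \norm{\bwoppt}$. Together with $|\at|\lesssim\theta\log^C(d)$, this gives
\begin{equation*}
    |\bwopp^{(t)\top}\nabla_{\bwt}L_0|\lesssim\norm{\bwoppt}^2+\norm{\bwoppt}\,\theta\log^C(d)d^{-1/2}.
\end{equation*}

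\textbf{Second numerator.} Apply Cauchy--Schwarz and \Lemref{phase1_l0_lp_final} in the case $i<3$; note that $\bwopp$ lives in coordinates $1,2$. This gives $\norm{\bwoppt}\,(\theta^3\log^{6C}(d)d^{-1/2}+\theta\log^{-2C}(d)d^{-1/2})$, which is $\lesssim\norm{\bwoppt}\,\theta d^{-1/2}$ since $\theta\ll\log^{-3C}(d)$.

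Dividing both numerators by $\norm{\bwoppt}$ and multiplying by $\eta$ gives the claimed bound $\eta\norm{\bwoppt}+\eta\theta\log^C(d)d^{-1/2}$. The only step needing care is verifying the hypotheses of \Lemref{boolean_to_gaussian_delta}, in particular the smallness of $k$. This is exactly where the inductive norm bound \Lemref{phase1_norms_wsig} on $\bwopp$ is used, and it holds by assumption.
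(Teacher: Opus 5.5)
Your proposal is correct and follows the paper's proof. The paper also reruns the argument of \Lemref{wsig_phase1_induction} with $\bwopp$ in place of $\bwsig$: the projected update, the second identity of \Lemref{sigopp_l0} bounded via \Lemref{boolean_to_gaussian_delta} and \Lemref{phase1_norms_a}, and the error term via Cauchy--Schwarz and \Lemref{phase1_l0_lp_final} with $i<3$. Your extra remark that the update stays parallel to $\bwoppt$ because $\sgn(\at)$ does not change is a small justification the paper leaves implicit.
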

\begin{proof}
    By the definition of gradient descent, we have
    \begin{align}
        \bwopptplus &= \bwoppt - \eta\nabla_{\bwoppt}\widehat{L}_\rho \nonumber\\
        &= \bwoppt - \eta \frac{\bwopp^{(t)\top}\nabla_{\bwt} \widehat{L}_\rho}{\norm{\bwoppt}^2} \bwoppt \nonumber\\
        &= \bwoppt - \eta \frac{\bwopp^{(t)\top}\nabla_{\bwt} L_0 + \bwopp^{(t)\top}(\nabla_{\bwt} \widehat{L}_\rho-\nabla_{\bwt} L_0)}{\norm{\bwoppt}^2} \bwoppt . \label{eq:wopp_phase1_proj}
    \end{align}
    By \Lemref{sigopp_l0}, we have
    \begin{equation*}
        |\bwopp^{(t)\top} \nabla_{\bwt} L_0 | = \frac{\sqrt{2}}{4}|\at|\norm{\bwoppt} \cdot \P_{\bxi}\left(|\bw^{(t)\top}\bxi+\bw^{(t)\top}\be_3|\leq\sqrt{2}\norm{\bwoppt} \right).
    \end{equation*}
    Following the same analysis of \Lemref{wsig_phase1_induction}, we obtain
    \begin{equation} \label{eq:wopp_phase1_term_1}
        |\bwopp^{(t)\top} \nabla_{\bwt} L_0 | \lesssim \norm{\bwoppt}^2 + \norm{\bwoppt}\theta\log^C(d)d^{-1/2}.
    \end{equation}
    Next, by the Cauchy-Schwarz inequality and \Lemref{phase1_l0_lp_final} with $i<3$, we have
    \begin{align}
        |\bwopp^{(t)\top}(\nabla_{\bwt}\widehat{L}_\rho-\nabla_{\bwt}L_0)| &\leq \norm{\bwoppt}\cdot \norm{\nabla_{\bwoppt}\widehat{L}_\rho-\nabla_{\bwoppt}L_0} \nonumber \\
        &\lesssim \norm{\bwoppt} \cdot (\theta^3 \log^{6C}(d)d^{-1/2} + \theta\log^{-2C}(d)d^{-1/2}) \nonumber \\
        &\lesssim \norm{\bwoppt} \cdot \theta d^{-1/2}, \label{eq:wopp_phase1_term_2}
    \end{align}
    where we used $\theta\ll \log^{-3C}(d)$.
    Returning to \Eqref{wopp_phase1_proj} and combining \Eqref{wopp_phase1_term_1} and \Eqref{wopp_phase1_term_2}, we have
    \begin{align*}
        \norm{\bwopptplus - \bwoppt} &\lesssim \eta \frac{\norm{\bwoppt}+\theta\log^C(d)d^{-1/2}+\theta d^{-1/2}}{\norm{\bwoppt}}\norm{\bwoppt} \\
        &\lesssim \eta\norm{\bwoppt}+\eta\theta\log^C(d)d^{-1/2}.
    \end{align*}
    This completes the proof of the lemma.
\end{proof}

Next, we can upper bound the growth of $\bwperp$, constituting the inductive step for \Defref{phase1_hypothesis_wperp}.
\begin{lemma} \label{lem:wperp_phase1_induction}
    Suppose the Phase I scalings are satisfied (\Defref{phase1_scalings}).
    Under the event $\Etrain$, if $f_\rho$ obeys the Phase I inductive hypothesis for all iterations $k\leq t$ (\Defref{phase1_hypothesis}), then we have 
    \begin{equation*}
        \norm{\bwperptplus-\bwperpt} \lesssim \eta \theta\log^{-2C}(d).
    \end{equation*}
    for all neurons $(\atplus,\bwtplus)$.
\end{lemma}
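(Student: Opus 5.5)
We follow the same template as \Lemref{wsig_phase1_induction}: write
\begin{equation*}
    \bwperptplus-\bwperpt = -\eta\nabla_{\bwperpt}L_0 - \eta\big(\nabla_{\bwperpt}\widehat{L}_\rho-\nabla_{\bwperpt}L_0\big),
\end{equation*}
and bound the two terms separately. The error term is handled coordinatewise by \Lemref{phase1_l0_lp_final} with $i>3$. Summing squares over the $d-3$ coordinates gives
\begin{equation*}
    \norm{\nabla_{\bwperp}\widehat{L}_\rho-\nabla_{\bwperp}L_0}\lesssim \sqrt{d}\,\big(\theta^3\log^{8C}(d)d^{-1/2}+\theta\log^{-2C}(d)d^{-1/2}\big)=\theta^3\log^{8C}(d)+\theta\log^{-2C}(d).
\end{equation*}
By \Defref{phase1_scalings_init}, $\theta\ll\log^{-5C}(d)$, so this is $\lesssim\theta\log^{-2C}(d)$. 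In particular, this step is what forces the init scaling.

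The main work is the population term $\nabla_{\bwperp}L_0$. We decompose it into its component parallel to $\bwperp$ and its orthogonal part. Writing $\bwperp^\top\bxi=\bw^\top\bxi$ and expanding over $(\bz,\bs)$ as in \Lemref{perp_l0}, the gradient has the form
\begin{equation*}
    \nabla_{\bwperp}L_0=\E_{\bxi}\big[\bxi\, h(\bwperp^\top\bxi)\big],
\end{equation*}
where $h$ is a signed sum of four indicator-type functions with thresholds $|\sqrt2\norm{\bwsig}\pm\wsp|$ and $|\sqrt2\norm{\bwopp}\pm\wsp|$, multiplied by $|a|/8$.

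For the orthogonal part, we compare to Gaussian $\bg\sim\Nc(0,I)$. Stein's lemma makes $\E[\bg\,h(\bwperp^\top\bg)]$ exactly parallel to $\bwperp$, and the Lindeberg exchange (\Lemref{lindeberg}) bounds the non-parallel deviation by a quantity controlled by $\norm{\bwperp}_\infty/\norm{\bwperp}\lesssim\log^{4C}(d)d^{-1/2}$, using \Lemref{phase1_norms_wperp} and \Lemref{phase1_norms_wperp_inf}. Multiplied by $|a|\lesssim\theta\log^C(d)$, this gives $\ll\theta\log^{-2C}(d)$.

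For the parallel part, we use \Lemref{perp_l0} directly. Dividing by $\norm{\bwperp}\asymp\theta$, it equals $\frac{|a|}{8\norm{\bwperp}}$ times a signed combination of truncated moments $\E_{\bxi}\big[|\bw^\top\bxi|\ind(|\bw^\top\bxi|\ge k)\big]$. We replace each moment by its Gaussian counterpart $\Psi(k)$ via \Lemref{berry_esseen_tail}, with error $\lesssim\norm{\bwperp}_\infty$ uniformly in $k$.

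The resulting Gaussian expression is
\begin{equation*}
    \Psi(|\sqrt2\norm{\bwsig}+\wsp|)+\Psi(|\sqrt2\norm{\bwsig}-\wsp|)-\Psi(|\sqrt2\norm{\bwopp}+\wsp|)-\Psi(|\sqrt2\norm{\bwopp}-\wsp|).
\end{equation*}
Since $\Psi$ is Lipschitz with constant $\lesssim\sup_k k\phi(k)/\norm{\bwperp}\lesssim 1$, the signal and opposing thresholds differ by at most $\sqrt2(\norm{\bwsig}+\norm{\bwopp})\lesssim\theta\log^{2C}(d)d^{-1/2}$, using \Lemref{phase1_norms_wsig}. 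So the expression is $O(\theta\log^{2C}(d)d^{-1/2})$: the spurious contributions cancel pairwise, which is the crucial point since $\wsp$ may be as large as $\theta\log^C(d)$. The Berry--Esseen errors contribute $\lesssim\norm{\bwperp}_\infty\lesssim\theta\log^{4C}(d)d^{-1/2}$. Multiplying by $|a|/\norm{\bwperp}\lesssim\log^C(d)$ gives a parallel component of norm $\lesssim\theta\log^{5C}(d)d^{-1/2}\ll\theta\log^{-2C}(d)$.

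Combining the three pieces yields $\norm{\bwperptplus-\bwperpt}\lesssim\eta\theta\log^{-2C}(d)$.

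The main obstacle is the non-parallel component. Lindeberg must be applied to a discontinuous $h$, so $h$ has to be smoothed, for example with a mollifier at scale $\theta d^{-1/4}$. The smoothing error then needs to be controlled by anticoncentration of $\bwperp^\top\bxi$ (again via Berry--Esseen), uniformly over the four thresholds. We would try smoothing first and, if the rates are insufficient, fall back on a direct third-moment exchange bound for the projection onto each unit direction orthogonal to $\bwperp$.
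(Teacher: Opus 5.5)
Your proposal is correct and follows the paper's proof essentially step for step. It uses the same split of $\nabla_{\bwperp}L_0$ into components parallel and orthogonal to $\bwperp$, \Lemref{lindeberg} for the orthogonal part, \Lemref{berry_esseen_tail} for the truncated moments, and the $\sqrt{d}\,\norm{\cdot}_\infty$ bound for the error term, which is where $\theta\ll\log^{-5C}(d)$ enters. The only difference is in cancelling the $\wsp$-dependence. You pair each $\bwsig$ threshold with the $\bwopp$ threshold of the same sign and use that $\Psi$ is $O(1)$-Lipschitz. The paper instead pairs the $\pm\wsp$ thresholds and Taylor-expands $\Psi$ to second order about $|\wsp|$. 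Either works, because the Berry--Esseen error $\theta\log^{5C}(d)d^{-1/2}$ dominates both bounds.

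One caution on your closing remark: no extra smoothing is needed, since \Lemref{lindeberg} already handles the discontinuous $h$. Moreover, the scale $\tau=\theta d^{-1/4}$ you suggest would break the third-order exchange bound, because the remainder $\norm{\bwperp}_3^3/\tau^3$ would be $\gtrsim d^{1/4}$. This is why the lemma uses the coarser $\tau\asymp\norm{\bwperp}_\infty^{2/7}\norm{\bwperp}^{5/7}$.
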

\begin{proof}
    By the definition of gradient descent, we have
    \begin{align}
        \bwperptplus &= \bwperpt - \eta\nabla_{\bwperpt}\widehat{L}_\rho \nonumber\\
        &= \bwperpt - \eta \nabla_{\bwperpt}L_0 -\eta(\nabla_{\bwperpt} \widehat{L}_\rho-\nabla_{\bwperpt} L_0) \label{eq:wperp_phase1_proj}.
    \end{align}
    Note that
    \begin{equation*}
        \nabla_{\bwperpt} L_0 = \Proj_{\bwperpt} \nabla L_0 + \Proj_{V^{(t)}} \nabla L_0,
    \end{equation*}
    where $V^{(t)}\coloneqq \{\bm{v}\in\R^d:\bm{v}^\top\bwperpt=0\}$ is the orthogonal complement of $\bwperpt$.
    By \Lemref{perp_l0} we have\footnote{This is not the exact form of \Lemref{perp_l0}, but is straightforward to see by following the proof without the $\bwperp$ multiplied on the outside.}
    \begin{align}
       \nabla_{\bwperpt} L_0 &= -\frac{1}{8}|\at|\E_{\bxi}\Big[\bxi\cdot \sgn(\bw^{(t)\top}\bxi)\Big( \nonumber \\
        &\qquad \ind\Big(|\bw^{(t)\top}\bxi| \geq |\sqrt{2}\norm{\bwsigt}+\wspt|\Big) +\ind\Big(|\bw^{(t)\top}\bxi| \geq |\sqrt{2}\norm{\bwsigt}-\wspt|\Big) \nonumber \\
        &\qquad -\ind\Big(|\bw^{(t)\top}\bxi| \geq |\sqrt{2}\norm{\bwoppt}+\wspt|\Big) -\ind\Big(|\bw^{(t)\top}\bxi| \geq |\sqrt{2}\norm{\bwoppt}-\wspt|\Big) \Big)\Big]. \label{eq:phase1_wperp_l0_grad}
    \end{align}
    Consider the term
    \begin{equation*}
        \bm{q}\coloneqq\E_{\bxi}\left[\bxi\cdot \sgn(\bw^{(t)\top}\bxi)\cdot \ind\left(|\bw^{(t)\top}\bxi| \geq |\sqrt{2}\norm{\bwsigt}+\wspt|\right)\right].
    \end{equation*}
    Let us apply \Lemref{lindeberg} to $\bm{q}$ with $\kappa=|\sqrt{2}\norm{\bwsigt}+\wspt|$ to find that $\norm{\Proj_{V^{(t)}}\bm{q}}\lesssim \left( \frac{\norm{\bwperpt}_\infty}{\norm{\bwperpt}}\right)^{1/7}$.
    Repeating this procedure for each term in \Eqref{phase1_wperp_l0_grad} and combining via the triangle inequality, we have
    \begin{equation*}
        \norm{\Proj_{V^{(t)}} \nabla L_0} \lesssim |\at| \left(\frac{\norm{\bwperpt}_{\infty}}{\norm{\bwperpt}}\right)^{1/7}.
    \end{equation*}
    By \Lemref{phase1_norms_wperp}, \Lemref{phase1_norms_wperp_inf}, and \Lemref{phase1_norms_a}, we have $\norm{\bwperpt}\asymp \theta$, $\norm{\bwperpt}_\infty\lesssim \theta\log^{4C}(d)d^{-1/2}$, and $|\at|\lesssim \theta\log^C(d)$.
    Thus, we obtain
    \begin{equation} \label{eq:wperp_lindeberg}
         \norm{\Proj_{V^{(t)}} \nabla L_0} \lesssim \theta \log^{11C/7}(d) d^{-1/14}.
    \end{equation}
    Now let us analyze the term
    \begin{equation*}
        \Proj_{\bwperpt} \nabla L_0 = \frac{\bwperp^{(t)\top}\nabla_{\bwt} L_0}{\norm{\bwperpt}^2} \bwperpt.
    \end{equation*}
    By \Lemref{perp_l0} we have
    \begin{align}
        -\bwperp^{(t)\top}\nabla_{\bwt} L_0 &= \frac{1}{8}|\at|\E_{\bxi}\Big[|\bw^{(t)\top}\bxi|\Big( \nonumber \\
        & \phantom{+}\ind\Big(|\bw^{(t)\top}\bxi| \geq |\sqrt{2}\norm{\bwsigt}+\wspt|\Big) +\ind\Big(|\bw^{(t)\top}\bxi| \geq |\sqrt{2}\norm{\bwsigt}-\wspt|\Big) \nonumber \\
        & -\ind\Big(|\bw^{(t)\top}\bxi| \geq |\sqrt{2}\norm{\bwoppt}+\wspt|\Big) -\ind\Big(|\bw^{(t)\top}\bxi| \geq |\sqrt{2}\norm{\bwoppt}-\wspt|\Big) \Big)\Big]. \label{eq:wperp_l0_grad}
    \end{align}
    Recalling the application of \Lemref{phase1_norms_wperp} and \Lemref{phase1_norms_wperp_inf} above, we have
    \begin{align}
        \norm{\bwperpt}_\infty \log^{1/2}\left( \frac{\norm{\bwperpt}}{\norm{\bwperpt}_\infty} \right) &\lesssim \theta\log^{4C}(d)d^{-1/2} \cdot \log^{1/2} \left( \frac{d^{1/2}}{\log^{4C}(d)} \right) \nonumber \\
        &\lesssim \theta \log^{5C}(d) d^{-1/2}, \label{eq:wperp_berry_esseen_helper}
    \end{align}
    where we used that the function $x\log^{1/2}(\tfrac{y}{x})$ is increasing in $x$ for all $x\lesssim y$.
    Define
    \begin{equation*}
        \Psi(k)\coloneqq\E_{G\sim \Nc(0,\norm{\bwperpt}^2)}[|G|\ind(|G|\geq k)]
    \end{equation*}
    for $k\geq 0$, and consider the term
    \begin{equation*}
        \bm{r}\coloneqq \E_{\bxi}\left[|\bw^{(t)\top}\bxi|\cdot \ind\left(|\bw^{(t)\top}\bxi| \geq |\sqrt{2}\norm{\bwsigt}+\wspt|\right)\right].
    \end{equation*}
    Let us apply \Lemref{berry_esseen_tail} to $\bm{r}$ with $\bv=\bwt$ and $k=|\sqrt{2}\norm{\bwsigt}+\wspt|$ to find that 
    \begin{equation} \label{eq:wperp_berry_esseen}
        \left| \bm{r} - \Psi\left(|\sqrt{2}\norm{\bwsigt} +\wspt|\right) \right| \lesssim \theta \log^{5C}(d) d^{-1/2},
    \end{equation}
    where we used \Eqref{wperp_berry_esseen_helper}.
    Applying this procedure to each term in \Eqref{wperp_l0_grad} gives us $\Psi$-approximations for each truncated moment.
    Now, note that
    \begin{equation*}
        \Psi''(k)=\sqrt{\frac{2}{\pi}}\left(\frac{k^2-\norm{\bwperpt}^2}{\norm{\bwperpt}^3}\right) \exp\left(\frac{-k^2}{2\norm{\bwperpt}^2}\right),
    \end{equation*}
    such that
    \begin{equation*}
        \sup_{k\geq 0} |\Psi''(k)| \lesssim \frac{1}{\norm{\bwperpt}} \lesssim \frac{1}{\theta},
    \end{equation*}
    where the last inequality follows by \Lemref{phase1_norms_wperp} again.
    Using Taylor's theorem to linearize the $\Psi$-approximations of each term in \Eqref{wperp_l0_grad}, we obtain
    \begin{align*}
        \left| \Psi\left(|\sqrt{2}\norm{\bwsigt}+\wspt|\right)-\Psi(|\wspt|) + \sqrt{2}\norm{\bwsigt}\cdot \sgn(\wspt)\cdot \Psi'(|\wspt|) \right| &\lesssim \norm{\bwsigt}^2 \theta^{-1} \\
        \left| \Psi\left(|\sqrt{2}\norm{\bwsigt}-\wspt|\right)-\Psi(|\wspt|) - \sqrt{2}\norm{\bwsigt}\cdot \sgn(\wspt)\cdot \Psi'(|\wspt|) \right| &\lesssim \norm{\bwsigt}^2 \theta^{-1} \\
        \left| \Psi\left(|\sqrt{2}\norm{\bwoppt}+\wspt|\right)-\Psi(|\wspt|) + \sqrt{2}\norm{\bwoppt}\cdot \sgn(\wspt)\cdot \Psi'(|\wspt|) \right| &\lesssim \norm{\bwoppt}^2 \theta^{-1} \\
        \left| \Psi\left(|\sqrt{2}\norm{\bwoppt}-\wspt|\right)-\Psi(|\wspt|) - \sqrt{2}\norm{\bwoppt}\cdot \sgn(\wspt)\cdot \Psi'(|\wspt|) \right| &\lesssim \norm{\bwoppt}^2 \theta^{-1}.
    \end{align*}
    We will use these inequalities to show that the sum of the $\Psi$-approximations of each term in \Eqref{wperp_l0_grad} is extremely small.
    We have
    \begin{align*}
        & \Psi\left(|\sqrt{2}\norm{\bwsigt}+\wspt|\right)+\Psi\left(|\sqrt{2}\norm{\bwsigt} -\wspt|\right) - \Psi\left(|\sqrt{2}\norm{\bwoppt}+\wspt|\right)-\Psi\left(|\sqrt{2}\norm{\bwoppt}-\wspt|\right) \nonumber \\
        &= \left[\Psi\left(|\sqrt{2}\norm{\bwsigt}+\wspt|\right)+\Psi\left(|\sqrt{2}\norm{\bwsigt}-\wspt|\right) - 2\Psi(|\wspt|)\right] \nonumber \\
        &\qquad - \left[\Psi\left(|\sqrt{2}\norm{\bwoppt}+\wspt|\right)+\Psi\left(|\sqrt{2}\norm{\bwoppt}-\wspt|\right)-2\Psi(|\wspt|)\right] \nonumber \\
        &\leq \left|\Psi\left(|\sqrt{2}\norm{\bwsigt}+\wspt|\right)+\Psi\left(|\sqrt{2}\norm{\bwsigt}-\wspt|\right)-2\Psi(|\wspt|)\right| \\
        &\qquad + \left|\Psi\left(|\sqrt{2}\norm{\bwoppt}+\wspt|\right)+\Psi\left(|\sqrt{2}\norm{\bwoppt}-\wspt|\right)-2\Psi(|\wspt|)\right|.
    \end{align*}
    We then have by the linearization inequalities above that
    \begin{align*}
        \left|\Psi\left(|\sqrt{2}\norm{\bwsigt}+\wspt|\right)+\Psi\left(|\sqrt{2}\norm{\bwsigt}-\wspt|\right)-2\Psi(|\wspt|)\right| &\lesssim \norm{\bwsigt}^2 \theta^{-1} \lesssim \theta\log^{4C}(d)d^{-1} \\
        \left|\Psi\left(|\sqrt{2}\norm{\bwoppt}+\wspt|\right)+\Psi\left(|\sqrt{2}\norm{\bwoppt}-\wspt|\right)-2\Psi(|\wspt|)\right| &\lesssim \norm{\bwoppt}^2 \theta^{-1} \lesssim \theta\log^{4C}(d)d^{-1},
    \end{align*}
    where the first-order terms precisely cancel due to opposing signs, and we used $\norm{\bwsigt},\norm{\bwoppt}\lesssim \theta\log^{2C}(d)d^{-1/2}$ by \Lemref{phase1_norms_wsig}.
    Combining this result with \Eqref{wperp_l0_grad} and \Eqref{wperp_berry_esseen} we finally obtain
    \begin{align}
        -\bwperp^{(t)\top}\nabla_{\bwt} L_0 &\lesssim |\at| \cdot (\theta\log^{4C}(d)d^{-1}+\theta\log^{5C}(d)d^{-1/2}) \nonumber \\
        &\lesssim \theta^2\log^{6C}(d)d^{-1/2}, \label{eq:wperp_phase1_term_1}
    \end{align}
    where the last inequality follows by \Lemref{phase1_norms_a} again.
    Next, by the Cauchy-Schwarz inequality and \Lemref{phase1_l0_lp_final} with $i>3$, we have
    \begin{align} 
        |\bwperp^{(t)\top}(\nabla_{\bwt}\widehat{L}_\rho-\nabla_{\bwt}L_0)| &\leq \norm{\bwperpt} \cdot \norm{\nabla_{\bwperpt}\widehat{L}_\rho-\nabla_{\bwperpt}L_0} \nonumber \\
        &\lesssim \norm{\bwperpt} \cdot \sqrt{d} \cdot \max_{i>3}|\partial_{\wit}\widehat{L}_\rho-\partial_{\wit} L_0| \nonumber \\
        &\lesssim \norm{\bwperpt} \cdot (\theta^3\log^{8C}(d)+\theta\log^{-2C}(d)) \nonumber \\
        &\lesssim \norm{\bwperpt} \cdot \theta\log^{-2C}(d), \label{eq:wperp_phase1_term_2}
    \end{align}
    where we used $\theta \ll \log^{-5C}(d)$ and the basic inequality $\norm{\cdot}\leq \sqrt{d}\norm{\cdot}_\infty$.
    Returning to \Eqref{wperp_phase1_proj} and combining \Eqref{wperp_lindeberg}, \Eqref{wperp_phase1_term_1}, and \Eqref{wperp_phase1_term_2}, we have
    \begin{align*}
        \norm{\bwperptplus - \bwperpt} &\lesssim \eta \frac{\theta^2\log^{6C}(d)d^{-1/2}+\norm{\bwperpt}\theta\log^{-2C}(d)}{\norm{\bwperpt}^2}\norm{\bwperpt} + \eta\theta\log^{11C/7}(d)d^{-1/14}\\
        &\lesssim \eta \theta\log^{-2C}(d),
    \end{align*}
    where the last inequality follows by \Lemref{phase1_norms_wperp} again.
    This completes the proof of the lemma.
\end{proof}

Finally, we can upper bound the growth of $\norm{\bwperp}_\infty$, constituting the inductive step for \Defref{phase1_hypothesis_wperp_inf}.
\begin{lemma} \label{lem:wperp_inf_phase1_induction}
    Suppose the Phase I scalings are satisfied (\Defref{phase1_scalings}).
    Under the event $\Etrain$, if $f_\rho$ obeys the Phase I inductive hypothesis for all iterations $k\leq t$ (\Defref{phase1_hypothesis}), then we have
    \begin{equation*}
        \norm{\bwperptplus - \bwperpt}_\infty \lesssim \eta\theta \log^{3C}(d)d^{-1/2}
    \end{equation*}
    for all neurons $(\atplus,\bwtplus)$.
\end{lemma}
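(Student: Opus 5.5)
We argue coordinatewise: fix $i>3$ and bound $|w_i^{(t+1)}-w_i^{(t)}| = \eta|\partial_{\wit}\widehat{L}_\rho|$ uniformly in $i$. By the triangle inequality,
\begin{equation*}
    |\partial_{\wit}\widehat{L}_\rho| \leq |\partial_{\wit}L_0| + |\partial_{\wit}\widehat{L}_\rho-\partial_{\wit}L_0|.
\end{equation*}
The second term is handled directly by \Lemref{phase1_l0_lp_final} with $i>3$. It gives $\theta^3\log^{8C}(d)d^{-1/2}+\theta\log^{-2C}(d)d^{-1/2}$, and since $\theta\ll\log^{-5C}(d)$ this is $\lesssim \theta d^{-1/2}$, well within the target $\eta\theta\log^{3C}(d)d^{-1/2}$ once multiplied by $\eta$.

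For the population term, we follow the proof of \Lemref{wi_l0} without multiplying by $w_i$. After the symmetrization in $x_i$ we get the exact formula
\begin{equation*}
    \partial_{w_i}L_0 = -\tfrac18|a|\sgn(w_i)\big(P_1+P_2-P_3-P_4\big),
\end{equation*}
where each $P_j$ has the form $\P_{\bxi}(|\mu_j+\bw^\top\bxi_{\setminus i}|\leq |w_i|)$ with $\mu_j\in\{\sqrt2\norm{\bwsig}\pm\wsp,\ \sqrt2\norm{\bwopp}\pm\wsp\}$. Crudely bounding each $P_j$ suffices. We apply \Lemref{boolean_to_gaussian_delta} with $\bvzero=\bwzero_{\perp\setminus i}$, $\bDelta=\bwt_{\perp\setminus i}-\bwzero_{\perp\setminus i}$, $\mu=-\mu_j$, and $k=|w_i|$, exactly as in the $i>3$ case of \Lemref{phase1_l0_lp_norm}. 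The hypotheses hold by \Lemref{phase1_norms_wperp} and \Lemref{phase1_norms_wperp_inf}. This gives
\begin{equation*}
    P_j\lesssim \frac{|w_i|}{\theta}+d^{-1/2}\lesssim \log^{4C}(d)d^{-1/2}.
\end{equation*}
Combining with $|a|\lesssim\theta\log^C(d)$ from \Lemref{phase1_norms_a} yields $|\partial_{w_i}L_0|\lesssim \theta\log^{5C}(d)d^{-1/2}$.

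This overshoots the target exponent $3C$, and fixing that is the main obstacle. The crude bound is lossy in two places. First, $|w_i|$ is typically only $\theta\log^{1/2}(d)d^{-1/2}$ at initialization rather than the inductive worst case $\theta\log^{4C}(d)d^{-1/2}$. Second, the Gaussian-approximation term involves $\exp(-\mu^2/(C\theta^2))$, which we discarded.

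To recover the exponent, we first exploit the cancellation between the signal pair and the opposing pair. In the Gaussian approximation, each $P_j$ equals $2|w_i|\phi(\mu_j)$ up to error. Since $\norm{\bwsig},\norm{\bwopp}\lesssim\theta\log^{2C}(d)d^{-1/2}$, the difference $P_1+P_2-P_3-P_4$ is controlled by a second-order Taylor term of size $|w_i|\norm{\bw_{1:2}}^2/\theta^3\ll d^{-1/2}$. This mirrors the $\Psi$-linearization in \Lemref{wperp_phase1_induction}, where the first-order terms cancel. What remains is the Berry--Esseen error $\norm{\bwperp}_3^3/\norm{\bwperp}^3\lesssim\norm{\bwperp}_\infty/\norm{\bwperp}\lesssim\log^{4C}(d)d^{-1/2}$. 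To bring this down, we would pair it with the factor $|a|\lesssim\theta\log^{C}(d)$ while tracking $|a|$ more carefully, or equivalently choose the constants in \Defref{phase1_hypothesis_wperp_inf} and \Lemref{phase1_norms_wperp_inf} consistently. Because $C$ is a large free constant, the exponents only need to close under the induction: the per-step bound, summed over $t\lesssim\log\log(d)\eta^{-1}$ steps, must stay below $\theta\log^{4C}(d)d^{-1/2}$. If the sharp cancellation is too delicate to carry out, we fall back on relabeling the polylog exponents so that the crude bound $\eta\theta\log^{5C}(d)d^{-1/2}$ is what the hypothesis asserts.
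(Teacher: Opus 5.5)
Your argument does not reach the stated bound, and neither repair you sketch closes. The relabeling fallback fails structurally. Your crude estimate is $|\partial_{w_i}L_0|\lesssim |a|\,|w_i|/\theta+|a|d^{-1/2}$, so the increment is proportional to $\norm{\bwperp}_\infty$ itself, with coefficient $|a|/\theta$ that reaches $\log^{C}(d)$ by the end of Phase I. Suppose \Defref{phase1_hypothesis_wperp_inf} asserted an increment $\eta\theta\log^{K}(d)d^{-1/2}$. Summing over $T\lesssim\log\log(d)\eta^{-1}$ steps gives $\norm{\bwperp}_\infty\lesssim\theta\log^{K}(d)\log\log(d)d^{-1/2}$. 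Feeding this back into your estimate returns an increment of order $\eta\theta\log^{K+C}(d)\log\log(d)d^{-1/2}$, so no exponent $K$ is self-consistent. Tracking $|a^{(t)}|\approx\theta(1+c\eta)^t$ exactly does not help: it only produces a geometric recursion with growth factor $\exp(\Theta(\log^{C}(d)))$. Your Gaussian-cancellation route has the same defect. After the main terms cancel, you bound the leftover Berry--Esseen error by $\norm{\bwperp}_\infty/\norm{\bwperp}\lesssim\log^{4C}(d)d^{-1/2}$, which again scales with the quantity being controlled. Since $|a|\asymp\theta\log^{C}(d)$ at the end of Phase I, tracking $|a|$ more carefully gains nothing.

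The missing idea is to do the signal/opposing cancellation at the Boolean level, not through separate Gaussian approximations of each $P_j$. Set $X=\bw^\top\bxi_{\setminus i}$ and $P_j=\P_{\bxi}(X\in I_j)$. The intervals $I_1,I_3$ (likewise $I_2,I_4$) share the width $2|w_i|$, and their centers differ only by $\sqrt{2}\,|\norm{\bwsig}-\norm{\bwopp}|\lesssim\theta\log^{2C}(d)d^{-1/2}$. Hence $|P_1-P_3|\le\P_{\bxi}(X\in I_1\Delta I_3)$, where $I_1\Delta I_3$ is two intervals whose widths are controlled by $\norm{\bw_{1:2}}$, not by $|w_i|$. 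Applying \Lemref{boolean_to_gaussian_delta} to these short intervals gives $\lesssim\log^{2C}(d)d^{-1/2}+d^{-1/2}$. The key feature is that this lemma's error term is the optimal $d^{-1/2}$, because the bad coordinates are integrated out; it does not depend on $\norm{\bwperp}_\infty$. Multiplying by $|a|\lesssim\theta\log^{C}(d)$ gives exactly $\theta\log^{3C}(d)d^{-1/2}$. Your treatment of $|\partial_{w_i}\widehat{L}_\rho-\partial_{w_i}L_0|$ via \Lemref{phase1_l0_lp_final} is correct and then completes the proof. Alternatively, your Gaussian route could be rescued with a sharper third-moment estimate for $\bv=\bw_{\perp\setminus i}$: $\norm{\bv}_3^3\lesssim\norm{\bvzero}_3^3+\norm{\bDelta}_\infty\norm{\bDelta}_2^2$, with $\norm{\bDelta}_2\lesssim\theta\log^{-C}(d)$ from \Lemref{phase1_norms_wperp}. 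That makes the Berry--Esseen ratio $\lesssim\log^{2C}(d)d^{-1/2}$, but the step is not in your proposal.
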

\begin{proof}
    Let $i>3$; we will examine the growth of $|\wi|$ to obtain the desired bound.
    By the definition of gradient descent, we have
    \begin{align}
        \witplus &= \wit - \eta\partial_{\wit}\widehat{L}_\rho \nonumber\\
        &= \wit - \eta \frac{\wit\partial_{\wit} \widehat{L}_\rho}{\wit} \nonumber\\
        &= \wit - \eta \frac{\wit\partial_{\wit} L_0 + \wit(\partial_{\wit} \widehat{L}_\rho-\partial_{\wit} L_0)}{\wit}. \label{eq:wi_phase1_proj}
    \end{align}
    Recall we denote $\bxi_{\setminus i}\coloneqq \bxi - x_i\be_i$.
    By \Lemref{wi_l0} we have
    \begin{align*}
        -\wit\partial_{\wit} L_0 &= \frac{1}{8}|\at||\wit|\Big( \\
        &\qquad \P_{\bxi}\left(|\wit|\geq |\sqrt{2}\norm{\bwsigt}+\wspt+\bw^{(t)\top}\bxi_{\setminus i}|\right) +\P_{\bxi}\left(|\wit|\geq |\sqrt{2}\norm{\bwsigt}-\wspt+\bw^{(t)\top}\bxi_{\setminus i}|\right) \\
        &\qquad -\P_{\bxi}\left(|\wit|\geq |\sqrt{2}\norm{\bwoppt}+\wspt+\bw^{(t)\top}\bxi_{\setminus i}|\right) -\P_{\bxi}\left(|\wit|\geq |\sqrt{2}\norm{\bwoppt}-\wspt+\bw^{(t)\top}\bxi_{\setminus i}|\right)\Big).
    \end{align*}
    We will rewrite the sum of four probabilities in interval notation to find that some segments cancel.
    Define $X\coloneqq \bw^{(t)\top}\bxi_{\setminus i}$ and
    \begin{align*}
        I_1 &\coloneqq \left[-|\wit|-\sqrt{2}\norm{\bwsigt}-\wspt, |\wit|-\sqrt{2}\norm{\bwsigt}-\wspt\right] \\
        I_2 &\coloneqq \left[-|\wit|-\sqrt{2}\norm{\bwsigt}+\wspt, |\wit|-\sqrt{2}\norm{\bwsigt}+\wspt\right] \\
        I_3 &\coloneqq \left[-|\wit|-\sqrt{2}\norm{\bwoppt}-\wspt, |\wit|-\sqrt{2}\norm{\bwoppt}-\wspt\right] \\
        I_4 &\coloneqq \left[-|\wit|-\sqrt{2}\norm{\bwoppt}+\wspt, |\wit|-\sqrt{2}\norm{\bwoppt}+\wspt\right].
    \end{align*}
    Then, we can write
    \begin{align*}
        -\wit\partial_{\wit} L_0 &= \frac{1}{8}|\at||\wit|\left(\P_{\bxi}(X\in I_1) + \P_{\bxi}(X\in I_2) - \P_{\bxi}(X\in I_3) - \P_{\bxi}(X\in I_4)\right) \\
        &\lesssim |\at||\wit|\left(|\P_{\bxi}(X\in I_1) - \P_{\bxi}(X\in I_3)| + |\P_{\bxi}(X\in I_2) - \P_{\bxi}(X\in I_4)|\right) \\
        &\leq |\at||\wit|\left(\P_{\bxi}(X\in I_1\Delta I_3) + \P_{\bxi}(X\in I_2\Delta I_4)\right),
    \end{align*}
    where we used the triangle inequality, and $I\Delta J \coloneqq (I\setminus J) \cup (J \setminus I)$ denotes the symmetric difference of intervals $I$ and $J$.
    Note $I_1$ and $I_3$ (respectively $I_2$ and $I_4$) are intervals of width $2|\wit|$ whose centers differ by only $\sqrt{2}(\norm{\bwsig}-\norm{\bwopp})\lesssim \theta\log^{2C}(d)d^{-1/2}$, where the inequality follows by \Lemref{phase1_norms_wsig}.
    So, $I_1\Delta I_3$ (respectively $I_2\Delta I_4$) comprises two intervals of width at most $\theta\log^{2C}(d)d^{-1/2}$.
    Let us apply \Lemref{boolean_to_gaussian_delta} to each of the four intervals, with $\bvzero=\bwzero_{\perp\setminus i}$, $\bDelta=\bw_{\perp\setminus i}-\bwzero_{\perp\setminus i}$, and $k\lesssim \theta\log^{2C}(d)d^{-1/2}$.\footnote{Here, $\mu$ is the center of each interval, but its value doesn't matter as we upper bound $\exp\Big(\tfrac{-\mu^2}{C\theta^2}\Big)\leq 1$ anyway.}
    Under the event $\Etrain$, the condition on $\bvzero$ is satisfied; we use \Lemref{phase1_norms_wperp} for the condition on $\bDelta$.\footnote{While \Lemref{phase1_basecase} and \Lemref{phase1_norms_wperp} are stated for $\bwperp$, it is straightforward to see they hold for $\bw_{\perp\setminus i}$.}
    Hence, we have
    \begin{align*}
        \P_{\bxi}(X\in I_1\Delta I_3) &\lesssim \frac{\theta\log^{2C}(d)d^{-1/2}}{\theta}+\frac{1}{\sqrt{d}} \\
        \P_{\bxi}(X\in I_2\Delta I_4) &\lesssim \frac{\theta\log^{2C}(d)d^{-1/2}}{\theta}+\frac{1}{\sqrt{d}},
    \end{align*}
    and therefore
    \begin{equation} \label{eq:wi_phase1_term_1}
        -\wit\partial_{\wit} L_0 \lesssim |\at||\wit| \cdot (\log^{2C}(d)d^{-1/2}+d^{-1/2})\lesssim |\wit| \cdot \theta\log^{3C}(d)d^{-1/2},
    \end{equation}
    where we used $|\at|\lesssim \theta\log^C(d)$ by \Lemref{phase1_norms_a}.
    Next, by the Cauchy-Schwarz inequality and \Lemref{phase1_l0_lp_final} with $i>3$, we have
    \begin{align}
        |\wit(\partial_{\wit}\widehat{L}_\rho-\partial_{\wit}L_0)| &\leq |\wit|\cdot |\partial_{\wit}\widehat{L}_\rho-\partial_{\wit} L_0| \nonumber \\
        &\lesssim |\wit| \cdot (\theta^3 \log^{8C}(d)d^{-1/2} + \theta\log^{-2C}(d) d^{-1/2}) \nonumber \\
        &\lesssim |\wit| \cdot \theta\log^{2C}(d)d^{-1/2}, \label{eq:wi_phase1_term_2}
    \end{align}
    where we used $\theta\ll \log^{-3C}(d)$.
    Returning to \Eqref{wi_phase1_proj} and combining \Eqref{wi_phase1_term_1} and \Eqref{wi_phase1_term_2}, we have
    \begin{align*}
        |\witplus-\wit| &\lesssim \eta \frac{\theta\log^{3C}(d)d^{-1/2}+\theta\log^{2C}(d)d^{-1/2}}{|\wit|}|\wit| \\
        &\lesssim \eta\theta\log^{3C}(d)d^{-1/2}.
    \end{align*}
    Since this holds for any $i> 3$, we have
    \begin{equation*}
        \norm{\bwperptplus-\bwperpt}_\infty \lesssim \eta\theta\log^{3C}(d)d^{-1/2}.
    \end{equation*}
    This completes the proof of the lemma.
\end{proof}

We are now ready to show the Phase I result.
\begin{proposition} \label{prop:phase1}
    Suppose the Phase I scalings are satisfied (\Defref{phase1_scalings}).
    Under the event $\Etrain$ (\Defref{events_train}), upon $\TI\asymp\log\log(d)\eta^{-1}$ iterations of online minibatch SGD under the $\ell_\rho$ loss, all neurons $(a^{(\TI)},\bw^{(\TI)})$ obey the Phase I inductive hypothesis (\Defref{phase1_hypothesis}) with $\sgn(\wsp^{(\TI)})=\sgn(\azero)$.
    Moreover, for any test point $\bx$ satisfying the event $\Etest$ (\Defref{events_test}), we have $\sgn(f_{\rho^{(\TI)}}(\bx))=x_3$.
\end{proposition}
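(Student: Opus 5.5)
We argue by strong induction on $t$, establishing the Phase I inductive hypothesis (\Defref{phase1_hypothesis}) for every $t\le \TI$. The base case is supplied by the event $\Einit$, which gives the initialization norm bounds and $|a^{(0)}|=\norm{\bw^{(0)}}=\theta$. The inductive step combines the one-step lemmas already proved:
\begin{itemize}
\item \Lemref{wsp_phase1_induction} gives \Defref{phase1_hypothesis_wsp};
\item \Lemref{wsig_phase1_induction} and \Lemref{wopp_phase1_induction} give \Defref{phase1_hypothesis_wsig} and \Defref{phase1_hypothesis_wopp};
\item \Lemref{wperp_phase1_induction} and \Lemref{wperp_inf_phase1_induction} give \Defref{phase1_hypothesis_wperp} and \Defref{phase1_hypothesis_wperp_inf}.
\end{itemize}
For \Defref{phase1_hypothesis_a}, $|\at|\le\norm{\bwt}$, I would use the balancedness recursion $E(t)=\norm{\bwt}^2-(\at)^2$ from \Lemref{balanced_iv}, as in \Lemref{phase1_norms_scalar}. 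Since $E$ starts at $0$ and is non-decreasing up to the $O(\eta^2)$ terms controlled there, it stays nonnegative. All of these lemmas require $t\lesssim\log\log(d)\eta^{-1}$, so I must fix $\TI$ at this order before invoking them.

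The substantive part is choosing $\TI=\TIa+\TIb$ so that at time $\TI$ both sign alignment and spurious dominance hold.

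\textbf{Phase Ia (sign alignment).} Fix a neuron with $\sgn(\wspzero)\neq\sgn(\azero)$. By \Defref{phase1_hypothesis_wsp} and \Lemref{phase1_norms_sgn}, each step moves $\wsp$ toward $\sgn(\azero)$ by at least $\eta(\tfrac12-\lambda)(1-o(1))\theta$. Under $\Einit$ we have $|\wspzero|\lesssim\theta\log^{1/2}(d)d^{-1/2}$, so $\wsp$ crosses zero within $\TIa\asymp\log^{1/2}(d)d^{-1/2}\eta^{-1}$ steps. After the crossing, the same update keeps pushing $\wsp$ in the direction $\sgn(\azero)$, so the alignment is never lost. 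A subtlety is that $\TIa$ may be less than one step, since $\eta\gg d^{-1/2}$ is possible. Then alignment holds after a single step, because the overshoot $\eta\theta$ already exceeds $|\wspzero|$. Either way, $\TIa\lesssim 1+\log^{1/2}(d)d^{-1/2}\eta^{-1}$.

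\textbf{Phase Ib (geometric growth).} Once aligned, \Defref{phase1_hypothesis_wsp} gives
\[
|\wsp^{(t+1)}|+\theta=\bigl(1+\eta(\tfrac12-\lambda)(1\pm o(\log^{-2}(d)))\bigr)\bigl(|\wspt|+\theta\bigr).
\]
By \Lemref{phase1_recursion}, after $\TIb=c\log\log(d)\eta^{-1}$ further steps, with $c$ chosen so that $(1+\eta(\tfrac12-\lambda))^{\TIb}\asymp\log^{C'}(d)$ for a large constant $C'$, we get $\norm{\bwsp}\asymp\theta\log^{C'}(d)$. The constants must be arranged consistently: \Lemref{phase1_norms} allows $\norm{\bwsp}$ up to $\theta\log^{C}(d)$, so I would take $C'$ comparable to $C$. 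The essential requirement is that the growth of $\bwsp$ strictly outpaces the other components, as checked next.

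\textbf{Dominance.} From \Lemref{phase1_norms},
\[
\norm{\bwsig},\ \norm{\bwopp}\lesssim\theta\log^{2C}(d)d^{-1/2}\ll\theta, \qquad \norm{\bwperp}\asymp\theta.
\]
Hence $\norm{\bwsig}+\norm{\bwopp}+\norm{\bwperp}\log^{1/2}(d)\lesssim\theta\log^{1/2}(d)$. This is dominated by $\tfrac1C\theta\log^{C'}(d)$ once $C'\ge 1$ exceeds $1/2$ with room for the constant. The main obstacle is this bookkeeping of constants: the exponent reached by $\bwsp$ must beat the $\log^{1/2}(d)$ factor multiplying $\norm{\bwperp}$, while $\TI$ stays within \Defref{phase1_scalings_iter}. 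I would resolve it by fixing $C'$ as a constant at least $1$ and no larger than the $C$ of \Lemref{phase1_norms_wsp}, then choosing $\TIb$ accordingly.

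\textbf{Conclusion.} Both conditions of \Lemref{end_of_phase1} now hold for every neuron. The remaining hypothesis $|S^+|,|S^-|>0$ follows from $\Einit$: with $p\gg\log(d)$ and random signs $r_j$, both classes are nonempty with high probability, and \Lemref{phase1_norms_sgn} keeps the signs fixed. Applying \Lemref{end_of_phase1} then gives $\sgn(f_{\rho^{(\TI)}}(\bx))=x_3$ for every $\bx$ satisfying $\Etest$.
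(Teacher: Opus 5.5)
Your proposal follows the paper's proof essentially step for step. The same one-step lemmas close items (i)--(v) of \Defref{phase1_hypothesis}. The same Phase Ia/Ib split yields sign alignment and then $\norm{\bwsp}\asymp\theta\log^{C'}(d)$ with $C'>\tfrac12$, and \Lemref{end_of_phase1} finishes. Your Phase Ia count is a slightly more direct version of the paper's geometric-contraction computation: each step moves $\wsp$ by at least a constant times $\eta\theta$ toward $\sgn(\azero)$, and you treat the case $\TIa<1$ separately. It gives the same $\TIa$.

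The one step that does not go through as written is \Defref{phase1_hypothesis_a}. \Lemref{balanced_iv} only bounds $E(t)=\norm{\bwt}^2-(\at)^2$ from above, $E(t+1)\le E(t)+10\eta^2(\at)^2$, so it cannot show that $E$ stays nonnegative. Nor is $E$ actually non-decreasing. In the exact identity $E(t+1)-E(t)=\eta^2\bigl(\norm{\nabla_{\bwt}\widehat{L}_\rho}^2-(\partial_{\at}\widehat{L}_\rho)^2\bigr)$, the right-hand side need not be nonnegative once $|\at|<\norm{\bwt}$. An additive $O(\eta^2)$ slack starting from $E(0)=0$ would even allow $E$ to go negative.

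What you need is the multiplicative bound behind \Lemref{balanced_iii}. One-homogeneity of the ReLU gives $\at\,\partial_{\at}\widehat{L}_\rho=\bw^{(t)\top}\nabla_{\bwt}\widehat{L}_\rho$. Cauchy--Schwarz then yields $E(t+1)\ge E(t)\bigl(1-\eta^2(\partial_{\at}\widehat{L}_\rho)^2/\norm{\bwt}^2\bigr)$. The factor is positive because $(\partial_{\at}\widehat{L}_\rho)^2\lesssim\norm{\bwt}^2$ under $\Ebatcht$ (by \Lemref{balanced_ii}) and $\eta\ll1$. Thus $E(t)\ge0$ propagates, which is exactly how the paper closes item (vi).
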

\begin{proof}
    Under the event $\Etrain$, the inductive steps for \Defref{phase1_hypothesis} are given as follows:
    \begin{itemize}
        \item The \Defref{phase1_hypothesis_wsp} inductive step is given by \Lemref{wsp_phase1_induction}.
        \item The \Defref{phase1_hypothesis_wsig} inductive step is given by \Lemref{wsig_phase1_induction}.
        \item The \Defref{phase1_hypothesis_wopp} inductive step is given by \Lemref{wopp_phase1_induction}.
        \item The \Defref{phase1_hypothesis_wperp} inductive step is given by \Lemref{wperp_phase1_induction}.
        \item The \Defref{phase1_hypothesis_wperp_inf} inductive step is given by \Lemref{wperp_inf_phase1_induction}.
        \item The \Defref{phase1_hypothesis_a} inductive step is given by \Lemref{balanced_iii}.
    \end{itemize}
    By \Lemref{phase1_norms_sgn}, we have for all neurons $(a^{(\TI)},\bw^{(\TI)})$ that $\sgn(a^{(\TI)})=\sgn(\azero)$, and by definition of the event $\Etrain$ we have $|S^+|,|S^-|>0$.
    Hence, by \Lemref{end_of_phase1}, we have $\sgn(f_\rho(\bx))=x_3$ for any test point $\bx$ that satisfies the event $\Etest$ if for all neurons $(a^{(\TI)},\bw^{(\TI)})$ we have $\sgn(\wsp^{(\TI)})=\sgn(a^{(\TI)})$ and $\frac{1}{C}\norm{\bwsp^{(\TI)}}\geq \norm{\bwsig^{(\TI)}}+\norm{\bwopp^{(\TI)}}+\norm{\bwperp^{(\TI)}}\log^{1/2}(d)$.
    We will now show these conditions hold.
    
    The condition $\sgn(\wsp^{(\TI)})=\sgn(a^{(\TI)})$ is satisfied after at most $\TIa\asymp \log^{1/2}(d)d^{-1/2}\eta^{-1}$ iterations; this constitutes Phase Ia.
    The intuition is that $\wsp\to 0$ monotonically if $\sgn(\wsp)\neq \sgn(\azero)$ and vice versa; in the former case $\wsp$ takes at most $\TIa$ iterations to flip signs.
    We now formalize this intuition.
    If $\sgn(\azero)\neq \sgn(\wspzero)$, we have by \Defref{phase1_hypothesis_wsp} that
    \begin{equation*}
        \wspt - \wsptminus = -\eta\left(1\pm o(\log^{-2}(d))\right)\cdot \left(\frac{1}{2}-\lambda\right) \cdot \sgn(\wsptminus) (|\wsptminus| + \theta).
    \end{equation*}
    Following \Lemref{phase1_recursion} with $w=\wsp$, $z=\theta$, $\mu=\tfrac{1}{2}-\lambda$, and $\delta=\log^{-2}(d)$, we find
    \begin{equation*}
        |\wsp^{(\TIa)}| = \left(1\pm 3\eta \TIa \log^{-2}(d) \right) \cdot\left( 1-\eta\left(\frac{1}{2}-\lambda\right)\right)^{\TIa}\cdot (|\wspzero|+\theta)-\theta.
    \end{equation*}
    Using $\TIa \lesssim \log^{1/2}(d)d^{-1/2}\eta^{-1}$ we have $\eta \TIa \log^{-2}(d) \ll 1$; hence for a constant $c_1>0$ we have
    \begin{equation*}
         |\wsp^{(\TIa)}| \leq (1-c_1\eta)^{\TIa} \cdot (|\wspzero|+\theta)-\theta.
    \end{equation*}
    Setting the upper bound to zero to find the transition point, we find
    \begin{equation*}
        \TIa \log(1-c_1\eta) = -\log\bigg( 1+ \frac{|\wspzero|}{\theta} \bigg),
    \end{equation*}
    wherein applying $\log(1+x)\leq x$ with $\eta\ll 1$ and $|\wspzero|\lesssim \theta\log^{1/2}(d)d^{-1/2}$ by \Lemref{phase1_basecase} gives $\TIa \cdot c_1\eta \lesssim \log^{1/2}(d)d^{-1/2}$, that is $\TIa \lesssim \log^{1/2}(d)d^{-1/2}\eta^{-1}$.
    Moreover, by observation of \Defref{phase1_hypothesis_wsp}, $\sgn(\wsp)$ does not change once $\sgn(\wsp)=\sgn(\azero)$.

    The condition $\frac{1}{C}\norm{\bwsp^{(\TI)}}\geq \norm{\bwsig^{(\TI)}}+\norm{\bwopp^{(\TI)}}+\norm{\bwperp^{(\TI)}}\log^{1/2}(d)$ is satisfied after at most $\TIb\asymp \log\log(d)\eta^{-1}$ additional iterations; this constitutes Phase Ib.
    Specifically, by \Lemref{phase1_norms_wsig} and \Lemref{phase1_norms_wperp} we have $\norm{\bwsig^{(\TI)}},\norm{\bwopp^{(\TI)}}\lesssim \theta\log^{2C}(d)d^{-1/2}$ and $\norm{\bwperp^{(\TI)}}\lesssim \theta$.
    On the other hand, since $\sgn(\wsp)=\sgn(\azero)$, by \Lemref{phase1_recursion} with $w=\wsp$, $z=\theta$, $\mu=\tfrac{1}{2}-\lambda$, and $\delta=\log^{-2}(d)$, we find
    \begin{equation*}
        |\wsp^{(\TI)}| \geq \left(1 - 3\eta \TIb \log^{-2}(d)\right)\cdot \left(1+\eta\left(\frac{1}{2}-\lambda\right)\right)^{\TIb}(|\wspzero|+\theta)-\theta.
    \end{equation*}
    Using $\TIb\lesssim \log\log(d)\eta^{-1}$ we have $\eta \TIb \log^{-2}(d) \ll 1$; hence for a constant $c_2>0$ (that depends on $\lambda \in (0, \tfrac{1}{2})$), we have
    \begin{equation*}
        |\wsp^{(\TI)}| \geq (1+c_2\eta)^{\TIb}(|\wspzero|+\theta)-\theta.
    \end{equation*}
    Choose $\TIb=c_3\log\log(d)\eta^{-1}$ for some large enough constant $c_3>0$.
    We then have
    \begin{equation*}
        (1+c_2\eta)^{\TIb} = \exp\left(\frac{c_3\log\log(d)}{\eta}\log(1+c_2\eta)\right),
    \end{equation*}
    wherein using $\log(1+c_2\eta)\geq \frac{c_2\eta}{2}$ for $\eta\ll 1$ gives
    \begin{equation*}
        \frac{c_3\log\log(d)}{\eta}\log(1+c_2\eta) \geq \frac{c_2c_3\log\log(d)}{2} \geq C \log\log(d).
    \end{equation*}
    Therefore, we have $(1+c_2\eta)^{\TIb} \geq \exp(C\log\log(d)) = \log^C(d)$.
    Since $|\wspzero|\geq 0$, this gives $|\wsp^{(\TI)}|\geq \theta(\log^C(d)-1)\gtrsim \theta\log^C(d)$.
    Choosing $c_3$ to be large enough to ensure that $C>\tfrac{1}{2}$ yields the result.\footnote{To make the key point explicit: combined with \Lemref{phase1_norms_wsp} we have shown $\norm{\bwsp^{(\TI)}}\asymp \theta \log^C(d)$ for a constant $C>\tfrac{1}{2}$.}

    Finally, we have
    \begin{equation*}
        \TI=\TIa+\TIb\asymp \log^{1/2}(d)d^{-1/2}\eta^{-1}+\log\log(d)\eta^{-1}\asymp \log\log(d)\eta^{-1},
    \end{equation*}
    \ie the length of Phase I is dominated by the length of Phase Ib.
    This completes the proof of Phase I.
\end{proof}

\clearpage

\section{Phase II Induction} \label{sec:phase2_induction}
Recall from \Secref{phase1_induction} that upon $\TI\asymp\log\log(d)\eta^{-1}$ iterations (\ie the end of Phase I), we have $\sgn(f_{\rho^{(\TI)}}(\bx))=x_3$ when $\bx$ satisfies the event $\Etest$ (\Defref{events_test}).
(Moreover, we know that $\bx$ satisfies the event $\Etest$ with probability at least $1 - d^{-C}$; therefore, with high probability the neural network's prediction entirely relies on the spurious feature at test time.)
In Phase II, we will show that this condition continues to hold for roughly $\log(d)\eta^{-1}$ iterations, which is the sample complexity to learn the quadratic feature in the counterfactual situation where the linear spurious correlation did not exist~\citep{glasgow2024sgd}.
The key characteristic of the Phase II analysis is that the $L_0$ approximation of the population gradients is no longer sufficient as the network weights have become too large by this point. Hence, we must analyze $\nabla L_\rho$ directly via characterizing the training data margin induced by the network.
Despite this additional complexity, we show that the neurons continue to grow identically under the ``extreme correlation'' condition $\lambda \ll \log^{-1}(d)$.

We prove some preliminary lemmas, including our definition of the inductive hypothesis, in \Secref{phase2_preliminaries}.
We then show the inductive step in \Secref{phase2_inductive_step}.
Throughout this section, we write $\rho\coloneqq\rho^{(t)}$ as shorthand where appropriate, and we let $C > 0$ denote a sufficiently large constant which does not change from line to line.
We will also use constants $c<1$ and $C'<C-3$ which do not change from line to line.
We will work in high probability under the events $\Einit$, $\Etrain$, and $\Etest$ established in \Defref{events} and \Lemref{events}.

\subsection{Technical Preliminaries} \label{sec:phase2_preliminaries}
Our Phase II analysis requires the ``extreme'' correlation condition $\lambda\ll \log^{-1}(d)$.
In Phase II, the spurious feature grows very slowly, but the extreme correlation ensures it is still monotonically increasing.
The key object we track in Phase II is the margin $\gamma(\bx)\coloneqq y(\bx)f_\rho(\bx)$.
We define Phase II to last until $|\gamma(\bx)|\approx \log\log(d)$ (see \Eqref{margin_def}) for the formal statement), %
which we show takes $\TII\asymp \tfrac{1}{\eta}\big(\tfrac{\log(d)}{\log\log(d)}+\log(\tfrac{1}{\theta})\big)$ iterations.

Throughout Phase II, we will require certain scaling factors to be satisfied.
Note that the Phase II scalings are stricter than the Phase I scalings (\Defref{phase1_scalings}).
In particular, note that if $\theta\asymp \text{poly}^{-1}(d)$ we require a large batch size $m\asymp \text{poly}(d)$.
\begin{definition} \label{def:phase2_scalings}
    We say the \emph{Phase II scalings} are satisfied if the following conditions are met:
    \crefalias{enumi}{definition}
    \begin{enumerate}[label=(\roman*), ref=\thedefinition(\roman*)]
        \item \label{def:phase2_scalings_lr} The learning rate $\log(d)d^{-C}\ll \eta\ll \log^{-3}(d)$.
        \item \label{def:phase2_scalings_iter} The iteration $t\leq \TII \lesssim \log(d)\eta^{-1}$.
        \item \label{def:phase2_scalings_width} The width $\log^5(d)\ll p \ll d^{C}$.
        \item \label{def:phase2_scalings_init} The initialization scale $d^{-C/2}\ll \theta\ll \log^{-5C}(d)$.
        \item \label{def:phase2_scalings_batch} The batch size $m\gg d\log^6(d)\theta^{-2}$.
        \item \label{def:phase2_scalings_strength} The spurious correlation strength $\lambda \ll \log^{-1}(d)$.
    \end{enumerate}
\end{definition}

\begin{remark} \normalfont
    For clarity, we list the limiting usage of each scaling (\ie the result which requires the tightest application of each item in \Defref{phase2_scalings}):
    \begin{itemize}
        \item \Defref{phase2_scalings_lr} is limited by \Lemref{phase2_norms_a}. The upper bound helps with the balancedness condition $|a|\approx \norm{\bw}$, while the lower bound ensures that $t \ll d^C$ throughout Phase II (in particular, $\TII \ll d^C$, which is sufficient for this).
        \item \Defref{phase2_scalings_iter} is limited by \Lemref{phase2_norms} and \Lemref{phase2_norms_scalar}. It is essentially the duration of Phase II.
        \item \Defref{phase2_scalings_width} is limited by \Lemref{phase2_margin} --- it ensures the event $\Einit$ occurs with high probability, and the exponent on the lower bound ensures $|S^+|,|S^-|$ are sufficiently close to half the number of neurons $\tfrac{p}{2}$.
        \item \Defref{phase2_scalings_init} is limited by \Lemref{sigopp_phase2_induction}, \Lemref{wperp_phase2_induction}, and \Lemref{wi_phase2_induction}. The lower bound is loose but suffices for analysis, and the upper bound follows by the Phase I scalings. Similarly to Phase I, the initialization scale $\theta$ acts as a free parameter which we can make sufficiently small.
        \item \Defref{phase2_scalings_batch} is limited by \Lemref{sigopp_phase2_induction}, \Lemref{wperp_phase2_induction}, and \Lemref{wi_phase2_induction}, where it is used in Hoeffding's inequality to concentrate the empirical gradient.
        \item \Defref{phase2_scalings_strength} is limited by the stopping condition $\bgammat\asymp \log\log(d)$ such that $e^{-\bgammat}-\lambda \geq 0$, \ie the ``average margin'' over neurons (which we will define shortly) grows monotonically.
    \end{itemize}
\end{remark}

Recall that $\TI=\TIa+\TIb\asymp\log\log(d) \eta^{-1}$ and we define $\psi(x)\coloneqq 1/(1+e^{-x})$ to be the sigmoid.
Recall also that we define the ``positive'' neurons by $S^+\coloneqq \{(\at,\bwt):\sgn(\at)=1\}$ and the ``negative'' neurons by $S^-\coloneqq \{(\at,\bwt):\sgn(\at)=-1\}$.
In Phase II, we will primarily be interested in the unnormalized margins induced by the positive and negative neurons on a test sample, which we will show concentrate (with high probability over the test data) about
\begin{equation} \label{eq:margin_def}
    \gammaplust \coloneqq \frac{1}{p}\sum_{(\at,\bwt)\in S^+} (\wspt)^2 \qquad \gammaminust \coloneqq \frac{1}{p}\sum_{(\at,\bwt)\in S^-} (\wspt)^2.
\end{equation}
We track the average margin $\bgammat\coloneqq \tfrac{1}{2}(\gamma_+^{(t)}+\gamma_-^{(t)})$ and define Phase II to end when $\bgammat\asymp \log\log(d)$.
We can now define our Phase II inductive hypothesis.
\begin{definition} \label{def:phase2_hypothesis}
    A neural network $f_\rho$ is said to obey the \emph{Phase II inductive hypothesis at iteration} $t\geq \TI$ if there exists $c<1$ such that the following conditions are met for all neurons $(\at,\bwt)$:
    \crefalias{enumi}{definition}
    \begin{enumerate}[label=(\roman*), ref=\thedefinition(\roman*)]
        \item \label{def:phase2_hypothesis_wsp} $\wspt - \wsptminus = \eta (1 \pm o(1)) \cdot \wsptminus \left(1-\lambda-\psi(\bgammatminus)\right)$.
        \item \label{def:phase2_hypothesis_wsig} $\norm{\bwsigt-\bwsigtminus}\lesssim \eta e^{-c\bgammatminus}\Big(\max_{(a,\bw)}(\wsptminus)^2\norm{\bwsigtminus}\Big) + \eta \theta\log^{-1}(d)d^{-1/2}$.
        \item \label{def:phase2_hypothesis_wopp} $\norm{\bwoppt-\bwopptminus} \lesssim \eta e^{-c\bgammatminus}\Big(\max_{(a,\bw)}(\wsptminus)^2\norm{\bwopptminus}\Big) + \eta \theta\log^{-1}(d)d^{-1/2}$.
        \item \label{def:phase2_hypothesis_wperp} $\norm{\bwperpt-\bwperptminus} \lesssim \eta e^{-c\bgammatminus}(\wsptminus)^2\norm{\bwperptminus} + \eta \theta\log^{-1}(d)$.
        \item \label{def:phase2_hypothesis_wperp_inf} $\norm{\bwperpt-\bwperptminus}_\infty \lesssim \eta e^{-c\bgammatminus}\Big(\max_{(a,\bw)}(\wsptminus)^2\norm{\bwperptminus}_\infty\Big) + \eta \theta\log^{-1}(d)d^{-1/2}$.
        \item \label{def:phase2_hypothesis_a} $|\at|\leq\norm{\bwt}$.
    \end{enumerate}
\end{definition}

\begin{remark} \normalfont
    The inductive hypothesis defined above implies that the growth of $\wsp$ decelerates exponentially.
    Specifically, since $1-\lambda-\psi(\bgammat)\asymp e^{-\bgammat}$ when $\lambda \ll \log^{-1}(d)$, the first recurrence (\Defref{phase2_hypothesis_wsp}) implies that
    \begin{equation} \label{eq:wsp_phase2_exp}
        \wspt - \wsptminus \asymp \eta \wspt e^{-\bgammat}.
    \end{equation}
    Yet, the condition $\lambda \ll \log^{-1}(d)$ is sufficient to show that $\wsp$ growth remains monotonic.
    The geometric growth factor of $\bwsig$, $\bwopp$, and $\bwperp$ also decelerates exponentially, leading to domination by small additive factors (the second terms in Definitions~\ref{def:phase2_hypothesis_wsig},~\ref{def:phase2_hypothesis_wopp},~\ref{def:phase2_hypothesis_wperp} and~\ref{def:phase2_hypothesis_wperp_inf}).
    In \Propref{phase2}, we use a continuous approximation of \Eqref{wsp_phase2_exp} to show that Phase II lasts $\TII\asymp \tfrac{1}{\eta}\big(\tfrac{\log(d)}{\log\log(d)}+\log(\tfrac{1}{\theta})\big)$ iterations, whereupon $\norm{\bwsp}\asymp (\log\log(d))^{1/2}$.
    The non-spurious weights grow by only a polylogarithmic factor during this time, and remain an order of magnitude smaller.
\end{remark}

Let us recall some properties of the neural network at the end of Phase I, which will effectively function as our Phase II ``base case''.
\begin{remark} \label{lem:phase2_basecase}
    Recall that, under the event $\Etrain$ (\Defref{events_train}) and with the Phase I scalings satisfied (\Defref{phase1_scalings}), the neural network $f_\rho$ satisfies the following properties at iteration $\TI$ for all neurons $(a^{(\TI)}, \bw^{(\TI)})$:
    \begin{enumerate}[label=(\roman*), ref=\thelemma(\roman*)]
        \item $\norm{\bwsp^{(\TI)}}\asymp \theta \log^C(d)$. %
        \item $\norm{\bwsig^{(\TI)}}, \norm{\bwopp^{(\TI)}}\lesssim \theta \log^{2C}(d)d^{-1/2}$.
        \item $\norm{\bwperp^{(\TI)}}\asymp \theta$.
        \item $\norm{\bwperp^{(\TI)}}_\infty\lesssim \theta\log^{4C}(d)d^{-1/2}$.
        \item $|a^{(\TI)}|= \left(1\pm o(\log^{-2}(d))\right) \cdot \norm{\bwsp^{(\TI)}}$.
        \item $\sgn(a^{(\TI)})=\sgn(\azero)=\sgn(\wsp^{(\TI)})$.
    \end{enumerate}
\end{remark}

The next lemma controls the norms of all the neuron components during Phase II relative to the initialization scale $\theta$ assuming that the Phase II inductive hypothesis holds up until that point.
Notice that the bounds hold for any $\TI \leq t \leq \TII$; two special cases are $\norm{\bwsp^{(\TI)}}\asymp \theta\log^C(d)$ but $\norm{\bwsp^{(\TII)}}\asymp (\log\log(d))^{1/2}$.
A second remark is that the upper bounds of \Lemref{phase2_norms_margin} and \Lemref{phase2_norms_wsp} will eventually become tight, such that $\bgammat\asymp \log\log(d)$ and $\norm{\bwspt}\asymp (\log\log(d))^{1/2}$, as we show in \Propref{phase2}.
\begin{lemma} \label{lem:phase2_norms}
    Suppose the Phase II scalings are satisfied (\Defref{phase2_scalings}).
    Under the event $\Etrain$ (\Defref{events_train}), if $f_\rho$ obeys the Phase II inductive hypothesis for all iterations $\TI\leq k\leq t$ (\Defref{phase2_hypothesis}), then the following conditions hold on all neurons $(\at,\bwt)$:
    \crefalias{enumi}{lemma}
    \begin{enumerate}[label=(\roman*), ref=\thelemma(\roman*)]
        \item \label{lem:phase2_norms_margin} $\theta^2\log^{2C}(d) \lesssim \bgammat \lesssim \log\log(d).$
        \item \label{lem:phase2_norms_wsp} $\theta\log^C(d)\lesssim \norm{\bwspt} \lesssim (\log\log(d))^{1/2}$.
        \item \label{lem:phase2_norms_wsig} $\norm{\bwsigt}, \norm{\bwoppt} \lesssim \theta \log^{3C}(d)d^{-1/2}$.
        \item \label{lem:phase2_norms_wperp} $\theta\log^{-{C'}}(d)\lesssim \norm{\bwperpt} \lesssim \theta\log^{C'}(d)$ where $C'<C-3$. %
        \item \label{lem:phase2_norms_wperp_inf} $\norm{\bwperpt}_\infty \lesssim \theta \log^{5C}(d)d^{-1/2}$.
        \item \label{lem:phase2_norms_w} $\norm{\bwt} = \left(1\pm o(\log^{-2}(d)) \right) \cdot \norm{\bwspt} \lesssim (\log\log(d))^{1/2}$.
    \end{enumerate}
\end{lemma}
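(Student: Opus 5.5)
We argue each item in turn, in the order $\wsp$ and $\bgamma$ first, then the non-spurious components, then $\norm{\bwt}$. Throughout we unroll the Phase II recurrences (\Defref{phase2_hypothesis}) from the base case at $\TI$ (\Remref{phase2_basecase}), exactly as \Lemref{phase1_norms} unrolled the Phase I recurrences.

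\textbf{Spurious component and average margin.} By \Defref{phase2_hypothesis_wsp}, $\wspt = \wsp^{(\TI)}\prod_{k}(1+\eta(1\pm o(1))(1-\lambda-\psi(\bar\gamma^{(k)})))$. The product is a sign-preserving multiplicative update, so $\sgn(\wspt)=\sgn(\azero)$. We must also check that each factor is at least $1$. While $\bar\gamma\lesssim\log\log(d)$ we have $1-\psi(\bar\gamma)\gtrsim e^{-\bar\gamma}\gtrsim \log^{-c'}(d)$ for a suitable constant, and $\lambda\ll\log^{-1}(d)$ by \Defref{phase2_scalings_strength}, so $1-\lambda-\psi(\bar\gamma)\ge 0$. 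Hence $|\wsp|$ is nondecreasing, which gives the lower bound $\norm{\bwspt}\ge\norm{\bwsp^{(\TI)}}\gtrsim\theta\log^C(d)$. Squaring and averaging over neurons, and using $|S^\pm|\asymp p/2$ under $\Einit$, gives $\bgammat\gtrsim\theta^2\log^{2C}(d)$.

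For the upper bounds we use the stopping rule: Phase II is defined to end once $\bgammat\asymp\log\log(d)$, so $\bgammat\lesssim\log\log(d)$ holds for $t\le\TII$, since a single step changes $\bar\gamma$ only by a factor $1+O(\eta)$. To pass from $\bar\gamma$ to each individual $\norm{\bwspt}$ we need all neurons to stay comparable. All $\wsp$ evolve through the same scalar multiplier $(1\pm o(1))(1-\lambda-\psi(\bar\gamma))$, so ratios between neurons change only by the accumulated $o(1)$ errors. Comparability at $\TI$ holds because $\norm{\bwsp^{(\TI)}}\asymp\theta\log^C(d)$ for every neuron. Consequently $\max(\wspt)^2\lesssim\bgammat\lesssim\log\log(d)$.

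\textbf{Non-spurious components.} We expect this step to be the main obstacle. By \Defref{phase2_hypothesis_wsig}, $\norm{\bwsig}$ grows with multiplicative rate $1+O(\eta e^{-c\bar\gamma}\max\wsp^2)$ plus an additive term $\eta\theta\log^{-1}(d)d^{-1/2}$. The additive part sums to at most $\TII\eta\theta\log^{-1}(d)d^{-1/2}\lesssim\theta d^{-1/2}$. For the multiplicative part we must show $\sum_k\eta e^{-c\bar\gamma^{(k)}}\bar\gamma^{(k)}\lesssim\log\log(d)$, so that the growth factor is at most $\log^{C}(d)$. We split time into two regimes:
\begin{itemize}
\item while $\bar\gamma\ll1$, the quantity $\bar\gamma$ grows geometrically (roughly like $(1+\eta)^{2k}$), so $\sum\eta\bar\gamma$ is a geometric sum bounded by $O(1)$;
\item once $\bar\gamma\gtrsim1$, \Eqref{wsp_phase2_exp} gives $\bar\gamma^{(k+1)}-\bar\gamma^{(k)}\asymp\eta\bar\gamma^{(k)}e^{-\bar\gamma^{(k)}}$, and the comparison $\eta e^{-c\bar\gamma}\bar\gamma\lesssim(\bar\gamma^{(k+1)}-\bar\gamma^{(k)})e^{(1-c)\bar\gamma}$ turns the sum into $\int e^{(1-c)u}\,du$ up to $u=\log\log(d)$, which is $\lesssim\log^{1-c}(d)$.
\end{itemize}
If instead the sum only gives $\log^{1-c}(d)$, the resulting growth factor is $\exp(\log^{1-c}d)$, which is not polylogarithmic. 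To avoid this we take $c$ close to $1$, or use the sharper $e^{-\bar\gamma}$ rate, so that the integral is $\lesssim\log\log(d)$. Combined with $\norm{\bwsig^{(\TI)}}\lesssim\theta\log^{2C}(d)d^{-1/2}$, this yields the bound $\theta\log^{3C}(d)d^{-1/2}$. The identical argument handles $\bwopp$, and \Defref{phase2_hypothesis_wperp_inf} gives $\norm{\bwperp}_\infty$ starting from $\theta\log^{4C}(d)d^{-1/2}$.

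For $\norm{\bwperp}$, the same growth factor applied to the additive term $\eta\theta\log^{-1}(d)$ gives the upper bound $\theta\log^{C'}(d)$. For the lower bound we apply the recurrence in reverse, since a step can shrink $\norm{\bwperp}$ by at most the same multiplicative factor, giving $\theta\log^{-C'}(d)$. The additive drift $\TII\eta\theta\log^{-1}(d)\lesssim\theta$ may be comparable to $\norm{\bwperp}$ itself. We handle this by bounding the step more carefully, using the $\log^{-1}$ slack and taking $C$ large.

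\textbf{Full weight vector.} Finally, by orthogonality of the four components and the bounds above, $\norm{\bwt}^2=\wsp^2+O(\theta^2\log^{2C'}(d))$. Since $\wsp^2\gtrsim\theta^2\log^{2C}(d)$ and $C'<C-3$, the ratio of the correction to $\wsp^2$ is $o(\log^{-2}(d))$. This gives $\norm{\bwt}=(1\pm o(\log^{-2}(d)))\norm{\bwspt}\lesssim(\log\log(d))^{1/2}$.
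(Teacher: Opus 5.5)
Your overall architecture matches the paper's: monotone $|\wsp|$, the stopping rule, unrolled recurrences, and orthogonality for (vi). The step that fails is the one you flag yourself. With a fixed constant $c<1$ in \Defref{phase2_hypothesis}, and $\bar{\gamma}$ obeying $\bar{\gamma}^{(k+1)}-\bar{\gamma}^{(k)}\asymp\eta\bar{\gamma}^{(k)}e^{-\bar{\gamma}^{(k)}}$ up to $K\log\log(d)$, your computation is right: the exponent $\sum_k\eta e^{-c\bar{\gamma}^{(k)}}\bar{\gamma}^{(k)}$ is of order $\log^{(1-c)K}(d)$. So the hypothesis allows $\norm{\bwsig}$, $\norm{\bwopp}$, $\norm{\bwperp}$, $\norm{\bwperp}_\infty$ to grow by a factor $\exp\big(\Theta(\log^{(1-c)K}(d))\big)$, and items (iii)--(vi) do not follow.

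Neither repair you name closes this inside the lemma. Every fixed $c<1$ fails; you would need $1-c\lesssim 1/\log\log(d)$. The sharp $e^{-\bar{\gamma}}$ rate is not something the stated hypothesis gives you. It has to be written into \Defref{phase2_hypothesis} (ii)--(v) and proved in the inductive-step lemmas. That is feasible: \Lemref{phase2_margin} gives $\gamma(\bx)=(1\pm o(\log^{-2}(d)))\,\bar{\gamma}$ with $\bar{\gamma}\lesssim\log\log(d)$, hence $e^{-\min\gamma(\bx)}\lesssim e^{-\bar{\gamma}}$, so \Lemref{sigopp_lp}, \Lemref{wperp_lp_2} and \Lemref{wi_lp} hold with $c=1$.

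The paper's proof does not resolve this either. It lower-bounds $\wsp^{(t+1)}-\wsp^{(t)}$ by $\frac{\eta}{2}(1-o(1))\wsp^{(t)}e^{-c\bar{\gamma}^{(t)}}$ by writing $\psi(c\bar{\gamma})$ where \Defref{phase2_hypothesis_wsp} has $\psi(\bar{\gamma})$, which is valid only for $c=1$. It then telescopes $\eta\sum_t e^{-c\bar{\gamma}^{(t)}}(\wsp^{(t)})^2\le 2\big((\wsp^{(\TII)})^2-(\wsp^{(t_1)})^2\big)$ along a single maximizing neuron. Once $c=1$ is in place, your two-regime split is unnecessary: $\eta e^{-\bar{\gamma}}\bar{\gamma}\asymp\bar{\gamma}^{(k+1)}-\bar{\gamma}^{(k)}$ at every step, so the sum telescopes to $\lesssim\bar{\gamma}^{(\TII)}\lesssim\log\log(d)$.

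Two further steps are not justified as written. The first is the lower bound in (iv). After the worst-case multiplicative shrink you retain only $\gtrsim\theta\log^{-C'}(d)$. The additive term of \Defref{phase2_hypothesis_wperp} can accumulate to order $\eta\TII\,\theta\log^{-1}(d)$. This is $\asymp\theta$ when $\theta\asymp\textnormal{poly}^{-1}(d)$ and $\asymp\theta/\log\log(d)$ when $\theta\asymp\textnormal{polylog}^{-1}(d)$; either way it swamps $\theta\log^{-C'}(d)$. There is no $\log^{-1}$ slack, since the factor $\eta\TII$ absorbs it, and enlarging $C$ does not change the implicit constant in the additive term. You need a smaller additive term in the hypothesis, e.g.\ $\eta\theta\log^{-C'-2}(d)$. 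The proof of \Lemref{wperp_phase2_induction} delivers this with a polylogarithmically larger batch size. The paper's proof asserts this lower bound in one line without addressing the drift.

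The second is comparability of the $\wsp$ across neurons. A bare $(1\pm o(1))$ factor per step can move log-ratios between neurons by $o(1)\cdot\sum_k\eta\big(1-\lambda-\psi(\bar{\gamma}^{(k)})\big)$. That sum is of order $\log(1/\theta)$, so the accumulated drift need not be $O(1)$. You need to carry over the explicit per-step relative errors from the proof of \Lemref{wsp_phase2_induction}. These are $O\big(\log^{-2}(d)(1+\bar{\gamma}e^{\bar{\gamma}})\big)$, and they do sum to $o(1)$.
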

\begin{proof}
    For \Lemref{phase2_norms_margin}, the upper bound holds by definition: Phase II lasts until $\bgammat \asymp\log\log(d)$.
    For the \Lemref{phase2_norms_margin} lower bound, note that the growth of $\wspt$ in \Defref{phase2_hypothesis_wsp} is monotonic (increasing if $\wspt > 0$, decreasing if $\wspt < 0$), hence so is the growth of $\bgammat$.
    The monotonicity of $\wspt$ is formally shown below: using $1-\psi(z)\asymp e^{-z}$ for $z\geq 0$ with $\bgammat \lesssim \log\log(d)$ and $\lambda\ll \log^{-1}(d)$, we have
    \begin{equation*}
        1-\lambda-\psi(\bgammat)\asymp e^{-\bgammat}-\lambda \gtrsim \log^{-1}(d) \geq 0.
    \end{equation*}
    Regardless of the sign of $\wspt$, the above argument implies that $\norm{\bwspt}$ is monotonically increasing.
    Hence the lower bound holds by $\norm{\bwsp^{(\TI)}}\gtrsim \theta\log^C(d)$ from \Lemref{phase2_basecase}, monotonicity, and concentration of $|S^+|,|S^-|$ under $\Etrain$.

    For the \Lemref{phase2_norms_wsp} upper bound, if $(\wspt)^2\gg \log\log(d)$, then we must have $\bgammat \gg \log\log(d)$ which is a contradiction with the stopping condition of Phase II.
    We already showed the lower bound in \Lemref{phase2_norms_margin}.
    
    For \Lemref{phase2_norms_wsig}, we may study the neuron that maximizes $(\wsp^{(\TI)})^2\norm{\bwsig^{(\TI)}}$ to obtain an upper bound.
    In particular, the maximizing neuron at time $t$ is at most as large as the maximizing neuron at time $t-1$ plus the maximal allowed growth from $t-1$ to $t$.
    Moreover, this maximal growth is the same for all neurons, as it depends only on $\bar{\gamma}^{(t-1)}$ which is a global property of the network.
    So, treating the recurrence on the maximizing neuron at time $\TI$ would give an upper bound over all neurons.
    Therefore, we consider the simpler form of \Defref{phase2_hypothesis_wsig} as
    \begin{equation*}
        \norm{\bwsigt-\bwsigtminus}\lesssim \eta e^{-c\bgammatminus}(\wsptminus)^2\norm{\bwsigtminus} + \eta \theta\log^{-1}(d)d^{-1/2}.
    \end{equation*}
    For $\TI\leq t_1<t_2\leq\TII$ define the \emph{growth factor} $G_{t_1\to t_2}\coloneqq \prod_{t=t_1}^{t_2-1} (1+\eta e^{-c\bgammat}(\wspt)^2)$.
    Taking logs, we obtain
    \begin{equation} \label{eq:log_of_growthfactor}
        \log G_{t_1\to t_2} = \sum_{t=t_1}^{t_2-1} \log\big(1+\eta e^{-c\bgammat}(\wspt)^2 \big) \leq \eta \sum_{t=t_1}^{t_2-1} e^{-c\bgammat}(\wspt)^2,
    \end{equation}
    where we used $\log(1+x)\leq x$.
    We will now upper bound the last term in \Eqref{log_of_growthfactor}.
    Recall that $\tfrac{1}{2}e^{-z}\leq 1-\psi(z)\leq e^{-z}$ for $z\geq 0$.
    Using $\bgammat \lesssim \log\log(d)$ and $\lambda \ll \log^{-1}(d)$, we have
    \begin{equation*}
        \frac{1}{2}e^{-c\bgammat} \leq 1-\lambda-\psi(c\bgammat) \leq e^{-c\bgammat}.
    \end{equation*}
    From \Defref{phase2_hypothesis_wsp} we then obtain
    \begin{equation*}
        \frac{\eta}{2}(1-o(1))\cdot \wspt e^{-c\bgammat} \leq \wsptplus - \wspt \leq \eta (1+o(1)) \cdot \wspt e^{-c\bgammat}.
    \end{equation*}
    Using the difference of squares formula,
    \begin{equation*}
        (\wsptplus)^2 - (\wspt)^2 = (\wsptplus+\wspt)(\wsptplus - \wspt) = \big(2\wspt + (\wsptplus-\wspt)\big) \cdot (\wsptplus-\wspt).
    \end{equation*}
    For the lower bound,
    \begin{equation} \label{eq:diff_of_squares_lb}
        (\wsptplus)^2 - (\wspt)^2 \geq 2\wspt ( \wsptplus-\wspt) \geq \eta(1-o(1))\cdot (\wspt)^2 e^{-c\bgammat}.
    \end{equation}
    For the upper bound,
    \begin{align*}
        (\wsptplus)^2 - (\wspt)^2 &= 2\wspt(\wsptplus-\wspt)+(\wsptplus-\wspt)^2 \\
        &\leq 2\eta (1+o(1))\cdot(\wspt)^2e^{-c\bgammat}+\eta^2(1+o(1))^2\cdot (\wspt)^2 e^{-2c\bgammat},
    \end{align*}
    wherein the latter term is lower-order by $\eta,e^{-\bgammat}\ll 1$ to obtain
    \begin{equation} \label{eq:diff_of_squares_ub}
        (\wsptplus)^2 - (\wspt)^2 \leq 2\eta(1+o(1))\cdot (\wspt)^2 e^{-c\bgammat}.
    \end{equation}
    Combining \Eqref{diff_of_squares_lb} and \Eqref{diff_of_squares_ub} gives
    \begin{equation} \label{eq:diff_of_squares}
        \eta(1-o(1))\cdot (\wspt)^2 e^{-c\bgammat} \leq (\wsptplus)^2 - (\wspt)^2 \leq 2\eta(1+o(1))\cdot (\wspt)^2 e^{-c\bgammat}.
    \end{equation}
    In particular, $\eta (\wspt)^2 e^{-c\bgammat}  \leq 2((\wsptplus)^2 - (\wspt)^2)$.
    Returning now to \Eqref{log_of_growthfactor}, we have the telescoping sum
    \begin{align*}
        \eta \sum_{t=t_1}^{\TII-1} (\wspt)^2 e^{-c\bgammat} &\leq 2 \sum_{t=t_1}^{\TII-1}(\wsptplus)^2 - (\wspt)^2 = 2 ((\wsp^{(\TII)})^2-(\wsp^{(t_1)})^2),
    \end{align*}
    in other words $\log G_{t_1\to\TII}\leq 2((\wsp^{(\TII)})^2-(\wsp^{(t_1)})^2)$.
    Using $(\wsp^{(\TII)})^2\lesssim \log\log(d)$ we have $G_{t_1\to \TII}\lesssim \exp\big(C'((\wsp^{(\TII)})^2-(\wsp^{(t_1)})^2)\big) = \log^{C'}(d)$ (recalling that $C>0$ is chosen sufficiently large).
    Therefore,
    \begin{align*}
        \norm{\bwsig^{(\TII)}}&\lesssim \norm{\bwsig^{(\TI)}} G_{\TI\to \TII} + \sum_{t=\TI}^{\TII-1} \eta \theta\log^{-1}(d)d^{-1/2} G_{t\to \TII} \\
        &\lesssim \log^{C'}(d)\norm{\bwsig^{(\TI)}}+ \TII\eta \theta \log^{C'-1}(d) d^{-1/2} \\
        &\lesssim \log^{C'}(d)\norm{\bwsig^{(\TI)}} + \log^{C'}(d)\theta d^{-1/2}.
    \end{align*}
    Substituting $\norm{\bwsig^{(\TI)}}\lesssim \theta\log^{2C}(d)d^{-1/2}$ from \Lemref{phase2_basecase}, we find $\norm{\bwsig^{(\TII)}} \lesssim \theta \log^{3C}(d)d^{-1/2}$, as desired.
    The $\bwopp$ result follows by a similar analysis.

    \Lemref{phase2_norms_wperp} and \Lemref{phase2_norms_wperp_inf} follow in the same way as \Lemref{phase2_norms_wsig}, except with initial conditions $\norm{\bwperp^{(\TI)}}\lesssim \theta$ and $\norm{\bwperp^{(\TI)}}_\infty\lesssim \theta\log^{4C}(d)d^{-1/2}$, respectively, from \Lemref{phase2_basecase}.
    We obtain the lower bound in \Lemref{phase2_norms_wperp} using \Defref{phase2_hypothesis_wperp} and the initial condition $\norm{\bwperp^{(\TI)}}\gtrsim \theta$ from \Lemref{phase2_basecase}.

    For \Lemref{phase2_norms_w}, we infer from \Lemref{phase2_norms_wsig} and \Lemref{phase2_norms_wperp} that $\norm{\bwsigt}, \norm{\bwoppt} \ll \norm{\bwspt}$. Therefore, we have
    \begin{align*}
        |\norm{\bwt}^2 - \norm{\bwspt}^2 | \lesssim \norm{\bwperpt}^2 \lesssim \theta^2\log^{2C'}(d).
    \end{align*}
    But $\norm{\bwspt}^2\gtrsim \theta^2\log^{2C}(d)$ with $C'<C-3$ such that
    \begin{equation*}
        \frac{\theta^2\log^{2C'}(d)}{\norm{\bwspt}^2} \lesssim \log^{2(C'-C)}(d) \ll \log^{-6}(d).
    \end{equation*}
    In particular, $\norm{\bwt} = \left(1\pm o(\log^{-2}(d)) \right)\cdot \norm{\bwspt}$ as desired.

    Thus, we have proved all parts of the lemma.
\end{proof}

Next, we present the partner lemma to \Lemref{phase2_norms} which controls behavior of the scalar weight $a$ for any neuron $(a,\bw)$.
\begin{lemma} \label{lem:phase2_norms_scalar}
    Suppose the Phase II scalings are satisfied (\Defref{phase2_scalings}).
    Under the event $\Etrain$ (\Defref{events_train}), if $f_\rho$ obeys the Phase II inductive hypothesis for all iterations $\TI\leq k\leq t$ (\Defref{phase2_hypothesis}), then the following conditions hold on all neurons $(\at,\bwt)$:
    \crefalias{enumi}{lemma}
    \begin{enumerate}[label=(\roman*), ref=\thelemma(\roman*)]
        \item \label{lem:phase2_norms_a} $|\at| = \left( 1\pm o(\log^{-2}(d))\right) \cdot \norm{\bwspt} \lesssim (\log\log(d))^{1/2}. $%
        \item \label{lem:phase2_norms_sgn} $\sgn(\at)=\sgn(\azero)=\sgn(\wspt)$.
    \end{enumerate}
\end{lemma}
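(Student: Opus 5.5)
I will follow the structure of \Lemref{phase1_norms_scalar}, now using the Phase II bounds from \Lemref{phase2_norms} and starting from the end-of-Phase-I state in \Lemref{phase2_basecase}.

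\textbf{Balancedness (\Lemref{phase2_norms_a}).} Set $E(t)\coloneqq\norm{\bwt}^2-(\at)^2$. The upper bound $|\at|\le\norm{\bwt}$ is just \Defref{phase2_hypothesis_a}. For the lower bound, \Lemref{balanced_iv} under $\Etrain$ gives $E(t)\le E(t-1)+10\eta^2(a^{(t-1)})^2$ for all $t\ll d^C$; the lower bound on $\eta$ in \Defref{phase2_scalings_lr} guarantees $\TII\ll d^C$. At $\TI$, \Lemref{phase2_basecase} gives $|a^{(\TI)}|=(1\pm o(\log^{-2}(d)))\norm{\bwsp^{(\TI)}}$, and \Lemref{phase2_norms_w} gives $\norm{\bw^{(\TI)}}=(1\pm o(\log^{-2}(d)))\norm{\bwsp^{(\TI)}}$. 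Hence $E(\TI)=o(\log^{-2}(d))\norm{\bwsp^{(\TI)}}^2$.

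Summing the recursion and using $(a^{(k)})^2\le\norm{\bw^{(k)}}^2\lesssim\norm{\bwsp^{(k)}}^2\le\norm{\bwspt}^2$ gives
\[
E(t)\le E(\TI)+O\big((t-\TI)\eta^2\big)\norm{\bwspt}^2 .
\]
The last inequality in that chain uses monotonicity of $\norm{\bwsp}$ in Phase II, shown in the proof of \Lemref{phase2_norms_margin}. Since $t\lesssim\log(d)\eta^{-1}$ and $\eta\ll\log^{-3}(d)$, we have $t\eta^2\lesssim\eta\log(d)\ll\log^{-2}(d)$. Monotonicity also gives $E(\TI)\le o(\log^{-2}(d))\norm{\bwspt}^2$. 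Therefore $(\at)^2=(1-o(\log^{-2}(d)))\norm{\bwt}^2$, and taking square roots and applying \Lemref{phase2_norms_w} yields $|\at|=(1\pm o(\log^{-2}(d)))\norm{\bwspt}\lesssim(\log\log(d))^{1/2}$.

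\textbf{Sign preservation (\Lemref{phase2_norms_sgn}).} I will argue by contradiction, as in \Lemref{phase1_norms_sgn}. Suppose $a$ flips sign between iterations $k$ and $k+1$. Then $|a^{(k+1)}-a^{(k)}|\ge|a^{(k)}|\gtrsim\norm{\bw^{(k)}}$ by the balancedness just proved. On the other hand, \Lemref{balanced_ii} bounds $|\partial_a L_\rho|\lesssim\norm{\bw^{(k)}}$, and the minibatch concentration (\Lemref{empirical_concentration_iii}, valid since $m\gg d\log^2(d)$) controls the stochastic error. Together these give a step of size $\lesssim\eta\norm{\bw^{(k)}}\ll\norm{\bw^{(k)}}$, a contradiction. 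So $\sgn(\at)=\sgn(a^{(\TI)})=\sgn(\azero)$.

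For $\sgn(\wspt)$, \Defref{phase2_hypothesis_wsp} gives $\wspt=\wsptminus(1+\eta(1\pm o(1))(1-\lambda-\psi(\bgammatminus)))$. The factor is positive because $\eta\ll1$ and $1-\lambda-\psi(\bgammatminus)\in(0,1)$. Hence $\sgn(\wsp)$ is frozen at its value at $\TI$, which equals $\sgn(\azero)$ by \Lemref{phase2_basecase}.

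\textbf{Main obstacle.} I expect the delicate step to be the accumulated balancedness error over the long horizon $\TII\asymp\log(d)\eta^{-1}$. This is where $\eta\ll\log^{-3}(d)$ is needed, together with the use of monotonicity of $\norm{\bwsp}$ to compare the early error $E(\TI)$ with the current $\norm{\bwspt}^2$.
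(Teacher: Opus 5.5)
Your proposal is correct and follows essentially the same route as the paper. The lower bound on $|a^{(t)}|$ comes from telescoping the balancedness recursion of \Lemref{balanced_iv} starting from the end-of-Phase-I state, with monotonicity of $\norm{\bwsp}$ used to compare both $E(\TI)$ and the accumulated $O(t\eta^2)$ error against the current $\norm{\bwsp^{(t)}}^2$; the sign of $a$ comes from the Phase I step-size contradiction argument, and the sign of $\wsp$ comes from the sign-preserving Phase II recurrence (your positive-factor $1+O(\eta)$ phrasing is just a restatement of the paper's monotonicity argument).
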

\begin{proof}
    For \Lemref{phase2_norms_a}, for the upper bound we have $|\at|\leq \norm{\bwt} = (1\pm o(\log^{-2}(d))\cdot \norm{\bwspt}$ by \Defref{phase2_hypothesis_a} and \Lemref{phase2_norms_w}.
    For the lower bound, we denote as shorthand $E(t)\coloneqq\norm{\bwt}^2 - (\at)^2$ and note that $E(\TI)\ll \log^{-2}(d)\cdot \norm{\bw^{(\TI)}}^2$ by \Lemref{phase2_basecase} (see \Lemref{phase1_norms_a}).
    Under the event $\Etrain$, by \Lemref{balanced_iv} we have $E(t)\leq 10\eta^2 (\at)^2 + E(t-1)$ for all iterations $t\ll d^C$.
    Moreover, by \Defref{phase2_hypothesis_a} we have $(a^{(k)})^2\leq \norm{\bw^{(k)}}^2$ for all $k \leq t$.
    In particular,
    \begin{equation} \label{eq:phase2_et_helper}
        E(t) \leq E(\TI) + \sum_{k=\TI}^{t-1} 10\eta^2 (a^{(k)})^2 \lesssim \log^{-2}(d)\cdot \norm{\bw^{(\TI)}}^2+(t-\TI)\eta^2 \norm{\bwt}^2.
    \end{equation}
    where the last inequality uses the inductive step.
    By $t\lesssim \log(d)\eta^{-1}$ and $\eta\ll \log^{-3}(d)$ we have $t\eta^2\ll \log^{-2}(d)$.
    Using \Lemref{phase2_norms_w} again with the property that $\norm{\wspt}$ is increasing during Phase II (established in the proof of \Lemref{phase2_norms}), we have $\norm{\bw^{(\TI)}} \asymp \norm{\bwsp^{(\TI)}} \leq \norm{\bwspt} \asymp \norm{\bwt}$ such that $E(t)\ll \log^{-2}(d)\cdot \norm{\bwt}^2$.
    Hence,
    \begin{equation*}
        (\at)^2 = \norm{\bwt}^2 - E(t) = (1-o(\log^{-2}(d))\cdot \norm{\bwt}^2,
    \end{equation*}
    so we have shown $|\at|=\left(1-o(\log^{-2}(d))\right) \cdot \norm{\bwt}$,\footnote{Here we used the Taylor expansion $\sqrt{1-x}=1-x/2-O(x^2)$ for $x\ll 1$ such that $\sqrt{1-o(\log^{-2}(d))}=1-o(\log^{-2}(d))$.} and applying \Lemref{phase2_norms_w} we obtain the desired bound $|\at|=\left(1-o(\log^{-2}(d))\right) \cdot \norm{\bwspt}$.
    The remainder of the result follows from \Lemref{phase2_norms_w}.

    For \Lemref{phase2_norms_sgn}, the condition $\sgn(\at)=\sgn(\azero)$ follows identically to \Lemref{phase1_norms_sgn} (using only $m\gg d\log^2(d)$ and $\eta\ll 1$).
    Moreover, we have $\sgn(a^{(\TI)})=\sgn(\wsp^{(\TI)})$ by \Lemref{phase2_basecase} and monotonic growth of $\wsp$ in Phase II --- increasing if $\wsp > 0$ and decreasing if $\wsp < 0$ --- implies that $\sgn(\wsp)$ does not change, giving the second part of \Lemref{phase2_norms_sgn}.
    
    Thus, we have proved all the parts of the lemma.
\end{proof}

Recall the majority group is denoted $\Xmaj \coloneqq \{\bx\in\{\pm 1\}^d:y(\bx)=x_3\}$ and the minority group is denoted $\Xmin \coloneqq \{\bx\in\{\pm 1\}^d:y(\bx)=-x_3\}$.
We will now show that the unnormalized margin $\gamma(\bx)\coloneqq y(\bx)f_\rho(\bx)$ concentrates tightly about certain average values of $\pm \wsp^2$, where the sign is positive for the majority group and negative for the minority group.
Recall the margins $\gamma_+^{(t)}$, $\gamma_-^{(t)}$, and $\bar{\gamma}^{(t)}$ are defined in \Eqref{margin_def}.
\begin{lemma} \label{lem:phase2_margin}
    Suppose the Phase II scalings are satisfied (\Defref{phase2_scalings}).
    Under the events $\Etrain$ and $\Etest$ (\Defref{events_train} and \Defref{events_test}), if $f_\rho$ obeys the Phase II inductive hypothesis for all iterations $\TI\leq k \leq t$ (\Defref{phase2_hypothesis}), then
    \begin{equation*}
        \gamma(\bx) =
        \begin{cases}
            \left(1 \pm o(\log^{-2}(d))\right) \cdot \bgammat & \text{ if } \bx\in \Xmaj \text{ and satisfies $\Etest$ }\\
            \left(-1\pm o(\log^{-2}(d))\right) \cdot \bgammat & \text{ if } \bx\in \Xmin \text{ and satisfies $\Etest$.}
        \end{cases}
    \end{equation*}
\end{lemma}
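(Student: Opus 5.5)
The plan is to reduce the margin to a sum over a fixed set of active neurons, and then show that each active neuron contributes essentially $(\wspt)^2$. First I would redo the sign argument of \Lemref{end_of_phase1} using the Phase II norm bounds. By \Lemref{phase2_norms_wsp}, \Lemref{phase2_norms_wsig} and \Lemref{phase2_norms_wperp} we have $\norm{\bwspt}\gtrsim\theta\log^C(d)$, while $\norm{\bwsigt}+\norm{\bwoppt}+\norm{\bwperpt}\log^{1/2}(d)\lesssim\theta\log^{C'+1/2}(d)$ with $C'<C-3$. So the dominance condition holds with room to spare. Under $\Etest$ we have $|\bwperp^{(t)\top}\bx|\lesssim\norm{\bwperpt}\log^{1/2}(d)$, hence $\sgn(\bw^{(t)\top}\bx)=\sgn(\wspt x_3)$ for every neuron. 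Combined with $\sgn(\at)=\sgn(\wspt)$ from \Lemref{phase2_norms_sgn}, the active neurons are exactly $S^+$ when $x_3=1$ and exactly $S^-$ when $x_3=-1$.

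Next I expand each active term. For an active neuron, $\bw^{(t)\top}\bx=\wspt x_3+\bw_{1:2}^{(t)\top}\bz+\bwperp^{(t)\top}\bxi$. By the bounds above, the last two terms are $|\wspt|\cdot O(\log^{C'+1/2-C}(d))=|\wspt|\cdot o(\log^{-2}(d))$. \Lemref{phase2_norms_a} gives $|\at|=(1\pm o(\log^{-2}(d)))|\wspt|$, and the signs of $\at$ and $\wspt$ agree. Hence each active neuron contributes $\at\sigma(\bw^{(t)\top}\bx)=x_3(1\pm o(\log^{-2}(d)))(\wspt)^2$. Summing, $f_\rho(\bx)=(1\pm o(\log^{-2}(d)))\gamma_+^{(t)}$ when $x_3=1$ and $f_\rho(\bx)=-(1\pm o(\log^{-2}(d)))\gamma_-^{(t)}$ when $x_3=-1$. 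Multiplying by $y(\bx)$ splits into four cases (majority/minority, $y=\pm1$). Majority points get margin $+(1\pm o)\gamma_\pm^{(t)}$ and minority points get margin $-(1\pm o)\gamma_\mp^{(t)}$.

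It then remains to show $\gamma_+^{(t)}=(1\pm o(\log^{-2}(d)))\gamma_-^{(t)}$, which gives $\gamma_\pm^{(t)}=(1\pm o(\log^{-2}(d)))\bgammat$. This is the main obstacle, and I would split it into two ingredients.

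The first ingredient is a counting bound, $|S^\pm|=\frac p2(1\pm O(\sqrt{\log(d)/p}))$ under $\Einit$. Since $p\gg\log^5(d)$, the relative error is $o(\log^{-2}(d))$; this is exactly where the width scaling is used.

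The second ingredient is that all $|\wspt|$ agree up to a $1\pm o(\log^{-2}(d))$ factor. At $t=\TI$ this follows from the Phase I recursion. Every neuron's $|\wsp|+\theta$ is multiplied by the same factor $(1+\eta(\frac12-\lambda))^{\TIb}$ up to $1\pm O(\eta\TI\log^{-2}(d))$, and $|\wspzero|\lesssim\theta\log^{1/2}(d)d^{-1/2}\ll\theta$. The brief Phase Ia sign flip only changes this by $O(\eta\TIa)=o(\log^{-2}(d))$. During Phase II the recurrence of \Defref{phase2_hypothesis_wsp} multiplies each $\wsp$ by a factor depending only on the global quantity $\bgammat$, up to per-step relative error $o(1)\cdot\eta e^{-\bgammat}$.

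The delicate part is showing that these per-step errors accumulate to only $o(\log^{-2}(d))$. I would bound the accumulated error by $o(1)\cdot\sum_t\eta e^{-\bgammat}$ and control that sum by a telescoping argument like the one for $G_{t_1\to t_2}$ in \Lemref{phase2_norms}. If the generic $o(1)$ in the hypothesis turns out to be too weak, I would sharpen it to $o(\log^{-2}(d))$ inside \Lemref{sp_lp}: there the error is $\epsilon\gamma+d^{-C}$, the margin concentration at the previous step is $\epsilon=o(\log^{-2}(d))$, and $\bgammat\lesssim\log\log(d)$. This closes the induction jointly with the present lemma.
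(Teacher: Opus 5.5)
Your first half (activation pattern on $\Etest$, per-neuron contribution $x_3(1\pm o(\log^{-2}(d)))(\wspt)^2$, the four cases) is exactly the paper's argument. Your counting bound on $|S^\pm|$ is also fine. The gap is in your second ingredient.

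The paper asks for much less there. It holds the magnitudes fixed, treats the i.i.d.\ signs $\sgn(\azero)$ as the only randomness in $\{S^+,S^-\}$, and applies Hoeffding. That needs only that the $(\wspt)^2$ be within a constant factor of one another, and it gives $|\gammaplust-\bgammat|\lesssim\bgammat\sqrt{\log(d)/p}=o(\bgammat\log^{-2}(d))$.

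You instead need every $|\wspt|$ to agree to within $1\pm o(\log^{-2}(d))$ throughout Phase II, and your accumulation argument cannot reach that precision. A per-step relative error $\delta$ in \Defref{phase2_hypothesis_wsp} produces an inter-neuron log-discrepancy of order $\delta\sum_t\eta e^{-\bar\gamma^{(t)}}$. The telescoping in \Lemref{phase2_norms} controls the $(\wspt)^2$-weighted sum. The unweighted sum instead telescopes to $\log\big((\wsp^{(\TII)})^2/(\wsp^{(\TI)})^2\big)\asymp\log(1/\theta)$. This is $\gtrsim\log\log(d)$ always (since $\theta\ll\log^{-5C}(d)$) and $\asymp\log(d)$ when $\theta\asymp\textnormal{poly}^{-1}(d)$. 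So $\delta=o(1)$ gives nothing, and even $\delta=o(\log^{-2}(d))$ only yields $o(\log^{-2}(d)\log(1/\theta))$. Your base case already misses as stated: $1\pm O(\eta\TI\log^{-2}(d))=1\pm O(\log\log(d)\log^{-2}(d))$ is not $1\pm o(\log^{-2}(d))$.

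Sharpening through \Lemref{sp_lp} does not close the induction either. Its error $|a||\wsp|(\epsilon\gamma+d^{-C})$ adds roughly $\eta\epsilon\bar\gamma^{(t)}$ per step to $\log|\wsp|$. Since $\bar\gamma^{(t+1)}-\bar\gamma^{(t)}\asymp\eta\bar\gamma^{(t)}e^{-\bar\gamma^{(t)}}$, we get $\sum_t\eta\bar\gamma^{(t)}\approx e^{\bar\gamma^{(\TII)}}$, which is polylogarithmic because $\bar\gamma^{(\TII)}\asymp\log\log(d)$. So margin precision $\epsilon$ comes back as roughly $\epsilon\cdot\textnormal{polylog}(d)$, not $\epsilon$.

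The missing idea, if you want to keep your route, is that these errors are common-mode rather than per-neuron. On $\Etest$ every neuron in $S^+$ is active on exactly the same points. Hence $-\partial_{w_3}L_\rho/a$ is the same number for all of $S^+$, up to the $O(d^{-C})$ contribution of $\Etest^{\textsf{C}}$; it is approximately $1-\lambda-\psi(\gammaplust)$. The same holds for $S^-$ with $\gammaminust$.

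Consequently the margin error never separates two neurons of the same class. Only genuinely per-neuron terms can, namely the mismatch $a/\wsp-1$ and the minibatch noise, and those must be shown summable over Phase II. Across classes, monotonicity of $\psi$ makes $\log(\gammaplust/\gammaminust)$ contract rather than drift.

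Bounding each error in absolute value neuron by neuron, as you propose, throws this structure away. The same observation is also what makes even the paper's weaker constant-factor comparability robust over the $\asymp\log(1/\theta)$ e-folds of growth in Phase II.
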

\begin{proof}
    Let us omit $(t)$ superscripts for clarity.
    By \Lemref{phase2_norms} we have $\norm{\bwsp}\gtrsim \theta\log^C(d)$, while \\$\norm{\bwsig},\norm{\bwopp},\norm{\bwperp}\lesssim \theta\log^{C'}(d)$ where $C'<C-3$.
    Hence, following \Lemref{end_of_phase1} we have $\sgn(\bw^\top\bx)=\sgn(\wsp x_3)$ when $\bx$ satisfies $\Etest$.
    Moreover, by definition of the event $\Etest$, we have
    \begin{equation*} 
        |\bw^\top\bx - \wsp x_3| \lesssim |\bwperp^\top\bx| < C\norm{\bwperp}\log^{1/2}(d) \lesssim \theta\log^{C-5/2}(d),
    \end{equation*}
    for all neurons $(a,\bw)$, where we used $\norm{\bw_{1:2}}\ll \norm{\bwperp}$ by \Lemref{phase2_norms}.
    In particular, for $r \in \{\pm 1\}$, applying the triangle inequality gives us
    \begin{equation*}
        \bigg|\frac{1}{p}\sum_{(a,\bw)\in S^r} \wsp (\bw^\top\bx-\wsp x_3)\bigg| \lesssim \frac{1}{p} \sum_{(a,\bw)\in S^r} |\wsp|\cdot \theta\log^{C-5/2}(d).
    \end{equation*}
    Using again the fact that $|\wsp|\gtrsim \theta\log^C(d)$, we have $|\wsp|\cdot \theta\log^{C-5/2}(d) \ll (\wsp)^2\cdot \log^{-2}(d)$.
    Hence
    \begin{equation} \label{eq:margin_ub}
        \bigg|\frac{1}{p}\sum_{(a,\bw)\in S^r} \wsp (\bw^\top\bx-\wsp x_3)\bigg| \ll \gamma_{r}\cdot\log^{-2}(d).
    \end{equation}
    
    Next, since the Phase II dynamics of $|\wsp|$ depend only on $|\wspzero|$ and the shared margins $\{\bar{\gamma}^{(s)}\}_{s\leq t}$, they are independent of $\sgn(\azero)$.
    Treating the $\{\wsp\}$ over $1\leq j \leq p$ as fixed, the only randomness is over the partition $\{S^+,S^-\}$; $\bar{\gamma}$ is deterministic and $\gamma_+,\gamma_-$ are sums of independent bounded random variables with $\E[\gamma_+],\E[\gamma_-]=\bar{\gamma}$.
    Since all $\wspt$ are within a constant factor of each other (follows by \Lemref{phase2_basecase} and \Defref{phase2_hypothesis_wsp}), by Hoeffding's inequality we have
    \begin{equation*}
        |\gamma_+ - \bar{\gamma}| \lesssim \bar{\gamma} \sqrt{\frac{\log(d)}{p}} \ll \bar{\gamma} \cdot \log^{-2}(d),
    \end{equation*}
    where we used $p\gg \log^5(d)$.
    The same holds for $\gamma_-$ such that
    \begin{align}\label{eq:gammat_concentration}
        \gamma_+,\gamma_- &= \left(1\pm o(\log^{-2}(d))\right) \cdot \bar{\gamma}.
    \end{align}
    
    Let us now consider each possible pairing of the label and spurious feature.
    \begin{enumerate}
        \item In the first case, we have $y(\bx)=x_3=1$.
        Since \Lemref{phase2_norms_sgn} implies that $\sgn(a)=\sgn(\wsp)$, we have $\sigma(\bw^\top\bx)=0$ if and only if $(a,\bw)\in S^-$.
        In particular,
        \begin{equation*}
            \gamma(\bx)=\frac{1}{p}\sum_{(a,\bw)\in S^+} a\bw^\top\bx.
        \end{equation*}
        By \Lemref{phase2_norms_a} we have $a= \left(1\pm o(\log^{-2}(d))\right)\cdot \wsp$ such that
        \begin{equation*}
             \gamma(\bx)= \left(1\pm o(\log^{-2}(d))\right) \cdot \frac{1}{p}\sum_{(a,\bw)\in S^+} \wspt \bw^\top\bx.
        \end{equation*}
        Using $x_3=1$ and \Eqref{margin_ub}, we have
        \begin{equation*}
            \gamma(\bx) = \left(1\pm o(\log^{-2}(d))\right) \cdot \left(\gamma_+ + \frac{1}{p}\sum_{(a,\bw)\in S^+} \wsp (\bw^\top\bx-\wsp x_3)\right) = \left(1\pm o(\log^{-2}(d))\right) \cdot \gamma_+.
        \end{equation*}
        Finally using \Eqref{gammat_concentration} we have $\gamma(\bx)=\left(1\pm o(\log^{-2}(d))\right) \cdot \bar{\gamma}$ as desired.
        \item In the second case, we have $y(\bx)=x_3=-1$.
        Since \Lemref{phase2_norms_sgn} implies that $\sgn(a)=\sgn(\wsp)$, we have $\sigma(\bw^\top\bx)=0$ if and only if $(\at,\bwt)\in S^+$.
        In particular,
        \begin{equation*}
            \gamma(\bx)=-\frac{1}{p}\sum_{(a,\bw)\in S^-} a \bw^\top\bx.
        \end{equation*}
        By \Lemref{phase2_norms_a} we have $a = \left(1\pm o(\log^{-2}(d))\right)\cdot \wsp$ such that
        \begin{equation*}
             \gamma(\bx)=-\left(1\pm o(\log^{-2}(d))\right) \cdot \frac{1}{p}\sum_{(a,\bw)\in S^-} \wsp \bw^\top\bx.
        \end{equation*}
        Using $x_3=-1$ and \Eqref{margin_ub}, we have
        \begin{equation*}
            \gamma(\bx) = \left(1\pm o(\log^{-2}(d))\right) \cdot \left(\gamma_- +\frac{1}{p}\sum_{(a,\bw)\in S^-} \wsp (\bw^\top\bx-\wsp x_3)\right) = \left(1\pm o(\log^{-2}(d))\right) \cdot \gamma_-.
        \end{equation*}
        Finally using \Eqref{gammat_concentration} we have $\gamma(\bx)=\left(1\pm o(\log^{-2}(d))\right) \cdot \bar{\gamma}$ as desired.
        \item In the third case, we have $y(\bx)=-x_3=1$.
        Since \Lemref{phase2_norms_sgn} implies that $\sgn(a)=\sgn(\wsp)$, we have $\sigma(\bw^\top\bx)=0$ if and only if $(a,\bw)\in S^+$.
        In particular,
        \begin{equation*}
            \gamma(\bx)=\frac{1}{p}\sum_{(a,\bw)\in S^-} a \bw^\top\bx.
        \end{equation*}
        By \Lemref{phase2_norms_a} we have $a = \left(1\pm o(\log^{-2}(d))\right)\cdot \wsp$ such that
        \begin{equation*}
             \gamma(\bx)=\left(1\pm o(\log^{-2}(d))\right) \cdot \frac{1}{p}\sum_{(a,\bw)\in S^-} \wsp \bw^\top\bx.
        \end{equation*}
        Using $x_3=-1$ and \Eqref{margin_ub}, we have
        \begin{equation*}
            \gamma(\bx) = \left(1\pm o(\log^{-2}(d))\right) \cdot \left(-\gamma_- +\frac{1}{p}\sum_{(a,\bw)\in S^-} \wsp (\bw^\top\bx-\wsp x_3)\right) = \left(-1\pm o(\log^{-2}(d))\right)\cdot \gamma_-.
        \end{equation*}
        Finally using \Eqref{gammat_concentration} we have $\gamma(\bx)=\left(-1\pm o(\log^{-2}(d))\right) \cdot \bar{\gamma}$ as desired.
        \item In the fourth case, we have $y(\bx)=-x_3=-1$.
        Since \Lemref{phase2_norms_sgn} implies that $\sgn(a)=\sgn(\wsp)$, we have $\sigma(\bw^\top\bx)=0$ if and only if $(a,\bw)\in S^-$.
        In particular,
        \begin{equation*}
            \gamma(\bx)=-\frac{1}{p}\sum_{(a,\bw)\in S^+} a \bw^\top\bx.
        \end{equation*}
        By \Lemref{phase2_norms_a} we have $a = \left(1\pm o(\log^{-2}(d))\right)\cdot \wsp$ such that
        \begin{equation*}
             \gamma(\bx)=-\left(1\pm o(\log^{-2}(d))\right) \cdot \frac{1}{p}\sum_{(a,\bw)\in S^+} \wsp \bw^\top\bx.
        \end{equation*}
        Using $x_3=1$ and \Eqref{margin_ub}, we have
        \begin{equation*}
            \gamma(\bx) = -\left(1\pm o(\log^{-2}(d))\right) \cdot \left(\gamma_+ +\frac{1}{p}\sum_{(a,\bw)\in S^+} \wsp (\bw^\top\bx-\wsp x_3)\right) = \left(-1\pm o(\log^{-2}(d))\right) \cdot \gamma_+.
        \end{equation*}
        Finally using \Eqref{gammat_concentration} we have $\gamma(\bx)=\left(-1\pm o(\log^{-2}(d))\right) \cdot \bar{\gamma}$ as desired.
    \end{enumerate}
    This completes the proof of the lemma.
\end{proof}

\subsection{Phase II Inductive Step} \label{sec:phase2_inductive_step}
In this section, we show the Phase II inductive step and ultimately obtain the Phase II result in \Propref{phase2}.
We first characterize the growth of $\bwsp$, constituting the inductive step for \Defref{phase2_hypothesis_wsp}.
\begin{lemma} \label{lem:wsp_phase2_induction}
    Suppose the Phase II scalings are satisfied (\Defref{phase2_scalings}).
    Under the event $\Etrain$ (\Defref{events_train}), if $f_\rho$ obeys the Phase II inductive hypothesis for all iterations $\TI\leq k\leq t$ (\Defref{phase2_hypothesis}), then we have
    \begin{equation*}
        \wsptplus - \wspt = \eta(1 \pm o(1)) \cdot \wspt \left(1-\lambda - \psi(\bgammat)\right)
    \end{equation*}
    for all neurons $(\atplus,\bwtplus)$.
\end{lemma}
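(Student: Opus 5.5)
We follow the template of \Lemref{wsp_phase1_induction}, but use the $L_\rho$ gradient formula of \Lemref{sp_lp} in place of the $L_0$ formula. By the definition of SGD,
\begin{equation*}
    \wsptplus-\wspt = -\frac{\eta}{\wspt}\Big(\bwsp^{(t)\top}\nabla_{\bwt}L_\rho + \bwsp^{(t)\top}\big(\nabla_{\bwt}\widehat{L}_\rho-\nabla_{\bwt}L_\rho\big)\Big).
\end{equation*}
We treat the population term and the minibatch term separately.

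\textbf{Population term.} First, \Lemref{phase2_margin} (valid under $\Etrain$, $\Etest$ and the inductive hypothesis up to $t$) verifies the hypothesis \Eqref{equal_and_opposite_sp} of \Lemref{sp_lp}, with $\gamma=\bgammat$ and $\epsilon=o(\log^{-2}(d))$. Then \Lemref{sp_lp} gives
\begin{equation*}
    -\bwsp^{(t)\top}\nabla_{\bwt}L_\rho = \at\wspt\Big((1-\lambda-\psi(\bgammat))\big(1+\tfrac{\epswt}{2}\big)+\lambda\psi(\bgammat)\epswt\Big) \pm O\big(|\at||\wspt|(\epsilon\bgammat+d^{-C})\big).
\end{equation*}
Next we bound $\epswt$ with \Lemref{v_ub}. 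By \Lemref{phase2_norms}, $\norm{\bwsp}\gtrsim\theta\log^C(d)$, $\norm{\bw_{1:2}}\lesssim\theta\log^{3C}(d)d^{-1/2}$ and $\norm{\bwperp}\lesssim\theta\log^{C'}(d)$. So the exponent in \Lemref{v_ub} is at most $-\Omega(\log^{2(C-C')}(d))$. The Berry--Esseen term is bounded by $\norm{\bwperp}_\infty/\norm{\bwperp}\lesssim\log^{5C+C'}(d)d^{-1/2}$ using \Lemref{phase2_norms_wperp_inf}. Hence $|\epswt|\lesssim \log^{O(C)}(d)d^{-1/2}$.

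\textbf{Comparing error terms to the main term.} The main factor satisfies $1-\lambda-\psi(\bgammat)\asymp e^{-\bgammat}-\lambda\gtrsim\log^{-1}(d)$, because $\bgammat\lesssim\log\log(d)$ (choosing the stopping constant suitably) and $\lambda\ll\log^{-1}(d)$, as in the proof of \Lemref{phase2_norms}. We need every error to be $o(\log^{-1}(d))$ relative to $|\at||\wspt|$:
\begin{itemize}
    \item The multiplicative error $\epswt/2$ is clearly negligible.
    \item The term $\lambda\psi\epswt$ is at most $\log^{O(C)}(d)d^{-1/2}$.
    \item The margin error is $\epsilon\bgammat=o(\log^{-2}(d))\cdot\log\log(d)=o(\log^{-1}(d))$.
    \item The $d^{-C}$ term is trivially negligible.
\end{itemize}
Finally, \Lemref{phase2_norms_a} gives $\at=(1\pm o(\log^{-2}(d)))\wspt$ up to sign, and sign alignment gives $\sgn(\at)=\sgn(\wspt)$. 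Dividing by $\wspt$, the population part of the update equals $\eta(1\pm o(1))\,|\at|\,(1-\lambda-\psi(\bgammat))$. This has the form $\eta(1\pm o(1))\wspt(1-\lambda-\psi)\cdot|\at|/|\wspt|$, and the ratio $|\at|/|\wspt|$ is $1\pm o(1)$.

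\textbf{Minibatch term (main obstacle).} It remains to show $|\partial_{\wspt}\widehat L_\rho-\partial_{\wspt}L_\rho|=o(|\at|\log^{-1}(d))$. We plan to use \Lemref{empirical_concentration_i} (Hoeffding), which under $\Etrain$ gives a deviation of order $|\at|\sqrt{d\log(d)/m}$ or better. With $m\gg d\log^6(d)\theta^{-2}$, this is $\ll|\at|\theta\log^{-5/2}(d)\ll|\at|\log^{-1}(d)$, which suffices.

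The delicate point is that the relative size of the signal shrinks to $e^{-\bgammat}\approx\log^{-1}(d)$ at the end of Phase II. Every error must therefore be checked against $\log^{-1}(d)$, not against a constant. If the concentration lemma only controls the deviation relative to $\norm{\bw}$ rather than $|\at|$, we would invoke \Lemref{phase2_norms_w} to convert between the two.
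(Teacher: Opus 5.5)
Your proposal is correct and follows the paper's proof essentially step for step: \Lemref{phase2_margin} feeds \Lemref{sp_lp} with $\gamma=\bgammat$, and every error is compared against $1-\lambda-\psi(\bgammat)\gtrsim\log^{-1}(d)$. Then $\at$ is converted to $\wspt$ via \Lemref{phase2_norms_a} and sign alignment, and the minibatch error is absorbed using \Lemref{empirical_concentration_i}; write the population update as $\eta(1\pm o(1))\,\at\,(1-\lambda-\psi(\bgammat))$ with the signed $\at$, not $|\at|$.

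Your treatment of $\epswt$ is in fact more careful than the paper's. You keep $-\norm{\bwsp}^2$ in the exponent of \Lemref{v_ub} and use $\norm{\bw_{1:2}}\ll\norm{\bwsp}$, so your bound holds for every admissible $\theta$. The paper drops that term and bounds the leftover exponent $\sqrt{2}\norm{\bw_{1:2}}\norm{\bwsp}/\norm{\bwperp}^2$ by $\log^{O(C)}(d)/(\theta\sqrt{d})$ (its estimate silently loses a factor $\theta^{-1}$), which is not small once $\theta\lesssim d^{-1/2}$, a regime the Phase II scalings permit.
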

\begin{proof}
    We first compute the order of the intermediate term $\epswt$ defined in \Lemref{sp_l0}.
    By \Lemref{v_ub}, we have
    \begin{align*}
        |\epswt| &\lesssim \frac{\norm{\bwt_{1:2}}}{\norm{\bwperpt}}\exp\left(\frac{-\norm{\bwspt}^2+2\sqrt{2}\norm{\bwt_{1:2}}\norm{\bwspt}}{2\norm{\bwperpt}^2}\right) + \frac{\norm{\bwperp}_3^3}{\norm{\bwperp}^3} \\
        &\leq \frac{\norm{\bwt_{1:2}}}{\norm{\bwperpt}}\exp\left(\frac{2\sqrt{2}\norm{\bwt_{1:2}}\norm{\bwspt}}{2\norm{\bwperpt}^2}\right) + \frac{\norm{\bwperp}_\infty}{\norm{\bwperp}},
    \end{align*}
    where we used the basic inequalities $-\norm{\bwspt}^2\leq 0$ and $\norm{\bwperpt}_3^3\leq \norm{\bwperpt}_\infty \norm{\bwperpt}^2$.
    By \Lemref{phase2_norms_wsp}, \Lemref{phase2_norms_wsig}, \Lemref{phase2_norms_wperp}, and \Lemref{phase2_norms_wperp_inf} we have $\norm{\bwspt}\lesssim (\log\log(d))^{1/2}$, $\norm{\bwt_{1:2}}\lesssim \theta\log^{3C}(d)d^{-1/2}$, $\theta\log^{-C'}(d)\lesssim \norm{\bwperpt}\lesssim \theta\log^{C'}(d)$ for $C'<C-3$, and $\norm{\bwperpt}_\infty\lesssim \theta\log^{5C}(d)d^{-1/2}$.
    Using $e^u\lesssim 1+u$ for $0<u\ll 1$,\footnote{This can be seen via the following Taylor expansion: $e^u\leq 1+u+Cu^2\lesssim 1+u$ for $0<u\ll 1$.} we have
    \begin{align}
        |\epswt| &\lesssim \log^{4C}(d)d^{-1/2}\cdot \exp(C\log^{6C}(d)d^{-1/2}) + \log^{6C}(d)d^{-1/2} \nonumber \\
        &\lesssim \log^{4C}(d)d^{-1/2}\cdot (1+\log^{6C}(d)d^{-1/2})+\log^{6C}(d)d^{-1/2} \nonumber \\
        &\lesssim \log^{10C}(d)d^{-1/2} \nonumber \\
        &= o(1). \label{eq:wsp_phase2_eps}
    \end{align}
    By the definition of gradient descent, we have
    \begin{align}
        \wsptplus &= \wspt - \eta\partial_{\wspt}\widehat{L}_\rho \nonumber\\
        &= \wspt - \eta \frac{\bwsp^{(t)\top}\nabla_{\bwt} \widehat{L}_\rho}{\wspt} \nonumber\\
        &= \wspt - \eta \frac{\bwsp^{(t)\top}\nabla_{\bwt} L_\rho + \bwsp^{(t)\top}(\nabla_{\bwt}\widehat{L}_\rho - \nabla_{\bwt} L_\rho)}{\wspt}. \label{eq:wsp_phase2_proj}
    \end{align}
    By \Lemref{phase2_margin}, the assumption of \Lemref{sp_lp} is satisfied with $\epsilon=\log^{-2}(d)$ and $\gamma=\bgammat$.
    Therefore, we have
    \begin{multline*}
        \Big| -\bwsp^{(t)\top} \nabla_{\bwt} L_\rho - \at\wspt \left((1-\lambda-\psi(\bgammat))\cdot\left( 1+\frac{\epswt}{2}\right) + \lambda\psi(\bgammat)\epswt \right) \Big| \\ \lesssim |\at||\wspt|(\log^{-2}(d)\bgammat+d^{-C}).
    \end{multline*}
    We will now use the above to show that 
    \begin{align}\label{eq:wsp_phase2_lrhofinal}
        -\bwsp^{(t)\top} \nabla_{\bwt} L_\rho = \at \wspt (1 - \lambda - \psi(\bgammat))(1 \pm o(1)).
    \end{align}
    To do this, it suffices to show that three terms are lower-order in the following sense:
    \begin{align*}
        |\epswt| &= o(1) \\
        \lambda \psi(\bgammat) |\epswt| &\ll 1 - \lambda - \psi(\bgammat) \\
        \log^{-2}(d) \bgammat + d^{-C} &\ll 1 - \lambda - \psi(\bgammat).
    \end{align*}
    The first relation above follows directly from~\Eqref{wsp_phase2_eps}.
    To show the second relation, we use the fact that $1-\psi(z)\asymp e^{-z}$ for $z\geq 0$ with $\bgammat\lesssim \log\log(d)$ by \Lemref{phase2_norms_margin} and $\lambda \ll \log^{-1}(d)$. From this, we have
    \begin{equation*}
        1-\lambda-\psi(\bgammat)\asymp e^{-\bgammat} - \lambda \gtrsim \log^{-1}(d).
    \end{equation*}
    On the other hand, $\lambda \psi(\bgammat) |\epswt| \ll \log^{-1}(d) \log^{10C}(d) d^{-1/2} \ll \log^{-1}(d)$; therefore, the second relation holds.
    Finally, to show the third relation, note that $\log^{-2}(d)\bgammat+d^{-C}\ll \log^{-1}(d)$.
    Putting these together directly yields~\Eqref{wsp_phase2_lrhofinal}. 
    Further, plugging in the expression in \Lemref{phase2_norms_a} for $\at$ yields
     \begin{equation*}
        -\bwsp^{(t)\top}\nabla_{\bwt} L_\rho = \at\wspt\cdot (1 \pm o(1))\cdot (1-\lambda-\psi(\bgammat)) = (\wspt)^2 \cdot (1 \pm o(1))\cdot (1-\lambda-\psi(\bgammat))
    \end{equation*}
    where we also used $\sgn(\at)=\sgn(\wspt)$ by \Lemref{phase2_norms_sgn}.
    Next, applying \Lemref{empirical_concentration_i} with $m\gg d\log^{6}(d)\theta^{-2}$ and $i=3$ we have
    \begin{equation*}
        |\partial_{\wspt} \widehat{L}_\rho - \partial_{\wspt} L_\rho| \ll |\at|\theta\log^{-2}(d)d^{-1/2}\lesssim \theta\log^{-1}(d)d^{-1/2},
    \end{equation*}
    where we used $|\at|\lesssim (\log\log(d))^{1/2}$ by \Lemref{phase2_norms_a}.
    Hence, we have
    \begin{equation*}
       |\bwsp^{(t)\top}(\nabla_{\bwt} \widehat{L}_\rho-\nabla_{\bwt} L_\rho)| \lesssim |\wspt| \theta\log^{-1}(d)d^{-1/2}.
    \end{equation*}
    We now justify that this term is lower-order compared to $(\wspt)^2(1-\lambda-\psi(\bgammat))$.
    By \Lemref{phase2_norms_wsp} we have that $|\wspt|\gtrsim \theta\log^C(d)$.
    Using $(1-\lambda-\psi(\bgammat))\gtrsim \log^{-1}(d)$ again, we compare $\theta\log^{C-1}(d)$ to $\theta\log^{-1}(d)d^{-1/2}$, and the latter is clearly lower-order.
    Returning to \Eqref{wsp_phase2_proj}, we have
    \begin{equation*}
        \wsptplus - \wspt = \eta \frac{(\wspt)^2\cdot (1 \pm o(1))\cdot (1-\lambda-\psi(\bgammat))}{\wspt} = \eta(1 \pm o(1))\cdot \wspt(1-\lambda-\psi(\bgammat)).
    \end{equation*}
    This completes the proof of the lemma.
\end{proof}

We will now upper bound the growth of $\bwsig$ and $\bwopp$, constituting the inductive step for \Defref{phase2_hypothesis_wsig} and \Defref{phase2_hypothesis_wopp}.
\begin{lemma} \label{lem:sigopp_phase2_induction}
    Suppose the Phase II scalings are satisfied (\Defref{phase2_scalings}).
    Under the event $\Etrain$ (\Defref{events_train}), if $f_\rho$ obeys the Phase II inductive hypothesis for all iterations $\TI\leq k\leq t$ (\Defref{phase2_hypothesis}), then for any constant $c<1$ we have
    \begin{align*}
        \norm{\bwsigtplus - \bwsigt} &\lesssim \eta e^{-c\bgammat}\Big(\max_{(a,\bw)}(\wspt)^2\norm{\bwsigt}\Big) + \eta \theta\log^{-1}(d)d^{-1/2} \\
        \norm{\bwopptplus - \bwoppt} &\lesssim \eta e^{-c\bgammat}\Big(\max_{(a,\bw)}(\wspt)^2\norm{\bwoppt}\Big) + \eta \theta\log^{-1}(d)d^{-1/2}
    \end{align*}
    for all neurons $(\atplus,\bwtplus)$.
\end{lemma}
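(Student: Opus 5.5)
We follow the template of \Lemref{wsig_phase1_induction}, replacing the $L_0$ gradient by the $L_\rho$ gradient and using the margin structure of Phase II. First we write the SGD update projected onto the signal direction:
\begin{equation*}
\bwsigtplus-\bwsigt = -\eta\,\frac{\bwsig^{(t)\top}\nabla_{\bwt}L_\rho+\bwsig^{(t)\top}(\nabla_{\bwt}\widehat{L}_\rho-\nabla_{\bwt}L_\rho)}{\norm{\bwsigt}^2}\,\bwsigt .
\end{equation*}
It then suffices to bound the two numerator terms by $\norm{\bwsigt}$ times the claimed right-hand side divided by $\eta$.

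\textbf{Population term.} By \Lemref{phase2_margin}, the equal-and-opposite margin assumption of \Lemref{sigopp_lp} holds with $\gamma=\bgammat$. The norm-domination hypothesis $\frac1C\norm{\bwsp}\ge\norm{\bwsig}+\norm{\bwopp}+\norm{\bwperp}\log^{1/2}(d)$ follows from \Lemref{phase2_norms}, since $\norm{\bwsp}\gtrsim\theta\log^C(d)$ while the other components are $\lesssim\theta\log^{C'}(d)$ with $C'<C-3$. \Lemref{sigopp_lp} then gives
\begin{equation*}
|\bwsig^{(t)\top}\nabla L_\rho|\lesssim|\at|\norm{\bwsigt}\Big(e^{-c\bgammat}\max_{(a,\bw)}|a|\norm{\bwsig}+P+d^{-C}\Big),
\end{equation*}
where $P=\P_{\bxi}(|\bw^\top\bxi+\wsp|\le\sqrt2\norm{\bwsig})$. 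By \Lemref{phase2_norms_a} we have $|\at|\asymp|\wspt|$, so the first term is at most $e^{-c\bgammat}\max(\wspt)^2\norm{\bwsigt}\cdot\norm{\bwsigt}$, which is the desired geometric piece. For $P$, there are two routes. Route (a) applies \Lemref{boolean_to_gaussian_delta} with $\mu=-\wsp$ and $k=\sqrt2\norm{\bwsig}$, giving
\begin{equation*}
P\lesssim\frac{\norm{\bwsig}}{\norm{\bwperp}}\exp\!\Big(-\frac{\wsp^2}{C\norm{\bwperp}^2}\Big)+d^{-1/2}.
\end{equation*}
The $d^{-1/2}$ term is too large for our purposes, so we prefer route (b): Hoeffding on $\bwperp^\top\bxi$, as in the sketch. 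This gives
\begin{equation*}
P\le 2\exp\!\Big(-\frac{(|\wsp|-\sqrt2\norm{\bwsig})^2}{2\norm{\bwperp}^2}\Big)\le\exp(-c\log^{6}(d))\le d^{-C},
\end{equation*}
using $|\wsp|/\norm{\bwperp}\gtrsim\log^{C-C'}(d)\ge\log^{3}(d)$. Hence $|\at|\norm{\bwsigt}(P+d^{-C})\lesssim\norm{\bwsigt}\,\theta\log^{-1}(d)d^{-1/2}$, because $|\at|\lesssim(\log\log d)^{1/2}$ and $d^{-C}\ll\theta\log^{-2}(d)d^{-1/2}$ by $\theta\gg d^{-C/2}$.

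\textbf{Minibatch term.} By Cauchy--Schwarz and \Lemref{empirical_concentration_i} with $m\gg d\log^6(d)\theta^{-2}$ (coordinates $i<3$), we get
\begin{equation*}
|\bwsig^{(t)\top}(\nabla\widehat L_\rho-\nabla L_\rho)|\lesssim\norm{\bwsigt}\,|\at|\,\theta\log^{-2}(d)d^{-1/2}\lesssim\norm{\bwsigt}\,\theta\log^{-1}(d)d^{-1/2},
\end{equation*}
exactly as in \Lemref{wsp_phase2_induction}. Dividing by $\norm{\bwsigt}$ and combining the two terms gives the claim for $\bwsig$. The $\bwopp$ bound is identical, using the second inequality of \Lemref{sigopp_lp}.

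\textbf{Main obstacle.} The delicate step is controlling $P$. We must make sure the exponential tail beats $d^{-1/2}$ uniformly over all $t\le\TII$, including at the start of Phase II where $|\wsp|\asymp\theta\log^C(d)$ is only polylogarithmically larger than $\norm{\bwperp}$. This requires the ratio $\log^{C-C'}(d)$, with $C-C'>3$, to give $\exp(-\Omega(\log^{6}d))$, and requires Hoeffding rather than Berry--Esseen so that no $d^{-1/2}$ additive error appears.
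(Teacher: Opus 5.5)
Your proposal is correct and follows the paper's proof essentially step for step. You project the SGD update onto the signal direction, invoke Lemma~\ref{lem:sigopp_lp} via Lemma~\ref{lem:phase2_margin}, and kill the probability term with Hoeffding (rather than Berry--Esseen, whose $d^{-1/2}$ term would be fatal), giving $\exp(-\Omega(\log^{6} d))$, which is negligible since $\theta\gg d^{-C/2}$. You then control the minibatch error with Lemma~\ref{lem:empirical_concentration_i} and $|a^{(t)}|\lesssim(\log\log d)^{1/2}$. Your explicit check of the norm-domination hypothesis of Lemma~\ref{lem:sigopp_lp} is a detail the paper leaves implicit.
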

\begin{proof}
    By the definition of gradient descent, we have
    \begin{align}
        \bwsigtplus &= \bwsigt - \eta\nabla_{\bwsigt}\widehat{L}_\rho \nonumber\\
        &= \bwsigt - \eta \frac{\bwsig^{(t)\top}\nabla_{\bwt} \widehat{L}_\rho}{\norm{\bwsigt}^2} \bwsigt \nonumber\\
        &= \bwsigt - \eta \frac{\bwsig^{(t)\top}\nabla_{\bwt} L_\rho + \bwsig^{(t)\top}(\nabla_{\bwt} \widehat{L}_\rho-\nabla_{\bwt} L_\rho)}{\norm{\bwsigt}^2} \bwsigt . \label{eq:wsig_phase2_proj}
    \end{align}
    By \Lemref{phase2_margin}, the assumption of \Lemref{sigopp_lp} is satisfied with $\gamma=\bgammat$.
    Hence, we have
    \begin{equation*}
        -\bwsig^{(t)\top}\nabla_{\bw}L_\rho \lesssim |\at|\norm{\bwsigt}\Big(e^{-c\bgammat} \Big(\max_{(a,\bw)}|\at|\norm{\bwsigt}\Big) +\P_{\bxi}\left(|\bw^{(t)\top}\bxi+\bw^{(t)\top}\be_3| \leq \sqrt{2}\norm{\bwsigt}\right) + d^{-C}\Big).
    \end{equation*}
    For the second term, by Hoeffding's inequality we have
    \begin{equation*}
        \P_{\bxi}\left(|\bw^{(t)\top}\bxi+\bw^{(t)\top}\be_3| \leq \sqrt{2}\norm{\bwsigt} \right) \lesssim \exp\left( \frac{-(\wspt)^2}{C\norm{\bwperpt}^2} \right) \lesssim \exp\left(\frac{-\log^{6}(d)}{C}\right),
    \end{equation*}
    where we used $|\wspt|\gtrsim \theta\log^C(d)$ by \Lemref{phase2_norms_wsp} and $\norm{\bwperpt}\lesssim \theta\log^{C'}(d)$ for $C'<C-3$ by \Lemref{phase2_norms_wperp}.
    Moreover, applying \Lemref{empirical_concentration_i} with $m\gg d\log^{6}(d)\theta^{-2}$ we have that for any $i$,
    \begin{equation*}
        |\partial_{\wit} \widehat{L}_\rho - \partial_{\wit} L_\rho| \ll |\at|\theta \log^{-2}(d)d^{-1/2}\lesssim \theta \log^{-1}(d)d^{-1/2},
    \end{equation*}
    where we used $|\at|\lesssim (\log\log(d))^{1/2}$ by \Lemref{phase2_norms_a}.
    By the Cauchy-Schwarz inequality, we obtain
    \begin{equation*}
       |\bwsig^{(t)\top}(\nabla_{\bwt} \widehat{L}_\rho-\nabla_{\bwt} L_\rho)| \lesssim \norm{\bwsigt} \theta \log^{-1}(d)d^{-1/2}.
    \end{equation*}
    The $\exp\left(\frac{-\log^{6}(d)}{C}\right)$ and $d^{-C}$ terms are both lower-order compared to $\theta\log^{-1}(d)d^{-1/2}$ as $\theta\gg d^{-C/2}$.
    Returning to \Eqref{wsig_phase2_proj}, and using $|\at|\asymp|\wspt|$ by \Lemref{phase2_norms_a}, we have
    \begin{align*}
        \norm{\bwsigtplus - \bwsigt} &\lesssim \eta \frac{\norm{\bwsigt}(e^{-c\bgammat}\Big(\max_{(a,\bw)}(\wspt)^2\norm{\bwsigt}\Big) + \theta \log^{-1}(d)d^{-1/2})}{\norm{\bwsigt}} \\
        &\lesssim \eta e^{-c\bgammat}\Big(\max_{(a,\bw)}(\wspt)^2\norm{\bwsigt}\Big) + \eta \theta\log^{-1}(d)d^{-1/2}.
    \end{align*}
    The result for $\bwopp$ follows in a similar way.
    This completes the proof of the lemma.
\end{proof}

Next, we can upper bound the growth of $\norm{\bwperp}$, constituting the inductive step for \Defref{phase2_hypothesis_wperp}.
\begin{lemma} \label{lem:wperp_phase2_induction}
    Suppose the Phase II scalings are satisfied (\Defref{phase2_scalings}).
    Under the event $\Etrain$ (\Defref{events_train}), if $f_\rho$ obeys the Phase II inductive hypothesis for all iterations $\TI\leq k\leq t$ (\Defref{phase2_hypothesis}), then for any constant $c<1$ we have
    \begin{align*}
        \norm{\bwperptplus - \bwperpt} &\lesssim \eta e^{-c\bgammat}(\wspt)^2\norm{\bwperpt}+ \eta \theta \log^{-1}(d)
    \end{align*}
    for all neurons $(\atplus,\bwtplus)$.
\end{lemma}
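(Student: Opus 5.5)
We follow the template of \Lemref{sigopp_phase2_induction}, replacing the scalar projection by the full vector update. The SGD step gives
\begin{equation*}
    \bwperptplus - \bwperpt = -\eta\nabla_{\bwperpt} L_\rho - \eta\big(\nabla_{\bwperpt}\widehat{L}_\rho - \nabla_{\bwperpt} L_\rho\big),
\end{equation*}
and we bound the two pieces separately via the triangle inequality. Note that, unlike Phase I, we will not split the population term into components parallel and orthogonal to $\bwperpt$. Instead, we bound its whole norm at once using \Lemref{wperp_lp_2}.

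\textbf{Population term.} By \Lemref{phase2_margin}, the equal-and-opposite margin hypothesis \Eqref{equal_and_opposite_wperp_2} holds with $\gamma=\bgammat$. The norm condition $\frac{1}{C}\norm{\bwsp}\geq \norm{\bwsig}+\norm{\bwopp}+\norm{\bwperp}\log^{1/2}(d)$ follows from \Lemref{phase2_norms}, since $\norm{\bwspt}\gtrsim\theta\log^C(d)$ while the remaining components are $\lesssim\theta\log^{C'}(d)$ with $C'<C-3$. \Lemref{wperp_lp_2} therefore gives
\begin{equation*}
    \norm{\nabla_{\bwperpt}L_\rho}\lesssim |\at|\Big(e^{-c\bgammat}|\at|\norm{\bwperpt}+\sqrt{\P_{\bxi}\big(|\bw^{(t)\top}\bxi|\geq|\bw^{(t)\top}\bz+\bw^{(t)\top}\bs|\big)}+d^{1/2-C}\Big).
\end{equation*}
We bound each piece in turn.
\begin{itemize}
    \item \emph{First piece.} By \Lemref{phase2_norms_a}, $|\at|\asymp|\wspt|$, so $|\at|^2 e^{-c\bgammat}\norm{\bwperpt}\lesssim e^{-c\bgammat}(\wspt)^2\norm{\bwperpt}$. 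This is exactly the first term of the claimed bound.
    \item \emph{Probability piece.} We have $|\bw^\top\bz+\bw^\top\bs|\geq|\wspt|-\sqrt{2}\norm{\bw_{1:2}}\gtrsim|\wspt|$. Hoeffding's inequality on $\bwperp^\top\bxi$ then bounds the probability by $\exp(-(\wspt)^2/(C\norm{\bwperpt}^2))\lesssim\exp(-\log^6(d)/C)$, exactly as in \Lemref{sigopp_phase2_induction}. Its square root is still superpolynomially small. Multiplied by $|\at|\lesssim(\log\log d)^{1/2}$, it is $\ll\theta\log^{-1}(d)$ because $\theta\gg d^{-C/2}$.
    \item \emph{Residual piece.} The term $|\at|d^{1/2-C}$ is also $\ll\theta\log^{-1}(d)$, choosing the constant in $\Etest$ (\Lemref{events}) large enough relative to the exponent in $\theta\gg d^{-C/2}$.
\end{itemize}

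\textbf{Empirical term.} We apply \Lemref{empirical_concentration_i} with $m\gg d\log^6(d)\theta^{-2}$ coordinatewise. This gives $|\partial_{\wit}\widehat L_\rho-\partial_{\wit}L_\rho|\ll|\at|\theta\log^{-2}(d)d^{-1/2}\lesssim\theta\log^{-1}(d)d^{-1/2}$ for each $i>3$. Converting to $\ell_2$ via $\norm{\cdot}\leq\sqrt d\norm{\cdot}_\infty$ yields $\norm{\nabla_{\bwperpt}\widehat L_\rho-\nabla_{\bwperpt}L_\rho}\lesssim\theta\log^{-1}(d)$, which is the additive term of the claim. Multiplying everything by $\eta$ concludes the proof.

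\textbf{Main obstacle.} The delicate points are the $\sqrt d$ loss when passing from coordinates to the $\ell_2$ norm, and the need for the probability term to be tiny even after taking a square root. The first is exactly why the additive term here is $\theta\log^{-1}(d)$ without a $d^{-1/2}$ factor. For the second, we rely on the separation $|\wspt|/\norm{\bwperpt}\gtrsim\log^{3}(d)$ from \Lemref{phase2_norms}. If the concentration lemma is only stated per coordinate with a union bound, we would check that its failure probability is still absorbed into $\Etrain$.
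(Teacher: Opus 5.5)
Your proposal is correct and follows essentially the same route as the paper: the triangle-inequality split of the SGD step, \Lemref{wperp_lp_2} justified by the margin concentration of \Lemref{phase2_margin}, Hoeffding for the probability term, and minibatch concentration for the additive $\eta\theta\log^{-1}(d)$ term. The only cosmetic difference is that the paper invokes the $\ell_2$ bound \Lemref{empirical_concentration_ii} directly, whereas you rederive it from the coordinatewise bound via the $\sqrt{d}$ conversion, which is exactly how that lemma is obtained.
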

\begin{proof}
    By the definition of gradient descent, we have
    \begin{equation*}
        \bwperptplus = \bwperpt - \eta \nabla_{\bwperpt} \widehat{L}_\rho,
    \end{equation*}
    hence by the triangle inequality,
    \begin{equation} \label{eq:wperp_phase2}
        \norm{\bwperptplus - \bwperpt} \leq \eta \norm{\nabla_{\bwperpt} L_\rho} + \eta\norm{\nabla_{\bwperpt} \widehat{L}_\rho - \nabla_{\bwperpt} L_\rho}.
    \end{equation}
    By \Lemref{phase2_margin}, the assumption of \Lemref{wperp_lp_2} is satisfied with $\gamma=\bgammat$. %
    Hence,
    \begin{equation*}
        \frac{1}{|\at|}\norm{\nabla_{\bwperp} L_\rho} \lesssim e^{-c\bgammat}|\at|\norm{\bwperpt}+\sqrt{\P_{\bxi}\left(|\bw^\top\bxi|\geq|\bw^\top\bz+\bw^\top\bs|\right)} + d^{1/2-C}.
    \end{equation*}
    For the second term, by Hoeffding's inequality we have
    \begin{equation*}
        \P_{\bxi}\left(|\bw^\top\bxi|\geq|\bw^\top\bz+\bw^\top\bs| \right)\lesssim \exp\left( \frac{-(\wspt)^2}{C\norm{\bwperpt}^2} \right) \lesssim \exp\left(\frac{-\log^{6}(d)}{C}\right),
    \end{equation*}
    where we used $|\wspt|\gtrsim \theta\log^C(d)$ by \Lemref{phase2_norms_wsp} and $\norm{\bwperpt}\lesssim \theta\log^{C'}(d)$ for $C'<C-3$ by \Lemref{phase2_norms_wperp}.
    Moreover, applying \Lemref{empirical_concentration_ii} with $m\gg d\log^{6}(d)\theta^{-2}$ we have
    \begin{equation*}
        \norm{\nabla_{\bwperpt} \widehat{L}_\rho - \nabla_{\bwperpt} L_\rho} \lesssim |\at| \theta\log^{-2}(d)\lesssim \theta\log^{-1}(d),
    \end{equation*}
    where we used $|\at|\lesssim (\log\log(d))^{1/2}$ by \Lemref{phase2_norms_a}.
    The $\exp\left(\frac{-\log^{6}(d)}{C}\right)$ and $d^{1/2-C}$ terms are both lower-order compared to $\theta\log^{-1}(d)$ as $\theta\gg d^{-C/2}$.
    Returning to \Eqref{wperp_phase2}, and using $|\at|\asymp|\wspt|$ by \Lemref{phase2_norms_a}, we have
    \begin{equation*}
        \norm{\bwperptplus - \bwperpt} \lesssim \eta e^{-c\bgammat}(\wspt)^2\norm{\bwperpt}+ \eta\theta \log^{-1}(d).
    \end{equation*}
    This completes the proof of the lemma.
\end{proof}

Finally, we can upper bound the growth of $\norm{\bwperp}_\infty$, constituting the inductive step for \Defref{phase2_hypothesis_wperp_inf}.
\begin{lemma} \label{lem:wi_phase2_induction}
    Suppose the Phase II scalings are satisfied (\Defref{phase2_scalings}).
    Under the event $\Etrain$ (\Defref{events_train}), if $f_\rho$ obeys the Phase II inductive hypothesis for all iterations $\TI\leq k\leq t$ (\Defref{phase2_hypothesis}), then for a constant $c<1$ we have
    \begin{align*}
        \norm{\bwperptplus - \bwperpt}_\infty &\lesssim \eta e^{-c\bgammat}\Big(\max_{(a,\bw)}(\wspt)^2\norm{\bwperpt}_\infty\Big) + \eta \theta\log^{-1}(d)d^{-1/2}
    \end{align*}
    for all neurons $(\atplus, \bwtplus)$.
\end{lemma}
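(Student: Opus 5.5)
The plan follows the template of \Lemref{wperp_inf_phase1_induction} and \Lemref{sigopp_phase2_induction}. Fix a coordinate $i>3$ and write the update as $\witplus-\wit=-\eta\,\partial_{\wit}L_\rho-\eta(\partial_{\wit}\widehat{L}_\rho-\partial_{\wit}L_\rho)$. The goal is to bound the two pieces separately and then take the maximum over $i$.

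\textbf{Population term.} By \Lemref{phase2_margin}, the hypothesis of \Lemref{wi_lp} holds with $\gamma=\bgammat$. The norm-domination condition $\frac{1}{C}\norm{\bwsp}\geq\norm{\bwsig}+\norm{\bwopp}+\norm{\bwperp}\log^{1/2}(d)$ follows from \Lemref{phase2_norms}, since $\norm{\bwspt}\gtrsim\theta\log^C(d)$ while the other components are $\lesssim\theta\log^{C'}(d)$ with $C'<C-3$. Dividing the bound of \Lemref{wi_lp} by $|\wit|$ gives
\begin{equation*}
|\partial_{\wit}L_\rho|\lesssim|\at|\Big(e^{-c\bgammat}\max_{(a,\bw)}|a||w_i|+d^{-C}+\textstyle\sum_{k=1}^4 P_k\Big),
\end{equation*}
where each $P_k$ has the form $\P_{\bxi}(|\wit|\geq|\pm\sqrt{2}\norm{\bw_{\cdot}}\pm\wspt+\bw^{(t)\top}\bxi_{\setminus i}|)$.

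We bound each $P_k$ with Hoeffding's inequality, exactly as in \Lemref{sigopp_phase2_induction}. Since $|\wit|\leq\norm{\bwperpt}_\infty\ll|\wspt|$ and $\norm{\bw_{1:2}}\ll|\wspt|$, the event requires $|\bw^{(t)\top}\bxi_{\setminus i}|\gtrsim|\wspt|$. Hence
\begin{equation*}
P_k\lesssim\exp\big(-(\wspt)^2/(C\norm{\bwperpt}^2)\big)\leq\exp(-\log^6(d)/C).
\end{equation*}

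For the first term we use $|a|\asymp|\wsp|$ from \Lemref{phase2_norms_a}. This gives $|\at|\max|a||w_i|\lesssim\max(\wspt)^2\norm{\bwperpt}_\infty$ up to constants, since every $|\wspt|$ is within a constant factor of every other, as used in \Lemref{phase2_margin}. The leftover terms are $|\at|\,(d^{-C}+e^{-\log^6(d)/C})$. With $|\at|\lesssim(\log\log(d))^{1/2}$ and $\theta\gg d^{-C/2}$, these are $\ll\theta\log^{-1}(d)d^{-1/2}$.

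\textbf{Minibatch term.} By \Lemref{empirical_concentration_i} with $m\gg d\log^6(d)\theta^{-2}$, we get $|\partial_{\wit}\widehat{L}_\rho-\partial_{\wit}L_\rho|\ll|\at|\theta\log^{-2}(d)d^{-1/2}\lesssim\theta\log^{-1}(d)d^{-1/2}$. Combining both pieces and taking the maximum over $i>3$ yields the claim.

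\textbf{Main obstacle.} The delicate step is the population bound. \Lemref{wi_lp} only bounds the signed quantity $-w_i\partial_{w_i}L_\rho$ from above, and a one-sided bound does not by itself control $|\partial_{w_i}L_\rho|$. I would therefore re-run the argument inside the proof of \Lemref{wi_lp} with absolute values, bounding $|\E[\cdots]|$ directly via $|ac-bd|\leq|a-b|+|c-d|$; that argument already produces a two-sided bound. A secondary check is that the factor $\max|a||w_i|$, taken over neurons at the same coordinate $i$, is dominated by $\max(\wsp)^2\norm{\bwperp}_\infty$; this is where the Phase II near-uniformity of $|\wsp|$ across neurons is used.
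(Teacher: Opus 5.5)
Your proposal is correct and follows essentially the same route as the paper: apply \Lemref{wi_lp} via \Lemref{phase2_margin}, bound the four interval probabilities by Hoeffding using $|\wspt|\gtrsim\theta\log^C(d)$ against $\norm{\bwperpt}\lesssim\theta\log^{C'}(d)$, absorb the $d^{-C}$ and $e^{-\log^6(d)/C}$ remainders into $\eta\theta\log^{-1}(d)d^{-1/2}$ using $\theta\gg d^{-C/2}$, control the minibatch error with \Lemref{empirical_concentration_i}, and take the maximum over $i>3$. You also flag two steps the paper passes over silently, and your justifications for both are sound. The first is that the proof of \Lemref{wi_lp} in fact gives a two-sided bound. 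The second is that $|\at|\max_{(a,\bw)}|a||w_i|$ is dominated by $\max_{(a,\bw)}(\wspt)^2\norm{\bwperpt}_\infty$ because $|\wsp|$ is nearly uniform across neurons.
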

\begin{proof}
    Let $i>3$; we will examine the growth of $|w_i|$ to obtain the desired bound.
    By the definition of gradient descent, we have
    \begin{align}
        \witplus &= \wit - \eta\partial_{\wit}\widehat{L}_\rho \nonumber\\
        &= \wit - \eta \frac{\wit\partial_{\wit} \widehat{L}_\rho}{\wit} \nonumber\\
        &= \wit - \eta \frac{\wit\partial_{\wit} L_\rho + \wit(\partial_{\wit} \widehat{L}_\rho-\partial_{\wit} L_\rho)}{\wit}. \label{eq:wi_phase2_proj}
    \end{align}
    By \Lemref{phase2_margin}, the assumption of \Lemref{wi_lp} is satisfied with $\gamma=\bgammat$. %
    Hence we have
    \begin{align*}
        -w_i\partial_{w_i} L_\rho &\lesssim |\at||\wit|\Big(e^{-c\bgammat}\Big(\max_{(a,\bw)}|\at||\wit|\Big) + d^{-C} + \Big( \nonumber \\
        &\qquad \P_{\bxi}\left(|\wit|\geq |\sqrt{2}\norm{\bwsigt}+\wspt+\bw^{(t)\top}\bxi_{\setminus i}|\right)+\P_{\bxi}\left(|\wit|\geq |\sqrt{2}\norm{\bwsigt}-\wspt+\bw^{(t)\top}\bxi_{\setminus i}|\right) \nonumber \\
        &\qquad +\P_{\bxi}\left(|\wit|\geq |\sqrt{2}\norm{\bwoppt}+\wspt+\bw^{(t)\top}\bxi_{\setminus i}|\right)+\P_{\bxi}\left(|\wit|\geq |\sqrt{2}\norm{\bwoppt}-\wspt+\bw^{(t)\top}\bxi_{\setminus i}|\right) \Big)\Big).
    \end{align*}
    For the last term, let us rewrite the probabilities in interval notation.
    Define $X\coloneqq \bw^{(t)\top}\bxi_{\setminus i}$ and
    \begin{align*}
        I_1 &\coloneqq \left[-|\wit|-\sqrt{2}\norm{\bwsigt}-\wspt, |\wit|-\sqrt{2}\norm{\bwsigt}-\wspt \right] \\
        I_2 &\coloneqq \left[-|\wit|-\sqrt{2}\norm{\bwsigt}+\wspt, |\wit|-\sqrt{2}\norm{\bwsigt}+\wspt \right] \\
        I_3 &\coloneqq \left[-|\wit|-\sqrt{2}\norm{\bwoppt}-\wspt, |\wit|-\sqrt{2}\norm{\bwoppt}-\wspt \right] \\
        I_4 &\coloneqq \left[-|\wit|-\sqrt{2}\norm{\bwoppt}+\wspt, |\wit|-\sqrt{2}\norm{\bwoppt}+\wspt \right].
    \end{align*}
    Then, we can write the last term as
    \begin{equation*}
        \P_{\bxi}(X\in I_1)+\P_{\bxi}(X\in I_2)+\P_{\bxi}(X\in I_3)+\P_{\bxi}(X\in I_4).
    \end{equation*}
    Let us upper bound $\P_{\bxi}(X\in I_1)$, and the other terms will follow similarly.
    Note that $I_1$ is on the negative real line, as the dominating term is $-\wspt$.
    Thus by Hoeffding's inequality, we have
    \begin{equation*}
        \P_{\bxi}(X\in I_1) = \P_{\bxi}\left(\bw^{(t)\top}\bxi_{\setminus i}\leq -(\wspt-\sqrt{2}\norm{\bwsigt}-|\wit|)\right)\lesssim \exp\left( \frac{-(\wspt)^2}{C\norm{\bwt_{\perp\setminus i}}^2} \right). 
    \end{equation*}
    Then, using $|\wspt|\gtrsim \theta\log^C(d)$ by \Lemref{phase2_norms_wsp} and $\norm{\bwt_{\perp\setminus i}}\lesssim \theta\log^{C'}(d)$ for $C'<C-3$ by \Lemref{phase2_norms_wperp},\footnote{While \Lemref{phase2_norms_wperp} is stated for $\bwperp$, it is straightforward to see it holds for $\bw_{\perp\setminus i}$.} we have
    \begin{equation*}
        P_{\bxi}(X\in I_1) \lesssim \exp\left(\frac{-\log^{6}(d)}{C}\right).
    \end{equation*}
    Applying this result to each of the four intervals, and using $|\at|\asymp |\wspt|\lesssim (\log\log(d))^{1/2}$ by \Lemref{phase2_norms_a}, we obtain
    \begin{equation*}
        -\wit \partial_{\wit} L_\rho \lesssim \Bigg(e^{-c\bgammat}\Big(\max_{(a,\bw)}(\wspt)^2|\wit|\Big)
        + (\log\log(d))^{1/2}\left(\exp\left(\frac{-\log^{6}(d)}{C}\right) + d^{-C} \right) \Bigg) |\wit|.
    \end{equation*}
    Moreover, applying \Lemref{empirical_concentration_i} with $m\gg d\log^{6}(d)\theta^{-2}$ we have
    \begin{equation*}
        |\partial_{\wit} \widehat{L}_\rho - \partial_{\wit} L_\rho| \ll |\at|\theta \log^{-2}(d)d^{-1/2}\lesssim \theta\log^{-1}(d)d^{-1/2},
    \end{equation*}
    such that
    \begin{equation*}
        \wit (\partial_{\wit} \widehat{L}_\rho - \partial_{\wit} L_\rho) \lesssim |\wit|\theta\log^{-1}(d)d^{-1/2}.
    \end{equation*}
    The $\exp\left(\frac{-\log^{2}(d)}{C}\right)$ and $d^{-C}$ terms are both lower-order compared to $\theta\log^{-1}(d)d^{-1/2}$ as $\theta\gg d^{-C/2}$.
    Returning to \Eqref{wi_phase2_proj}, we have
    \begin{align*}
        |\witplus - \wit| &\lesssim \eta \frac{|\wit|\Big(e^{-c\bgammat}\Big(\max_{(a,\bw)}(\wspt)^2|\wit| \Big) + \theta\log^{-1}(d)d^{-1/2}\Big)}{\wit} \\
        &\lesssim \eta e^{-c\bgammat}\Big(\max_{(a,\bw)}(\wspt)^2|\wit|\Big) + \eta\theta\log^{-1}(d)d^{-1/2}.
    \end{align*}
    Since this holds for any $i>3$, we have
    \begin{equation*}
        \norm{\bwperptplus-\bwperpt}_\infty \lesssim \eta e^{-c\bgammat}\Big(\max_{(a,\bw)}(\wspt)^2 \norm{\bwperpt}_\infty \Big) + \eta\theta\log^{-1}(d)d^{-1/2}.
    \end{equation*}
    This completes the proof of the lemma.
\end{proof}

We are now ready to show the Phase II result.
\begin{proposition} \label{prop:phase2}
    Suppose the Phase II scalings are satisfied (\Defref{phase2_scalings}).
    Under the event $\Etrain$ (\Defref{events_train}), upon
    \begin{equation*}
        \TII \asymp
        \begin{cases*}
            \log(d)(\log\log(d))^{-1}\eta^{-1} & $\theta \asymp \textnormal{polylog}^{-1}(d)$ \\
            \log(d)\eta^{-1} & $\theta \asymp \textnormal{poly}^{-1}(d)$
        \end{cases*}
    \end{equation*}
    iterations of online minibatch SGD under the $\ell_\rho$ loss, all neurons $(a^{(\TII)}, \bw^{(\TII)})$ obey the Phase II inductive hypothesis (\Defref{phase2_hypothesis}) with $\sgn(f_{\rho^{(\TII)}}(\bx))=x_3$ under the event $\Etest$ (\Defref{events_test}).
\end{proposition}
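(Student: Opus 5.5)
We argue by strong induction on $t\in[\TI,\TII]$, mirroring \Propref{phase1}. The base case at $t=\TI$ is furnished by \Propref{phase1} together with the end-of-Phase-I properties recorded in \Lemref{phase2_basecase}. Suppose the Phase II inductive hypothesis (\Defref{phase2_hypothesis}) holds for all $\TI\leq k\leq t$. Then \Lemref{phase2_norms} and \Lemref{phase2_norms_scalar} give the norm bounds, the sign alignment $\sgn(\at)=\sgn(\wspt)$, and the balancedness $|\at|\approx\norm{\bwspt}$. From these, \Lemref{phase2_margin} gives the ``equal and opposite'' margin concentration needed by \Lemref{sp_lp}, \Lemref{sigopp_lp}, \Lemref{wperp_lp_2} and \Lemref{wi_lp}. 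The six inductive steps then follow from earlier lemmas:
\begin{itemize}
\item \Defref{phase2_hypothesis_wsp} follows from \Lemref{wsp_phase2_induction};
\item \Defref{phase2_hypothesis_wsig} and \Defref{phase2_hypothesis_wopp} follow from \Lemref{sigopp_phase2_induction};
\item \Defref{phase2_hypothesis_wperp} follows from \Lemref{wperp_phase2_induction};
\item \Defref{phase2_hypothesis_wperp_inf} follows from \Lemref{wi_phase2_induction};
\item \Defref{phase2_hypothesis_a} follows from \Lemref{balanced_iii}, as in Phase I.
\end{itemize}
Hence the hypothesis propagates for as long as the stopping condition $\bgammat\lesssim\log\log(d)$ holds, and for as long as $t\lesssim\log(d)\eta^{-1}$ so that the Phase II scalings remain valid.

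The substantive step is computing the duration $\TII$. By \Lemref{wsp_phase2_induction}, together with $1-\lambda-\psi(\bar\gamma)\asymp e^{-\bar\gamma}$ (valid because $\lambda\ll\log^{-1}(d)$ and $\bar\gamma\lesssim\log\log(d)$), we get $(\wsptplus)^2-(\wspt)^2\asymp\eta(\wspt)^2e^{-\bgammat}$; this is \Eqref{diff_of_squares}. Since all $\wspt$ stay within constant factors of each other, averaging over neurons gives
\begin{equation*}
\bar\gamma^{(t+1)}-\bgammat\asymp\eta\,\bgammat e^{-\bgammat}.
\end{equation*}
We compare this with the ODE $\dot{\bar\gamma}=\bar\gamma e^{-\bar\gamma}$, which separates as $\int e^{u}u^{-1}\,du=\eta t$. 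Starting from $\bar\gamma^{(\TI)}\asymp\theta^2\log^{2C}(d)$, the time to reach $\bar\gamma\asymp\log\log(d)$ splits into two parts:
\begin{itemize}
\item the regime $\bar\gamma\lesssim1$, where growth is geometric and takes $\asymp\eta^{-1}\log(1/\bar\gamma^{(\TI)})\asymp\eta^{-1}\log(1/\theta)$ steps;
\item the regime $1\lesssim\bar\gamma\lesssim\log\log(d)$, where $\int_1^{\log\log d}e^u u^{-1}\,du\asymp\log(d)/\log\log(d)$, contributing $\asymp\eta^{-1}\log(d)/\log\log(d)$ steps.
\end{itemize}
Thus $\TII\asymp\eta^{-1}\big(\log(d)/\log\log(d)+\log(1/\theta)\big)$. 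This is $\asymp\log(d)(\log\log(d))^{-1}\eta^{-1}$ when $\theta\asymp\mathrm{polylog}^{-1}(d)$, since then $\log(1/\theta)\asymp\log\log(d)$. It is $\asymp\log(d)\eta^{-1}$ when $\theta\asymp\mathrm{poly}^{-1}(d)$. In both cases $\TII\lesssim\log(d)\eta^{-1}$, consistent with \Defref{phase2_scalings_iter}.

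To make the ODE comparison rigorous for the discrete recursion, we sum the recursion over dyadic ranges of $\bar\gamma$. Within each range the per-step increment ratio is controlled up to $(1\pm o(1))$, because $\eta\ll\log^{-3}(d)$ keeps each step's change in $\bar\gamma$ tiny. This is the step I expect to be the main obstacle: the multiplicative $(1\pm o(1))$ errors from \Defref{phase2_hypothesis_wsp} have to be kept uniform across neurons and across $\asymp\log(d)\eta^{-1}$ iterations without compounding. The uniformity-across-neurons claim, namely that all $\wspt$ remain comparable, follows because every neuron sees the same global factor $1-\lambda-\psi(\bgammat)$.

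Finally, at $t=\TII$ we have:
\begin{itemize}
\item $\sgn(\wsp)=\sgn(a)$ by \Lemref{phase2_norms_sgn};
\item $\norm{\bwsp}\gtrsim\theta\log^C(d)$, while $\norm{\bwsig}+\norm{\bwopp}+\norm{\bwperp}\log^{1/2}(d)\lesssim\theta\log^{C'+1/2}(d)$ with $C'<C-3$, by \Lemref{phase2_norms};
\item $|S^+|,|S^-|>0$ under $\Etrain$.
\end{itemize}
Therefore \Lemref{end_of_phase1} applies and gives $\sgn(f_{\rho^{(\TII)}}(\bx))=x_3$ for every $\bx$ satisfying $\Etest$.
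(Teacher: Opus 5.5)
Your proposal is correct and follows the paper's proof: the same inductive-step lemmas, the same recurrence $\bar\gamma^{(t+1)}-\bgammat\asymp\eta\,\bgammat e^{-\bgammat}$ obtained by summing \Eqref{diff_of_squares} over neurons, a Riemann-sum comparison with $\eta^{-1}\int e^{u}u^{-1}\,du$, and a final appeal to \Lemref{end_of_phase1}; the only cosmetic difference is that you evaluate the integral by splitting at $\bar\gamma\asymp1$, whereas the paper quotes the asymptotics of the exponential integral $\mathrm{Ei}$. The step you flag as the main obstacle is milder than you expect: $\bar\gamma$ is linear in the $(\wsp)^2$, so per-step $\asymp$ bounds with uniform constants already give $\TII\asymp\eta^{-1}\int e^{u}u^{-1}\,du$, and the duration count needs neither cross-neuron comparability nor $1\pm o(1)$ control.
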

\begin{proof}
    Recall that the initial conditions for Phase II are given by \Lemref{phase2_basecase}.
    The inductive steps for \Defref{phase2_hypothesis} are given as follows:
    \begin{enumerate}
        \item The \Defref{phase2_hypothesis_wsp} inductive step is given by \Lemref{wsp_phase2_induction}.
        \item The \Defref{phase2_hypothesis_wsig} inductive step is given by \Lemref{sigopp_phase2_induction}.
        \item The \Defref{phase2_hypothesis_wopp} inductive step is given by \Lemref{sigopp_phase2_induction}.
        \item The \Defref{phase2_hypothesis_wperp} inductive step is given by \Lemref{wperp_phase2_induction}.
        \item The \Defref{phase2_hypothesis_wperp_inf} inductive step is given by \Lemref{wi_phase2_induction}.
        \item The \Defref{phase2_hypothesis_a} inductive step is given by \Lemref{balanced_iii}.
    \end{enumerate}
    Let us now analyze the length of Phase II (by definition the number $\TII$ of iterations until $\bgammat\asymp \log\log(d)$).
    By \Eqref{diff_of_squares}, we have $(\wsptplus)^2-(\wspt)^2 \asymp \eta(\wspt)^2 e^{-c\bgammat}$.
    Combining with the fact that $S^+,S^-$ do not vary with $t$, we can obtain a recurrence for $\bgammat$ by
    \begin{align}
        \bgammatplus - \bgammat &= \frac{1}{2p}\left(\left(\sum_{(a,\bw)\in S^+} (\wsptplus)^2 - (\wspt)^2 \right) + \left( \sum_{(a,\bw)\in S^-} (\wsptplus)^2 - (\wspt)^2 \right)\right) \nonumber \\
        &\asymp \frac{\eta e^{-\bgammat}}{2p}\left( \sum_{(a,\bw)\in S^+} (\wspt)^2 + \sum_{(a,\bw)\in S^-} (\wspt)^2 \right) \nonumber \\
        &= \frac{\eta e^{-\bgammat}}{2} (\gammaplust + \gammaminust) \nonumber \\
        &= \eta \bgammat e^{-c\bgammat}. \label{eq:gamma_recurrence}
    \end{align}
    Write $\zt$ for $c\bgammat$ as shorthand.
    Let us now perform a continuous approximation.
    The number of iterations is
    \begin{equation*}
        \TII = \sum_t 1 = \sum_t \frac{e^{\zt}}{\eta \zt} \cdot \eta \zt e^{-\zt}.
    \end{equation*}
    This is a Riemann sum with nonuniform step sizes $\eta \zt e^{-\zt}$, wherein the quadrature error bound gives
    \begin{align*}
        \left| \TII -  \int_{z^{(\TI)}}^{\log\log(d)} \frac{1}{\eta ze^{-z}}dz \right| &\leq  \frac{1}{2}\sum_t \left| \frac{d}{dz} \frac{e^{\zt}}{\eta \zt} \right| \cdot (z^{(t+1)} - \zt)^2\\
        &\lesssim \sum_t \left| \frac{d}{dz} \frac{e^{\zt}}{\eta \zt} \right| \cdot (\eta \zt e^{-\zt})^2,
    \end{align*}
    where we applied \Eqref{gamma_recurrence} for the update rule of $\zt$.
    Substituting $\left|\frac{d}{dz} \frac{e^{\zt}}{\eta \zt }\right|=\frac{e^{\zt}|\zt-1|}{\eta(\zt)^2}$ we obtain
    \begin{align*}
        \left| \TII -  \int_{z^{(\TI)}}^{\log\log(d)} \frac{1}{\eta ze^{-z}}dz \right| &\lesssim \sum_t\frac{e^{\zt}|\zt-1|}{\eta(\zt)^2} \cdot \eta^2 (\zt)^2 e^{-2\zt} \\
        &= \frac{\eta}{2}\sum_t |\zt -1| \cdot e^{-\zt} \\
        &\lesssim \eta \TII \log\log(d),
    \end{align*}
    where we used $e^{-\zt}\leq 1$ and $\zt \lesssim \log\log(d)$.
    Since $\eta\ll (\log\log(d))^{-1}$ we have that the approximation error is $o(\TII)$.
    Hence
    \begin{equation*}
        \TII \asymp \int_{z^{(\TI)}}^{\log\log(d)} \frac{1}{\eta ze^{-z}}dz.
    \end{equation*}
    Proceeding to solve the integral, we have
    \begin{equation*}
        \TII \asymp \frac{1}{\eta}\int_{z^{(\TI)}}^{\log\log(d)} \frac{e^z}{z}dz = \frac{1}{\eta}\Big(\text{Ei}(\log\log(d))-\text{Ei}(z^{(\TI)}) \Big)
    \end{equation*}
    where
    \begin{equation*}
        \text{Ei}(z)=\int_{-\infty}^z \frac{e^t}{t}dt
    \end{equation*}
    is the exponential integral.
    The asymptotics of this object are well-understood~\citep{NIST}; in particular $\text{Ei}(z)\asymp \tfrac{e^z}{z}$ for $z\gg 1$ and $\text{Ei}(z)\asymp \log(z)$ for $z\ll 1$.
    Note $z^{(\TI)}\ll 1$ since $z^{(\TI)}\lesssim \theta^2\log^{2C}(d)$ by \Lemref{phase2_basecase} and $\theta\ll\log^{-C}(d)$.
    Hence, we have
    \begin{equation*}
        \TII \asymp \frac{1}{\eta}\left( \frac{\log(d)}{\log\log(d)} + \log\left(\frac{1}{z^{(\TI)}}\right)\right).%
    \end{equation*}
    Now, using $z^{(\TI)}\asymp \theta^2\log^{2C}(d)$ by \Lemref{phase2_basecase}, we have
    \begin{equation*}
        \left|\log\left(\frac{1}{z^{(\TI)}}\right)-2\log\left(\frac{1}{\theta}\right)\right|\asymp \log\log(d),
    \end{equation*}
    which is dominated by the $\tfrac{\log(d)}{\log\log(d)}$ term such that
    \begin{equation*}
        \TII \asymp \frac{1}{\eta}\left( \frac{\log(d)}{\log\log(d)} + \log\left(\frac{1}{\theta}\right)\right).
    \end{equation*}
    Finally, under the event $\Etest$ we have $\sgn(f_\rho(\bx))=x_3$ by the same argument as \Lemref{end_of_phase1}.
    This completes the proof of Phase II.
\end{proof}

We also have the proof of \Thmref{main}.
\begin{theorem}
    Suppose the Phase II scalings are satisfied (\Defref{phase2_scalings}).
    Then, upon
    \begin{equation*}
        T \asymp
        \begin{cases*}
            \log(d)(\log\log(d))^{-1}\eta^{-1} & $\theta \asymp \textnormal{polylog}^{-1}(d)$ \\
            \log(d)\eta^{-1} & $\theta \asymp \textnormal{poly}^{-1}(d)$
        \end{cases*}
    \end{equation*}
    iterations of online minibatch SGD under the $\ell_\rho$ loss, we have
    \begin{equation*}
        \textnormal{Acc}_{\Xmaj}(f_{\rho^{(T)}}) \geq 1-d^{-C} \qquad \textnormal{Acc}_{\Xmin}(f_{\rho^{(T)}}) \leq d^{-C}.
    \end{equation*}
\end{theorem}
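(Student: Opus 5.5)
The plan is to assemble the two phase propositions and then convert the pointwise statement $\sgn(f_\rho(\bx))=x_3$ on $\Etest$ into the group accuracies.

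\textbf{Step 1: the training event.} First work on the event $\Etrain=\Einit\cap\Ebatch$. By \Lemref{events}, and using the width, batch-size and iteration bounds in \Defref{phase2_scalings}, it holds with probability at least $1-d^{-C}$. This is a union bound over the $p\ll d^C$ neurons and the $T\ll d^C$ minibatches; the lower bound on $\eta$ ensures $T\ll d^C$. The Phase II scalings imply the Phase I scalings, since the ranges for $\eta$, $p$, $\theta$ and $m$ are all nested: for $m$, because $\theta\ll 1$ gives $d\log^6(d)\theta^{-2}\gg d\log^{7C}(d)$ once $\theta\ll\log^{-5C}(d)$.

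\textbf{Step 2: run both phases.} Apply \Propref{phase1}. After $\TI\asymp\log\log(d)\eta^{-1}$ iterations every neuron satisfies the initial conditions of \Lemref{phase2_basecase}. Then apply \Propref{phase2} for $\TII$ further iterations. Set $T=\TI+\TII$.

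Since $\TI\asymp\log\log(d)\eta^{-1}$ is lower order than $\TII\asymp\eta^{-1}(\log(d)/\log\log(d)+\log(1/\theta))$, we get $T\asymp\TII$. Evaluate this in the two regimes:
\begin{itemize}
\item if $\theta\asymp\textnormal{polylog}^{-1}(d)$, then $\log(1/\theta)\asymp\log\log(d)$ is dominated, giving $T\asymp\log(d)(\log\log(d))^{-1}\eta^{-1}$;
\item if $\theta\asymp\textnormal{poly}^{-1}(d)$, then $\log(1/\theta)\asymp\log(d)$ dominates, giving $T\asymp\log(d)\eta^{-1}$.
\end{itemize}
The conclusion of \Propref{phase2} is that $\sgn(f_{\rho^{(T)}}(\bx))=x_3$ for every $\bx$ satisfying $\Etest$.

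\textbf{Step 3: convert to accuracy.} For $\bx\in\Xmaj$ we have $y(\bx)=x_3$, so every such point in $\Etest$ is classified correctly. Hence
\[
\textnormal{Acc}_{\Xmaj}\geq 1-\P_{\bx\sim\Unif(\Xmaj)}(\Etest^{\textsf{C}}).
\]
For $\bx\in\Xmin$ we have $y(\bx)=-x_3$, so every such point in $\Etest$ is misclassified, and $\textnormal{Acc}_{\Xmin}\leq\P_{\bx\sim\Unif(\Xmin)}(\Etest^{\textsf{C}})$.

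The event $\Etest$ depends only on $\bxi$, and conditioned on the group, $\bxi$ is still uniform on $0^3\times\{\pm1\}^{d-3}$, independent of $(\bz,\bs)$. So $\P(\Etest^{\textsf{C}})\leq d^{-C}$ by \Lemref{events} under either group distribution. One caveat: $\Etest$ is defined through the trained weights, so we need the Hoeffding bound for $|\bwperp^\top\bxi|$ at the fixed weights $\rho^{(T)}$ (which are independent of a fresh test point), with a union over the $p$ neurons.

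\textbf{Main obstacle.} Steps 2 and 3 are bookkeeping. The real subtlety is checking that the constants are consistent: the same large $C$ has to serve in the Phase I scalings, the Phase II scalings, and the failure probabilities, with the constant in \Lemref{events} taken larger so that $1-d^{-C}$ survives the union bounds. I would handle this by fixing $C$ first and then enlarging the constants in the event lemmas.
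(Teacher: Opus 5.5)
Your proposal is correct and follows the paper's proof: the paper simply invokes Proposition~\ref{prop:phase2} to get $\sgn(f_{\rho^{(T)}}(\bx))=x_3$ on $\Etrain\cap\Etest$, then converts this into the two group accuracies using $\P(\Etest^{\textsf{C}})\le d^{-C}$. Your extra bookkeeping makes explicit what the paper leaves implicit: the nesting of the Phase I and Phase II scalings, the identity $T=\TI+\TII\asymp\TII$ in the two $\theta$ regimes, and the fact that $\bxi$ stays uniform under both $\Unif(\Xmaj)$ and $\Unif(\Xmin)$ at the fixed final weights.
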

\begin{proof}
    The theorem follows from \Propref{phase2}.
    In particular, at the end of Phase II, we have with probability $1-d^{-C}$ (\ie under the events $\Etrain$ and $\Etest$) that $\sgn(f_{\rho^{(T)}}(\bx))=x_3$.
    On this event we have that any majority group point is correctly classified, while any minority group point is incorrectly classified.
    The inequalities in the bound then result from the $d^{-C}$ probability that the event $\Etest$ does not hold.
\end{proof}

\clearpage

\section{Technical Lemmas} \label{sec:technical_lemmas}
This section provides some technical lemmas.
\Secref{events} details certain high-probability events and demarcates each source of randomness in our proofs.
\Secref{booleans_to_gaussians} regards concentration of uniform samples from the Boolean hypercube.
\Secref{simultaneous_training} provides norm bounds and concentration for training ReLU neural networks with Lipschitz loss functions.
\Secref{misc_lemmas} details other miscellaneous helper lemmas.

\subsection{High-probability Events} \label{sec:events}
In this section, we specify the high-probability events which will be used as building blocks in many of our proofs.
Recall we define the positive neurons by $S^+\coloneqq \{(a,\bw):\sgn(a)=1\}$ and the ``negative neurons'' by $S^-\coloneqq \{(a,\bw):\sgn(a)=-1\}$.
\begin{definition} \label{def:events}
    Define the following events for a constant $C>0$:
    \crefalias{enumi}{definition}
    \begin{enumerate}[label=(\roman*), ref=\thedefinition(\roman*)]
        \item \label{def:events_init_j} The initialization event for the $j$-th neuron $(\azero_j, \bwzero_j)$ with $\bwzero_j\sim\mathbb{S}^{d-1}(\theta)$ and $a_j=r_j\theta$ where $r_j\sim\Unif(\{\pm 1\})$:
        \begin{equation*}
            \Einitj \coloneqq \big\{\norm{\bwzero_{j\perp}}\asymp \theta\big\} \cap \big\{\norm{\bwzero_{j\perp}}_3^3\lesssim \theta^3 d^{-1/2}\big\} \cap \big\{\norm{\bwzero_{j\perp}}_\infty \lesssim \theta\log^{1/2}(d)d^{-1/2} \big\}.
        \end{equation*}
        \item \label{def:events_init} The initialization event over all $p$ neurons (sampled \textit{i.i.d.}):
        \begin{equation*}
            \Einit \coloneqq \left\{|S^+|,|S^-| = \left( 1 \pm o(\log^{1/2}(d)p^{-1/2}) \right) \cdot \frac{p}{2}\right\} \cap \bigcap_{j=1}^p \Einitj.
        \end{equation*}
        \item \label{def:events_batch_t} The concentration event for the $t$-th minibatch $M^{(t)}\sim P_d^m(\lambda)$:
        \begin{multline*}
            \Ebatcht \coloneqq \bigg\{ (\partial_{\wit} L_{\rhot} - \partial_{\wit} \widehat{L}_{\rhot})^2 \leq \frac{\log^2(d)}{m} \quad \forall i\in [d] \bigg\} \\
            \cap \bigg\{ (\partial_{\at} L_{\rhot} - \partial_{\at} \widehat{L}_{\rhot})^2 \leq \frac{d\log^2(d)}{m}\norm{\bwt}^2\bigg\}.
        \end{multline*}
        \item \label{def:events_batch} The concentration event over all $T$ minibatches (sampled \textit{i.i.d.}): $\Ebatch\coloneqq \bigcap_{t=1}^T \Ebatcht$.
        \item \label{def:events_train} The train event over all neuron initializations and minibatches: $\Etrain \coloneqq \Einit \cap \Ebatch$.
        \item \label{def:events_test_j} The concentration event for the $j$-th neuron on a test point $\bx\sim P_d(\lambda)$:
        \begin{equation*}
            \Etestj \coloneqq \big\{|\bw_{j\perp}^\top(\bx_{\setminus i}+\bx_i)|,|\bw_{j\perp}^\top(\bx_{\setminus i}-\bx_i)| < C\norm{\bw_{j\perp}}\log^{1/2}(d) \quad \forall i > 3\big\}.
        \end{equation*}
        \item \label{def:events_test} The concentration event over all $p$ neurons on a test point $\bx\sim P_d(\lambda)$: $\Etest \coloneqq \bigcap_{j=1}^p \Etestj$.
    \end{enumerate}
\end{definition}

\paragraph{Notation.} For a point $\bx=(\bz,\bs,\bxi)$, we will equivalently write that $\bx$ satisfies $\Etest$ and $\bxi$ satisfies $\Etest$, as the event is solely a property of $\bxi$, and similarly for $\Etestj$.

To show that each of these events occur with high probability, we will use the following lemmas.
The first lemma gives the analysis for \Defref{events_init_j}.
\begin{lemma} \label{lem:phase1_basecase}
    For any neuron $(\azero, \bwzero)$, we have $\norm{\bwperpzero} \asymp \theta$, $\norm{\bwperpzero}_3^3\lesssim \theta^3 d^{-1/2}$, and $\norm{\bwzero}_\infty \lesssim \theta\log^{1/2}(d)d^{-1/2}$ with probability at least $1-d^{-C}$ for any fixed $C>0$.
\end{lemma}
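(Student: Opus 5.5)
We represent the uniform vector on the sphere through a Gaussian, and then reduce each of the three claims to a standard concentration bound followed by a union bound. Write $\bwzero = \theta\,\bg/\norm{\bg}$ with $\bg\sim\Nc(\bzero,\mathbf{I}_d)$. The events we need are:
\begin{itemize}
\item $\norm{\bg}^2\in[d/2,2d]$, which holds with probability at least $1-e^{-cd}$ by the Laurent--Massart (chi-square) tail bound;
\item $\norm{\bg_\perp}^2\in[(d-3)/2,\,2(d-3)]$, where $\bg_\perp$ denotes the last $d-3$ coordinates, by the same bound;
\item $\max_i|g_i|\le \sqrt{2(C+1)\log d}$, by the Gaussian tail bound $\P(|g_i|>u)\le 2e^{-u^2/2}$ and a union bound over $d$ coordinates, with failure probability at most $2d^{-C}$.
\end{itemize}

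\textbf{Norm of the orthogonal part.} On the intersection of these events, $\norm{\bwperpzero}=\theta\norm{\bg_\perp}/\norm{\bg}$. Both norms are of order $\sqrt d$, so $\norm{\bwperpzero}\asymp\theta$.

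\textbf{Infinity norm.} We have $\norm{\bwzero}_\infty=\theta\max_i|g_i|/\norm{\bg}\lesssim \theta\log^{1/2}(d)\,d^{-1/2}$, using $\norm{\bg}\gtrsim\sqrt d$ and the coordinate bound above.

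\textbf{Cubic norm.} Write $\norm{\bwperpzero}_3^3=\theta^3\sum_{i>3}|g_i|^3/\norm{\bg}^3$. The denominator is $\gtrsim d^{3/2}$. For the numerator we want $\sum_{i>3}|g_i|^3\lesssim d$ with probability $1-d^{-C}$. Each $|g_i|^3$ has mean $2\sqrt{2/\pi}$ but heavy (Weibull-type) tails, so we truncate. On the coordinate event, every summand is bounded by $(2(C+1)\log d)^{3/2}$. Conditionally on that event, or equivalently by applying Hoeffding to the truncated variables $|g_i|^3\ind(|g_i|\le\sqrt{2(C+1)\log d})$, which agree with the originals on the event, the deviation of the sum from its mean is $O\big(\log^{3/2}(d)\sqrt{d\log d}\big)=o(d)$ with probability $1-d^{-C}$. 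The truncated means are at most $2\sqrt{2/\pi}$, so the sum is $\lesssim d$. Hence $\norm{\bwperpzero}_3^3\lesssim\theta^3 d\cdot d^{-3/2}=\theta^3d^{-1/2}$.

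\textbf{Main obstacle.} The only delicate step is the cubic moment, where the lack of sub-Gaussian tails forces the truncation. Alternatively, one can apply Markov's inequality to a high moment, but truncation plus Hoeffding suffices cleanly.

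We conclude with a union bound over all events, giving total failure probability $O(d^{-C})+e^{-cd}$. Replacing $C$ by $C+1$ absorbs the constant, which proves the statement for any fixed $C>0$.
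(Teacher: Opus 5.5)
Your proposal is correct and follows essentially the same route as the paper: the Gaussian representation $\bwzero=\theta\bg/\norm{\bg}$, a Gaussian tail bound plus a union bound for the $\ell_\infty$ norm, chi-square concentration for $\norm{\bg}$ and $\norm{\bg_\perp}$, and a concentration bound showing $\sum_i |g_i|^3 \lesssim d$. The one divergence is the cubic step: the paper applies Bernstein's inequality on the claim that $|g_i|^3$ is sub-exponential, which is false because its tail decays only like $\exp(-t^{2/3}/2)$, so your truncation at $\sqrt{2(C+1)\log d}$ followed by Hoeffding is the rigorous way to close that step.
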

\begin{proof}
    Write $\bwzero\coloneqq \theta \frac{\bu}{\norm{\bu}}$ where $\bu\sim\Nc(\bzero,\mathbf{I}_d)$.
    Choose $v = (C+1)\log d$.
    By the standard Gaussian tail bound we have $\P(|u_i|\geq \sqrt{2v})\leq 2e^{-v} = 2d^{-(C+1)}$, so a union bound over $i\in [d]$ gives
    \begin{equation*}
        \P\!\left(\exists\, i : |u_i|\geq \sqrt{2v}\right)\leq 2d^{-C}.
    \end{equation*}
    Standard $\chi^2$ concentration bounds (\eg\cite{laurent2000adaptive}) give
    \begin{equation*}
        \P\left(\norm{\bu}^2\leq d - 2\sqrt{dv} - 2v\right)\leq e^{-v} = d^{-(C+1)}.
    \end{equation*}
    On the complement of these two events, which holds with probability at least $1 - 3d^{-C}$, we have
    \begin{equation*}
        |w_i^{(0)}|\leq \theta\frac{\sqrt{2v}}{\sqrt{d - 2\sqrt{dv} - 2v}}\lesssim \theta\sqrt{\frac{\log(d)}{d}},
    \end{equation*}
    establishing the $\ell^\infty$ bound.

    We now show the desired bound for $\norm{\bwperpzero}$.
    Recall that we write $\bu_{1:3}\coloneqq(u_1,u_2,u_3,\bzero)$ and $\bu_{4:}\coloneqq (\bzero,u_4,u_5,\dots)$ as shorthand, so that
    \begin{equation*}
        \norm{\bwperpzero}= \theta\sqrt{\frac{\norm{\bu_{4:}}^2}{\norm{\bu_{1:3}}^2+\norm{\bu_{4:}}^2}}.
    \end{equation*}
    Applying $\chi^2$ concentration again, each with failure probability at most $e^{-v}=d^{-(C+1)}$, yields
    \begin{equation*}
        \norm{\bu_{1:3}}^2\in \left[3 \pm \left(2\sqrt{3v}+2v\right)\right] \quad\text{and}\quad \norm{\bu_{4:}}^2\in \left[(d-3)\pm \left(2\sqrt{(d-3)v} + 2v \right)\right].
    \end{equation*}
    This gives the sandwich inequality
    \begin{equation*}
        \theta\sqrt{\frac{(d-3)-2\sqrt{(d-3)v} - 2v}{d-2\sqrt{(d-3)v}+2\sqrt{3v}}} \leq \norm{\bwperpzero} \leq \theta \sqrt{\frac{(d-3)+2\sqrt{(d-3)v} + 2v}{d+2\sqrt{(d-3)v}-2\sqrt{3v}}}.
    \end{equation*}
    Expanding to leading order with $v = (C+1)\log d$ yields
    \begin{equation*}
        \norm{\bwperpzero} \asymp \theta \sqrt{\frac{d-3}{d}}\left(1\pm \sqrt{\frac{v}{d}}\right)\asymp \theta\left(1\pm\sqrt{\frac{\log(d)}{d}}\right) \asymp \theta.
    \end{equation*}
    
    We now show the desired bound for $\norm{\bwperpzero}_3^3$.
    Clearly we have $\norm{\bwperpzero}_3^3\leq \norm{\bwzero}_3^3$, so we can show the result for the latter.
    Applying $\chi^2$ concentration again, with failure probability at most $e^{-v}=d^{-(C+1)}$, yields $\norm{\bu}^2 \geq \tfrac{d}{2}$.
    Now, $\norm{\bu}_3^3=\sum_{i=1}^d |u_i|^3$.
    Since $u_i\sim \Nc(0,1)$, by direct integration we have $\E[|u_i|^3]=\frac{2\sqrt{2}}{\sqrt{\pi}}$, so $\E[\norm{\bu}_3^3]\asymp d$.
    Moreover, $|u_i|^3$ is subexponential with constant $\psi_1$-norm, so Bernstein's inequality implies
    \begin{equation*}
        \P(\norm{\bu}_3^3 \gtrsim d) \leq e^{-v} = d^{-(C+1)}.
    \end{equation*}
    Hence,
    \begin{equation*}
        \norm{\bwperpzero}_3^3 \leq \norm{\bwzero}_3^3\leq \theta^3 \frac{\norm{\bu}_3^3}{\norm{\bu}^3}\lesssim \theta^3\frac{d}{d^{3/2}}=\theta^3 d^{-1/2},
    \end{equation*}
    as desired.
    
    A union bound over all failure events gives total failure probability at most $6d^{-(C+1)} \leq d^{-C}$ for $d$ large enough.
    This completes the proof of the lemma.
\end{proof}

Next, we consider empirical concentration of empirical $\widehat{L}_\rho$ gradients about the population $L_\rho$ gradients (c.f.~\cite[Lemma B.12]{glasgow2024sgd}).
This constitutes the analysis for \Defref{events_batch_t}.
\begin{lemma} \label{lem:empirical_concentration}
    Suppose we train via online SGD with batch size $m$ under the $\ell_\rho$ loss.
    Then, for any neuron $(a,\bw)$, with probability at least $1-d^{-C}$ for any fixed $C>0$, the following hold:
    \crefalias{enumi}{lemma}
    \begin{enumerate}[label=(\roman*), ref=\thelemma(\roman*)]
        \item \label{lem:empirical_concentration_i} $(\partial_{w_i} L_\rho-\partial_{w_i}\widehat{L}_\rho)^2\leq \frac{\log^2 (d)}{m}a^2$ for all $i\in[d]$.
        \item \label{lem:empirical_concentration_ii} $\norm{\nabla_{\bw} L_\rho-\nabla_{\bw}\widehat{L}_\rho}^2\leq \frac{d\log^2 (d)}{m}a^2$.
        \item \label{lem:empirical_concentration_iii} $(\partial_{a} L_\rho-\partial_{a}\widehat{L}_\rho)^2\leq \frac{d\log^2 (d)}{m}\norm{\bw}^2$.
    \end{enumerate}
\end{lemma}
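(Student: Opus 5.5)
The three bounds in \Lemref{empirical_concentration} all follow from Hoeffding's inequality applied to the $m$ i.i.d.\ samples in a minibatch, together with a union bound over coordinates. The key structural fact is that each per-sample gradient term is bounded.

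For the weight gradient we write
\begin{equation*}
    \partial_{w_i}\widehat{L}_\rho=\frac{1}{m}\sum_{\bx\in M}\ell^{(1)}_\rho(\bx)\,y(\bx)\,a\,\sigma'(\bw^\top\bx)\,x_i .
\end{equation*}
Because $\ell^{(1)}_\rho(\bx)=h'(\gamma(\bx))=-2\psi(-\gamma(\bx))$, we have $|\ell^{(1)}_\rho|\le 2$. Combined with $|y|=|x_i|=1$ and $\sigma'\in\{0,1\}$, each summand is bounded in absolute value by $2|a|$. Its mean is exactly $\partial_{w_i}L_\rho$, since $p$-scaling is applied identically to both sides. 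Hoeffding's inequality then gives
\begin{equation*}
    \P\left(|\partial_{w_i}\widehat{L}_\rho-\partial_{w_i}L_\rho|\ge s\right)\le 2\exp\left(-\frac{ms^2}{8a^2}\right).
\end{equation*}
Choosing $s^2=\log^2(d)a^2/m$ makes the failure probability $2\exp(-\log^2(d)/8)$, which is at most $d^{-C-1}$ for any fixed $C$ once $d$ is large, because $\log^2 d\gg \log d$. A union bound over $i\in[d]$ gives (i) with failure probability at most $d^{-C}$.

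Part (ii) then follows immediately from (i) by summing the squared coordinate bounds over all $d$ coordinates, which yields $\norm{\nabla_{\bw}L_\rho-\nabla_{\bw}\widehat{L}_\rho}^2\le d\log^2(d)a^2/m$.

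For (iii), the summands are $\ell^{(1)}_\rho(\bx)y(\bx)\sigma(\bw^\top\bx)$, which satisfy $|\sigma(\bw^\top\bx)|\le|\bw^\top\bx|\le\sqrt{d}\norm{\bw}$ because $\norm{\bx}=\sqrt d$. The summands are therefore bounded by $2\sqrt d\norm{\bw}$, and the same Hoeffding argument with $s^2=d\log^2(d)\norm{\bw}^2/m$ gives (iii). Failure probabilities are combined by a final union bound.

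The only subtle point is conditioning. The weights $(a,\bw)$ at step $t$ depend on previous minibatches but are independent of $M^{(t)}$ in online SGD, so Hoeffding applies conditionally on the current parameters. I expect no real obstacle; the main care needed is keeping the $\log^2(d)$ slack so that the per-coordinate tail is superpolynomially small, which absorbs the union bound over $d$ coordinates and, downstream, over $p\ll d^C$ neurons and $T\ll d^C$ steps.
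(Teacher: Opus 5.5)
Your proposal is correct and follows the paper's proof. Both apply Hoeffding's inequality to the $m$ independent per-sample gradients, which are bounded by $2|a|$ for (i) and by $2\sqrt{d}\,\|\mathbf{w}\|$ for (iii), take a union bound over coordinates, and obtain (ii) by summing (i). Your choice $s^2=\log^2(d)a^2/m$ puts the Hoeffding constant into the failure probability $2\exp(-\log^2(d)/8)$, which keeps the stated constant. The paper instead sets $\delta=e^{-\log^2(d)}$ and squares, which actually gives $8\log^2(d)a^2/m$ rather than the constant stated in the lemma.
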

\begin{proof}
    Recall that $L_\rho=\E_{M\sim P_d^m(\lambda)}[\widehat{L}_\rho]$ and we defined $\partial_{u}L\coloneqq p\frac{\partial L}{\partial u}$ for $u\in(a,\bw)$.
    We also defined $\gamma(\bx) \coloneqq y(\bx) f_\rho(\bx)$ as the margin of data point $\bx$, and the composite notation $\ell_\rho(\bx) \coloneqq h(\gamma(\bx))$ where $h(\gamma) \coloneqq -2 \log (\psi(\gamma))$ (where $\psi(u)\coloneqq 1/(1+e^{-u})$ denotes the sigmoid), with $\ell^{(1)}_\rho(\bx) \coloneqq h'(\gamma(\bx))$.
    Consider \Lemref{empirical_concentration_i} and note that
    \begin{equation*}
        p\frac{\partial}{\partial w_i}\ell_\rho(\bx)=\ell^{(1)}_\rho(\bx)y(\bx)a\sigma'(\bw^\top\bx)x_i.
    \end{equation*}
    Since $\ell_\rho$ is $2$-Lipschitz, $\sigma'(z)\leq 1$ for the ReLU, and $y(\bx),x_i\in\{\pm 1\}$, we have that $\ell^{(1)}_\rho(\bx)y(\bx)a\sigma'(\bw^\top\bx)x_i$ is bounded in $[-2|a|,2|a|]$.
    Since we are using online SGD the minibatches are independent, and thus Hoeffding's inequality gives that with probability at least $1-\delta$,
    \begin{equation*}
        |\partial_{w_i} L_\rho-\partial_{w_i}\widehat{L}_\rho| \leq \sqrt{\frac{8a^2\log(1/\delta)}{m}}.
    \end{equation*}
    Setting $\delta = e^{-\log^2(d)}$ so that $\log(1/\delta) = \log^2(d)$,  squaring both sides gives
    \begin{equation*}
        (\partial_{w_i} L_\rho-\partial_{w_i}\widehat{L}_\rho)^2\leq \frac{\log^2 (d)}{m}a^2
    \end{equation*}
    with failure probability $e^{-\log^2(d)}$ for each fixed $i$. 
    A union bound over $i\in [d]$ gives total failure probability  $d\cdot e^{-\log^2(d)} \leq d^{-C}$ for any fixed $C>0$ and $d$ large enough.
    This immediately gives \Lemref{empirical_concentration_ii}.
    
    For \Lemref{empirical_concentration_iii}, we have
    \begin{equation*}
        p\frac{\partial}{\partial a} \ell_\rho(\bx) = \ell^{(1)}_\rho(\bx)y(\bx)\sigma(\bw^\top\bx).
    \end{equation*}
    We have that $\ell_\rho$ is $2$-Lipschitz, $\sigma(z)\leq z$ for the ReLU, $\bw^\top\bx \leq \sqrt{d}\norm{\bw}$ by the Cauchy-Schwarz inequality, $y(\bx)\in\{\pm 1\}$, and $\bx\in\{\pm 1\}^d$.
    Hence $\ell^{(1)}_\rho(\bx)y(\bx)\sigma(\bw^\top\bx)$ is bounded in $[-2\sqrt{d}\norm{\bw},2\sqrt{d}\norm{\bw}]$, and we can again use Hoeffding's inequality to find that with probability at least $1-\delta$,
    \begin{equation*}
        |\partial_a L_\rho - \partial_a \widehat{L}_\rho| \leq \sqrt{\frac{8d\norm{\bw}^2\log(1/\delta)}{m}}.
    \end{equation*}
    Setting $\delta = e^{-\log^2 (d)}$ so that $\log(1/\delta) = \log^2(d)$, squaring both sides gives
    \begin{equation*}
        (\partial_a L_\rho - \partial_a \widehat{L}_\rho)^2\leq \frac{d\log^2(d)}{m}\norm{\bw}^2
    \end{equation*}
    with failure probability $e^{-\log^2(d)} \leq d^{-C}$ for any fixed $C>0$ and $d$ large enough.
    This completes the proof of the lemma.
\end{proof}

We are now ready to show that each event in \Defref{events} occurs with high probability.
\begin{lemma} \label{lem:events}
    For a constant $C>0$ chosen large enough, if $\log(d)\ll p \ll d^C$ and $T\ll d^{C}$, then each event in \Defref{events} occurs with probability at least $1-d^{-C}$.
\end{lemma}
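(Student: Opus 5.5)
I will verify each event separately and then combine them with a union bound. The constant $C$ in the events is chosen a few units larger than the target exponent, so the bookkeeping factors $p$, $T$ and $d$ are absorbed.

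For $\Einitj$, apply \Lemref{phase1_basecase} to neuron $j$ with exponent $2C$, giving failure probability $d^{-2C}$. A union bound over $j\in[p]$ with $p\ll d^C$ leaves failure probability $o(d^{-C})$ for $\bigcap_j\Einitj$. For the balance condition on $|S^+|,|S^-|$, I will use that $|S^+|=\sum_j \ind(r_j=1)$ is a Binomial$(p,\tfrac12)$ sum. Hoeffding's inequality then gives $\big||S^+|-\tfrac p2\big|\le \sqrt{p\,u/2}$ with failure probability $2e^{-u}$.

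Taking $u\asymp C\log d$ produces a deviation of order $\sqrt{p\log d}$, i.e.\ relative error $\log^{1/2}(d)p^{-1/2}$. That is exactly the stated scale up to a constant factor, rather than $o(\cdot)$. I will either absorb the constant into the $\pm$ notation, or take $u$ a slowly growing multiple, noting that only $O(\cdot)$ is used downstream (e.g.\ in \Lemref{phase2_margin}, where $p\gg\log^5 d$). The hypothesis $p\gg\log d$ guarantees this relative error is $o(1)$, so both $S^\pm$ are nonempty. Since $|S^-|=p-|S^+|$, the bound for $S^-$ is automatic.

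For $\Ebatch$, fix $t$ and condition on $\rho^{(t)}$, which is determined by the initialization and the previous minibatches and is independent of $M^{(t)}$ under online SGD. \Lemref{empirical_concentration} then gives, for every neuron, the two inequalities in $\Ebatcht$ with conditional failure probability $d\,e^{-\log^2 d}$. Part (i) of that lemma carries an extra $a^2$ factor relative to the event as written. I will either read the event with that factor, as it is used in \Lemref{phase1_l0_lp_final}, or note that the Hoeffding bound is stated with $|a|$ and the event is understood per neuron. Union bounding over $p$ neurons and $T\ll d^C$ iterations gives failure probability at most $pTd\,e^{-\log^2 d}\le d^{-C}$ for large $d$, since $e^{-\log^2 d}$ is superpolynomially small. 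Combining with $\Einit$ gives the bound for $\Etrain$.

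For $\Etest$, fix the network (independent of the test point). For each neuron and each $i>3$, $\bw_{j\perp}^\top(\bx_{\setminus i}\pm\bx_i)$ is a Rademacher sum with variance proxy $\norm{\bw_{j\perp}}^2$. Hoeffding's inequality gives $\P(|\cdot|\ge C\norm{\bw_{j\perp}}\log^{1/2}d)\le 2d^{-C^2/2}$. A union bound over $2(d-3)$ choices and $p\ll d^C$ neurons costs only a polynomial factor, which is beaten by taking $C$ large.

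The main obstacle is the $|S^\pm|$ balance condition, where the $o(\log^{1/2}(d)p^{-1/2})$ accuracy is borderline for a direct Hoeffding argument. I expect to resolve it by the constant-absorption remark above. Everything else is routine union bounding.
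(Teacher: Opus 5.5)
Your proposal is correct and follows the paper's proof. The paper also combines union bounds over Lemma~\ref{lem:phase1_basecase}, a Chernoff bound for $|S^\pm|$, Lemma~\ref{lem:empirical_concentration} over neurons and iterations, and Hoeffding for $\Etest$. The one variation is in $\Etest$: the paper applies Hoeffding once to $\bwperp^\top\bxi$ and absorbs every one-bit flip deterministically (a flip moves the sum by at most $2|w_i|\le 2\norm{\bwperp}$, so $C$ becomes $C+2$). You instead union bound over the $2(d-3)$ flipped vectors, which works equally well.

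Your diagnosis of the $|S^\pm|$ scale is right; the paper's own step from $p\gg\log(d)$ to $o(\log^{1/2}(d)p^{-1/2})$ does not follow. However, only your first remedy is valid. Enlarging $u$ enlarges the deviation $\sqrt{pu/2}$ rather than shrinking it. Moreover, $|S^+|-\tfrac{p}{2}$ exceeds $\epsilon\sqrt{p\log(d)}$ with probability roughly $d^{-2\epsilon^2}\gg d^{-C}$, so no argument can give the $o(\cdot)$ version at this confidence. The event should be read with $O(\cdot)$ in place of $o(\cdot)$, which, as you note, is all that is used downstream.
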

\begin{proof}
    Since all lemmas hold for any fixed $C>0$, union bounds over at most 
    polynomially many events (in $d$) are handled by applying each lemma 
    with $C$ replaced by $C + C_0$ for an appropriate absolute constant 
    $C_0$; we suppress this adjustment throughout.

    \Defref{events_init_j} occurs with probability at least $1-d^{-C}$ 
    by \Lemref{phase1_basecase}.

    For \Defref{events_init}, recall that $\sgn(\azero)$ is an independent Rademacher variable for each neuron $(\azero,\bwzero)$.
    By a Chernoff bound we have with probability at least $1-d^{-C}$ that
    \begin{equation*}
        |S^+|,|S^-| = \frac{p}{2} \pm O\Big(\sqrt{p\log(d)}\Big),
    \end{equation*}
    wherein $p\gg \log(d)$ implies
    \begin{equation*}
        |S^+|,|S^-| = \left(1 + o(\log^{1/2}(d)p^{-1/2}) \right) \cdot \frac{p}{2}.
    \end{equation*}
    A union bound over $p\ll d^C$ applications of \Lemref{phase1_basecase} gives $\bigcap_{j=1}^p\Einitj$ with probability at least $1-d^{-C}$, and the result follows by another union bound.

    \Defref{events_batch_t} occurs with probability at least $1-d^{-C}$ 
    by \Lemref{empirical_concentration}.

    The condition on \Defref{events_batch} follows by a union bound over $T\ll d^C$ 
    applications of \Lemref{empirical_concentration}.

    The condition on \Defref{events_train} follows by a union bound over \Defref{events_init} 
    and \Defref{events_batch}.

    For \Defref{events_test_j}, $\bwperp^\top\bxi$ is a sum of 
    independent bounded terms for any neuron $(a,\bw)$.
    By Hoeffding's inequality, with probability at least $1-d^{-C}$, we have $|\bwperp^\top\bxi| < C\norm{\bwperp}\log^{1/2}(d)$.
    We need this property to hold for all one-bit-flips of $\bxi$.
    Indeed, this holds with a slightly larger constant.
    Since flipping the $i$-th bit (say, from $\bxi$ to $\bxi'$) can change the Rademacher sum $|\bwperp^\top\bxi|$ by at most $2|w_i|$, we have
    \begin{equation*}
        |\bwperp^\top\bxi'|\leq|\bwperp^\top\bxi|+2|w_i|\leq C\norm{\bwperp}\log^{1/2}(d)+2|w_i|.
    \end{equation*}
    If $d\geq e$ then we may choose $C'=C+2$ such that $|\bwperp^\top\bxi'|\leq C'\norm{\bwperp}\log^{1/2}(d)$ as desired.
    
    The condition on \Defref{events_test} follows by a union bound over $p\ll d^C$ 
    applications of \Defref{events_test_j}.

    This completes the proof of the lemma.
\end{proof}

\subsection{From Booleans to Gaussians} \label{sec:booleans_to_gaussians}
In this section, we prove some lemmas which enable us to approximate uniform samples from the Boolean hypercube by Gaussians.
At a high level, these are different specializations of the well-known Berry-Esseen central limit theorem to our feature learning setting.
In \Secref{basic_results}, we overview the Berry-Esseen theorem and some basic Gaussian approximations.
In \Secref{optimal_rate}, we show that the Berry-Esseen error term achieves the optimal $d^{-1/2}$ rate in our setting.
In \Secref{truncated_moment}, we apply the Berry-Esseen theorem to a certain truncated moment which will be important in our analysis.
In \Secref{lindeberg}, we use the Lindeberg exchange method to show that a certain population gradient with respect to a weight vector $\bw$ is approximately parallel to $\bw$.

\subsubsection{Basic Results} \label{sec:basic_results}
One can see that inner products with high-dimensional Rademacher vectors are approximately Gaussian via the following special case of the Berry-Esseen central limit theorem.
\begin{theorem} \label{thm:berry_esseen}
    For any $\bv\in\R^d$ and $\mu\in\R$, we have
    \begin{equation*}
        \sup_{k\in \R} \left| \P_{\bxi\sim \Unif(\{\pm 1\}^{d})}\left(\bv^\top\bxi - \mu \leq k \right) - \P_{G\sim \Nc(\mu, \norm{\bv}_2^2)}\left(G \leq k \right) \right| \lesssim \frac{\norm{\bv}_3^3}{\norm{\bv}_2^3}.
    \end{equation*}
\end{theorem}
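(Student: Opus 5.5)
This is the Berry--Esseen theorem for Rademacher sums. I will derive it from the classical Berry--Esseen theorem for independent, non-identically distributed summands.

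First, write $\bv^\top\bxi=\sum_{i=1}^d v_i\xi_i$, where the $\xi_i$ are i.i.d.\ Rademacher. The summands $X_i\coloneqq v_i\xi_i$ are independent, with $\E[X_i]=0$, $\E[X_i^2]=v_i^2$ and $\E|X_i|^3=|v_i|^3$. Hence the variance of the sum is $\sum_i v_i^2=\norm{\bv}_2^2$, and the sum of third absolute moments is $\norm{\bv}_3^3$. If $\bv=\bzero$ the right-hand side is undefined or infinite, so we may assume $\bv\neq\bzero$.

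Next, apply the Berry--Esseen theorem for independent non-identical summands (Esseen's version, with an absolute constant $C_0\le 0.56$). It gives
\begin{equation*}
    \sup_{s\in\R}\left|\P\left(\frac{\bv^\top\bxi}{\norm{\bv}_2}\le s\right)-\Phi(s)\right|\le C_0\frac{\sum_i \E|X_i|^3}{\left(\sum_i \E X_i^2\right)^{3/2}}=C_0\frac{\norm{\bv}_3^3}{\norm{\bv}_2^3}.
\end{equation*}
This is the only substantive input, and it is a standard cited result rather than something to prove here.

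Finally, translate to the stated form. The event $\bv^\top\bxi-\mu\le k$ is the same as $\bv^\top\bxi/\norm{\bv}_2\le (k+\mu)/\norm{\bv}_2$. On the Gaussian side, the comparison law should be read as the law of $\bv^\top\bxi-\mu$, namely $G-\mu$ with $G\sim\Nc(\mu,\norm{\bv}^2)$ centered appropriately. In other words, the Gaussian term is $\Phi((k+\mu)/\norm{\bv}_2)$, i.e.\ the statement is a shift of the centered Berry--Esseen bound by $\mu$. With this reading, the sup over $k\in\R$ of the difference equals the sup over $s\in\R$ above, since $s=(k+\mu)/\norm{\bv}_2$ ranges over all of $\R$.

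The main obstacle is only this bookkeeping of the shift $\mu$ in the statement's notation. The bound is invariant under the affine reparametrization $s=(k+\mu)/\norm{\bv}_2$, so no additional error is incurred. The same argument applies verbatim when $\bxi$ is uniform on $0^3\times\{\pm1\}^{d-3}$, as used earlier with $\bv=\bwperp$, since zero coordinates contribute nothing to either side.
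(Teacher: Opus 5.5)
Your argument is the same as the paper's. The paper gives no proof beyond calling this a special case of the classical Berry--Esseen theorem for independent, non-identically distributed summands, which is exactly your middle step, and your moment computation and rescaling are correct.

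What you call bookkeeping of the shift $\mu$ is, however, a correction of the statement rather than a reading of it. As written, the first term is the CDF of $\bv^\top\bxi-\mu$, which has mean $-\mu$, while the comparison law $\Nc(\mu,\norm{\bv}_2^2)$ has mean $+\mu$. So at $k=0$ the difference equals $2\Phi(\mu/\norm{\bv}_2)-1$ up to $O(\norm{\bv}_3^3/\norm{\bv}_2^3)$, and this violates the bound. For example, $\bv=(1,\dots,1)$ and $\mu=\sqrt d$ give a difference near $2\Phi(1)-1$, against a bound of order $d^{-1/2}$.

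The Gaussian term you actually use, $\Phi((k+\mu)/\norm{\bv}_2)$, is the CDF of $\Nc(-\mu,\norm{\bv}_2^2)$. It is not the CDF of $\Nc(\mu,\norm{\bv}_2^2)$, nor of $G-\mu$ with $G\sim\Nc(\mu,\norm{\bv}_2^2)$, which is centered. You should say explicitly that you are proving the sign-corrected theorem, which is equivalent to its $\mu=0$ case with $k$ arbitrary.

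Nothing is lost downstream. The paper applies the theorem either with $\mu=0$ (e.g.\ \Lemref{v_clt}, \Lemref{berry_esseen_tail}) or to symmetric events $|\bv^\top\bxi-\mu|\le k$ (\Lemref{boolean_to_gaussian}). For those events $\Nc(\mu,\cdot)$ and $\Nc(-\mu,\cdot)$ assign the same probability.
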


Following an application of \Thmref{berry_esseen}, we will often want to upper bound the Gaussian term as follows.
\begin{lemma} \label{lem:gaussian_taylor}
    For any $k\in \R$ we have
    \begin{equation*}
        \P_{G\sim \Nc(\mu, \sigma^2)}(|G|\leq k)\lesssim \frac{k}{\sigma}\exp\left(\frac{-\mu^2}{2\sigma^2}\right) + \frac{k^2}{\sigma^2}
    \end{equation*}
\end{lemma}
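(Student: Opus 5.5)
The statement is a one-dimensional estimate for a Gaussian small-ball probability, with an off-center mean. The plan is to write $\P_{G\sim\Nc(\mu,\sigma^2)}(|G|\leq k)$ as an integral of the density over $[-k,k]$ and Taylor-expand the density around the origin. For $k<0$ the probability is zero, so we may assume $k\geq 0$. Writing $\varphi_\sigma$ for the $\Nc(0,\sigma^2)$ density,
\begin{equation*}
    \P(|G|\leq k)=\int_{-k}^{k}\varphi_\sigma(u-\mu)\,du \leq 2k\sup_{|u|\leq k}\varphi_\sigma(u-\mu).
\end{equation*}
It therefore suffices to control $\varphi_\sigma(u-\mu)$ uniformly over $|u|\leq k$ by $\sigma^{-1}e^{-\mu^2/(2\sigma^2)}$ plus a correction of order $k\sigma^{-2}$. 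Integrating that bound against the interval length $2k$ then yields the two claimed terms $\frac{k}{\sigma}e^{-\mu^2/(2\sigma^2)}$ and $\frac{k^2}{\sigma^2}$.

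For the density bound, apply the mean value theorem to $g(u)\coloneqq\varphi_\sigma(u-\mu)$ to get $g(u)\leq g(0)+|u|\sup|g'|$. The first piece is $g(0)=\frac{1}{\sigma\sqrt{2\pi}}e^{-\mu^2/(2\sigma^2)}$. For the derivative, $|g'(v)|=\frac{|v-\mu|}{\sigma^3\sqrt{2\pi}}e^{-(v-\mu)^2/(2\sigma^2)}\leq \frac{1}{\sigma^2}\sup_{x\geq 0}\frac{x}{\sqrt{2\pi}}e^{-x^2/2}\lesssim \sigma^{-2}$, uniformly in $v$ and $\mu$. This uniform bound is exactly the point where an unfavourable $\mu$ is absorbed into a constant rather than appearing in the estimate. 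Combining,
\begin{equation*}
    \P(|G|\leq k)\leq 2k\Big(\frac{1}{\sigma\sqrt{2\pi}}e^{-\mu^2/(2\sigma^2)}+\frac{k}{\sigma^2}\cdot O(1)\Big)\lesssim \frac{k}{\sigma}\exp\Big(\frac{-\mu^2}{2\sigma^2}\Big)+\frac{k^2}{\sigma^2}.
\end{equation*}

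There is no genuine obstacle in this argument. The only step requiring care is the uniform derivative bound, since it must not depend on $\mu$; the maximization of $xe^{-x^2/2}$ handles this. Unlike the later uses of this lemma, which weaken the constant $2$ in the exponent to a constant $C$, no degradation of the exponent is needed here.
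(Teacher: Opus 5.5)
Your argument is correct and is essentially the paper's proof. The paper writes the probability as $\Phi(a+h)-\Phi(a-h)$ with $a=-\mu/\sigma$ and $h=k/\sigma$, then Taylor-expands $\Phi$ using that $\Phi''$ is bounded; that boundedness is exactly your uniform, $\mu$-independent bound on the density's derivative obtained by maximizing $xe^{-x^2/2}$.

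One small quibble, which applies to the paper's version as well: your step ``for $k<0$ the probability is zero, so we may assume $k\geq 0$'' is not a valid reduction. For negative $k$ the right-hand side can itself be negative (e.g.\ $\mu=0$, $\sigma=1$, $k=-1/2$) while the probability is $0$. The lemma should simply be read for $k\geq 0$, which is the only case the paper uses.
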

\begin{proof}
    Let $\Phi$ denote the standard Gaussian cumulative distribution function and $\phi$ denote the standard Gaussian probability density function.
    Also, let $a=-\mu/\sigma$ and $h=k/\sigma$.
    Noting that $\Phi''(\cdot)$ is upper bounded by a constant, Taylor's theorem gives us
    \begin{equation*}
        \Phi(a+h) -\Phi(a) - h \phi(a) \asymp h^2.
    \end{equation*}
    In particular,
    \begin{equation*}
        \P_{G\sim \Nc(\mu, \sigma^2)}(|G|\leq k)=\Phi(a+h)-\Phi(a-h)\lesssim h\phi(a)+h^2.
    \end{equation*}
    The lemma then follows by definition of $\phi$, $a$, and $h$.
\end{proof}

We combine \Thmref{berry_esseen} and \Lemref{gaussian_taylor} in the following useful form.
\begin{lemma} \label{lem:boolean_to_gaussian}
    Suppose $\bv\in\R^d$ and we pick $k\lesssim \norm{\bv}_2 d^{-1/4}$.
    Then for any $\mu\in\R$, we have
    \begin{equation*}
        \P_{\bxi\sim \Unif(\{\pm 1\}^{d})}\left(|\bv^\top\bxi - \mu| \leq k \right) \lesssim \frac{k}{\norm{\bv}_2}\exp\left(\frac{-\mu^2}{2\norm{\bv}_2^2}\right) + \frac{\norm{\bv}_3^3}{\norm{\bv}^3_2}.
    \end{equation*}
\end{lemma}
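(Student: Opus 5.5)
The plan is to write the event as an interval probability, replace the Rademacher sum by a Gaussian using \Thmref{berry_esseen}, and then bound the Gaussian interval probability with \Lemref{gaussian_taylor}. Concretely, I would first note that
\begin{equation*}
    \P_{\bxi}\left(|\bv^\top\bxi-\mu|\leq k\right)=\P_{\bxi}\left(\bv^\top\bxi-\mu\leq k\right)-\P_{\bxi}\left(\bv^\top\bxi-\mu< -k\right).
\end{equation*}
Each of the two CDF values lies within $O(\norm{\bv}_3^3/\norm{\bv}_2^3)$ of the corresponding value for $G\sim\Nc(\mu,\norm{\bv}_2^2)$. \Thmref{berry_esseen} gives this directly for the non-strict inequality. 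For the strict inequality, I would take a limit $k'\uparrow -k$ of the non-strict version, which works because the bound is uniform over $k$. Subtracting the two, and using that the Gaussian assigns no mass to single points, the Boolean probability equals $\P_G(|G-\mu|\leq k)$ up to an additive $O(\norm{\bv}_3^3/\norm{\bv}_2^3)$.

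I will be careful with the sign convention at this step. Since $\bxi$ is symmetric, I can replace $\mu$ by $-\mu$ whenever convenient, so it does not matter whether the centered Gaussian event is read as $|G-\mu|\leq k$ with $G$ of mean zero or as $|G|\leq k$ with $G\sim\Nc(\mu,\sigma^2)$. Either way, the result is the event of \Lemref{gaussian_taylor} with $\sigma=\norm{\bv}_2$.

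Applying \Lemref{gaussian_taylor} then yields
\begin{equation*}
    \frac{k}{\norm{\bv}_2}\exp\!\left(\frac{-\mu^2}{2\norm{\bv}_2^2}\right)+\frac{k^2}{\norm{\bv}_2^2}.
\end{equation*}

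The remaining step, and the main point of the hypothesis $k\lesssim\norm{\bv}_2d^{-1/4}$, is absorbing the quadratic term $k^2/\norm{\bv}_2^2$. The hypothesis gives $k^2/\norm{\bv}_2^2\lesssim d^{-1/2}$. To compare this with the Berry--Esseen term, I would use $\norm{\bv}_3^3\geq\norm{\bv}_2^3/\sqrt{d}$, which follows from Hölder or Cauchy--Schwarz ($\norm{\bv}_2^2\leq\norm{\bv}_3^{3/2}\norm{\bv}_1^{1/2}$ combined with the power-mean inequality). Equivalently, $\norm{\bv}_3^3/\norm{\bv}_2^3\geq d^{-1/2}$. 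Hence $d^{-1/2}$ is at most the Berry--Esseen error, and the $k^2$ term is dominated by it. This gives the stated bound.

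The only genuinely delicate points are the lower bound $\norm{\bv}_3^3/\norm{\bv}_2^3\geq d^{-1/2}$ and the strict-versus-nonstrict CDF issue. Both are routine.
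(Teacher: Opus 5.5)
Your proposal is correct and matches the paper's proof. You use Berry--Esseen to pass to a Gaussian with variance $\norm{\bv}_2^2$, apply \Lemref{gaussian_taylor} with $\sigma=\norm{\bv}_2$, and absorb $k^2/\norm{\bv}_2^2\lesssim d^{-1/2}\leq \norm{\bv}_3^3/\norm{\bv}_2^3$, which is exactly the paper's argument. Your additional details fill gaps the paper leaves implicit: the strict versus non-strict CDF step, the use of the symmetry of $\bxi$ to fix the sign convention on $\mu$ (the paper's statement of Berry--Esseen has a sign slip that this repairs), and the H\"older justification of $\norm{\bv}_3^3\geq \norm{\bv}_2^3 d^{-1/2}$.
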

\begin{proof}
    By \Thmref{berry_esseen}, we have
    \begin{equation*}
        \P_{\bxi\sim \Unif(\{\pm 1\}^{d})}\left(|\bv^\top\bxi - \mu| \leq k \right) \lesssim \P_{G\sim \Nc(\mu, \norm{\bv}_2^2)}(|G|\leq k) + \frac{\norm{\bv}_3^3}{\norm{\bv}_2^3}.
    \end{equation*}
    Using $k\lesssim \norm{\bv}_2 d^{-1/4}$, we further have
    \begin{equation*}
        \frac{k^2}{\norm{\bv}_2^2}\lesssim \frac{1}{\sqrt{d}} \leq \frac{\norm{\bv}_3^3}{\norm{\bv}_2^3}.
    \end{equation*}
    Applying \Lemref{gaussian_taylor} with $\sigma=\norm{\bv}_2$ completes the proof of the lemma.
\end{proof}

\subsubsection{Achieving the Optimal Rate} \label{sec:optimal_rate}
\Lemref{boolean_to_gaussian} is very close to the result we will need.
However, the $\tfrac{\norm{\bv}_3^3}{\norm{\bv}_2^3}$ error term can be problematic for us as $\norm{\bv}_\infty$ grows.
We can avoid this via a condition that the deviation $\bDelta$ from a well-behaved initialization $\bvzero$ has small $\ell_2$-norm.
This ensures that the set of ``bad'' indices in $\bv$ which cause the error term to stray from the optimal $d^{-1/2}$ rate is small enough that it can be integrated out as a constant.

Before proving the main result, we introduce a lemma which controls certain vector norms in this regime.
Note that the requirements on $\bvzero$ are satisfied with high probability when $\bvzero\sim \mathbb{S}^{d-1}(\theta)$.\footnote{We will not exactly have $\bvzero\sim \mathbb{S}^{d-1}(\theta)$: in our application we have $\bvzero=\bwperpzero=\bwzero_{4:}$ for a vector $\bwzero\sim \mathbb{S}^{d-1}(\theta)$, but the norm bounds will still hold (with high probability and up to constants).}
\begin{lemma} \label{lem:boolean_to_gaussian_delta_norms}
    Suppose we can write $\bv=\bvzero+\bDelta$ such that $\norm{\bvzero}_2\asymp \theta$, $\norm{\bvzero}_\infty\lesssim \theta\log^{1/2}(d)d^{-1/2}$, $\norm{\bvzero}_3^3\lesssim \theta^3 d^{-1/2}$ and $\norm{\bDelta}_2\ll \theta \log^{-1/2}(d)$.
    Define the ``bad'' index set $B\coloneqq \{ i: |\Delta_i| \geq \theta d^{-1/2}\}$.
    Define $\bv_B\coloneqq (v_i)_{i\in B}$ and $\bv_{\setminus B} \coloneqq (v_i)_{i\notin B}$.
    Then, we have:
    \crefalias{enumi}{lemma}
    \begin{enumerate}[label=(\roman*), ref=\thelemma(\roman*)]
        \item \label{lem:boolean_to_gaussian_delta_norms_B} $|B|\ll d\log^{-1}(d)$.
        \item \label{lem:boolean_to_gaussian_delta_norms_B_2} $\norm{\bv_B}_2\ll \theta$.
        \item \label{lem:boolean_to_gaussian_delta_norms_minusB_3} $\norm{\bv_{\setminus B}}_3^3\lesssim \theta^3 d^{-1/2}$.
        \item \label{lem:boolean_to_gaussian_delta_norms_minusB_2} $\norm{\bv_{\setminus B}}_2\asymp \theta$.
    \end{enumerate}
\end{lemma}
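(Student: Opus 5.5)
The four items are elementary norm estimates, and the plan is to derive them in the order (i), (ii), (iii), (iv), since each one feeds the next.

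For (i), use a Markov/Chebyshev counting argument on $\bDelta$. Every $i\in B$ satisfies $\Delta_i^2\geq \theta^2 d^{-1}$, so
\begin{equation*}
    |B|\cdot \theta^2 d^{-1}\leq \norm{\bDelta}_2^2\ll \theta^2\log^{-1}(d).
\end{equation*}
This gives $|B|\ll d\log^{-1}(d)$ immediately.

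For (ii), split $\bv_B=\bvzero_B+\bDelta_B$ and apply the triangle inequality. The term $\norm{\bDelta_B}_2\leq\norm{\bDelta}_2\ll\theta\log^{-1/2}(d)\leq\theta$ is handled directly. For the other term, use the $\ell_\infty$ bound on $\bvzero$ together with (i):
\begin{equation*}
    \norm{\bvzero_B}_2^2\leq |B|\,\norm{\bvzero}_\infty^2\lesssim |B|\,\theta^2\log(d)d^{-1}\ll \theta^2 .
\end{equation*}
The $\log$ factors cancel exactly here, which is why the hypothesis $\norm{\bDelta}_2\ll\theta\log^{-1/2}(d)$ has that precise scaling. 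I expect this cancellation to be the only delicate point in the lemma; everything else is bookkeeping.

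For (iii), note that off $B$ every coordinate satisfies $|\Delta_i|<\theta d^{-1/2}$. Using $|x+y|^3\lesssim |x|^3+|y|^3$,
\begin{equation*}
    \norm{\bv_{\setminus B}}_3^3\lesssim \norm{\bvzero}_3^3+\sum_{i\notin B}|\Delta_i|^3\leq \theta^3d^{-1/2}+\theta d^{-1/2}\norm{\bDelta}_2^2\lesssim\theta^3 d^{-1/2},
\end{equation*}
where the last step uses $\norm{\bDelta}_2^2\ll\theta^2$.

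For (iv), start from $\norm{\bv}_2\geq\norm{\bvzero}_2-\norm{\bDelta}_2$, and note $\norm{\bvzero}_2\asymp\theta$ while $\norm{\bDelta}_2\ll\theta$, so $\norm{\bv}_2\asymp\theta$. Since $\norm{\bv_{\setminus B}}_2^2=\norm{\bv}_2^2-\norm{\bv_B}_2^2$ and (ii) gives $\norm{\bv_B}_2^2\ll\theta^2$, we get $\norm{\bv_{\setminus B}}_2^2=(1-o(1))\norm{\bv}_2^2\asymp\theta^2$. The matching upper bound is trivial because $\norm{\bv_{\setminus B}}_2\leq\norm{\bv}_2$.
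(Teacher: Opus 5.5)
Your proposal is correct and follows essentially the same argument as the paper for all four items. In (i) you use the same count via $\Delta_i^2\geq\theta^2 d^{-1}$; in (ii) the same split into $|B|\,\norm{\bvzero}_\infty^2$ plus $\norm{\bDelta}_2^2$; in (iii) the same bound by $\max_{i\notin B}|\Delta_i|\cdot\norm{\bDelta}_2^2$; and in (iv) the same identity $\norm{\bv_{\setminus B}}_2^2=\norm{\bv}_2^2-\norm{\bv_B}_2^2$. The only cosmetic difference is in (iv), where you control $\norm{\bv}_2$ by the triangle inequality instead of expanding $\norm{\bvzero+\bDelta}_2^2$ around $\theta^2$ as the paper does; this is slightly cleaner, since only $\norm{\bvzero}_2\asymp\theta$ is known, not $\norm{\bvzero}_2=\theta$.
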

\begin{proof}
    For \Lemref{boolean_to_gaussian_delta_norms_B}, by the definition of $B$ we have $\Delta_i^2 \geq \theta^2d^{-1}$ for all $i\in B$.
    Using $\norm{\bDelta}_2\ll \theta\log^{-1/2}(d)$ we have
    \begin{equation*}
        |B| \theta^2d^{-1} \leq \sum_{i\in B}\Delta_i^2 \leq \norm{\bDelta}_2^2 \ll \theta^2\log^{-1}(d),
    \end{equation*}
    so $|B|\ll d\log^{-1}(d)$.
    For \Lemref{boolean_to_gaussian_delta_norms_B_2}, using $(a+b)^2\leq 2(a^2+b^2)$ for $a,b\geq 0$ we have
    \begin{align*}
        \norm{\bv_B}_2^2 &= \sum_{i\in B} v_i^2 \\
        &= \sum_{i\in B} (\vzero_i+\Delta_i)^2 \\
        &\lesssim \sum_{i\in B} (\vzero_i)^2 + \sum_{i\in B}\Delta_i^2 \\
        &\leq B\norm{\bvzero}_\infty^2 + \norm{\bDelta}_2^2 \\
        &\ll d\log^{-1}(d)\cdot \theta^2 \log(d) d^{-1} + \theta^2\log^{-1}(d) \\
        &\lesssim \theta^2,
    \end{align*}
    where we used \Lemref{boolean_to_gaussian_delta_norms_B}, $\norm{\bvzero}_\infty\lesssim \theta\log^{1/2}(d)d^{-1/2}$, and $\norm{\bDelta}_2\ll \theta\log^{-1/2}(d)$.
    For \Lemref{boolean_to_gaussian_delta_norms_minusB_3}, using the triangle inequality and $(a+b)^3\leq 4(a^3+b^3)$ for $a,b\geq 0$ we have
    \begin{align*}
        \norm{\bv_{\setminus B}}_3^3 &= \sum_{i\notin B} |v_i|^3 \\
        &\leq \sum_{i\notin B}(|\vzero_i|+|\Delta_i|)^3 \\
        &\lesssim \sum_{i \notin B} |\vzero_i|^3 + \sum_{i\notin B} |\Delta_i|^3 \\
        &\leq \norm{\bvzero}_3^3 + \max_{i\notin B} |\Delta_i| \norm{\bDelta}_2^2 \\
        &\lesssim \theta^3 d^{-1/2} + \theta^3 \log^{-1}(d) d^{-1/2} \\
        &\lesssim \theta^3 d^{-1/2},
    \end{align*}
    where we used $\norm{\bvzero}_3^3\lesssim \theta^3 d^{-1/2}$, $|\Delta_i|\leq \theta d^{-1/2}$ for all $i\notin B$, and $\norm{\bDelta}_2\ll \theta\log^{-1/2}(d)$.
    For \Lemref{boolean_to_gaussian_delta_norms_minusB_2}, we have
    \begin{align*}
        \norm{\bv_{\setminus B}}_2^2 &= \norm{\bv}_2^2 - \norm{\bv_B}_2^2 \\
        &= \norm{\bvzero+\bDelta}_2^2 - \norm{\bv_B}_2^2 \\
        &= \norm{\bvzero}_2^2 + 2(\bvzero)^\top\bDelta + \norm{\bDelta}_2^2 - \norm{\bv_B}_2^2.
    \end{align*}
    Using $\norm{\bvzero}_2^2\asymp \theta^2$ and the Cauchy-Schwarz inequality we have
    \begin{align*}
        | \norm{\bv_{\setminus B}}_2^2 - \theta^2 | &\lesssim (\bvzero)^\top\bDelta + \norm{\bDelta}_2^2 + \norm{\bv_B}_2^2 \\
        &\leq \norm{\bvzero}_2\norm{\bDelta}_2+ \norm{\bDelta}_2^2 + \norm{\bv_B}_2^2 \\
        &\ll \theta^2\log^{-1}(d)+\theta^2\log^{-1}(d)+\theta^2 \\
        &\ll \theta^2,
    \end{align*}
    where we also used $\norm{\bDelta}_2\ll \theta\log^{-1/2}(d)$ and \Lemref{boolean_to_gaussian_delta_norms_B_2}.
    Thus $\norm{\bv_{\setminus B}}_2^2 \asymp \theta^2(1\pm o(1))$ so $\norm{\bv_{\setminus B}}_2\asymp \theta$.
    This completes the proof of the lemma.
\end{proof}

Now, we can prove the main lemma of this section.
The technique is along the same lines as~\cite[Lemma B.4]{glasgow2024sgd}, though the result is modified for our purposes.
\begin{lemma} \label{lem:boolean_to_gaussian_delta}
    There exists a constant $C>0$ such that the following holds.
    Suppose we can write $\bv=\bvzero+\bDelta$ such that $\norm{\bvzero}_2\asymp \theta$, $\norm{\bvzero}_\infty\lesssim \theta\log^{1/2}(d)d^{-1/2}$, $\norm{\bvzero}_3^3\lesssim \theta^3 d^{-1/2}$ and $\norm{\bDelta}_2\ll \theta \log^{-1/2}(d)$.
    Moreover, suppose $k\lesssim \theta d^{-1/4}$.
    Then for any $\mu\in\R$, we have
    \begin{equation*}
        \P_{\bxi\sim \Unif(\{\pm 1\}^{d})}\left(|\bv^\top\bxi - \mu| \leq k \right) \lesssim \frac{k}{\theta}\exp\left(\frac{-\mu^2}{C\theta^2}\right) + \frac{1}{\sqrt{d}}.
    \end{equation*}
\end{lemma}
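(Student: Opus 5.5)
The plan is to split $\bv^\top\bxi$ along the ``bad'' index set $B=\{i:|\Delta_i|\geq\theta d^{-1/2}\}$ of \Lemref{boolean_to_gaussian_delta_norms}, writing
\[
\bv^\top\bxi=\bv_B^\top\bxi_B+\bv_{\setminus B}^\top\bxi_{\setminus B},
\]
where the two summands are independent. We condition on $\bxi_B$ and absorb $\bv_B^\top\bxi_B$ into the shift. Concretely, setting $\mu'\coloneqq\mu-\bv_B^\top\bxi_B$, we have
\[
\P_{\bxi}\left(|\bv^\top\bxi-\mu|\leq k\right)=\E_{\bxi_B}\left[\P_{\bxi_{\setminus B}}\left(|\bv_{\setminus B}^\top\bxi_{\setminus B}-\mu'|\leq k\right)\right].
\]
The point is that the good coordinates have a controlled third moment, while the bad coordinates are only a shift that we average over.

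For the inner probability we apply \Lemref{boolean_to_gaussian} to $\bv_{\setminus B}$. Its hypothesis $k\lesssim\norm{\bv_{\setminus B}}_2 d^{-1/4}$ holds because $\norm{\bv_{\setminus B}}_2\asymp\theta$ by \Lemref{boolean_to_gaussian_delta_norms_minusB_2} and $k\lesssim\theta d^{-1/4}$. This yields
\[
\lesssim \frac{k}{\theta}\exp\left(\frac{-\mu'^2}{2\norm{\bv_{\setminus B}}_2^2}\right)+\frac{\norm{\bv_{\setminus B}}_3^3}{\norm{\bv_{\setminus B}}_2^3}.
\]
By \Lemref{boolean_to_gaussian_delta_norms_minusB_3} and \Lemref{boolean_to_gaussian_delta_norms_minusB_2}, the last term is $\lesssim d^{-1/2}$, uniformly in $\bxi_B$. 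This is exactly the optimal rate we want.

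The remaining task, and the main obstacle, is to average the Gaussian factor over $\bxi_B$ and recover $\exp(-\mu^2/(C\theta^2))$. Write $\sigma^2\coloneqq\norm{\bv_{\setminus B}}_2^2\asymp\theta^2$ and $X\coloneqq\bv_B^\top\bxi_B$. The variable $X$ is a Rademacher sum with $\norm{\bv_B}_2\ll\theta$ by \Lemref{boolean_to_gaussian_delta_norms_B_2}, so Hoeffding gives $\P(|X|\geq s)\leq 2e^{-s^2/(2\norm{\bv_B}_2^2)}$. We split on the event $|X|\leq|\mu|/2$:

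\begin{itemize}
\item On the event, $\mu'^2\geq\mu^2/4$, so the factor is at most $\exp(-\mu^2/(8\sigma^2))\leq\exp(-\mu^2/(C\theta^2))$.
\item On the complement, we bound the exponential by $1$. The complement has probability at most $2\exp(-\mu^2/(8\norm{\bv_B}_2^2))\leq 2\exp(-\mu^2/(C\theta^2))$, again because $\norm{\bv_B}_2\lesssim\theta$.
\end{itemize}

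Both contributions therefore carry the factor $\exp(-\mu^2/(C\theta^2))$, multiplied by $k/\theta$. Combining them with the uniform $d^{-1/2}$ Berry--Esseen term gives the claimed bound, with $C$ an absolute constant depending only on the implied constants in \Lemref{boolean_to_gaussian_delta_norms}.

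If the conditioning step causes trouble, a fallback is to use the sub-Gaussian bound $\E_X[\exp(-(\mu-X)^2/(2\sigma^2))]$ via direct completion of the square, treating $X$ as sub-Gaussian with variance proxy $\ll\theta^2$. This gives the same conclusion up to constants.
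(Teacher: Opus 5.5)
Your proof is correct. Its skeleton matches the paper's: the same bad set $B$, conditioning on $S_B=\bv_B^\top\bxi_B$, and applying \Lemref{boolean_to_gaussian} to the good coordinates via the norm bounds of \Lemref{boolean_to_gaussian_delta_norms}.

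The only divergence is the final averaging of $\exp(-(\mu-S_B)^2/(c\theta^2))$ over $S_B$.

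\textbf{The paper's route.} It uses the pointwise inequality $(\mu-S_B)^2\geq \mu^2/2-S_B^2$ to pull out $\exp(-\mu^2/(2c\theta^2))$. It then bounds the exponential-square moment $\E[\exp(S_B^2/(c\theta^2))]\leq (1-2\norm{\bv_B}_2^2/(c\theta^2))^{-1/2}\lesssim 1$. This is done by writing $e^{\beta x^2}$ as a Gaussian integral (completing the square) and invoking the sub-Gaussian moment-generating function of $S_B$. Finiteness of that moment requires $\beta<1/(2\norm{\bv_B}_2^2)$, which the paper secures from $\norm{\bv_B}_2\ll\theta$.

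\textbf{Your route.} You split on $|S_B|\leq|\mu|/2$, where the triangle inequality gives $(\mu-S_B)^2\geq\mu^2/4$. On the complement you bound the exponential by $1$ and use the Hoeffding tail $2\exp(-\mu^2/(8\norm{\bv_B}_2^2))$.

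\textbf{Comparison.} Your argument is more elementary, with no moment-generating-function computation and no need to keep a moment finite. It visibly uses only $\norm{\bv_B}_2\lesssim\theta$. The paper's single estimate loses a factor $2$ rather than $4$ in the exponent, which is immaterial since $C$ is unspecified. Your stated fallback is close in spirit to what the paper actually does.
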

\begin{proof}
    Define the ``bad'' index set $B\coloneqq \{ i: |\Delta_i| \geq \theta d^{-1/2}\}$.
    Define $S_B\coloneqq \sum_{i\in B} v_i\xi_i$ and $S_{\setminus B} \coloneqq \sum_{i\notin B} v_i\xi_i$.
    Similarly, define $\bv_B\coloneqq (v_i)_{i\in B}$ and $\bv_{\setminus B} \coloneqq (v_i)_{i\notin B}$ with $\bxi_B \coloneqq (\xi_i)_{i \in B}$ and $\bxi_{\setminus B} \coloneqq (\xi_i)_{i \notin B}$.
    By independence of the $\xi_i$'s, we can condition on $S_B=z$ to find
    \begin{equation*}
        |\bv^\top\bxi - \mu| \leq k \iff |S_{\setminus B} - (\mu-z) | \leq k.
    \end{equation*}
    By the law of total probability,
    \begin{equation} \label{eq:total_probability}
        \P_{\bxi}\left(|\bv^\top\bxi - \mu| \leq k \right) = \sum_{z} \P_{\bxi_B}(S_B=z)\P_{\bxi_{\setminus B}}\left(|S_{\setminus B} - (\mu-z) | \leq k \right),
    \end{equation}
    where the sum is over all $z$ in the range of $S_B$, which is finite because $S_B$ is a discrete random variable.
    By \Lemref{boolean_to_gaussian}, for a constant $c>0$, we have
    \begin{align*}
        \P_{\bxi_{\setminus B}}(|S_{\setminus B} - (\mu-z) | \leq k) &\lesssim \frac{k}{\norm{\bv_{\setminus B}}_2}\exp\left( \frac{-(\mu-z)^2}{2\norm{\bv_{\setminus B}}_2^2} \right) + \frac{\norm{\bv_{\setminus B}}_3^3}{\norm{\bv_{\setminus B}}_2^3} \\
        &\lesssim \frac{k}{\theta}\exp\left(\frac{-(\mu-z)^2}{c\theta^2}\right)+\frac{1}{\sqrt{d}},
    \end{align*}
    where we used \Lemref{boolean_to_gaussian_delta_norms_minusB_3}, \Lemref{boolean_to_gaussian_delta_norms_minusB_2}, and $k\lesssim \theta d^{-1/4}$.
    Returning to \Eqref{total_probability}, we have
    \begin{equation*}
        \P_{\bxi}\left(|\bv^\top\bxi - \mu| \leq k \right) \lesssim \frac{k}{\theta} \E_{S_B}\left[\exp\left(\frac{-(\mu-S_B)^2}{c\theta^2}\right) \right] + \frac{1}{\sqrt{d}}.
    \end{equation*}
    It remains to show that
    \begin{equation*}
        \E_{S_B}\left[\exp\left(\frac{-(\mu-S_B)^2}{c\theta^2}\right) \right] \lesssim \exp\left( \frac{-\mu^2}{C\theta^2} \right).
    \end{equation*}
    We have
    \begin{align*}
        (\mu-S_B)^2-(\mu^2/2-S_B^2) &= \mu^2 - 2\mu S_B + S_B^2 - \mu^2/2+S_B^2 \\
        &= \mu^2/2-2\mu S_B + 2S_B^2 \\
        &= (\mu-2S_B)^2/2 \\
        &\geq 0,
    \end{align*}
    so $(\mu-S_B)^2 \geq \mu^2/2-S_B^2$.
    In particular,
    \begin{equation*}
        \E_{S_B}\left[\exp\left(\frac{-(\mu-S_B)^2}{c\theta^2}\right) \right] \leq \exp\left(\frac{-\mu^2}{2c\theta^2}\right) \E_{S_B}\left[ \exp\left(\frac{S_B^2}{c\theta^2} \right)\right].
    \end{equation*}
    We claim that $\E_{S_B}\left[ \exp\left(\frac{S_B^2}{c\theta^2} \right)\right]\lesssim 1$, which setting $C=2c$ would prove the lemma.
    For any $\beta>0$ and any $x$ we have
    \begin{align*}
        e^{\beta x^2} &= \frac{1}{\sqrt{4\pi\beta }} e^{\beta x^2} \sqrt{4\pi\beta}\\
        &= \frac{1}{\sqrt{4\pi\beta}}e^{\beta x^2}\int_{-\infty}^\infty e^{-t^2/(4\beta)}dt \\
        &= \frac{1}{\sqrt{4\pi\beta}}e^{\beta x^2}\int_{-\infty}^\infty e^{-(t-2\beta x)^2/(4\beta)}dt \\
        &= \frac{1}{\sqrt{4\pi\beta}}\int_{-\infty}^\infty e^{tx-t^2/(4\beta)}dt,
    \end{align*}
    where we completed the square in $t$ and used the standard Gaussian integral $\int_{-\infty}^\infty e^{-zt^2}dt=\sqrt{\pi/z}$ for any $z$.
    Taking expectations over $S_B$ and applying Fubini's theorem, we have
    \begin{equation*}
        \E_{S_B}[e^{\beta S_B^2}] = \frac{1}{\sqrt{4\pi\beta}}\int_{-\infty}^\infty \E_{S_B}[e^{tS_B}] e^{-t^2/(4\beta)}dt.
    \end{equation*}
    Since $S_B$ (as a Rademacher sum) is subgaussian with parameter $\norm{\bv_B}_2^2$, we have $\E_{S_B}[e^{tS_B}] \leq e^{t^2\norm{\bv_B}_2^2/2}$, so
    \begin{align*}
        \E_{S_B}[e^{\beta S_B^2}] &\leq \frac{1}{\sqrt{4\pi\beta}}\int_{-\infty}^\infty \exp\left(-t^2 \left(\frac{1}{4\beta} - \frac{\norm{\bv_B}_2^2}{2}\right)\right)dt \\
        &= \frac{1}{\sqrt{1-2\beta\norm{\bv_{B}}_2^2}},
    \end{align*}
    where we used the Gaussian integral again, for $\beta <1/(2\norm{\bv_B}_2^{2})$.
    Setting $\beta=1/(c\norm{\bvzero}_2^{2})$, and recalling that $\norm{\bv_B}_2^2\ll \theta^2 \asymp \norm{\bvzero}_2^2$ by \Lemref{boolean_to_gaussian_delta_norms_B_2}, we have
    \begin{equation*}
        \E_{S_B}\left[\exp\left(\frac{S_B^2}{c\theta^2}\right)\right]\leq \frac{1}{\sqrt{1-\frac{2\norm{\bv_{B}}_2^2}{c\norm{\bvzero}_2^2}}} \lesssim \frac{1}{\sqrt{1-o(1)}} \lesssim 1.
    \end{equation*}
    This completes the proof of the lemma.
\end{proof}

\subsubsection{Approximation of a Truncated Moment} \label{sec:truncated_moment}
In this section, we can apply \Thmref{berry_esseen} to a certain truncated moment which will be important in our analysis.
The proof technique is a bulk/tail argument on the integral of the difference between the Rademacher and Gaussian cumulative distribution functions.
\begin{lemma} \label{lem:berry_esseen_tail}
    For any $\bv\in\R^d$ and $k\geq 0$, write
    \begin{equation*}
        \Delta(\bv, k) \coloneqq \bigg| \E_{\bxi\sim\Unif(\{\pm 1\}^d)} \left[|\bv^\top\bxi| \ind\left(|\bv^\top\bxi| \geq k\right)\right] - \E_{G\sim\Nc(0,\norm{\bv}_2^2)} [|G| \ind(|G|\geq k)] \bigg|.
    \end{equation*}
    Then, for any $\bv\in\R^d$ we have
    \begin{equation*}
        \sup_{k\geq 0} \Delta(\bv,k) \lesssim \norm{\bv}_\infty \log^{1/2}\left(\frac{\norm{\bv}_2}{\norm{\bv}_\infty}\right).
    \end{equation*}
\end{lemma}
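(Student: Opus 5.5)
Let $X\coloneqq \bv^\top\bxi$ and $G\sim\Nc(0,\norm{\bv}_2^2)$. By scaling we normalize $\norm{\bv}_2=1$ and write $\epsilon\coloneqq\norm{\bv}_\infty$; the target bound becomes $\epsilon\log^{1/2}(1/\epsilon)$. We may assume $\epsilon$ is at most a small constant, since otherwise both truncated moments are bounded by $\E|X|,\E|G|\le 1$ and the claim is trivial. The plan is to write each truncated moment as a tail integral and compare CDFs. For a nonnegative random variable $Y=|X|$,
\begin{equation*}
    \E[Y\ind(Y\geq k)] = k\,\P(Y\geq k) + \int_k^\infty \P(Y\geq u)\,du ,
\end{equation*}
and the same holds for $|G|$. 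Writing $F(u)\coloneqq \P(|X|\geq u)$ and $\bar F(u)\coloneqq\P(|G|\geq u)$, this gives
\begin{equation*}
    \Delta(\bv,k) \leq k\,|F(k)-\bar F(k)| + \int_k^\infty |F(u)-\bar F(u)|\,du .
\end{equation*}
By \Thmref{berry_esseen}, applied with $\mu=0$ on both sides, we have $|F(u)-\bar F(u)|\lesssim \norm{\bv}_3^3\le \norm{\bv}_\infty\norm{\bv}_2^2=\epsilon$ uniformly in $u$.

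Next, split at a threshold $R\asymp \log^{1/2}(1/\epsilon)$ (bulk versus tail). On the bulk $u\le R$, the uniform Berry--Esseen bound contributes at most $R\epsilon\lesssim \epsilon\log^{1/2}(1/\epsilon)$ to the integral. For the boundary term, if $k\le R$ then $k|F(k)-\bar F(k)|\lesssim R\epsilon$. If $k>R$, we bound $k F(k)\le k\cdot 2e^{-k^2/2}$ by Hoeffding's inequality, since $X$ is a Rademacher sum with variance proxy $1$, and the Gaussian term similarly. Both are at most $e^{-R^2/4}\lesssim\epsilon$ once $R$ carries a large enough constant.

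On the tail $u>R$, we drop the difference and bound each term separately: $\int_R^\infty F(u)\,du\le \int_R^\infty 2e^{-u^2/2}\,du\lesssim e^{-R^2/2}/R\lesssim \epsilon$, and likewise for $\bar F$. Summing the bulk, boundary and tail contributions gives $\sup_k\Delta(\bv,k)\lesssim \epsilon\log^{1/2}(1/\epsilon)$. Undoing the normalization, $\epsilon=\norm{\bv}_\infty/\norm{\bv}_2$ and the bound scales by $\norm{\bv}_2$, which yields exactly $\norm{\bv}_\infty\log^{1/2}(\norm{\bv}_2/\norm{\bv}_\infty)$.

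The only delicate point is balancing the cutoff. Berry--Esseen error accumulates linearly in the window length, while the subgaussian tails decay like $e^{-R^2/2}$, so choosing $R=C\log^{1/2}(1/\epsilon)$ equalizes the two. The boundary term $k(F-\bar F)$ for large $k$ also needs the tail bound rather than Berry--Esseen; beyond that, everything is routine.
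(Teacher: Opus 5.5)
Your proposal is correct and is essentially the paper's own argument. Both use the same tail-integral representation, Berry--Esseen on the bulk up to a cutoff of order $\norm{\bv}_2\log^{1/2}(\cdot)$, Hoeffding and Gaussian tails beyond the cutoff, and a separate tail bound on the boundary term $k\,|F(k)-\bar F(k)|$ when $k$ exceeds the cutoff. The only difference is cosmetic: the paper keeps $\norm{\bv}_3^3/\norm{\bv}_2^3$ inside the cutoff and converts at the end, while you bound $\norm{\bv}_3^3\le\norm{\bv}_\infty\norm{\bv}_2^2$ up front, which is slightly cleaner.

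One correction: the case of non-small $\epsilon$ is not trivial. The right-hand side vanishes as $\epsilon\to 1$: for $\bv=\be_1$ it equals $0$, while $\Delta(\be_1,0)=1-\sqrt{2/\pi}$. So the lemma as stated genuinely needs $\norm{\bv}_\infty\le c\norm{\bv}_2$, or the logarithm replaced by $\log^{1/2}(e\norm{\bv}_2/\norm{\bv}_\infty)$, in which case your trivial bound does cover large $\epsilon$. The paper's proof also uses this restriction silently, since its cutoff step needs $\log(\norm{\bv}_2^3/\norm{\bv}_3^3)\gtrsim 1$. It is harmless in the paper's application, where the ratio is $\lesssim\log^{4C}(d)d^{-1/2}$.
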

\begin{proof}
    Recall that for any non-negative random variable $X$ and $k\geq 0$ we may write
    \begin{equation} \label{eq:berry_esseen_bulk_tail}
        \E[X \ind(X\geq k)] = k\P(X\geq k) + \int_k^\infty \P(X\geq u) du. 
    \end{equation}
    In particular, we have by the triangle inequality that
    \begin{equation*}
        \Delta(\bv, k) \leq k \left|\P \left(|\bv^\top\bxi| \geq k\right)-\P(|G|\geq k) \right| + \int_k^\infty \left|\P\left(|\bv^\top\bxi| \geq u\right)-\P(|G|\geq u) \right| du.
    \end{equation*}
    For the first term, by \Thmref{berry_esseen} we have
    \begin{equation} \label{eq:berry_esseen_term_1}
        k\left|\P\left(|\bv^\top\bxi| \geq k\right)-\P(|G|\geq k)\right| \lesssim k\frac{\norm{\bv}_3^3}{\norm{\bv}_2^3}.
    \end{equation}
    For the second term, we will split the integral at $T= \sqrt{2} \norm{\bv}_2\log^{1/2}(\norm{\bv}_2^3/\norm{\bv}_3^3)$.
    For the first integral, by \Thmref{berry_esseen} we have
    \begin{equation} \label{eq:berry_esseen_bulk}
        \int_k^T \left|\P\left(|\bv^\top\bxi| \geq u\right)-\P(|G|\geq u)\right| du \lesssim (T-k) \frac{\norm{\bv}^3_3}{\norm{\bv}_2^3}.  
    \end{equation}
    For the second integral, by the Gaussian tail we have $\P(|G|\geq u) \leq 2\exp\Big(\frac{-u^2}{2\norm{\bv}_2^2}\Big)$.
    Moreover, by Hoeffding's inequality and independence of each $\xi_i$, we have $\P\left(|\bv^\top\bxi|\geq u\right)\leq 2\exp\Big(\frac{-u^2}{2\norm{\bv}_2^2}\Big)$.
    Hence, using the standard upper bound on the Gaussian tail integral, we obtain
    \begin{equation*}
        \int_T^\infty \left|\P\left(|\bv^\top\bxi| \geq u\right)-\P(|G|\geq u) \right| du \lesssim \int_T^\infty \exp\left(\frac{-u^2}{2\norm{\bv}_2^2}\right) \leq \frac{\norm{\bv}_2^2}{T}\exp\left(\frac{-T^2}{2\norm{\bv}_2^2}\right).
    \end{equation*}
    By our choice of $T$ we have
    \begin{equation} \label{eq:berry_esseen_tail}
        \int_T^\infty \left|\P\left(|\bv^\top\bxi| \geq u \right)-\P(|G|\geq u)\right| du \lesssim \frac{\norm{\bv}_2^2}{T}\cdot\frac{\norm{\bv}_3^3}{\norm{\bv}_2^3}.
    \end{equation}
    Combining \Eqref{berry_esseen_bulk} and \Eqref{berry_esseen_tail} we have
    \begin{equation} \label{eq:berry_esseen_term_2}
        \int_k^\infty \left|\P\left(|\bv^\top\bxi| \geq u \right)-\P(|G|\geq u) \right| du \lesssim (T-k) \frac{\norm{\bv}^3_3}{\norm{\bv}_2^3}+\frac{\norm{\bv}_2^2}{T}\cdot\frac{\norm{\bv}_3^3}{\norm{\bv}_2^3} \asymp T \frac{\norm{\bv}^3_3}{\norm{\bv}_2^3}. 
    \end{equation}
    Combining \Eqref{berry_esseen_term_1} and \Eqref{berry_esseen_term_2} we have
    \begin{equation*}
        \Delta(\bv, k) \lesssim (k+T)\frac{\norm{\bv}_3^3}{\norm{\bv}_2^3}.
    \end{equation*}
    Now, let us supremize over $k\geq 0$.
    If $k\leq T$, we have $\Delta(\bv, k) \lesssim T\frac{\norm{\bv}_3^3}{\norm{\bv}_2^3}$.
    On the other hand, if $k>T$ the entirety of the error is contained in the tail (c.f. \Eqref{berry_esseen_tail}).
    Formally, by the triangle inequality we have
    \begin{equation*}
        \Delta(\bv,k) \leq \E\left[|\bv^\top\bxi| \ind\left(|\bv^\top\bxi| \geq k \right) \right] + \E[|G| \ind(|G|\geq k)]. 
    \end{equation*}
    By \Eqref{berry_esseen_bulk_tail} and the tail bounds used previously, we have
    \begin{equation*}
        \Delta(\bv,k) \lesssim k\exp\left(\frac{-k^2}{2\norm{\bv}_2^2}\right) + \int_T^\infty \exp\left(\frac{-u^2}{2\norm{\bv}_2^2}\right)du
    \end{equation*}
    The first term is maximized at $k=T$, i.e.,
    \begin{equation} \label{eq:berry_esseen_bigk_term_1}
        k\exp\left(\frac{-k^2}{2\norm{\bv}_2^2}\right) \leq T\exp\left(\frac{-T^2}{2\norm{\bv}_2^2}\right) \lesssim T \frac{\norm{\bv}^3_3}{\norm{\bv}_2^3}.
    \end{equation}
    For the second term we have by \Eqref{berry_esseen_tail} that
    \begin{equation} \label{eq:berry_esseen_bigk_term_2}
        \int_T^\infty \exp\left(\frac{-u^2}{2\norm{\bv}_2^2}\right)du \lesssim \frac{\norm{\bv}_2^2}{T}\cdot \frac{\norm{\bv}^3_3}{\norm{\bv}_2^3}.
    \end{equation}
    Combining the $k\leq T$ case with \Eqref{berry_esseen_bigk_term_1} and \Eqref{berry_esseen_bigk_term_2} we have
    \begin{equation*}
        \sup_{k\geq 0}\Delta(\bv, k) \lesssim T\frac{\norm{\bv}_3^3}{\norm{\bv}_2^3}.
    \end{equation*}
    To obtain our final result, we use the definition of $T$ and standard norm inequalities such that
    \begin{equation*}
        \sup_{k\geq 0}\Delta(\bv, k) \lesssim \norm{\bv}_2 \frac{\norm{\bv}_3^3}{\norm{\bv}_2^3}\log^{1/2}\left(\frac{\norm{\bv}_2^3}{\norm{\bv}^3_3}\right) \lesssim \norm{\bv}_\infty \log^{1/2}\left(\frac{\norm{\bv}_2}{\norm{\bv}_\infty}\right).
    \end{equation*}
    This completes the proof of the lemma.
\end{proof}

\subsubsection{Approximation of a Gradient Projection} \label{sec:lindeberg}
Finally, we will leverage the Lindeberg exchange method (typically used in the proof of central limit theorems, \eg\cite{tao2015central}) to show that a certain population gradient with respect to a weight vector $\bw$ is approximately parallel to $\bw$.
\begin{lemma} \label{lem:lindeberg}
    Define $W\coloneqq \E_{\bxi\sim\textnormal{Unif}(\{\pm 1\}^d)}\left[\bxi h(\bw^\top\bxi)\right]$ where $h(t)\coloneqq \sgn(t)\ind(|t|\geq \kappa)$.
    Write $W_\perp$ for the projection of $W$ onto the subspace perpendicular to $\bw$.
    Then, $\norm{W_\perp}_2 \lesssim \left(\frac{\norm{\bw}_\infty}{\norm{\bw}_2}\right)^{1/7}$.
\end{lemma}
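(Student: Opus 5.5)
The approach is to compare with the Gaussian case and transfer via Lindeberg exchange with a smoothed $h$. Fix a unit vector $\bu\perp\bw$; then $\norm{W_\perp}_2=\sup_{\bu}\bu^\top W=\sup_{\bu}\E_{\bxi}[(\bu^\top\bxi)h(\bw^\top\bxi)]$. For a Gaussian $\bg\sim\Nc(\bzero,\mathbf{I}_d)$, the variables $\bu^\top\bg$ and $\bw^\top\bg$ are independent because $\bu\perp\bw$. Hence $\E[(\bu^\top\bg)h(\bw^\top\bg)]=0$; equivalently, Stein's lemma makes the Gaussian gradient exactly parallel to $\bw$. It therefore suffices to bound $|\E F(\bxi)-\E F(\bg)|$ for $F(\bx)\coloneqq(\bu^\top\bx)h(\bw^\top\bx)$, uniformly over unit $\bu\perp\bw$.

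Since $h$ is discontinuous at $\pm\kappa$ and $0$, I would first replace it by a smooth $h_\epsilon$ that agrees with $h$ outside $\epsilon$-neighborhoods of $\{-\kappa,0,\kappa\}$ and satisfies $|h_\epsilon^{(j)}|\lesssim\epsilon^{-j}$ for $j\le3$. The smoothing error is bounded by Cauchy--Schwarz:
\begin{equation*}
|\E[(\bu^\top\bx)(h-h_\epsilon)(\bw^\top\bx)]|\le\sqrt{\E(\bu^\top\bx)^2}\cdot\sqrt{\P(\bw^\top\bx\in N_\epsilon)}.
\end{equation*}
For both Rademacher and Gaussian $\bx$, \Thmref{berry_esseen} gives $\P(\bw^\top\bx\in N_\epsilon)\lesssim\epsilon/\norm{\bw}_2+\norm{\bw}_\infty/\norm{\bw}_2$. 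This yields a smoothing error $\lesssim(\epsilon/\norm{\bw}_2+r)^{1/2}$, where $r\coloneqq\norm{\bw}_\infty/\norm{\bw}_2$ (using $\norm{\bw}_3^3\le\norm{\bw}_\infty\norm{\bw}_2^2$).

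Next I would run the Lindeberg swap on the smooth functional $F_\epsilon(\bx)\coloneqq(\bu^\top\bx)h_\epsilon(\bw^\top\bx)$, replacing coordinates $\xi_i$ by $g_i$ one at a time. Expanding to third order in the $i$-th coordinate, the first and second moments of $\xi_i$ and $g_i$ match, so each swap costs $\lesssim\E\sup|\partial_i^3F_\epsilon|$. The third derivative is a sum of terms of the form $u_iw_i^2h_\epsilon''$ and $(\bu^\top\bx)w_i^3h_\epsilon'''$. After taking expectations these are bounded by $|u_i|w_i^2\epsilon^{-2}+|w_i|^3\epsilon^{-3}$, using $\E|\bu^\top\bx|\lesssim1$ for the hybrid vectors. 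Summing over $i$ and applying Cauchy--Schwarz to $\sum_i|u_i|w_i^2\le\norm{\bw}_4^2\le\sqrt{\norm{\bw}_\infty\norm{\bw}_3^3}$ gives a total exchange error
\begin{equation*}
\lesssim\frac{\norm{\bw}_\infty\norm{\bw}_2}{\epsilon^2}+\frac{\norm{\bw}_\infty\norm{\bw}_2^2}{\epsilon^3}.
\end{equation*}
Writing $\epsilon=s\norm{\bw}_2$, this becomes $\lesssim r s^{-3}$ when $s\le1$.

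Combining the two errors gives $\norm{W_\perp}_2\lesssim(s+r)^{1/2}+rs^{-3}$. Choosing $s=r^{2/7}$ balances $s^{1/2}=r^{1/7}$ against $rs^{-3}=r^{1/7}$, which yields the claimed $r^{1/7}$. The step I expect to be most delicate is the third-order Lindeberg bookkeeping. The factor $\bu^\top\bx$ is unbounded, so the hybrid moments must be controlled using independence of the swapped coordinate, and the $(\bu^\top\bx)$ term must be bounded with $\E|\bu^\top\bx|\lesssim1$ rather than a supremum. One should also check that the mixed derivatives do not introduce a worse power of $\epsilon$. Any slack only changes the exponent $1/7$, and that exponent is what the balancing above produces.
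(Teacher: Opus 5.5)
Your proposal is correct and follows the paper's proof essentially step for step. Both arguments use the dual formulation over unit vectors orthogonal to $\bw$, exact vanishing in the Gaussian case by independence, a mollified $h$ whose smoothing error is controlled by Cauchy--Schwarz plus Berry--Esseen, a coordinate-by-coordinate third-order Lindeberg swap with the same third-derivative bound $|u_i|w_i^2\epsilon^{-2}+|w_i|^3\epsilon^{-3}$ (using independence of the swapped coordinate and $\E|\bu^\top\bx|\le 1$), and the same balancing choice $\epsilon\asymp\|\bw\|_\infty^{2/7}\|\bw\|_2^{5/7}$, which is your $s=r^{2/7}$. The only differences are cosmetic: you also smooth near $0$, which is unnecessary when $\kappa>0$ but harmless and covers $\kappa=0$, and you bound the window probability directly by Berry--Esseen instead of going through the paper's Gaussian Taylor lemma.
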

\begin{proof}
    By the dual formulation and linearity of expectation, we have
    \begin{equation*}
        \norm{W_\perp}_2 = \sup_{\bz^\top\bw=0, \norm{\bz}=1} \E_{\bxi}\left[\bz^\top\bxi \cdot h(\bw^\top\bxi) \right].
    \end{equation*}
    Let us fix some $\bz\in \R^d$ such that $\bz^\top\bw=0$ and $\norm{\bz}=1$ and upper bound the right-hand side.
    Note that if we instead took the expectation over a standard Gaussian vector $\bg\sim \Nc(0,\bI)$, we would have $\bz^\top\bg$ and $\bw^\top\bg$ independent such that
    \begin{equation*}
        \E_{\bg}\left[\bz^\top\bg \cdot h(\bw^\top\bg) \right]=\E_{\bg}\left[\bz^\top\bg\right]\cdot \E_{\bg} \left[h(\bw^\top\bg) \right] = 0,
    \end{equation*}
    in other words, $W$ is exactly parallel to $\bw$ for Gaussians.
    We will leverage this fact to show that the distance to a Gaussian analogue of $W$ is small using the Lindeberg exchange method.
    We will need to Taylor expand $h$, but it is discontinuous, so we define a smooth relaxation parameterized by $\tau$ (to be optimized later): let $h_\tau\in C^\infty$ be such that $h_\tau=h$ outside $[-\kappa-\tau,-\kappa+\tau]$ and $[\kappa-\tau, \kappa+\tau]$ and $\norm{h_\tau^{(k)}}_\infty \lesssim \tau^{-k}$ for $k=1,2,3$.
    This exists via convolution of $h$ with two mollifiers, since $h$ is just a step function with two steps.
    Specifically, choose any bounded $C^\infty$ mollifier $\rho:\R\to [0,\infty)$ such that $\rho$ is supported on $[-1,1]$ and $\int_{-1}^{1} \rho(t)dt=1$.
    We rescale this mollifier into $[-\tau,\tau]$ as
    \begin{equation*}
        \rho_\tau(t)\coloneqq\frac{1}{\tau}\rho\left(\frac{t}{\tau}\right).
    \end{equation*}
    Then, write $S$ for the Heaviside step function and define $S_\tau(t)\coloneqq (S*\rho_\tau)(t)$ where $*$ is the convolution operator.
    We argue that $h_\tau(t)\coloneqq S_\tau(t-\kappa)-S_\tau(-t-\kappa)$ has the desired properties.
    In particular, $h_\tau$ is $C^\infty$ because $\rho_\tau$ is $C^\infty$, and it has $h_\tau=h$ outside $[-\kappa-\tau,-\kappa+\tau]$ and $[\kappa-\tau, \kappa+\tau]$ because $\rho_\tau$ is supported on $[-\tau, \tau]$.
    For the first three derivatives, noting that $\rho$ has bounded derivatives since it is $C^\infty$, we have
    \begin{align*}
        S'_\tau(t)=\rho_\tau(t)=\frac{1}{\tau}\rho\left(\frac{t}{\tau}\right) &\implies h'_\tau(t)\lesssim \frac{1}{\tau}, \\
        S''_\tau(t)=\rho_\tau'(t)=\frac{1}{\tau^2}\rho'\left(\frac{t}{\tau} \right) &\implies h''_\tau(t)\lesssim \frac{1}{\tau^2}, \\
        S'''_\tau(t)=\rho_\tau''(t)=\frac{1}{\tau^3}\rho''\left(\frac{t}{\tau} \right) &\implies h'''_\tau(t)\lesssim \frac{1}{\tau^3}.
    \end{align*}
    Now, define $H(\bx)\coloneqq \bz^\top\bx h(\bw^\top\bx)$ and $H_\tau(\bx)\coloneqq \bz^\top\bx h_\tau(\bw^\top\bx)$ such that
    \begin{align} 
        \E_{\bxi}[\bz^\top\bxi \cdot h(\bw^\top\bxi)] &= \E_{\bxi}[H(\bxi)] \nonumber \\
        &= \big(\E_{\bxi}[H(\bxi)] -\E_{\bxi}[H_\tau(\bxi)]\big) + \big(\E_{\bg}[H_\tau(\bg)]-\E_{\bg}[H(\bg)] \big) + \big(\E_{\bxi}[H_\tau(\bxi)]-\E_{\bg}[H_\tau(\bg)] \big). \label{eq:lindeberg_full}
    \end{align}
    The first two terms are smoothing errors and the final term is the Gaussian analogue error.
    For the first term, recall that by definition of $h_\tau$ we have $|H_\tau(\bxi)-H(\bxi)|$ nonzero only when $|\bw^\top\bxi|\in [\kappa-\tau, \kappa+\tau]$.
    Using this fact with Jensen's inequality, the Cauchy-Schwarz inequality, and isotropy of $\bxi$, we have
    \begin{align*}
        \big|\E_{\bxi}[H(\bxi)] -\E_{\bxi}[H_\tau(\bxi)] \big| &\leq \E_{\bxi}\left[|\bz^\top\bxi|\cdot \ind\left(|\bw^\top\bxi|\in [\kappa-\tau, \kappa+\tau]\right)\right] \\
        &\leq \sqrt{\E_{\bxi}\left[(\bz^\top\bxi)^2\right]}\sqrt{\E_{\bxi}\left[\ind\left(|\bw^\top\bxi|\in [\kappa-\tau, \kappa+\tau]\right)\right]} \\
        &=\sqrt{\P_{\bxi}\left(|\bw^\top\bxi|\in [\kappa-\tau, \kappa+\tau]\right)}.
    \end{align*}
    Now, we have two cases.
    First, if $\tau< \kappa$, combining \Thmref{berry_esseen} and \Lemref{gaussian_taylor} we find
    \begin{equation*}
        \P_{\bxi}\left(|\bw^\top\bxi|\in [\kappa-\tau, \kappa+\tau]\right) \lesssim \frac{\tau}{\norm{\bw}_2}+\frac{\tau^2}{\norm{\bw}_2^2}+\frac{\norm{\bw}_3^3}{\norm{\bw}_2^3}.
    \end{equation*}
    Second, if $\tau \geq \kappa$, combining \Thmref{berry_esseen} and \Lemref{gaussian_taylor} we find
    \begin{align*}
        \P_{\bxi}\left(|\bw^\top\bxi|\in [\kappa-\tau, \kappa+\tau]\right) &= \P_{\bxi}\left(|\bw^\top\bxi|\in [0, \kappa+\tau]\right) \\
        &\lesssim \frac{\kappa+\tau}{\norm{\bw}_2}+\frac{(\kappa+\tau)^2}{\norm{\bw}_2^2}+\frac{\norm{\bw}_3^3}{\norm{\bw}_2^3} \\
        &\lesssim \frac{\tau}{\norm{\bw}_2}+\frac{\tau^2}{\norm{\bw}_2^2}+\frac{\norm{\bw}_3^3}{\norm{\bw}_2^3},
    \end{align*}
    where we used that $\tau\geq \kappa$ in the last step.
    In either case we have
    \begin{equation} \label{eq:lindeberg_term1}
        \big|\E_{\bxi}[H(\bxi)] -\E_{\bxi}[H_\tau(\bxi)] \big| \lesssim \frac{\tau^{1/2}}{\norm{\bw}^{1/2}_2}+\frac{\tau}{\norm{\bw}_2}+\frac{\norm{\bw}_3^{3/2}}{\norm{\bw}_2^{3/2}}.
    \end{equation}
    The Gaussian smoothing error follows in the same two cases, just without the application of Berry-Esseen: repeating the Cauchy-Schwarz step and using \Lemref{gaussian_taylor} we have
    \begin{equation} \label{eq:lindeberg_term2}
        \big|\E_{\bg}[H_\tau(\bg)] -\E_{\bg}[H(\bg)] \big| \lesssim \frac{\tau^{1/2}}{\norm{\bw}^{1/2}_2}+\frac{\tau}{\norm{\bw}_2}.
    \end{equation}
    Now, we execute the Lindeberg exchange method to bound the Gaussian analogue error.
    Let us replace coordinates one at a time such that
    \begin{equation*}
        \bk_i\coloneqq (\xi_1,\dots,\xi_i,g_{i+1},\dots,g_d)
    \end{equation*}
    and $\bk_0\coloneqq\bg$.
    Then we have the telescoping sum
    \begin{equation} \label{eq:lindeberg_telescope}
        \E_{\bxi}[H_\tau(\bxi)]-\E_{\bg}[H_\tau(\bg)] = \sum_{i=1}^d \E_{\bxi,\bg}[H_\tau(\bk_i)-H_\tau(\bk_{i-1})].
    \end{equation}
    Define
    \begin{equation*}
        q_i(t)\coloneqq H_\tau(\xi_1,\dots,\xi_{i-1},t,g_{i+1},\dots,g_d),
    \end{equation*}
    then we have
    \begin{equation*}
        H_\tau(\bk_i)-H_\tau(\bk_{i-1}) = q_i(\xi_i)-q_i(g_i).
    \end{equation*}
    By a third-order Taylor expansion about zero (which we can do since $\rho_\tau$ is $C^\infty$), we have
    \begin{equation*}
        q_i(t)=q_i(0)+q_i'(0)t+\frac{1}{2}q_i''(0)t^2+R_3(t).
    \end{equation*}
    Since $\xi_i$ and $g_i$ are both centered and isotropic, we have
    \begin{align*}
        q(0)-q(0)&=0, \\
        \E[q'_i(0)\xi_i-q'_i(0)g_i] = q'_i(0)(\E[\xi_i]-\E[g_i]) &= 0, \\
        \E[\frac{1}{2}q''_i(0)\xi_i^2-\frac{1}{2}q''_i(0)g_i^2]=\frac{1}{2}q''_i(0)(\E[\xi_i^2]-\E[g_i^2]) &= 0,
    \end{align*}
    so the first three terms cancel in expectation.
    By Taylor's theorem with remainder, for some $\zeta_1,\zeta_2\in(0,1)$ we have
    \begin{equation*}
        \E[q_i(\xi_i)-q_i(g_i)]=\E[R_3(\xi_i)-R_3(g_i)]\leq \E[|\xi_i|^3\cdot |q'''_i(\zeta_1 \xi_i)|] + \E[|g_i|^3 \cdot |q'''_i(\zeta_2 g_i)|].
    \end{equation*}
    Now, writing $A\coloneqq \bz_{\setminus i}^\top\bx_{\setminus i}$ and $B\coloneqq \bw^\top_{\setminus i}\bx_{\setminus i}$ for the contributions from the other coordinates in $\bx=\bk_i$, we differentiate $H_\tau(\bx)\coloneqq \bz^\top\bx \cdot h_\tau(\bw^\top\bx)$ to find
    \begin{equation*}
        q'''(t)=\partial_i^3 H_\tau(\bx)|_{x_i=t} = 3z_i w_i^2 h_\tau''(w_i t + B) + (z_it + A) w_i^3 h_\tau'''(w_it+B).
    \end{equation*}
    Using $\norm{h_\tau^{(k)}}_\infty \lesssim \tau^{-k}$ by definition, we have
    \begin{equation*}
        |q'''(t)| \lesssim \frac{|z_i|w_i^2}{\tau^2} + \frac{(|z_i||t|+|A|)|w_i|^3}{\tau^3}.
    \end{equation*}
    Since $|\xi_i|=1$ we have
    \begin{equation*}
        \E[|\xi_i|^3|q'''(\zeta_1 \xi_i)|]\lesssim \frac{|z_i|w_i^2}{\tau^2} + \frac{|z_i||w_i|^3}{\tau^3} + \frac{\E[|A|]|w_i|^3}{\tau^3}.
    \end{equation*}
    On the other hand, since $\E[|g_i|^3]=\sqrt{8/\pi}$ and $\E[g_i^4]=3$ we have
    \begin{equation*}
         \E[|g_i|^3|q'''(\zeta_2 g_i)|]\lesssim \frac{|z_i|w_i^2}{\tau^2} + \frac{|z_i||w_i|^3}{\tau^3} + \frac{\E[|A|]|w_i|^3}{\tau^3}.
    \end{equation*}
    Both sides give the same bound up to constants.
    By the Cauchy-Schwarz inequality with $\norm{\bz}=1$ and isotropy of $\bx_{\setminus i}$ we have
    \begin{equation*}
        \E[|A|]=\E[|\bz^\top_{\setminus i}\bx_{\setminus i}|]\leq \norm{\bz_{\setminus i}}\leq 1,
    \end{equation*}
    and $|z_i|\leq 1$, so
    \begin{equation*}
        \E[q(\xi_i)-q(g_i)] \lesssim \frac{|z_i|w_i^2}{\tau^2} + \frac{|w_i|^3}{\tau^3}.
    \end{equation*}
    Returning to \Eqref{lindeberg_telescope} and summing over the coordinates we have
    \begin{align}
        \big|\E_{\bxi}[H_\tau(\bxi)]-\E_{\bg}[H_\tau(\bg)] \big| &\lesssim \frac{1}{\tau^2}\sum_{i=1}^d|z_i|w_i^2 + \frac{1}{\tau^3}\sum_{i=1}^d |w_i|^3 \nonumber \\
        &\leq \frac{\norm{\bw}_4^2}{\tau^2} + \frac{\norm{\bw}_3^3}{\tau^3}, \label{eq:lindeberg_term3}
    \end{align}
    where we used the Cauchy-Schwarz inequality and $\norm{\bz}=1$ in the last step.
    Substituting \Eqref{lindeberg_term1}, \Eqref{lindeberg_term2}, and \Eqref{lindeberg_term3} into \Eqref{lindeberg_full}, we have
    \begin{equation*}
        \norm{W_\perp}_2 \lesssim \frac{\tau^{1/2}}{\norm{\bw}^{1/2}_2}+\frac{\tau}{\norm{\bw}_2}+\frac{\norm{\bw}_3^{3/2}}{\norm{\bw}_2^{3/2}} + \frac{\norm{\bw}_4^2}{\tau^2} + \frac{\norm{\bw}_3^3}{\tau^3}.
    \end{equation*}
    Using the basic inequalities $\norm{\bw}_3^3\leq \norm{\bw}_\infty\norm{\bw}^2_2$ and $\norm{\bw}_4^2\leq \norm{\bw}_\infty\norm{\bw}_2$ we have
    \begin{equation*}
        \norm{W_\perp}_2 \lesssim \frac{\tau^{1/2}}{\norm{\bw}^{1/2}_2}+ \frac{\tau}{\norm{\bw}_2}  + \frac{\norm{\bw}_\infty^{1/2}}{\norm{\bw}_2^{1/2}} + \frac{\norm{\bw}_\infty\norm{\bw}_2}{\tau^2} + \frac{\norm{\bw}_\infty\norm{\bw}_2^2}{\tau^3}.
    \end{equation*}
    Equalizing the first and last terms we choose $\tau\asymp \norm{\bw}_\infty^{2/7}\norm{\bw}_2^{5/7}$.
    Therefore,
    \begin{equation*}
        \norm{W_\perp}_2 \lesssim \frac{\norm{\bw}_\infty^{1/7}}{\norm{\bw}^{1/7}_2}+ \frac{\norm{\bw}_\infty^{2/7}}{\norm{\bw}^{2/7}_2} +\frac{\norm{\bw}_\infty^{1/2}}{\norm{\bw}_2^{1/2}} + \frac{\norm{\bw}_\infty^{3/7}}{\norm{\bw}_2^{3/7}} + \frac{\norm{\bw}_\infty^{1/7}}{\norm{\bw}_2^{1/7}} \lesssim \frac{\norm{\bw}_\infty^{1/7}}{\norm{\bw}^{1/7}_2},
    \end{equation*}
    where the last step follows by $\norm{\cdot}_\infty\leq \norm{\cdot}_2$.
    This completes the proof of the lemma.
\end{proof}

\subsection{Simultaneous Training of Two-Layer ReLU Neural Networks} \label{sec:simultaneous_training}

In this section, we provide norm bounds and concentration for training ReLU neural networks with Lipschitz loss functions.
We begin with a short lemma involving local Lipschitz behavior of the sigmoid function $\psi(u)\coloneqq 1/(1+e^{-u})$.
\begin{lemma} \label{lem:sigmoid_lipschitz}
    If $\gamma_1,\gamma_2>0$ or $\gamma_1,\gamma_2<0$, we have
    \begin{equation*}
        |\psi(\gamma_1)-\psi(\gamma_2)|\leq \exp\big(-\min(|\gamma_1|,|\gamma_2|) \big)\cdot |\gamma_1-\gamma_2|.
    \end{equation*}
\end{lemma}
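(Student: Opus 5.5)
The plan is to apply the mean value theorem to $\psi$ on the segment between $\gamma_1$ and $\gamma_2$, and then bound $\psi'$ on that segment. Since $\psi'(u)=\psi(u)(1-\psi(u))=\frac{e^{-u}}{(1+e^{-u})^2}$, we get $|\psi(\gamma_1)-\psi(\gamma_2)|=\psi'(\xi)\,|\gamma_1-\gamma_2|$ for some $\xi$ strictly between $\gamma_1$ and $\gamma_2$. So the whole statement reduces to showing $\psi'(\xi)\le \exp(-\min(|\gamma_1|,|\gamma_2|))$.

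The first observation is that $\psi'$ is even. Indeed,
\[
\psi'(-u)=\frac{e^{u}}{(1+e^{u})^2}=\frac{e^{-u}}{(e^{-u}+1)^2}=\psi'(u).
\]
From this we get the pointwise bound
\[
\psi'(u)=\frac{e^{-|u|}}{(1+e^{-|u|})^2}\le e^{-|u|}.
\]
Moreover $\psi'$ is decreasing in $|u|$.

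The next step uses the same-sign hypothesis, which is the only place it is needed. It gives that $\xi$ has the same sign as both endpoints. Hence $|\xi|$ lies between $|\gamma_1|$ and $|\gamma_2|$, and in particular $|\xi|\ge \min(|\gamma_1|,|\gamma_2|)$. Without this hypothesis the segment could cross $0$, where $\psi'=\tfrac14$, and the bound would fail.

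Combining the two facts gives
\[
\psi'(\xi)\le e^{-|\xi|}\le \exp\big(-\min(|\gamma_1|,|\gamma_2|)\big),
\]
which finishes the argument. The degenerate case $\gamma_1=\gamma_2$ is trivial. I do not expect any real obstacle beyond noting the sign condition; the other computations are routine.
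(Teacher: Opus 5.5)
Your proof is correct and follows essentially the same route as the paper: the mean value theorem, the global bound $\psi'(u)\le e^{-|u|}$ obtained from the explicit formula together with the evenness of $\psi'$, and the same-sign hypothesis to guarantee that the intermediate point has magnitude at least $\min(|\gamma_1|,|\gamma_2|)$. Your remark that the bound genuinely fails when the segment crosses $0$ is a correct side observation, though the proof does not need it.
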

\begin{proof}
    By the mean value theorem, for some $\gamma_1\leq \zeta \leq \gamma_2$, we have
    \begin{equation*}
        |\psi(\gamma_1)-\psi(\gamma_2)|\leq |\psi'(\zeta)| \cdot |\gamma_1-\gamma_2|,
    \end{equation*}
    so it suffices to control $\psi'$.
    In particular, we have
    \begin{equation*}
        \psi'(z)=\psi(z)(1-\psi(z))=\frac{e^{-z}}{(1+e^{-z})^2}.
    \end{equation*}
    If $z\geq 0$ then $(1+e^{-z})^2\geq 1$ so $\psi'(z)\leq e^{-z}$.
    If $z< 0$ then using the symmetry $\psi'(z)=\psi'(-z)$ we have $\psi'(z)\leq e^{-|z|}$.
    So globally we have $\psi'(z)\leq e^{-|z|}$.
    Now, since $\gamma_1$ and $\gamma_2$ have the same sign, the segment between them lies in either $(-\infty, 0]$ or $[0,\infty)$.
    In each of these regions, $e^{-|z|}$ is decreasing, and so is maximized at the point with smallest magnitude, \ie $|\psi'(\zeta)|\leq \exp\big(-\min(|\gamma_1|,|\gamma_2|) \big)$.
    This completes the proof of the lemma.
\end{proof}

The next lemma comprises a \emph{balancedness} condition (c.f.~\cite[Lemma B.13]{glasgow2024sgd}).
\begin{lemma} \label{lem:balanced}
    Suppose we train via online SGD with batch size $m\gg d\log^2(d)$ and learning rate $\eta<\frac{1}{\sqrt{2}}$ under the $\ell_\rho$ loss.
    Then, for any neuron $(\at,\bwt)$, the following hold:
    \crefalias{enumi}{lemma}
    \begin{enumerate}[label=(\roman*), ref=\thelemma(\roman*)]
        \item \label{lem:balanced_i} $\norm{\nabla_{\bwt} L_\rho} \leq 2|\at|$.
        \item \label{lem:balanced_ii} $|\partial_{\at} L_\rho| \leq 2\norm{\bwt}$.
        \item \label{lem:balanced_iii} Under the event $\Ebatcht$ (\Defref{events_batch_t}), if $|\at|\leq\norm{\bwt}$, then $|\atplus|\leq\norm{\bwtplus}$.
        \item \label{lem:balanced_iv} Under the event $\Ebatcht$ (\Defref{events_batch_t}), we have \begin{equation*}
            \norm{\bwtplus}^2-(\atplus)^2\leq 10\eta^2(\at)^2+\norm{\bwt}^2-(\at)^2.
        \end{equation*}
    \end{enumerate}
\end{lemma}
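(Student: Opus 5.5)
Parts (i) and (ii) follow from the gradient formulas and the Lipschitz bound on the loss. Part (iv) comes from expanding squared norms after one SGD step and noticing that the cross terms cancel. Part (iii) is then an algebraic consequence of the same expansion together with the concentration event.

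For (i), write $\nabla_{\bw} L_\rho = \E_{\bx}[\ell^{(1)}_\rho(\bx)\, y(\bx)\, a\,\sigma'(\bw^\top\bx)\,\bx]$ and use $|\ell^{(1)}_\rho|\le 2$. By duality, $\norm{\nabla_{\bw}L_\rho}=\sup_{\norm{\bv}=1}\E[\cdot\,\bv^\top\bx]$. Bound this with Cauchy--Schwarz and the near-isotropy $\E[(\bv^\top\bx)^2]\le 1+O(1)$. A cleaner route is the Jensen bound $|\E[\cdot]|\le 2|a|\sqrt{\E(\bv^\top\bx)^2}$. The second moment is $\norm{\bv}^2$ plus a cross term $2\E[x_3 y]\,v_3(\cdot)$; that cross term vanishes because $x_1x_2$ is independent of the other coordinates' signs, so $\E[(\bv^\top\bx)^2]=\norm{\bv}^2$. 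For (ii), $\partial_a L_\rho=\E[\ell^{(1)}_\rho\, y\,\sigma(\bw^\top\bx)]$, and $|\sigma(\bw^\top\bx)|\le|\bw^\top\bx|$. Jensen and isotropy again give at most $2\norm{\bw}$.

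For (iv), set $g_a\coloneqq\partial_a\widehat L_\rho$ and $\bg_w\coloneqq\nabla_{\bw}\widehat L_\rho$. Expanding,
\[
\norm{\bwtplus}^2-(\atplus)^2=\norm{\bwt}^2-(\at)^2-2\eta\big(\bw^\top \bg_w - a g_a\big)+\eta^2\big(\norm{\bg_w}^2-g_a^2\big).
\]
The key identity is the ReLU homogeneity $\sigma'(u)u=\sigma(u)$. It gives $\bw^\top\bg_w=a\,\E_M[\ell^{(1)}y\sigma'(\bw^\top\bx)\bw^\top\bx]=a g_a$ exactly for the empirical loss, so the first-order term vanishes. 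Hence the change is at most $\eta^2\norm{\bg_w}^2$. On $\Ebatcht$, $\norm{\bg_w}\le\norm{\nabla_{\bw}L_\rho}+\sqrt{d\log^2(d)/m}\,|a|$, which with (i) and $m\gg d\log^2 d$ gives $\norm{\bg_w}^2\le (2+o(1))^2a^2\le 10a^2$. (The $\Ebatcht$ bound is per coordinate; multiplying it by $\sqrt d$ supplies the needed norm bound.)

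For (iii), the same expansion gives $\norm{\bwtplus}^2-(\atplus)^2=(\norm{\bwt}^2-a^2)+\eta^2(\norm{\bg_w}^2-g_a^2)$. Since $\norm{\bwt}^2-a^2\ge0$ by hypothesis, I still need to handle $\eta^2(\norm{\bg_w}^2-g_a^2)$, which can be negative, and I expect this to be the main obstacle. The most promising approach is to observe that $\bw^\top\bg_w=ag_a$ together with Cauchy--Schwarz gives $|a||g_a|\le\norm{\bw}\norm{\bg_w}$. Combined with $|a|\le\norm{\bw}$, this yields $|a g_a|\le\norm{\bw}\norm{\bg_w}$. Then write $(\atplus)^2=(a-\eta g_a)^2$ and $\norm{\bwtplus}^2\ge(\norm{\bw}-\eta\norm{\bg_w}\cos)^2+\dots$, where the cosine is taken between $\bw$ and $\bg_w$. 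Decompose $\bg_w$ into its component along $\bw$, which equals $ag_a/\norm{\bw}$, and an orthogonal part. The parallel part gives $\norm{\bwtplus}\ge|\norm{\bw}-\eta a g_a/\norm{\bw}|$. Compare this against $|a-\eta g_a|=|a|\,|1-\eta g_a/a|$. Because $\norm{\bw}(1-\eta a g_a/\norm{\bw}^2)$ versus $|a|(1-\eta g_a/a)$ is monotone in $|a|\le\norm{\bw}$ when $\eta$ is small (here I would use $\eta<1/\sqrt2$ and the bounds from (i)–(ii) to keep the relevant factors positive), the inequality $|\atplus|\le\norm{\bwtplus}$ should follow. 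If that fails, the fallback is to use (i)/(ii) with $\Ebatcht$ to bound the negative part $\eta^2 g_a^2\le \eta^2 (2+o(1))^2\norm{\bw}^2$ and absorb it.
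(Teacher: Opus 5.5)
\textbf{Parts (i), (ii), (iv).} Your arguments here are the paper's. Parts (i) and (ii) use duality, $|\ell^{(1)}_\rho|\le 2$, and isotropy of $\bx$. Part (iv) uses the one-step expansion, in which the homogeneity identity $\bw^\top\mathbf{g}_w=a g_a$ kills the first-order term.

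One correction to your isotropy remark. We have $\E[x_3y]=1-2\lambda\neq 0$, and $x_1x_2$ is \emph{not} independent of $x_3$. The cross terms in $\E[(\bv^\top\bx)^2]$ are $\E[x_1x_2]$, $\E[x_1x_3]$, $\E[x_2x_3]$. They vanish because $x_1$ and $x_2$ are each individually independent of $x_3$.

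\textbf{Part (iii) has a gap.} Your key observation is exactly what the paper uses: Cauchy--Schwarz applied to $\bw^\top\mathbf{g}_w=ag_a$, i.e., keeping only the component of $\mathbf{g}_w$ along $\bw$. But the ``monotone in $|a|$'' step is not an argument. The factor $1-\eta g_a/a$ also cannot be kept positive by taking $\eta$ small, since it is unbounded as $a\to 0$.

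The clean finish is as follows. Set $r=a/\norm{\bw}$ and $u=\eta g_a/\norm{\bw}$. Your parallel-component bound gives $\norm{\bw^{(t+1)}}^2\ge\norm{\bw}^2(1-ru)^2$, while $(a^{(t+1)})^2=\norm{\bw}^2(r-u)^2$. Since $(1-ru)^2-(r-u)^2=(1-r^2)(1-u^2)$, we get
\[
\norm{\bw^{(t+1)}}^2-(a^{(t+1)})^2\ \ge\ \bigl(\norm{\bw}^2-a^2\bigr)\Bigl(1-\frac{\eta^2 g_a^2}{\norm{\bw}^2}\Bigr),
\]
which is the paper's factorization. What you must verify is therefore exactly $\eta|g_a|\le\norm{\bw}$. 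This condition cannot be weakened: if $a=0$ then $\mathbf{g}_w=0$, $\bw^{(t+1)}=\bw$, and $a^{(t+1)}=-\eta g_a$.

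\textbf{The constants.} From (ii) as stated together with $\Ebatcht$, you only get $|g_a|\le(2+o(1))\norm{\bw}$. That closes the argument for $\eta<1/2$, not on the stated range $\eta<1/\sqrt{2}$. To cover the full range you need the sharper bound $|\partial_a L_\rho|\le\norm{\bw}$. It holds because $\E_{\bx}[\bx]=0$ gives $\E_{\bx}[\sigma(\bw^\top\bx)]=\tfrac12\E_{\bx}|\bw^\top\bx|\le\tfrac12\norm{\bw}$. Then $g_a^2\le(1+o(1))\norm{\bw}^2$, so $\eta^2g_a^2/\norm{\bw}^2<1$ for large $d$. The paper's displayed bound $(\partial_a\widehat{L}_\rho)^2\le 2\norm{\bw}^2(1+o(1))$ tacitly relies on this sharper estimate, not on (ii) as stated.

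\textbf{The fallback fails.} Absorbing $-\eta^2 g_a^2$ into $\norm{\bw}^2-a^2$ cannot work, because that difference can be zero. For example, at initialization $|a^{(0)}|=\norm{\bw^{(0)}}=\theta$. In that case only the Cauchy--Schwarz lower bound on $\eta^2\norm{\mathbf{g}_w}^2$ can compensate.
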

\begin{proof}
    Let us begin with \Lemref{balanced_i}.
    Recall the expansion of $L_\rho$ from \Lemref{empirical_concentration}.
    Since $\ell_\rho$ is $2$-Lipschitz, $\sigma'(z)\leq 1$ for the ReLU, and $\bx$ is isotropic, we have
    \begin{equation*}
        \frac{1}{|\at|}\norm{\nabla_{\bwt} L_\rho} = \sup_{\bv:\norm{\bv}=1} \E_{\bx}\left[\ell^{(1)}_\rho(x) \cdot y(\bx)\cdot \sigma'(\bw^{(t)\top}\bx) \cdot \bx^\top\bv \right] \leq 2 \sup_{\bv:\norm{\bv}=1} \E_{\bx}[|\bx^\top\bv|] = 2.
    \end{equation*}
    
    For \Lemref{balanced_ii}, using that $\ell_\rho$ is $2$-Lipschitz and $\bx$ is isotropic again, we have
    \begin{equation*}
        |\partial_{\at} L_\rho| = \Big|\E_{\bx}\left[\ell^{(1)}_\rho(x) \cdot y(\bx) \cdot \sigma(\bw^{(t)\top}\bx) \right]\Big| \leq 2\E_{\bx}[|\bw^{(t)\top}\bx|] \leq 2\norm{\bwt}.
    \end{equation*}
    
    For \Lemref{balanced_iii}, by the definition of gradient descent and direct expansion of the square we can write
    \begin{equation*}
        (\atplus)^2=(\at-\eta\partial_{\at} \widehat{L}_\rho)^2=(\at)^2-2\eta\at \partial_{\at} \widehat{L}_\rho+\eta^2(\partial_{\at} \widehat{L}_\rho)^2
    \end{equation*}
    and
    \begin{equation*}
        \norm{\bwtplus}^2=\norm{\bwt-\eta\nabla_{\bwt} \widehat{L}_\rho}^2=\norm{\bwt}^2-2\eta \bw^{(t)\top} \nabla_{\bwt} \widehat{L}_\rho+\eta^2\norm{\nabla_{\bwt} \widehat{L}_\rho}^2.
    \end{equation*}
    Using $1$-homogeneity of the ReLU we have $a \partial_{a} \widehat{L}_\rho = \bw^\top \nabla_{\bw} \widehat{L}_\rho$ for any neuron $(a,\bw)$.
    Using this together with the Cauchy-Schwarz inequality, we have
    \begin{align}
        (\atplus)^2 - \norm{\bwtplus}^2 &= (\at)^2 - \norm{\bwt}^2 + \eta^2 \left( (\partial_{\at} \widehat{L}_\rho)^2 - \norm{\nabla_{\bwt} \widehat{L}_\rho}^2 \right) \label{eq:balanced_1} \\
        &\leq (\at)^2 - \norm{\bwt}^2 + \eta^2 \left( (\partial_{\at} \widehat{L}_\rho)^2 - \frac{1}{\norm{\bwt}^2} (\bw^{(t)\top}\nabla_{\bwt} \widehat{L}_\rho)^2 \right) \nonumber \\
        &= (\at)^2 - \norm{\bwt}^2 + \eta^2 \left( (\partial_{\at} \widehat{L}_\rho)^2 - \frac{(\at)^2}{\norm{\bwt}^2} (\partial_{\at} \widehat{L}_\rho)^2 \right) \nonumber \\
        &= (\at)^2 - \norm{\bwt}^2 + \frac{\eta^2 (\partial_{\at} \widehat{L}_\rho)^2}{\norm{\bwt}^2} \left( \norm{\bwt}^2 - (\at)^2 \right) \nonumber \\
        &= \left((\at)^2 - \norm{\bwt}^2 \right) \left(1 - \frac{\eta^2 (\partial_{\at} \widehat{L}_\rho)^2}{\norm{\bwt}^2} \right). \label{eq:balanced_2}
    \end{align}
    Under the event $\Ebatcht$, using also \Lemref{balanced_ii}, we have
    \begin{align*}
        (\nabla_{\at} \widehat{L}_\rho)^2 &\leq 2(\partial_{\at} L_\rho)^2 + 2(\partial_{\at} L_\rho - \partial_{\at} \widehat{L}_\rho)^2 \\
        &\leq 2\norm{\bwt}^2\left(1 + \frac{d\log^2(d)}{m}\right) \\
        &= 2\norm{\bwt}^2(1+o(1)),
    \end{align*}
    where the last line follows from $m\gg d\log^2(d)$.
    Substituting into \Eqref{balanced_2} and using $\eta< \frac{1}{\sqrt{2}}$, for $d$ large enough we have
    \begin{equation*}
        \sgn\left((\atplus)^2 - \norm{\bwtplus}^2 \right) = \sgn\left((\at)^2 - \norm{\bwt}^2\right),
    \end{equation*}
    which gives the desired statement of \Lemref{balanced_iii}.

    Finally, for \Lemref{balanced_iv}, starting from \Eqref{balanced_1} we have
    \begin{align*}
        \norm{\bwtplus}^2-(\atplus)^2 - \norm{\bwt}^2 + (\at)^2 &= \eta^2 \left(\norm{\nabla_{\bwt} \widehat{L}_\rho}^2 - (\partial_{\at} \widehat{L}_\rho)^2 \right) \\
        &\leq \eta^2 \norm{\nabla_{\bwt} \widehat{L}_\rho}^2 \\
        &\leq 2\eta^2 \left( \norm{\nabla_{\bwt} L_\rho}^2 + \norm{\nabla_{\bwt} L_\rho - \nabla_{\bwt} \widehat{L}_\rho}^2 \right).
    \end{align*}
    Under the event $\Ebatcht$, using also \Lemref{balanced_i} and $m\gg d\log^2(d)$, we have
    \begin{equation*}
        \norm{\bwtplus}^2-(\atplus)^2 - \norm{\bwt}^2 + (\at)^2 \leq 2\eta^2 (4(\at)^2+(\at)^2)\leq 10\eta^2(\at)^2,
    \end{equation*}
    which is the desired statement of \Lemref{balanced_iv}.
    This completes the proof of the lemma.
\end{proof}

The next lemma provides some helpful bounds on the magnitude of the neural network output $|f_\rho(x)|$.
\begin{lemma} \label{lem:frho_bounds}
    The following hold:
    \crefalias{enumi}{lemma}
    \begin{enumerate}[label=(\roman*), ref=\thelemma(\roman*)]
        \item \label{lem:frho_bounds_i} $\E_{\bx}[f_\rho(\bx)^2]\leq (\E_{(a,\bw) \sim \rho}[\norm{a\bw}])^2$.
        \item \label{lem:frho_bounds_ii} $\E_{\bx}[|f_\rho(\bx)|]\leq\E_{(a,\bw) \sim \rho}[\norm{a\bw}]$.
        \item \label{lem:frho_bounds_iii} $|f_\rho(\bx)|\leq \sqrt{d}\E_{(a,\bw) \sim \rho}[\norm{a\bw}]$ for any $\bx\in\{\pm1\}^d$.
        \item \label{lem:frho_bounds_iv} $\P_{\bx}(|f_\rho(\bx)|\geq \log(d) \E_{(a,\bw) \sim \rho}[\norm{a\bw}]) \leq d^{-C}$ for any fixed $C>0$.
    \end{enumerate}
\end{lemma}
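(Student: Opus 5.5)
The plan is to prove the four parts in order, each feeding the next, by writing $f_\rho(\bx)=\E_{(a,\bw)\sim\rho}[a\sigma(\bw^\top\bx)]$ and exploiting $|\sigma(u)|\leq|u|$.

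For \Lemref{frho_bounds_i}, apply Minkowski's inequality in $L^2(\bx)$ to the mixture over neurons:
\begin{equation*}
\sqrt{\E_{\bx}[f_\rho(\bx)^2]}\leq \E_{(a,\bw)\sim\rho}\Big[|a|\sqrt{\E_{\bx}[(\bw^\top\bx)^2]}\Big].
\end{equation*}
Isotropy of $\bx$ gives $\E_{\bx}[(\bw^\top\bx)^2]=\norm{\bw}^2$. The coordinates of $\bx$ are $\pm1$ with $\E[x_ix_j]=0$ for $i\neq j$; one checks this from \Eqref{XOR-spurious-correlation}, since $x_1,x_2$ are each uncorrelated with $x_3$ because $\bz$ is symmetric. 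Squaring the display yields the claim.

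\Lemref{frho_bounds_ii} then follows from Jensen's inequality, $\E|f|\leq\sqrt{\E f^2}$.

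For \Lemref{frho_bounds_iii}, use Cauchy--Schwarz $|\bw^\top\bx|\leq\norm{\bw}\norm{\bx}=\sqrt d\norm{\bw}$, then the triangle inequality over neurons.

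\Lemref{frho_bounds_iv} is the only nontrivial step. I would bound $|f_\rho(\bx)|\leq \E_\rho[|a||\bw^\top\bx|]$ and control each $|\bw^\top\bx|$ by a subgaussian tail. The difficulty is that $\bx$ is not uniform: the first three coordinates are correlated. So I split $\bw^\top\bx=\bw_{1:3}^\top\bx_{1:3}+\bwperp^\top\bxi$. The first term is at most $\sqrt3\norm{\bw}$ deterministically. The second is a Rademacher sum, so Hoeffding gives $|\bwperp^\top\bxi|\leq \sqrt{2(C+1)\log d}\,\norm{\bw}$ with probability $1-d^{-C-1}$ for each fixed neuron.

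A union bound over $p\ll d^{C}$ neurons would require $p$ polynomially bounded. To avoid depending on $p$, I would instead use Markov's inequality on a higher moment. By Minkowski,
\begin{equation*}
\big(\E_{\bx}|f_\rho|^q\big)^{1/q}\leq \E_\rho\big[|a|(\E_{\bx}|\bw^\top\bx|^q)^{1/q}\big]\lesssim \sqrt q\,\E_\rho\norm{a\bw},
\end{equation*}
where the last step uses Khintchine's inequality for the $\bxi$ part plus the bounded $\sqrt3\norm{\bw}$ part. Markov's inequality then gives
\begin{equation*}
\P\big(|f_\rho|\geq \log(d)\,\E_\rho\norm{a\bw}\big)\leq (c\sqrt q/\log d)^q.
\end{equation*}
Choosing $q\asymp\log d$ makes this $(c/\sqrt{\log d})^{\log d}=d^{-\Omega(\log\log d)}\leq d^{-C}$ for $d$ large.

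The main obstacle is this moment/Khintchine step: the constants must be uniform, and the non-uniform first three coordinates must be handled, which the deterministic split above does.
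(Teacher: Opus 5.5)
Your proposal is correct. Parts (i)--(iii) match the paper in substance. The paper expands $f_\rho^2$ as a double sum over neurons and applies Cauchy--Schwarz to each cross term $\E_{\bx}[\sigma(\bw_j^\top\bx)\sigma(\bw_k^\top\bx)]$, which is the computation behind your Minkowski step. It then uses isotropy, Jensen, and $|\bw^\top\bx|\leq\norm{\bw}_1\leq\sqrt d\norm{\bw}$.

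For (iv) you take a genuinely different route. The paper applies McDiarmid's inequality to $|f_\rho(\bx)|$ directly:
\begin{itemize}
\item flipping coordinate $i$ changes $|f_\rho|$ by at most $2\E_\rho[|aw_i|]$;
\item the correlated coordinates $x_1,x_2,x_3$ are merged into a single ``super-variable'' so that McDiarmid's independence requirement holds;
\item the mean is controlled by (ii);
\item Jensen gives $\sum_i(\E_\rho|aw_i|)^2\leq(\E_\rho\norm{a\bw})^2$.
\end{itemize}
This yields a tail of $\exp(-(\log d-1)^2/2)=d^{-\Omega(\log d)}$.

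Your argument instead linearizes first, $|f_\rho|\leq\E_\rho[|a||\bw^\top\bx|]$, and pushes the $L^q(\bx)$ norm inside the neuron average by Minkowski. It handles the dependent coordinates with the deterministic bound $\sqrt3\norm{\bw}$. As a result you never need independence across all inputs, nor bounded-differences bookkeeping for the whole network; Khintchine for a single Rademacher sum suffices.

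The cost is a weaker rate with your choice $q\asymp\log d$, namely $d^{-\Omega(\log\log d)}$. This still suffices for every fixed $C$ once $d$ is large. If you take $q\asymp\log^2 d$ instead, so that $c\sqrt q/\log d$ is a constant below $1$, Markov gives $\exp(-\Omega(\log^2 d))$ and you recover the paper's $d^{-\Omega(\log d)}$ rate. Both arguments are uniform in the width $p$, which you correctly aimed for.
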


\begin{proof}
    Let us begin with \Lemref{frho_bounds_i}.
    By isotropy of $\bx$, we have for any neuron $(a,\bw)$ that $E_{\bx}[\sigma(\bw^\top\bx)^2]\leq \E_{\bx}[(\bw^\top\bx)^2] = \norm{\bw}^2$.
    Using this fact with the Cauchy-Schwarz inequality, we have
    \begin{align*}
        \E_{\bx}[f_\rho(\bx)^2]&=\E_{\bx}\left[\left(\E_{(a,\bw) \sim \rho}[a\sigma(\bw^\top\bx)] \right)^2 \right] \\
        &= \frac{1}{p^2}\sum_{j,k}a_ja_k \E_{\bx}\left[\sigma(\bw_j^\top\bx)\cdot\sigma(\bw_k^\top\bx) \right] \\
        &\leq \frac{1}{p^2}\sum_{j,k}|a_j||a_k| \cdot \sqrt{\E_{\bx}\left[\sigma(\bw_j^\top\bx)^2\right]}\cdot\sqrt{\E_{\bx}\left[\sigma(\bw_k^\top\bx)^2\right]} \\
        &\leq \frac{1}{p^2}\sum_{j,k} |a_j| |a_k| \cdot \norm{\bw_j}\norm{\bw_k} \\
        &= (\E_{(a,\bw) \sim \rho}[\norm{a\bw}])^2,
    \end{align*}
    as desired.
    
    For \Lemref{frho_bounds_ii}, using \Lemref{frho_bounds_i} and Jensen's inequality, we have
    \begin{equation*}
        \E_{\bx}[|f_\rho(\bx)|]\leq \sqrt{\E_{\bx}[f_\rho(\bx)^2]}\leq \E_{(a,\bw) \sim \rho}[\norm{a\bw}].
    \end{equation*}
    
    For \Lemref{frho_bounds_iii}, using the triangle inequality, $\sigma(x)\leq |x|$, and $|\bw^\top\bx|\leq\norm{\bw}_1\leq\sqrt{d}\norm{\bw}_2$ for $\bx\in\{\pm 1\}^d$, we have
    \begin{equation*}
        |f_\rho(\bx)|=\Big|\E_{(a,\bw) \sim \rho}[a\sigma(\bw^\top\bx)]\Big|\leq\E_{(a,\bw) \sim \rho}[|a||\bw^\top\bx|]\leq\sqrt{d}\E_{(a,\bw) \sim \rho}[\norm{a\bw}].
    \end{equation*}
    
    For \Lemref{frho_bounds_iv}, using \Lemref{frho_bounds_ii} we have
    \begin{align*}
        \log(d) \E_{(a,\bw) \sim \rho}[\norm{a\bw}] - \E_{\bx}[|f_\rho(\bx)|] &\geq \log(d) \E_{(a,\bw) \sim \rho}[\norm{a\bw}] - \E_{(a,\bw) \sim \rho}[\norm{a\bw}] \\
        &= (\log(d) - 1)\E_{(a,\bw) \sim \rho}[\norm{a\bw}].
    \end{align*}
    In particular,
    \begin{equation*}
        \P_{\bx}\Big(|f_\rho(\bx)|\geq \log(d) \E_{(a,\bw) \sim \rho}[\norm{a\bw}]\Big)\\
        \leq\P_{\bx}\Big(|f_\rho(\bx)|-\E_{\bx}[|f_\rho(x)|]\geq (\log(d) - 1)\E_{(a,\bw) \sim \rho}[\norm{a\bw}]\Big).
    \end{equation*}
    For any $i\in[d]$, we have the following bounded differences condition upon flipping the $i^{th}$ bit of $\bx$.
    Using the triangle inequality and $1$-Lipschitzness of the ReLU, we obtain
    \begin{equation*}
        \Big||f_\rho(\bx)|-|f_\rho(\bx-2x_i\be_i)|\Big| \leq \E_{(a,\bw) \sim \rho}\left[|a|\cdot|\sigma(\bw^\top\bx)-\sigma(\bw^\top(\bx-2x_i\be_i))|\right] \leq 2\E_{(a,\bw) \sim \rho}[|aw_i|].
    \end{equation*}
    Therefore, by McDiarmid's inequality and Jensen's inequality,\footnote{There is a subtle complication here: unlike~\cite{glasgow2024sgd} we do not have independence on all coordinates of $\bx$. However, for the purposes of McDiarmid's inequality, we can combine $x_1x_2x_3$ into a single ``super-variable'' $\tilde{\bx}$ which is independent from the rest of $\bx$. Moreover, the bounded differences condition on $\tilde{\bx}$ is satisfied by the sum of the bounded differences of $x_1,x_2,x_3$, so the dependency ends up not changing the bound.}
    we have
    \begin{align*}
        \P_{\bx}\Big(|f_\rho(\bx)|-\E_{\bx}[|f_\rho(x)|] \geq (\log(d) - 1)\E_{(a,\bw) \sim \rho}[\norm{a\bw}]\Big) &\leq \exp\left(-\frac{(\log(d) - 1)^2(\E_{(a,\bw) \sim \rho}[\norm{a\bw}])^2}{2\sum_{i=1}^d (\E_{(a,\bw) \sim \rho}[|aw_i|])^2}\right) \\
        &\leq \exp\left(-\frac{1}{2}(\log(d) - 1)^2\right) \\
        &\leq d^{-C},
    \end{align*}
    which is the desired statement of \Lemref{frho_bounds_iv}.
    This completes the proof of the lemma.
\end{proof}

\subsection{Miscellaneous Lemmas} \label{sec:misc_lemmas}
In this section, we provide some miscellaneous helper lemmas.
The first lemma is a derivation of the solution to the linear non-homogeneous recurrences of \Defref{phase1_hypothesis}.
\begin{lemma} \label{lem:phase1_recursion}
    Consider $z,\eta,\mu, T>0$, and let $(\eps_t)_{t\geq 0}$ be a sequence with $|\eps_t|\leq \delta < 1$ such that $\delta\eta\mu T \ll 1$.
    Given a sequence $\{w^{(t)}\}_{t \geq 0}$ satisfying the recurrence
    \begin{equation*}
        w^{(t+1)}-w^{(t)} = \eta\mu (1+\eps_t) \cdot \sgn(w^{(t)})(|w^{(t)}| + z),
    \end{equation*}
    it holds that 
    \begin{equation*}
        |w^{(T)}| = (1\pm 3\delta \eta\mu T) \cdot (1+\eta\mu)^T (|w^{(0)}|+z)-z.
    \end{equation*}
\end{lemma}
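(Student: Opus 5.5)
We prove \Lemref{phase1_recursion} by reducing it to a multiplicative recurrence on a shifted magnitude. Set $u^{(t)}\coloneqq |w^{(t)}|+z$.

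The first step is to show that $\sgn(w^{(t)})$ never changes. The hypothesis gives $|\eta\mu(1+\eps_t)|\cdot(|w^{(t)}|+z)$ as the step size. Since $\eps_t>-1$, the increment has the same sign as $w^{(t)}$, so $|w^{(t)}|$ only grows and no sign crossing can occur. Hence
\begin{equation*}
|w^{(t+1)}| = |w^{(t)}| + \eta\mu(1+\eps_t)\,(|w^{(t)}|+z),
\qquad\text{equivalently}\qquad
u^{(t+1)} = \bigl(1+\eta\mu(1+\eps_t)\bigr)\,u^{(t)}.
\end{equation*}
If $w^{(0)}=0$, we adopt the convention that $\sgn(0)$ is fixed at a value in $\{\pm 1\}$, so the same computation applies.

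The same multiplicative form also covers the sign-misaligned variant used in Phase Ia. There the factor is $1-\eta\mu(1+\eps_t)$ until the iterate reaches zero, and the bound is only needed up to that point.

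Unrolling the recurrence gives
\begin{equation*}
u^{(T)} = u^{(0)}\prod_{t<T}\bigl(1+\eta\mu(1+\eps_t)\bigr) = u^{(0)}(1+\eta\mu)^T\prod_{t<T}\Bigl(1+\tfrac{\eta\mu\eps_t}{1+\eta\mu}\Bigr).
\end{equation*}
Each factor in the last product lies in $[1-\delta\eta\mu,\,1+\delta\eta\mu]$. Its logarithm therefore has absolute value at most $2\delta\eta\mu$, using $|\log(1+x)|\le 2|x|$ for $|x|\le 1/2$; this requires $\delta\eta\mu\le 1/2$, which follows from $\delta\eta\mu T\ll 1$ when $T\ge 1$.

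Summing over $t<T$, the product equals $\exp(\pm 2\delta\eta\mu T)$. Because $x\coloneqq 2\delta\eta\mu T\ll 1$, the elementary bounds $e^{x}\le 1+\tfrac{3}{2}x$ and $e^{-x}\ge 1-x$ show that the product lies in $1\pm 3\delta\eta\mu T$. Subtracting $z$ from $u^{(T)}$ then yields the claimed formula for $|w^{(T)}|$.

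The only delicate point is the sign-preservation and nondegeneracy step, which is what makes the absolute value behave linearly. It relies on $\delta<1$, so that the factor $1+\eps_t$ is positive. The remaining constant bookkeeping is routine.
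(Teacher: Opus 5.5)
Your proof is correct and follows essentially the same route as the paper: shift to $|w^{(t)}|+z$, use $\delta<1$ to get sign preservation and hence a multiplicative recurrence, unroll it, and control the deviation from $(1+\eta\mu)^T$ with a logarithm bound. The only real difference is cosmetic: the paper sandwiches the product between $(1+\eta\mu(1-\delta))^T$ and $(1+\eta\mu(1+\delta))^T$ and bounds their ratio, whereas you keep the exact product and normalize each factor by $1+\eta\mu$. Your convention $\sgn(0)\in\{\pm 1\}$ is a small improvement. The paper's lower-bound step tacitly assumes $w^{(0)}\neq 0$, and under the convention $\sgn(0)=0$ the statement fails for $w^{(0)}=0$.
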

\begin{proof}
    Since $|\eps_t|\leq \delta$ we have $1-\delta \leq 1+\eps_t \leq 1+\delta$ for all $t\leq T$.
    Write $x^{(t)}\coloneqq |w^{(t)}|+z$ and note that $x^{(t)} > 0$ for all $t \geq 0$.
    We will first upper and lower bound $x^{(T)}$.

    For the upper bound, by the triangle inequality we have $|w^{(t+1)}| \leq |w^{(t)}|+\eta\mu(1+\delta) \cdot(|w^{(t)}| + z)$.
    Rewriting, we find that $x^{(t+1)}\leq (1+\eta\mu(1+\delta))x^{(t)}$.
    Unrolling the recurrence, we have $x^{(T)} \leq (1+\eta\mu(1+\delta))^Tx^{(0)}$. %

    For the lower bound, the increment $w^{(t+1)}-w^{(t)}$ has the same sign as $w^{(t)}$, so $w^{(t)}$ never changes sign across all $t \geq 0$. Therefore, $|w^{(t+1)}| \geq |w^{(t)}|+\eta\mu(1-\delta)\cdot (|w^{(t)}| + z)$.
    Rewriting, we find $x^{(t+1)}\geq (1+\eta\mu(1-\delta))x^{(t)}$.
    Unrolling the recurrence, we have $x^{(T)} \geq (1+\eta\mu(1-\delta))^Tx^{(0)}$. %

    Therefore we have established that
    \begin{equation*}
        (1+\eta\mu(1-\delta))^T x^{(0)} \leq x^{(T)} \leq (1+\eta\mu(1+\delta))^T x^{(0)}.
    \end{equation*}
    The ratio of the upper to lower geometric factor satisfies
    \begin{equation*}
        \log \frac{(1+\eta\mu(1+\delta))^T}{(1+\eta\mu(1-\delta))^T} \leq 2\delta\eta\mu T.
    \end{equation*}
    Applying $e^x\leq 1+x+x^2$ for $x\in [0,1]$ and $\delta\eta\mu T\ll 1$ we obtain
    \begin{equation*}
        \frac{(1+\eta\mu(1+\delta))^T}{(1+\eta\mu(1-\delta))^T} \leq 1+3\delta\eta\mu T.
    \end{equation*}
    Hence
    \begin{equation*}
        x^{(T)} = (1\pm 3\delta \mu \eta T) \cdot (1+\eta\mu)^T x^{(0)},
    \end{equation*}
    wherein substituting $x^{(T)}\coloneqq |w^{(T)}|+z$ and $x^{(0)}=|w^{(0)}|+z$ yields the result.
\end{proof}

\clearpage

\section{Additional Simulations}
In this section, we provide some additional simulations with different hyperparameters.

\begin{figure}[h!]
    \centering
    \begin{subfigure}[b]{0.325\textwidth}
        \centering
        \includegraphics[width=\textwidth, height=0.18\textheight]{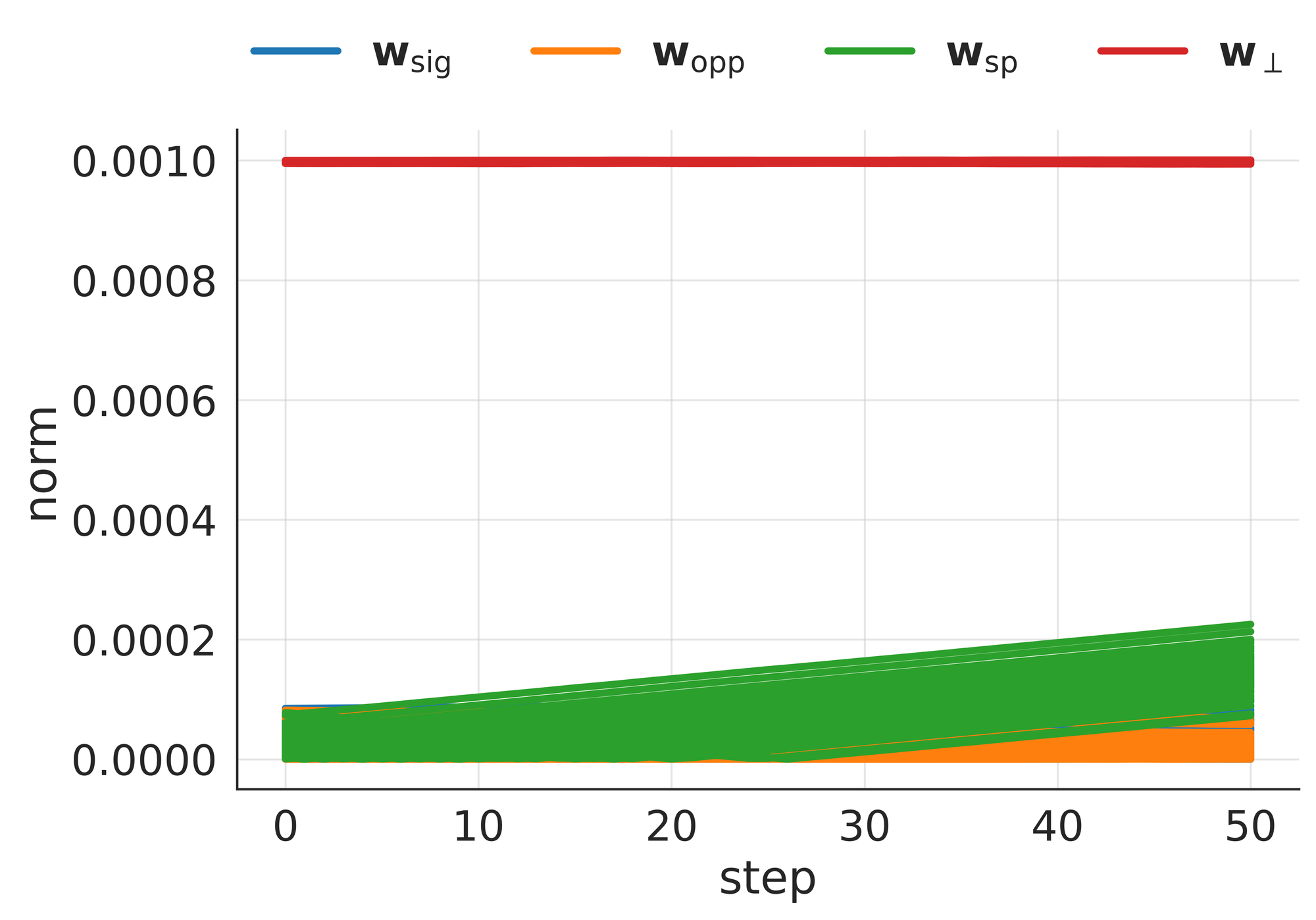}
        \caption{Phase Ia}
        \label{fig:phase1a_high}
    \end{subfigure}
    \begin{subfigure}[b]{0.325\textwidth}
        \centering
        \includegraphics[width=\textwidth, height=0.18\textheight]{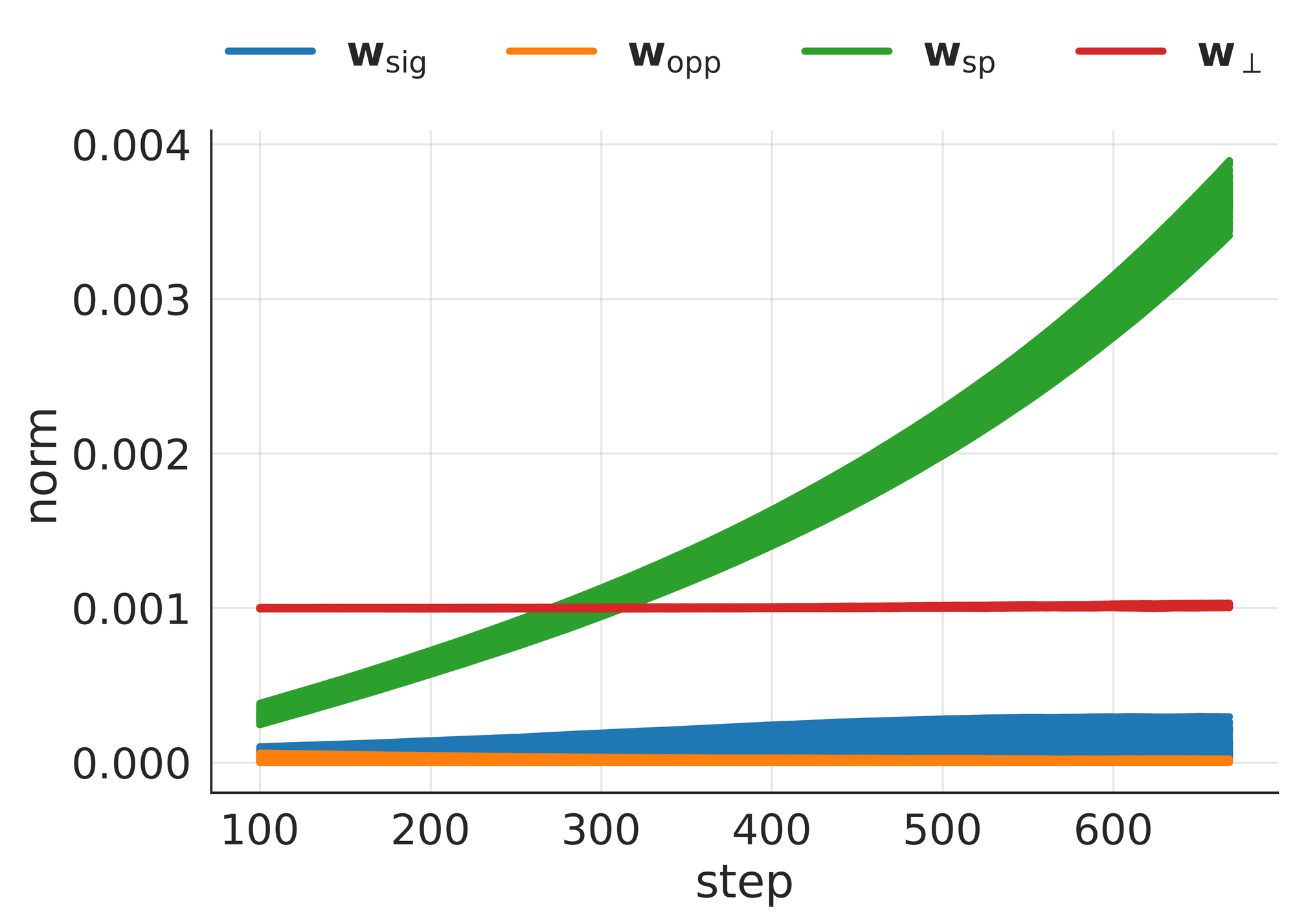}
        \caption{Phase Ib}
        \label{fig:phase1b_high}
    \end{subfigure}
    \begin{subfigure}[b]{0.325\textwidth}
        \centering
        \includegraphics[width=\textwidth, height=0.18\textheight]{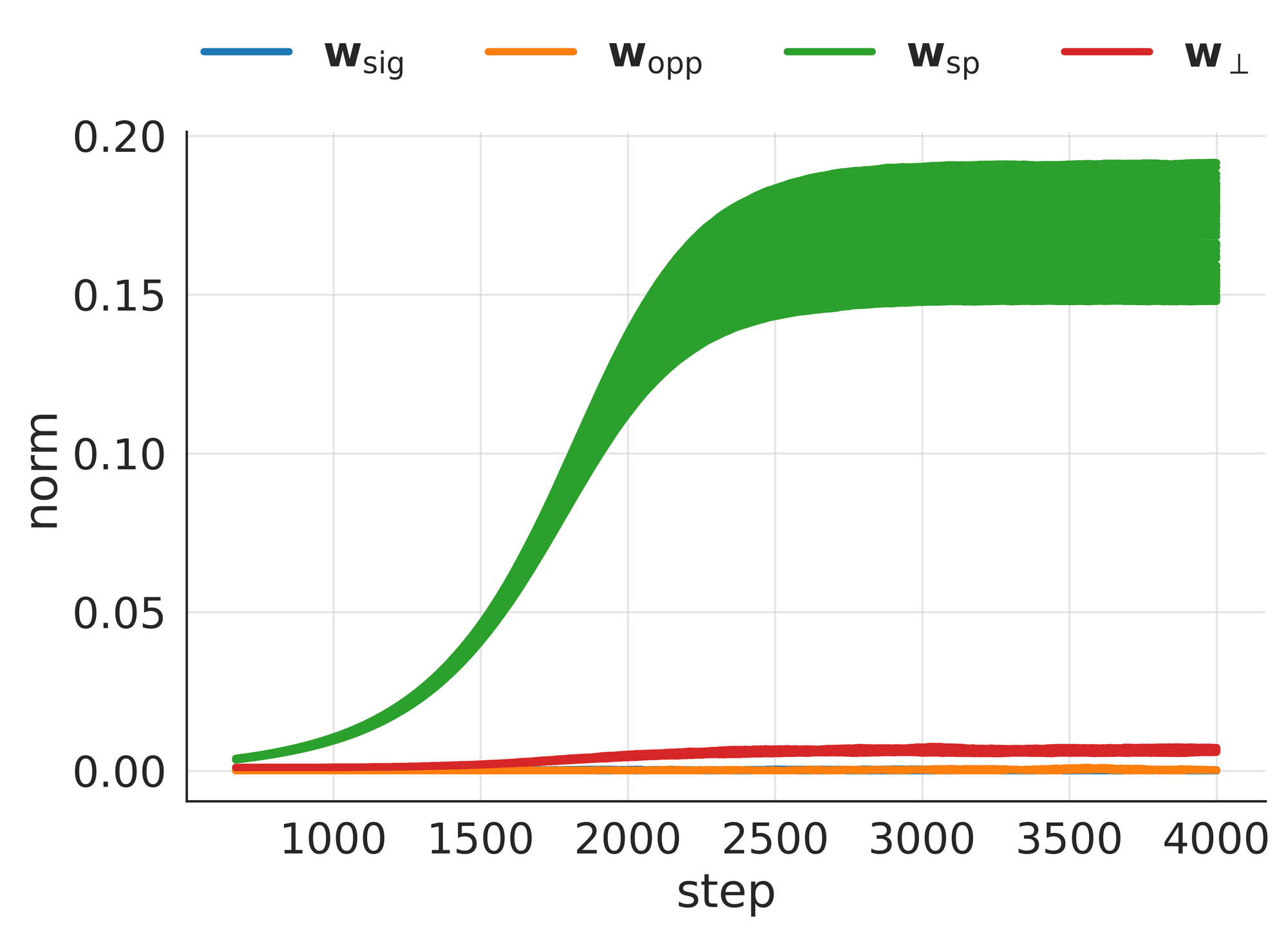}
        \caption{Phase II}
        \label{fig:phase2_high}
    \end{subfigure}
    \caption{\textbf{Scaled-up phase transitions.} We display the results of a training run with dimension $d=1000$, spurious correlation strength $\lambda=0.1$, learning rate $\eta=0.01$, width $p=100$, initialization scale $\theta=0.001$, and batch size $m=5000$. For each of the $p=100$ neurons, we plot $\norm{\bwsig}$, $\norm{\bwopp}$, $\norm{\bwsp}$, and $\norm{\bwperp}$ (defined in \Secref{feature_learning_analysis}).
    }
    \label{fig:scaled_up_phases}
\end{figure}

\begin{figure}[h!]
    \centering
    \begin{subfigure}[b]{0.325\textwidth}
        \centering
        \includegraphics[width=\textwidth, height=0.18\textheight]{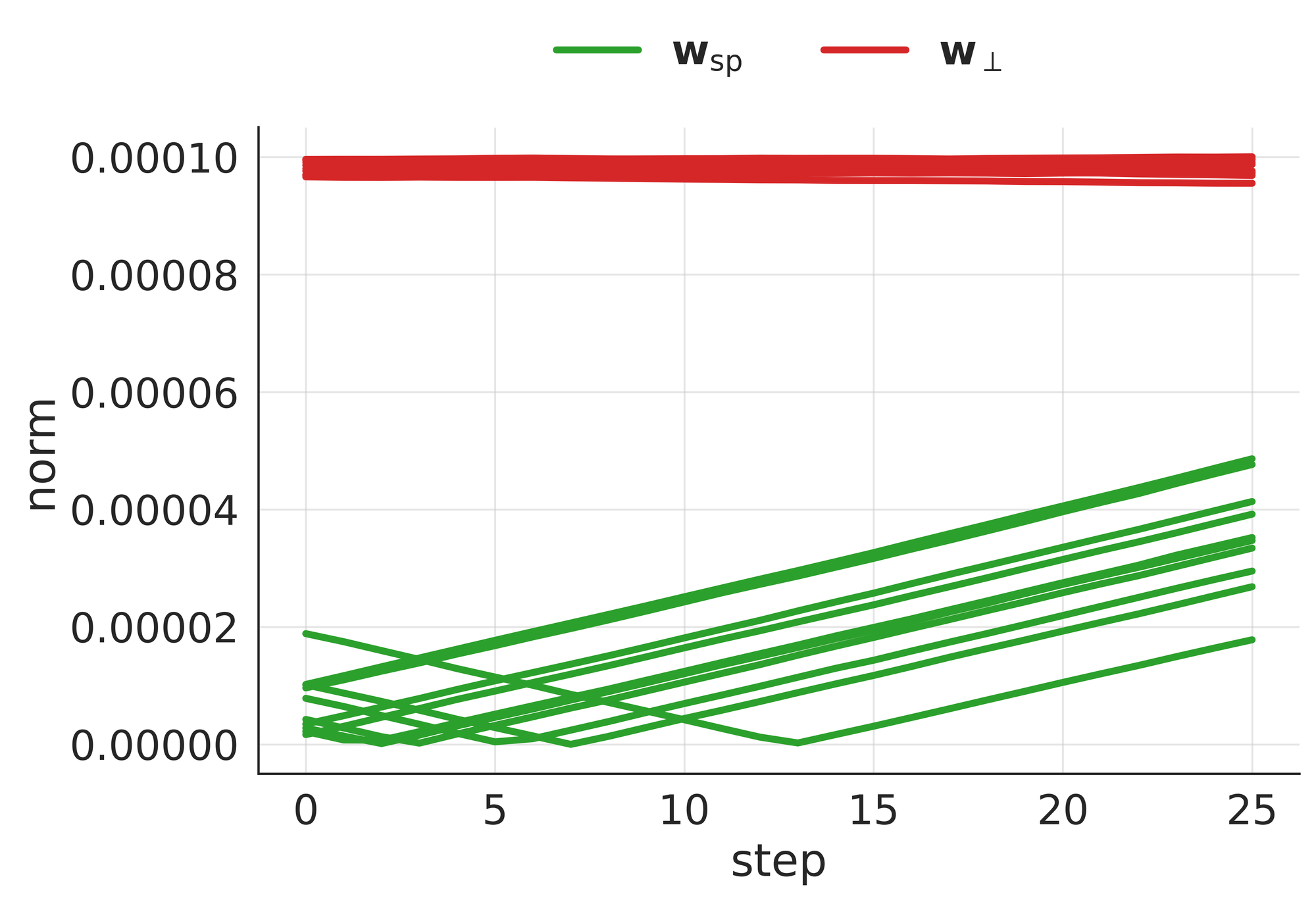}
        \caption{Phase Ia}
        \label{fig:phase1a_low}
    \end{subfigure}
    \begin{subfigure}[b]{0.325\textwidth}
        \centering
        \includegraphics[width=\textwidth, height=0.18\textheight]{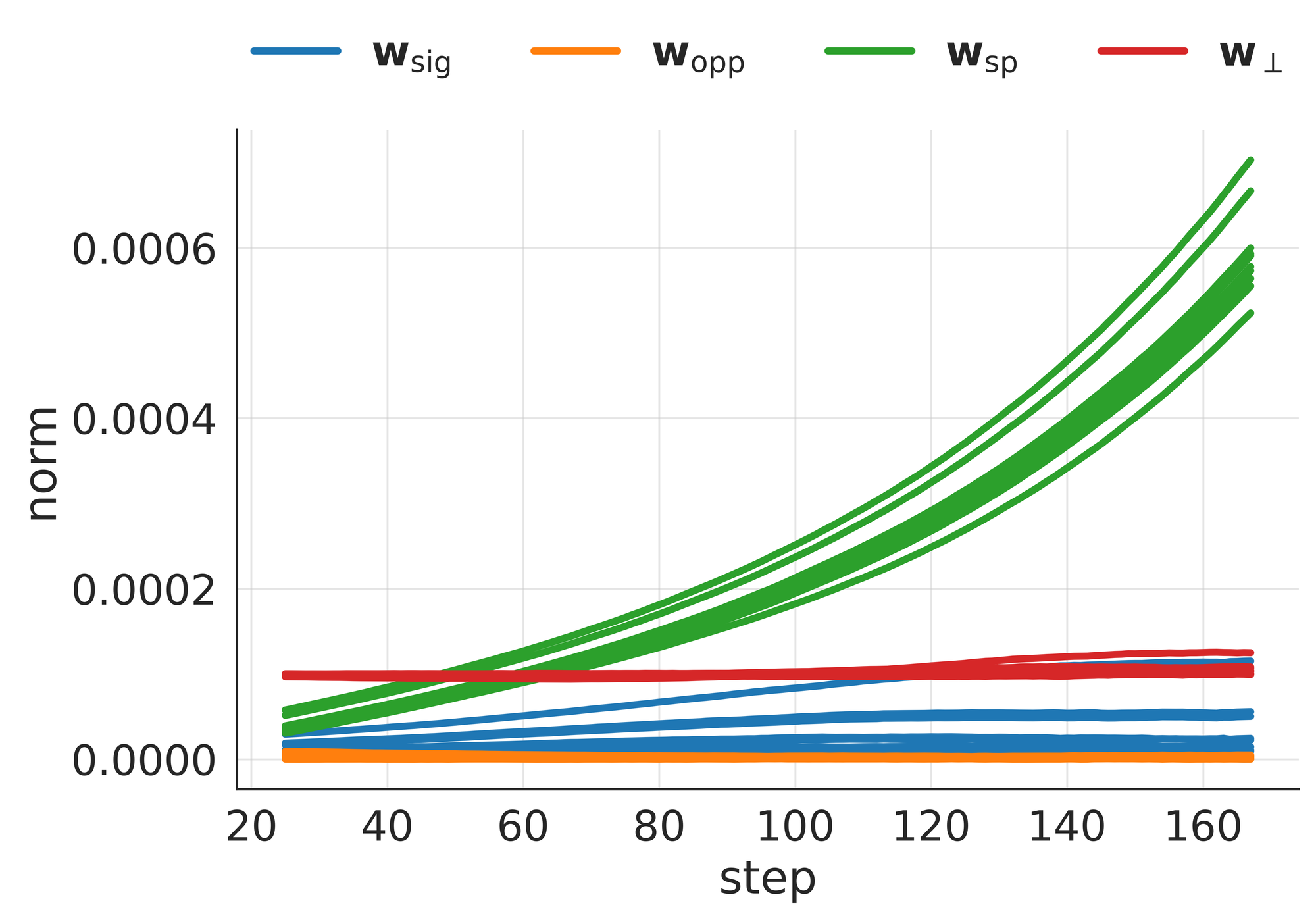}
        \caption{Phase Ib}
        \label{fig:phase1b_low}
    \end{subfigure}
    \begin{subfigure}[b]{0.325\textwidth}
        \centering
        \includegraphics[width=\textwidth, height=0.18\textheight]{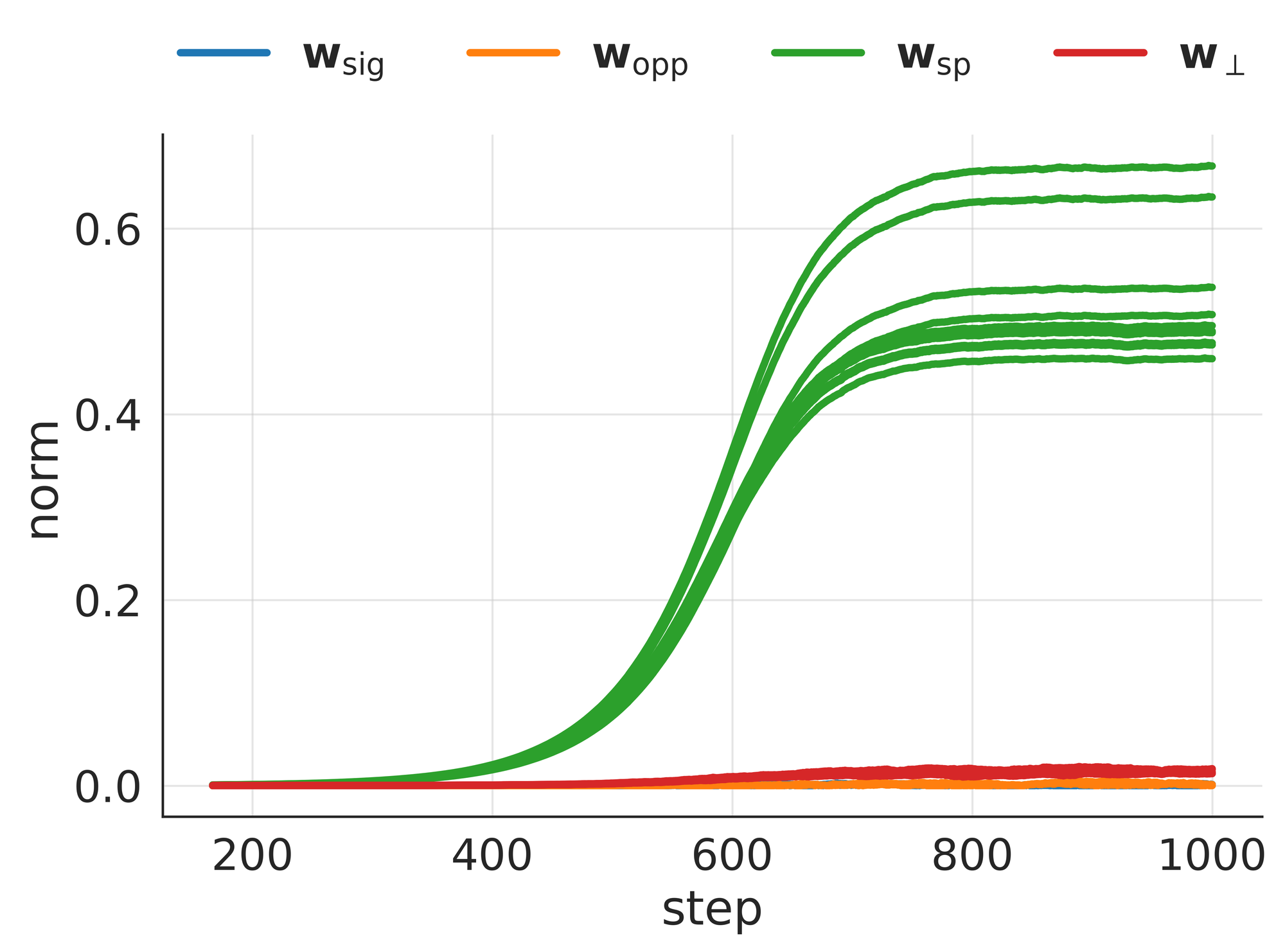}
        \caption{Phase II}
        \label{fig:phase2_low}
    \end{subfigure}
    \caption{\textbf{Small-initialization phase transitions.} We display the results of a training run with dimension $d=1000$, spurious correlation strength $\lambda=0.1$, learning rate $\eta=0.01$, width $p=10$, initialization scale $\theta=0.0001$, and batch size $m=5000$. For each of the $p=10$ neurons, we plot $\norm{\bwsig}$, $\norm{\bwopp}$, $\norm{\bwsp}$, and $\norm{\bwperp}$ (defined in \Secref{feature_learning_analysis}).
    }
    \label{fig:small_init_phases}
\end{figure}

\begin{figure}[h!]
    \centering
    \begin{subfigure}[b]{0.245\textwidth}
        \centering
        \includegraphics[width=\textwidth, height=0.15\textheight]{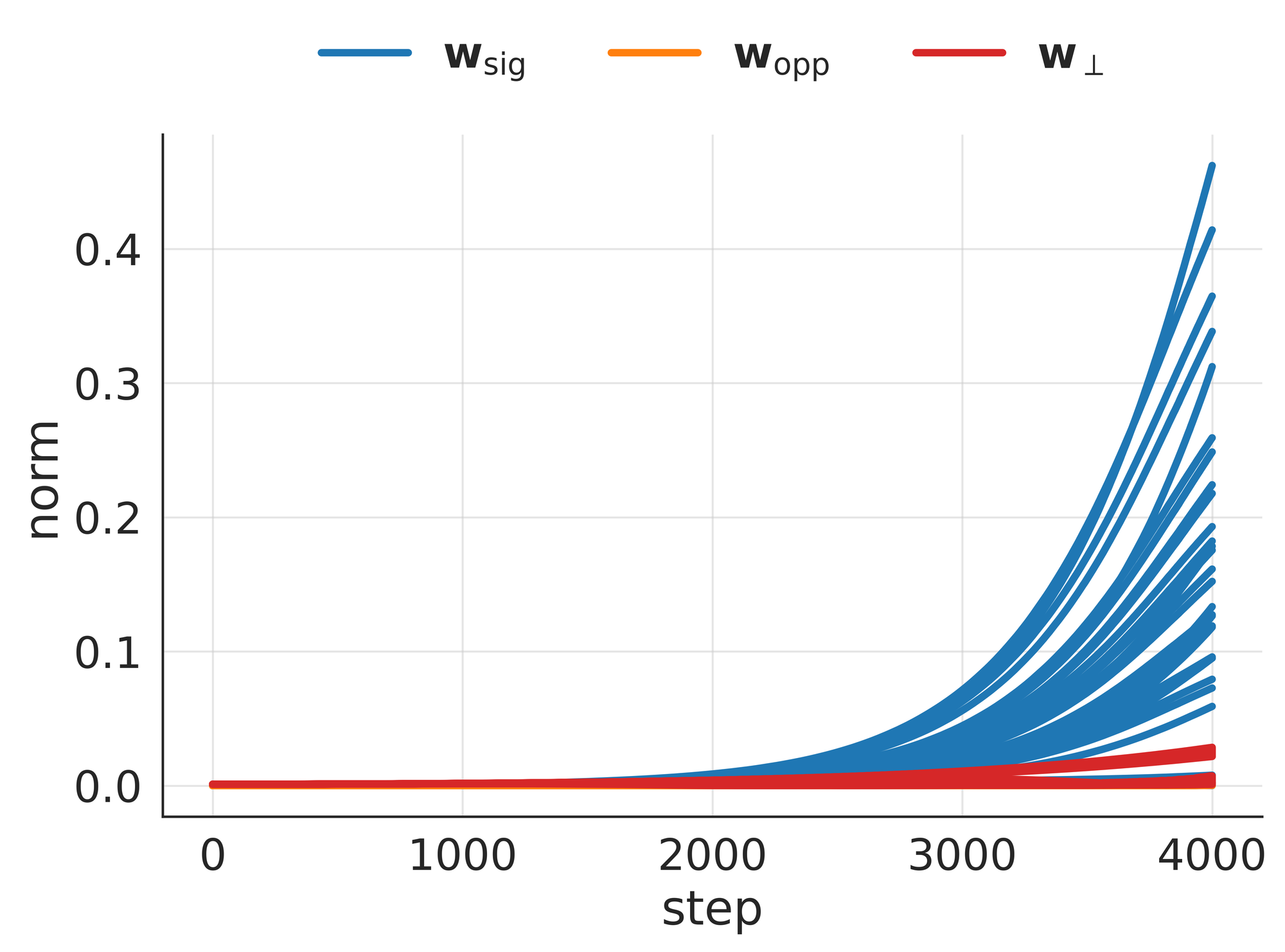}
        \caption{No spurious feature}
        \label{fig:glasgow_high}
    \end{subfigure}
    \begin{subfigure}[b]{0.245\textwidth}
        \centering
        \includegraphics[width=\textwidth, height=0.15\textheight]{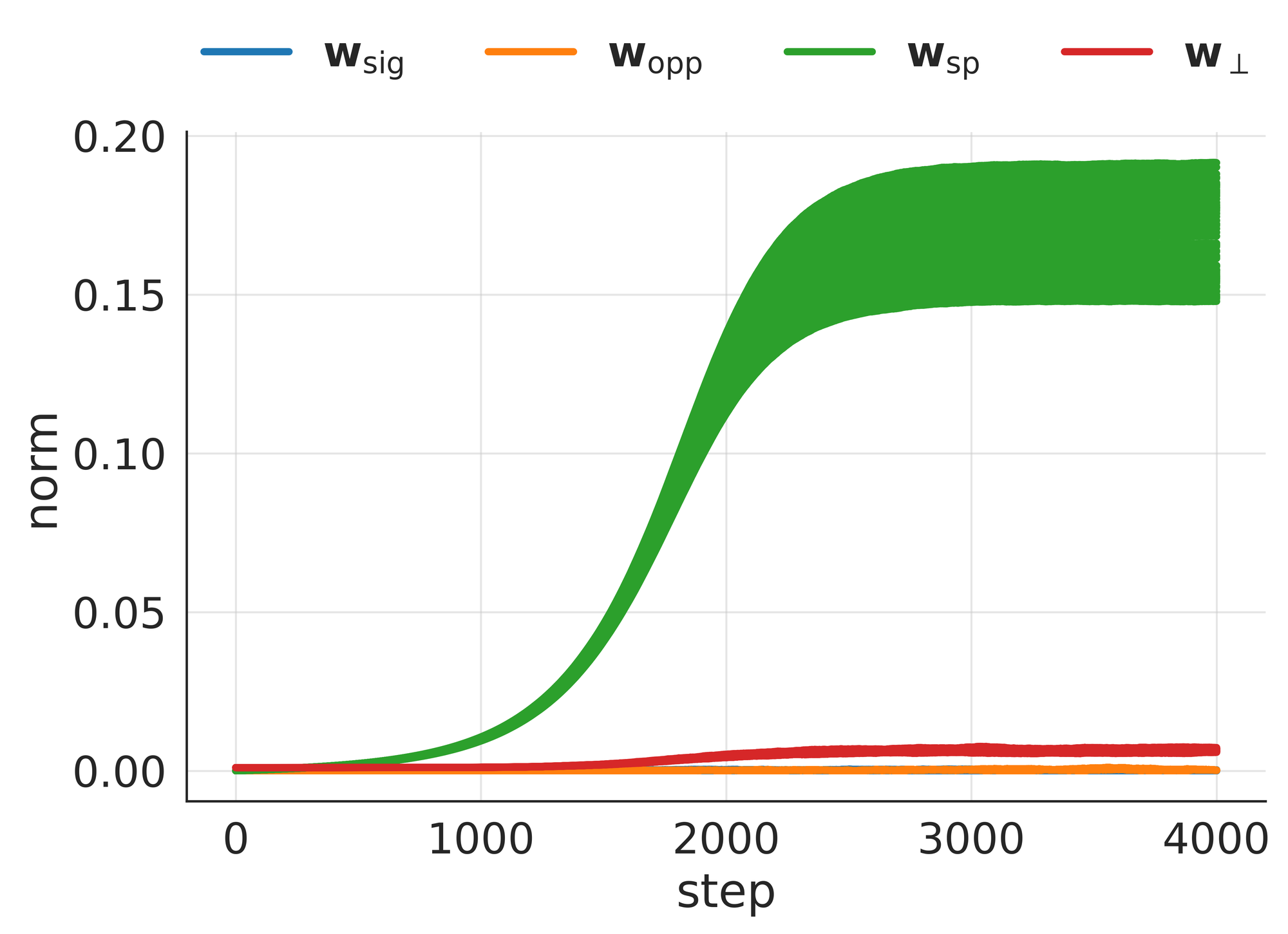}
        \caption{$\lambda=0.1$}
        \label{fig:lambda01_high}
    \end{subfigure}
    \begin{subfigure}[b]{0.245\textwidth}
        \centering
        \includegraphics[width=\textwidth, height=0.15\textheight]{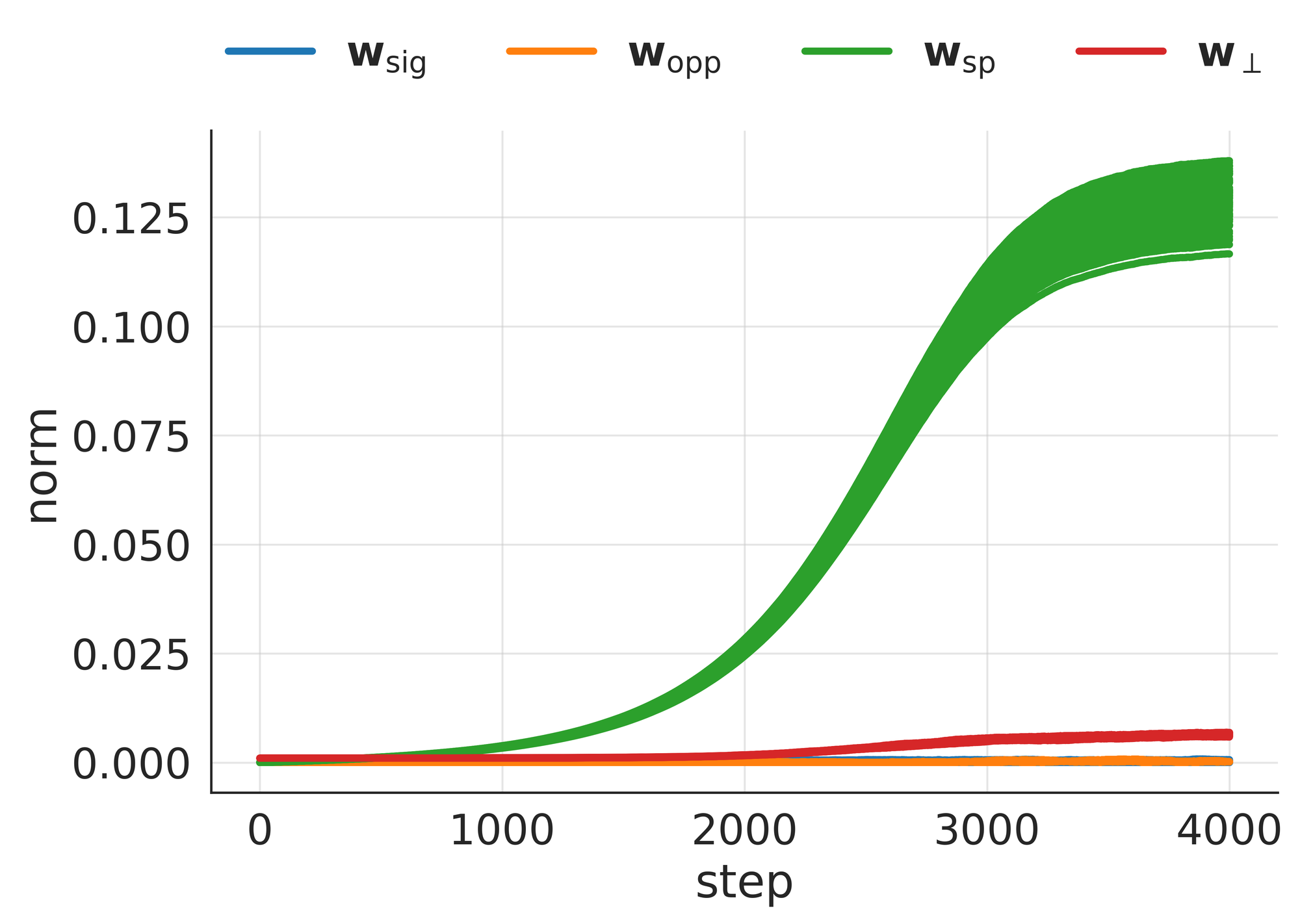}
        \caption{$\lambda=0.15$}
        \label{fig:lambda015_high}
    \end{subfigure}
    \begin{subfigure}[b]{0.245\textwidth}
        \centering
        \includegraphics[width=\textwidth, height=0.15\textheight]{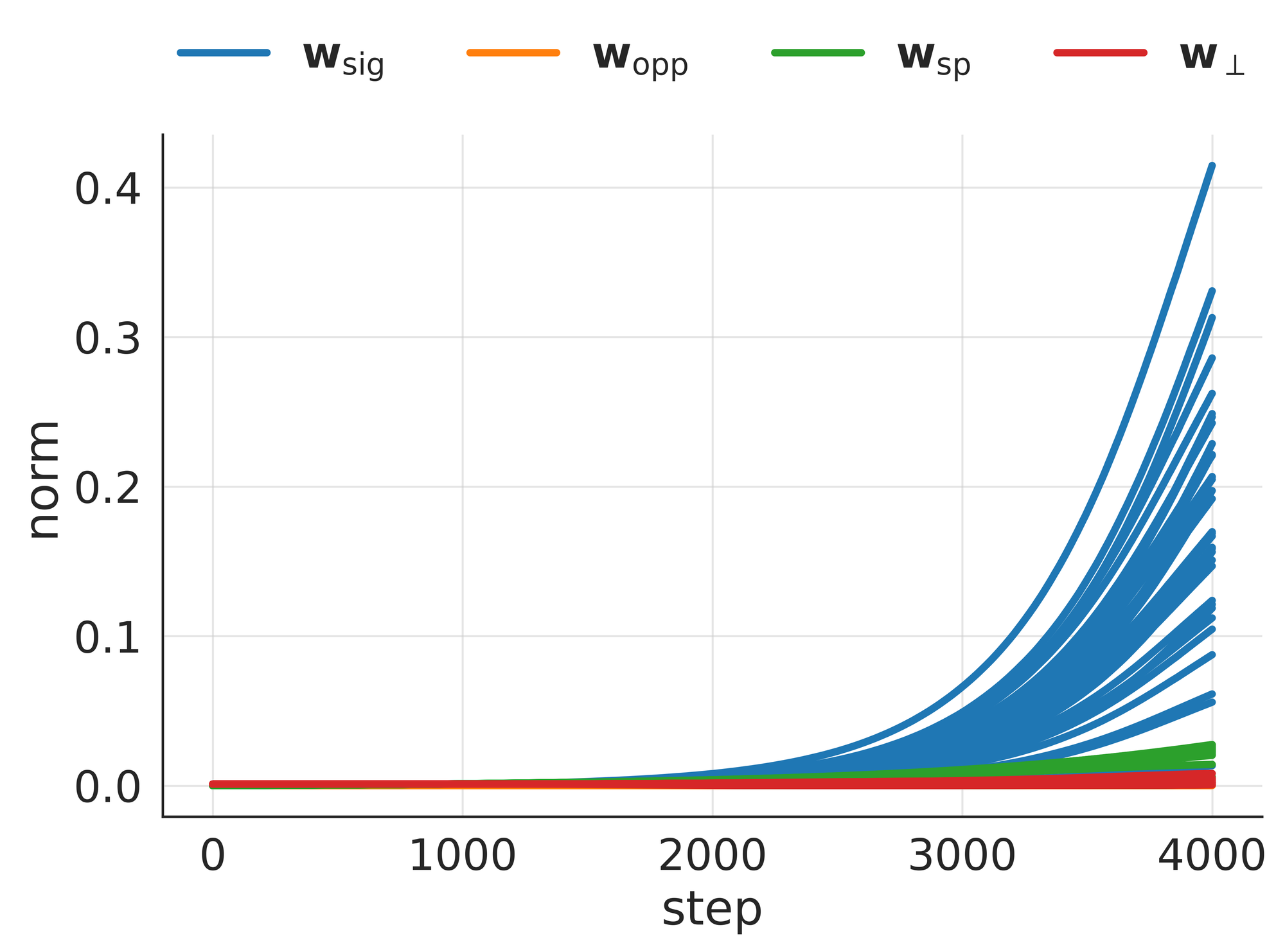}
        \caption{$\lambda=0.2$}
        \label{fig:lambda02_high}
    \end{subfigure}
    \caption{\textbf{Scaled-up $\lambda$.} We display the results of a training run with dimension $d=1000$, spurious correlation strength $\lambda=0.1$, learning rate $\eta=0.01$, width $p=100$, initialization scale $\theta=0.001$, and batch size $m=5000$. For each of the $p=100$ neurons, we plot $\norm{\bwsig}$, $\norm{\bwopp}$, $\norm{\bwsp}$, and $\norm{\bwperp}$ (defined in \Secref{feature_learning_analysis}).}
    \label{fig:scaled_up_lambdas}
\end{figure}

\begin{figure}[h!]
    \centering
    \begin{subfigure}[b]{0.245\textwidth}
        \centering
        \includegraphics[width=\textwidth, height=0.15\textheight]{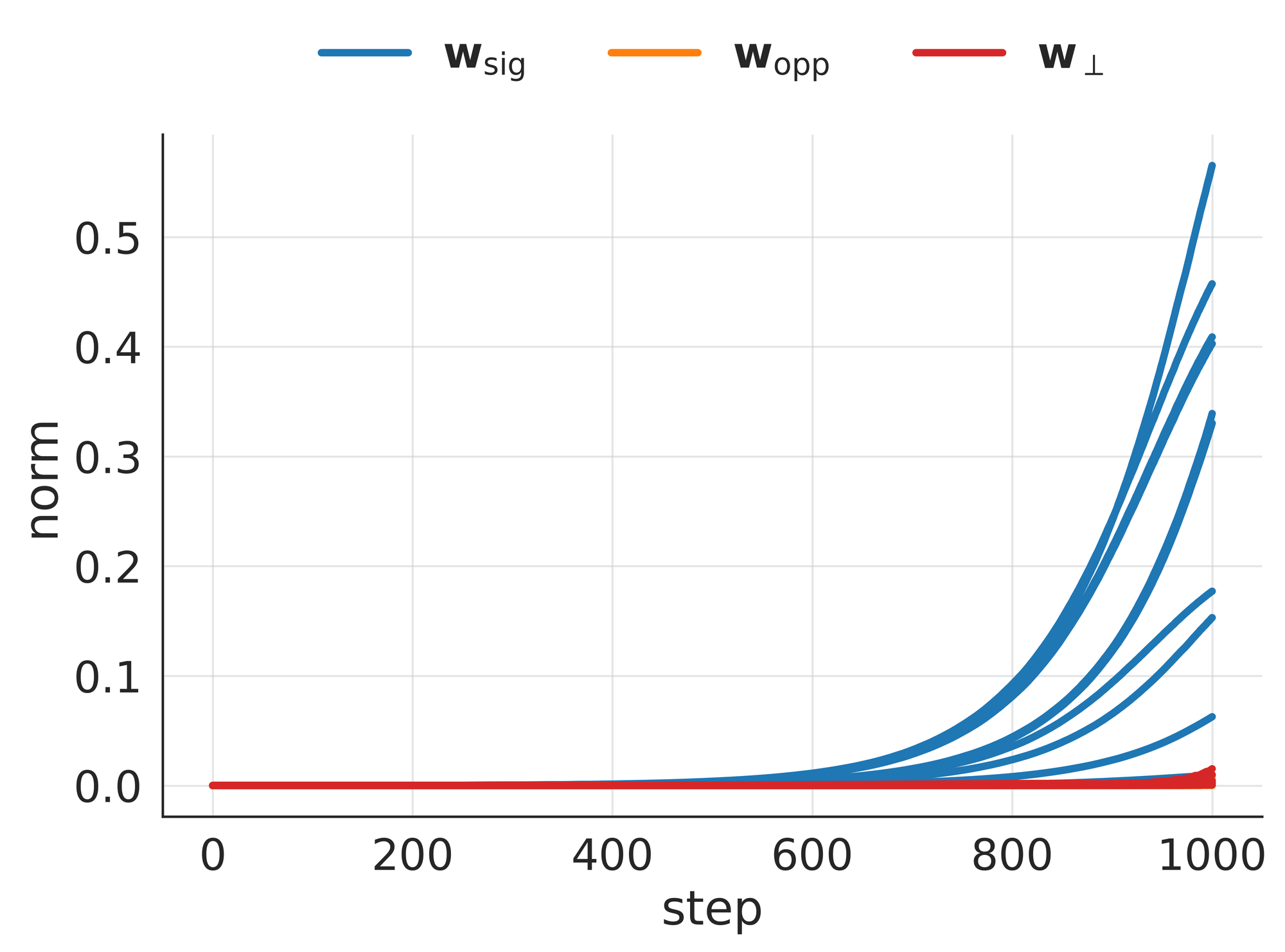}
        \caption{No spurious feature}
        \label{fig:glasgow_low}
    \end{subfigure}
    \begin{subfigure}[b]{0.245\textwidth}
        \centering
        \includegraphics[width=\textwidth, height=0.15\textheight]{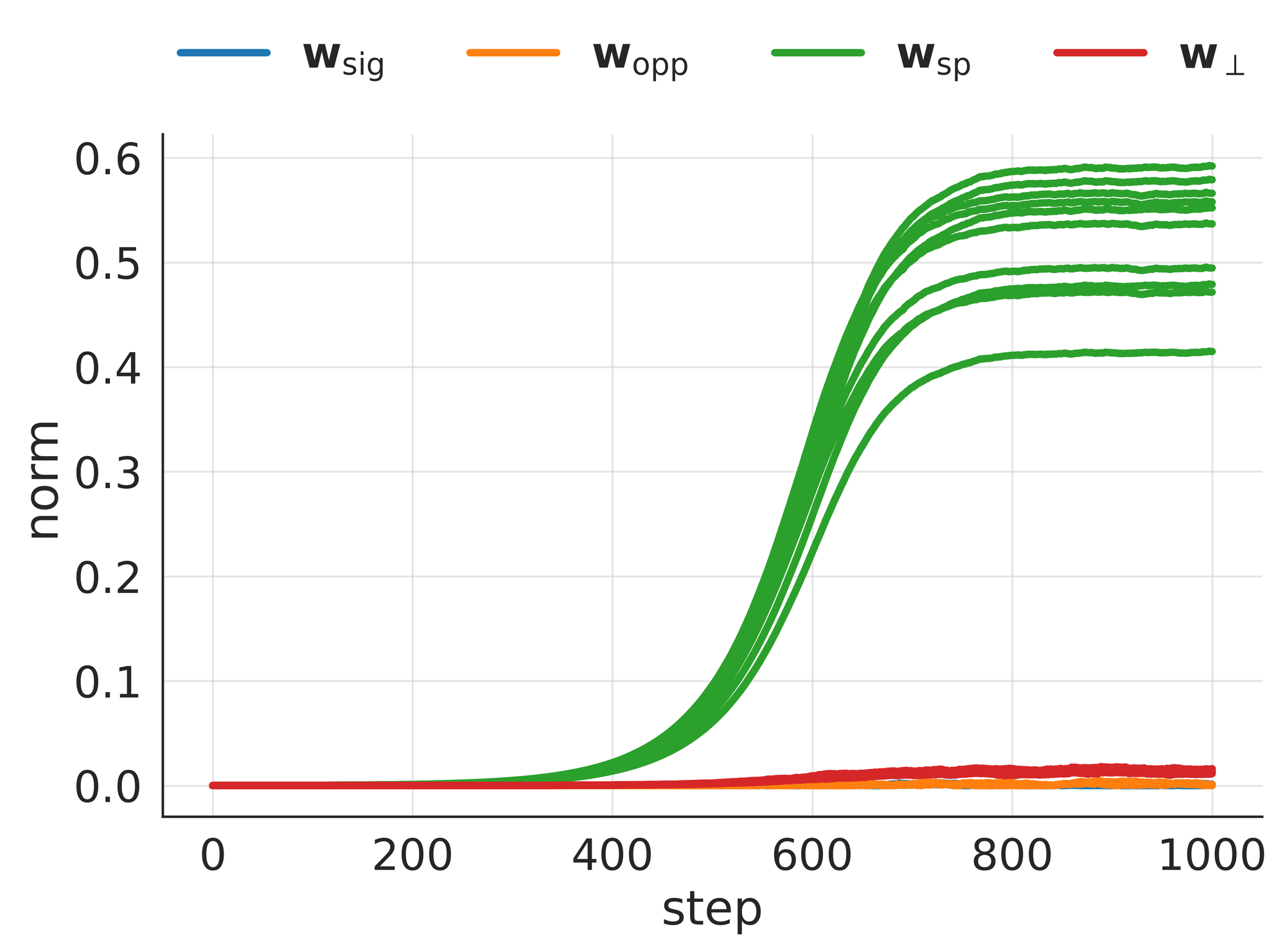}
        \caption{$\lambda=0.1$}
        \label{fig:lambda01_low}
    \end{subfigure}
    \begin{subfigure}[b]{0.245\textwidth}
        \centering
        \includegraphics[width=\textwidth, height=0.15\textheight]{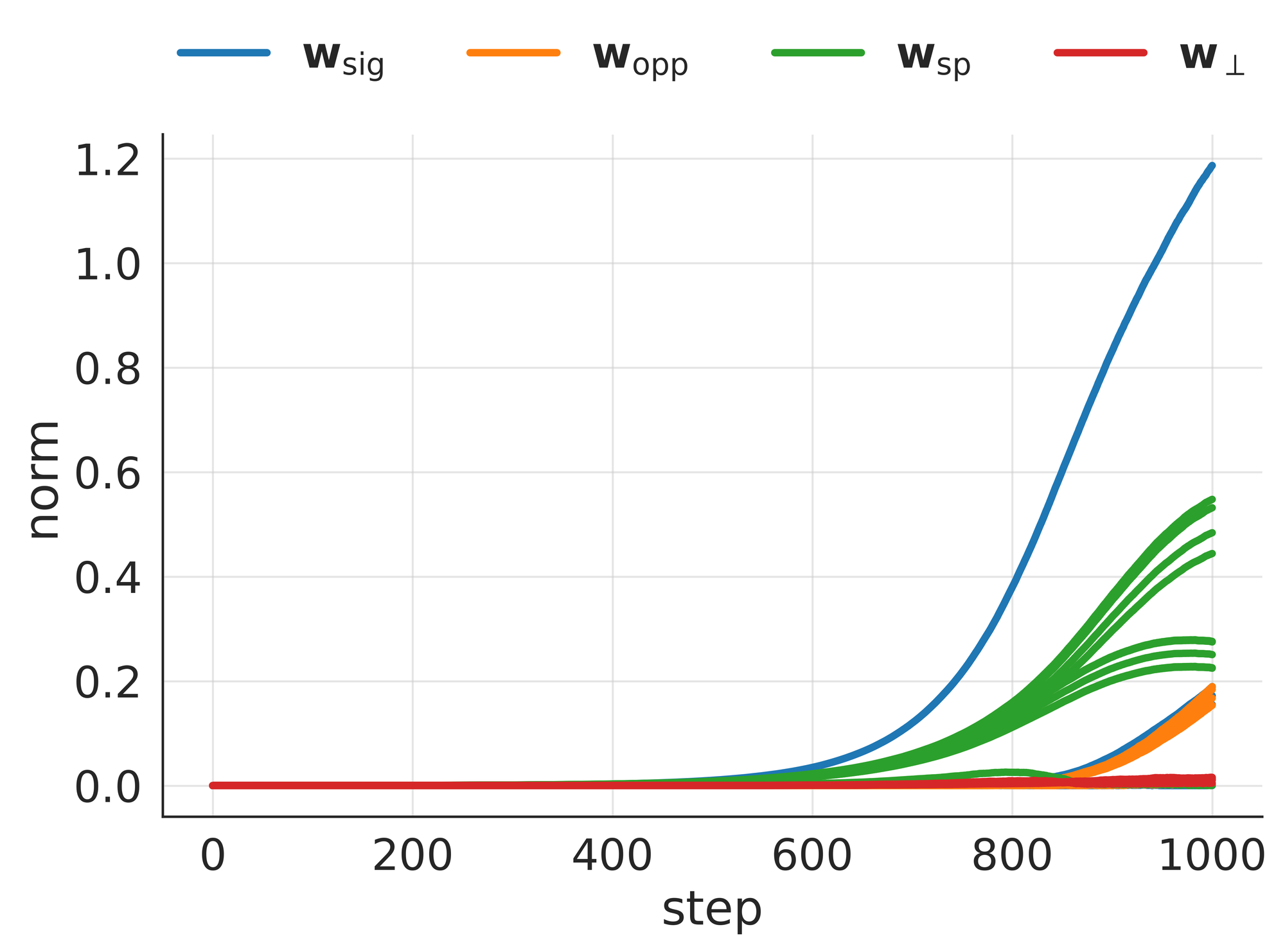}
        \caption{$\lambda=0.15$}
        \label{fig:lambda015_low}
    \end{subfigure}
    \begin{subfigure}[b]{0.245\textwidth}
        \centering
        \includegraphics[width=\textwidth, height=0.15\textheight]{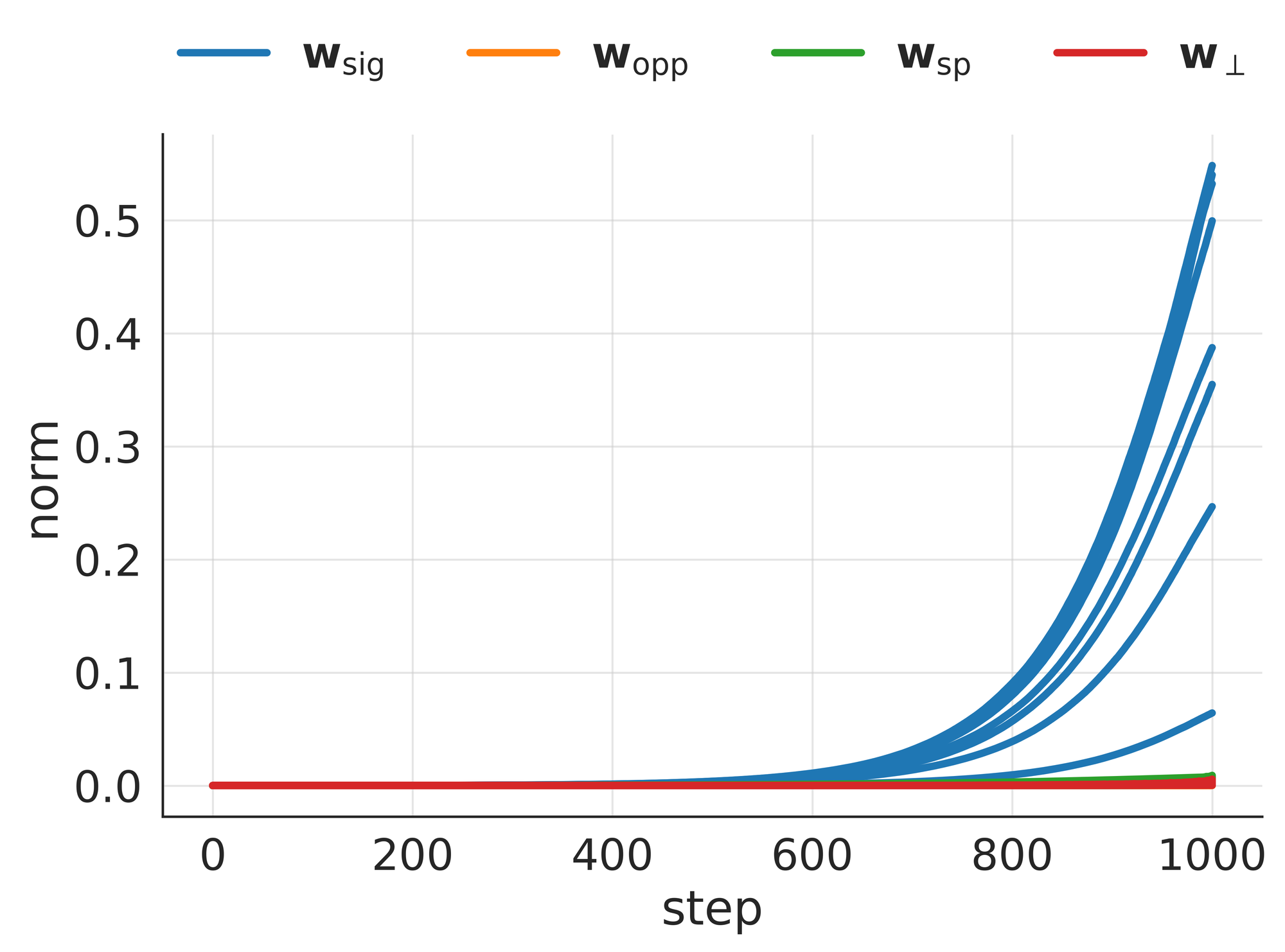}
        \caption{$\lambda=0.2$}
        \label{fig:lambda02_low}
    \end{subfigure}
    \caption{\textbf{Small-initialization $\lambda$.} We display the results of a training run with dimension $d=100$, learning rate $\eta=0.05$, width $p=10$, initialization scale $\theta=0.0001$, and batch size $m=5000$. For each of the $p=10$ neurons, we plot $\norm{\bwsig}$, $\norm{\bwopp}$, $\norm{\bwsp}$, and $\norm{\bwperp}$ (defined in \Secref{feature_learning_analysis}).}
    \label{fig:small-init_lambdas}
\end{figure}

\begin{figure}[h!]
    \centering
    \begin{subfigure}[b]{0.325\textwidth}
        \centering
        \includegraphics[width=\textwidth, height=0.18\textheight]{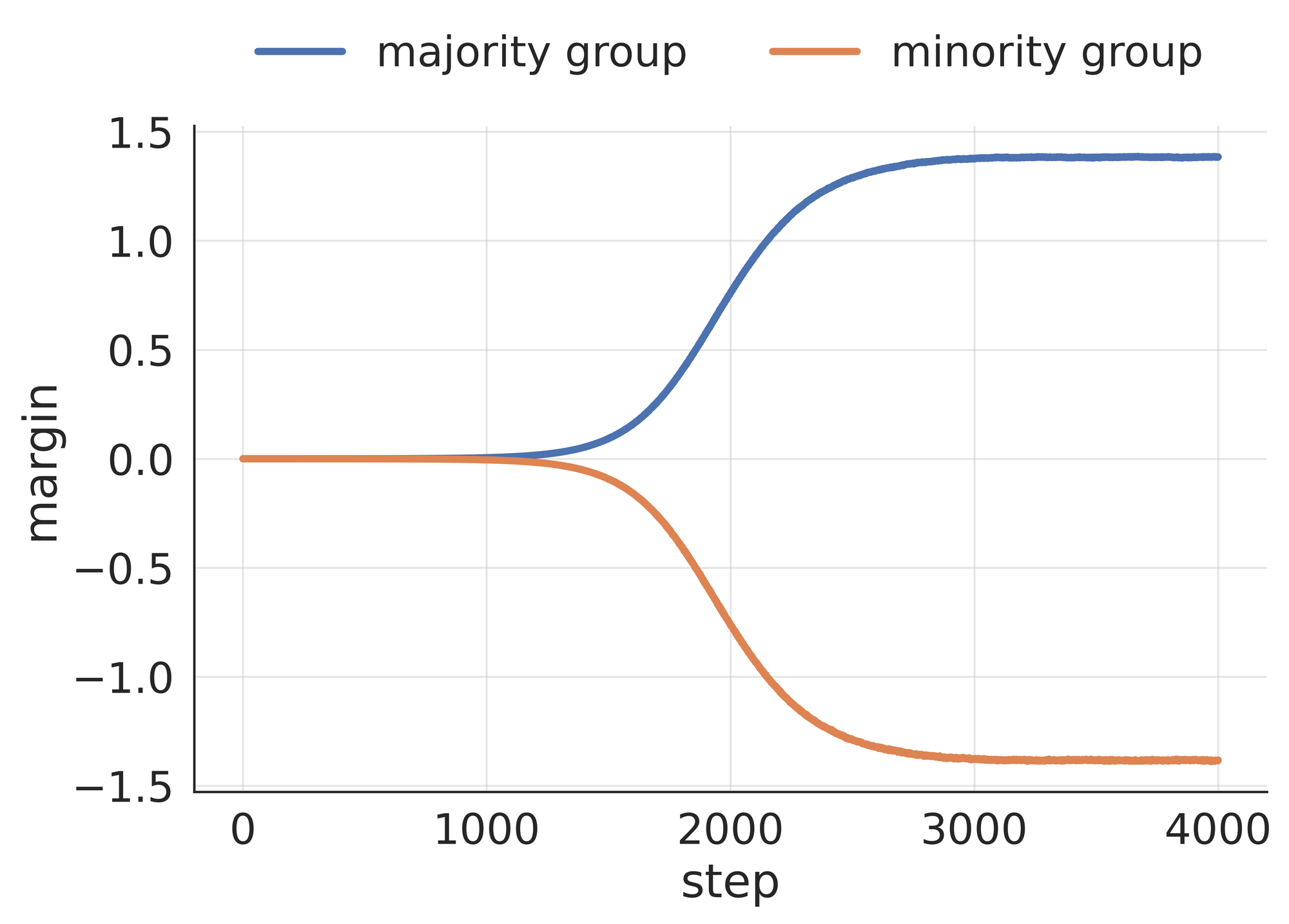}
        \caption{$\lambda=0.1$}
        \label{fig:lambda01_margin_high}
    \end{subfigure}
    \begin{subfigure}[b]{0.325\textwidth}
        \centering
        \includegraphics[width=\textwidth, height=0.18\textheight]{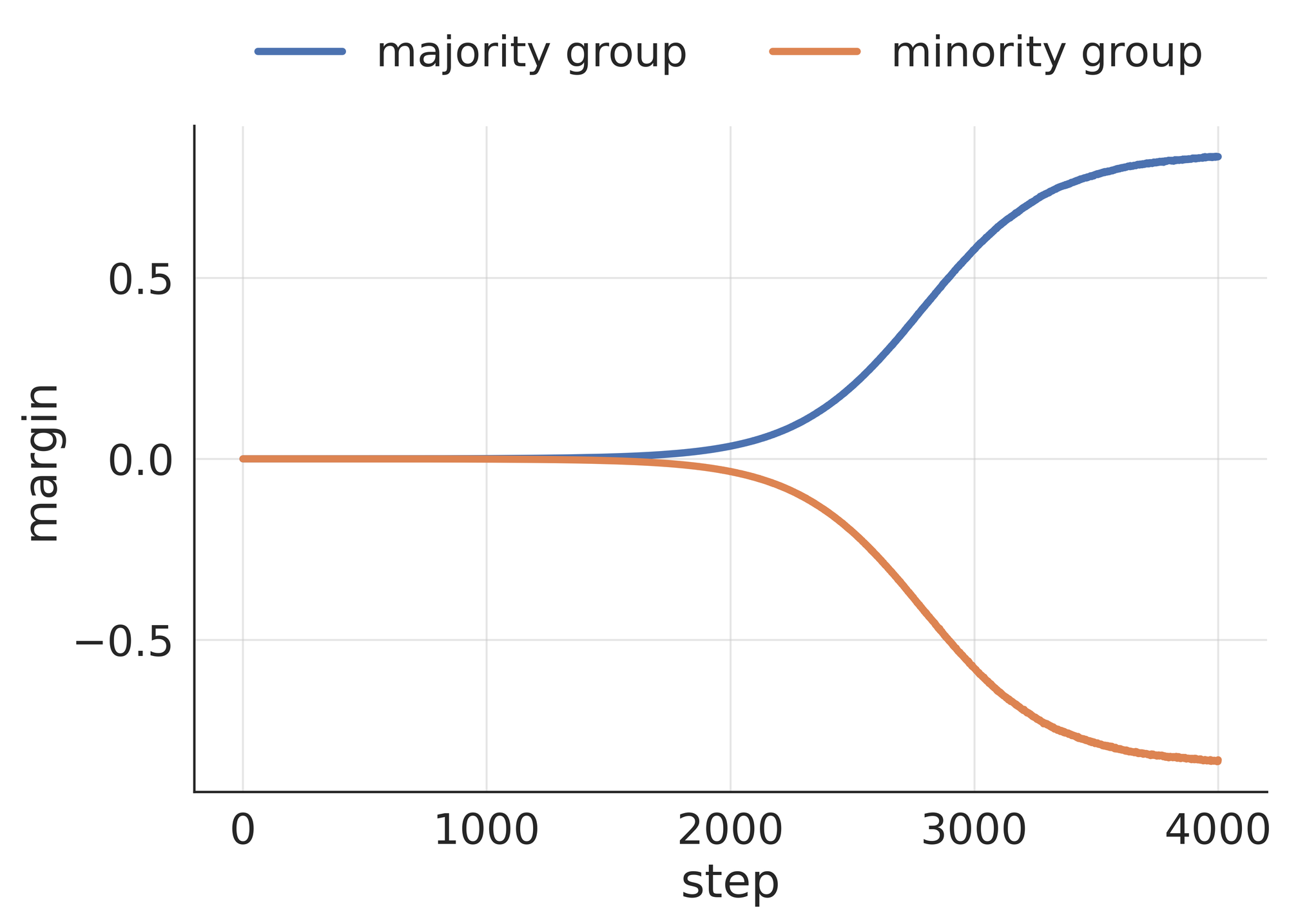}
        \caption{$\lambda=0.15$}
        \label{fig:lambda015_margin_high}
    \end{subfigure}
    \begin{subfigure}[b]{0.325\textwidth}
        \centering
        \includegraphics[width=\textwidth, height=0.18\textheight]{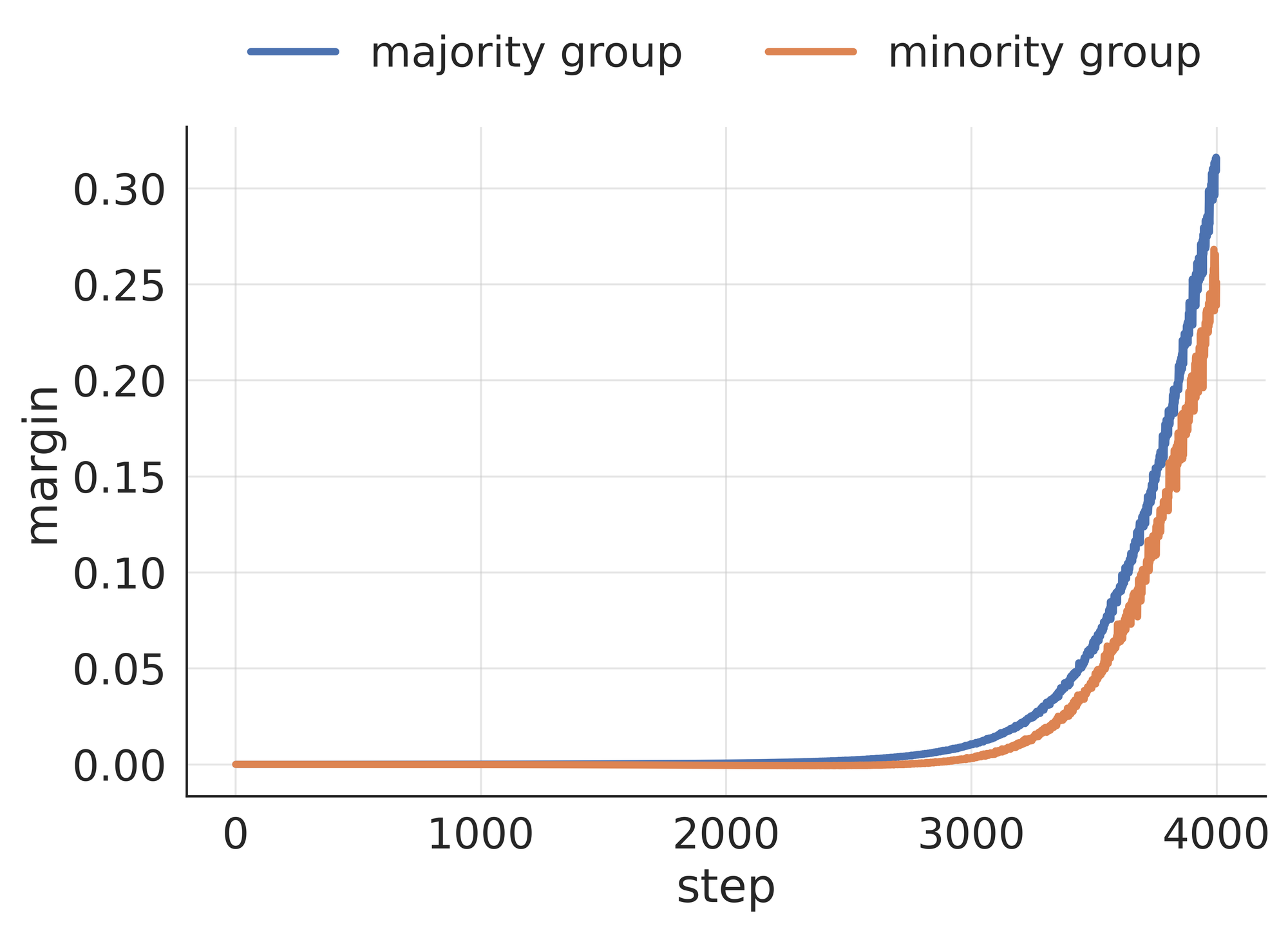}
        \caption{$\lambda=0.2$}
        \label{fig:lambda02_margin_high}
    \end{subfigure}
    \caption{\textbf{Scaled-up margins.} We display the results of a training run with dimension $d=1000$, spurious correlation strength $\lambda=0.1$, learning rate $\eta=0.01$, width $p=100$, initialization scale $\theta=0.001$, and batch size $m=5000$. For each of the $p=100$ neurons, we plot $\norm{\bwsig}$, $\norm{\bwopp}$, $\norm{\bwsp}$, and $\norm{\bwperp}$ (defined in \Secref{feature_learning_analysis}).}
    \label{fig:margins_high}
\end{figure}

\begin{figure}[h!]
    \centering
    \begin{subfigure}[b]{0.325\textwidth}
        \centering
        \includegraphics[width=\textwidth, height=0.18\textheight]{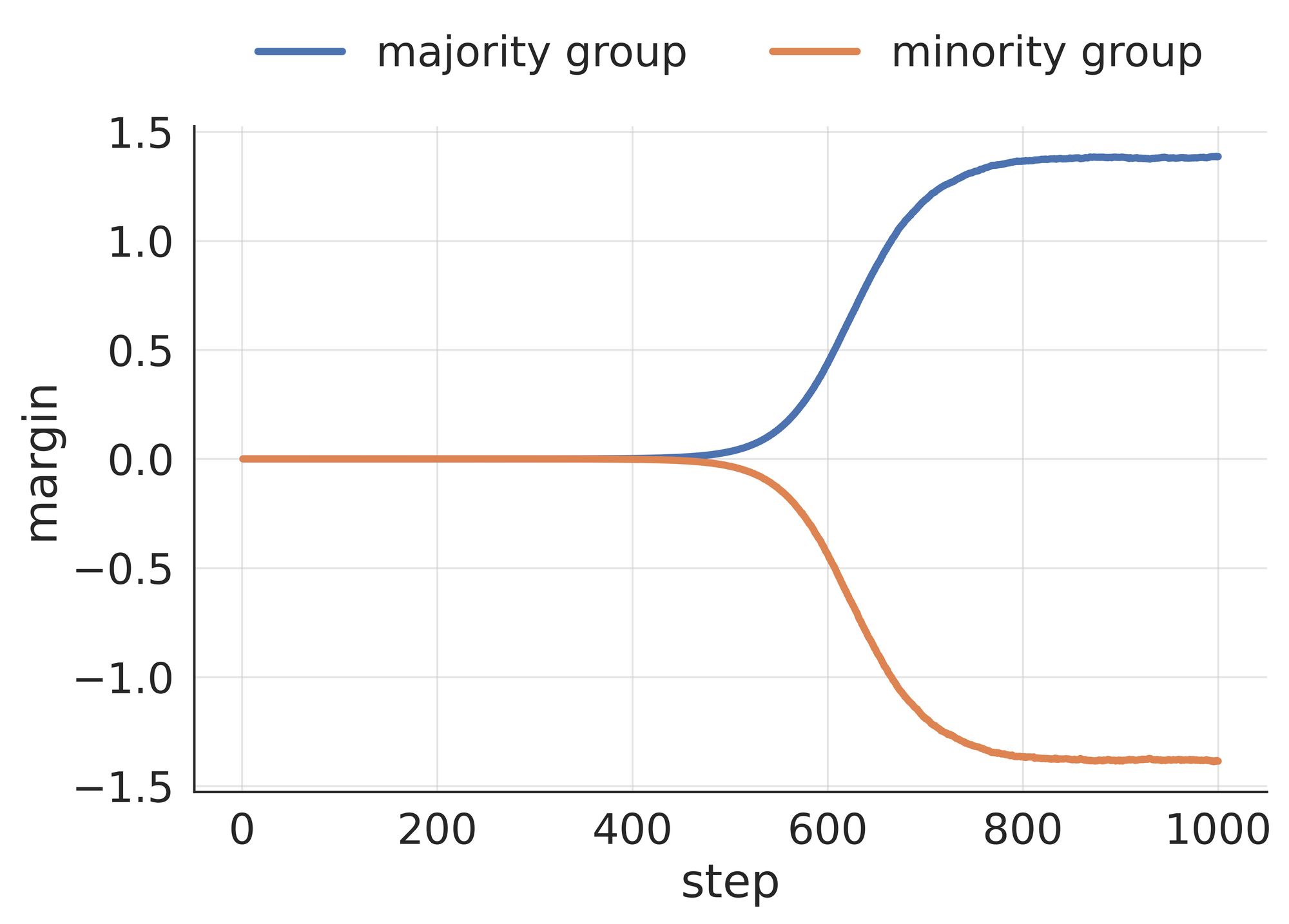}
        \caption{$\lambda=0.1$}
        \label{fig:lambda01_margin_low}
    \end{subfigure}
    \begin{subfigure}[b]{0.325\textwidth}
        \centering
        \includegraphics[width=\textwidth, height=0.18\textheight]{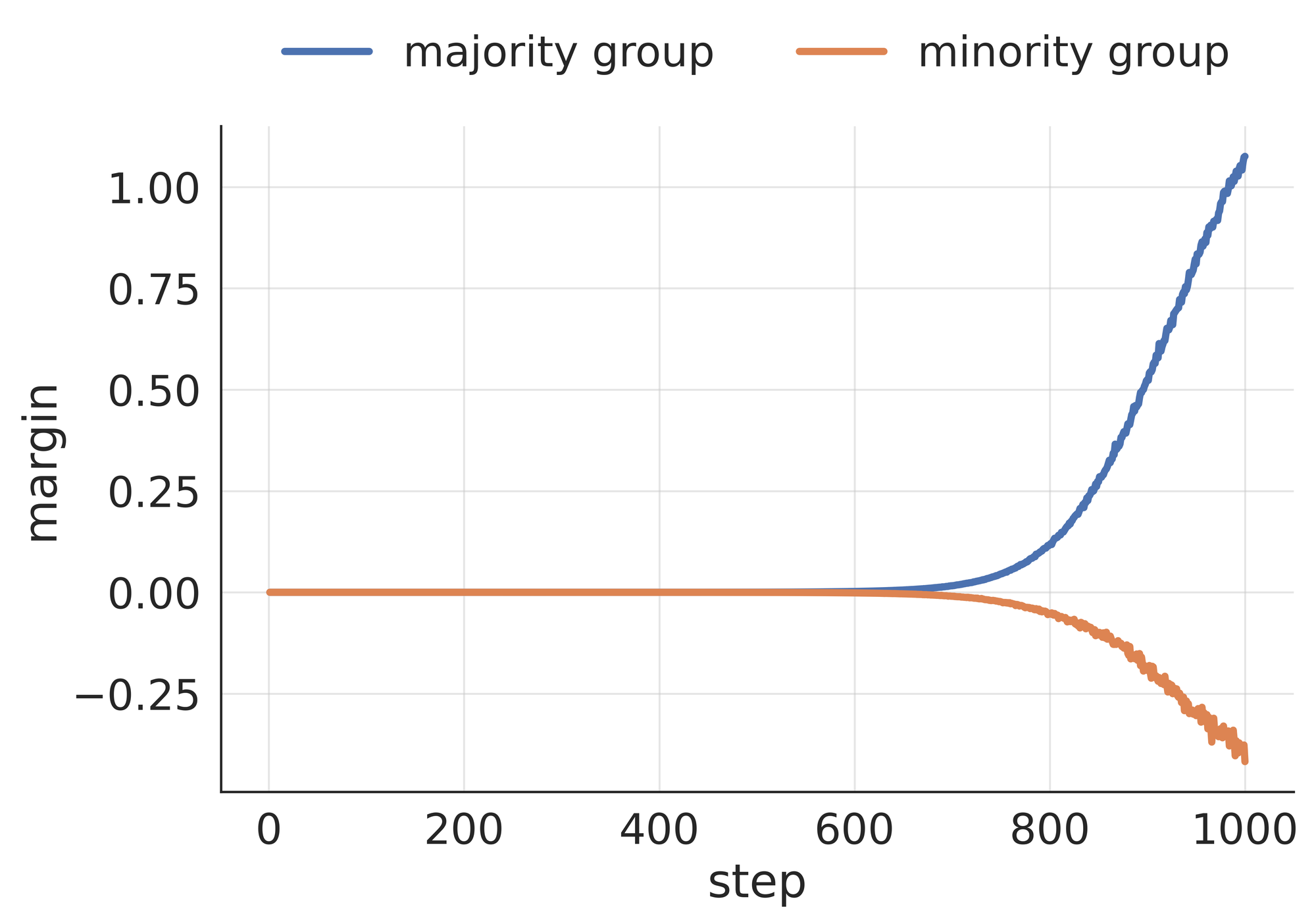}
        \caption{$\lambda=0.15$}
        \label{fig:lambda015_margin_low}
    \end{subfigure}
    \begin{subfigure}[b]{0.325\textwidth}
        \centering
        \includegraphics[width=\textwidth, height=0.18\textheight]{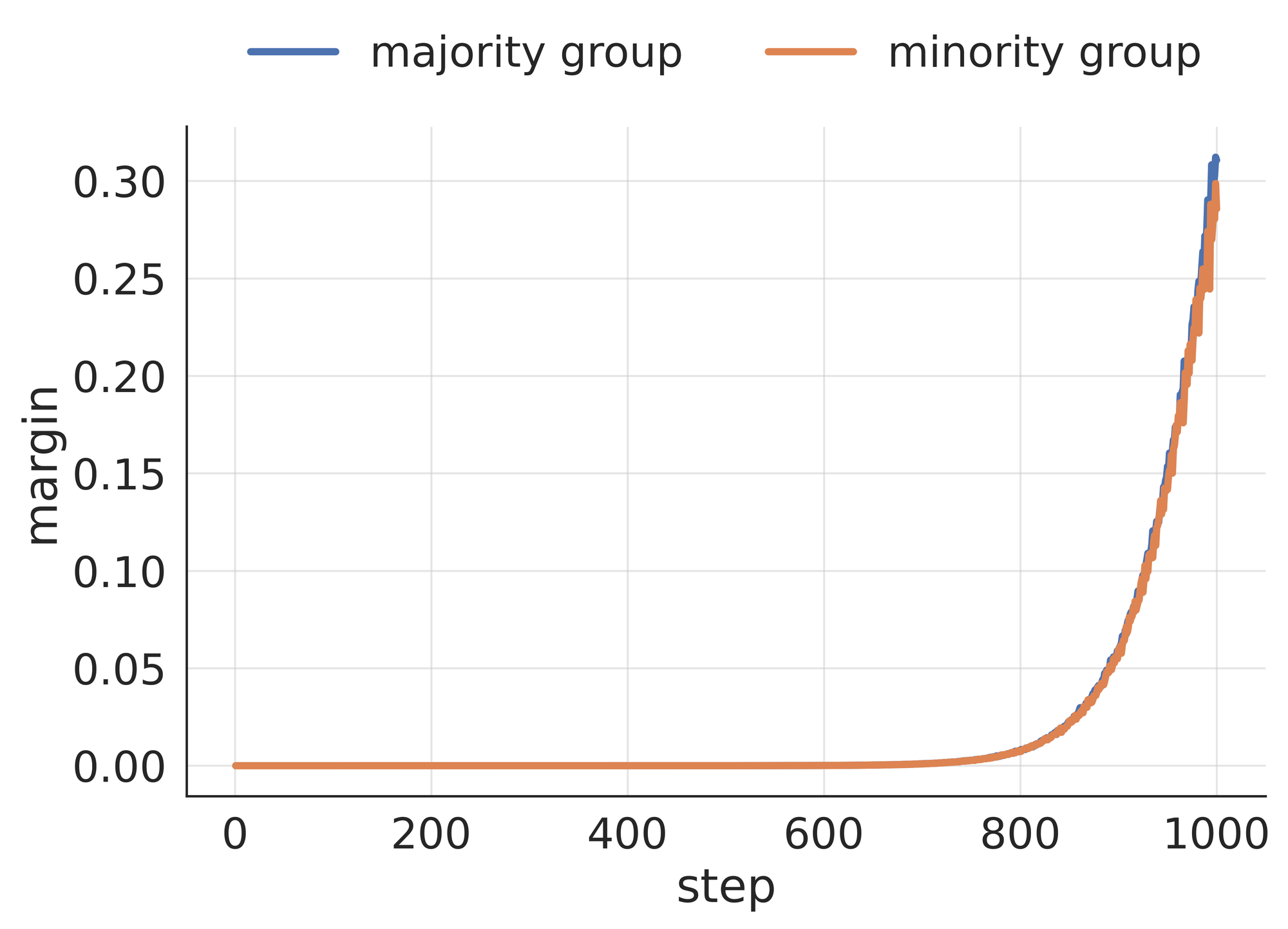}
        \caption{$\lambda=0.2$}
        \label{fig:lambda02_margin_low}
    \end{subfigure}
    \caption{\textbf{Small-initialization margins.} We display the results of a training run with dimension $d=100$, learning rate $\eta=0.05$, width $p=10$, initialization scale $\theta=0.0001$, and batch size $m=5000$. For each of the $p=10$ neurons, we plot $\norm{\bwsig}$, $\norm{\bwopp}$, $\norm{\bwsp}$, and $\norm{\bwperp}$ (defined in \Secref{feature_learning_analysis}).}
    \label{fig:margins_low}
\end{figure}

\clearpage

\section{Limitations, Broader Impacts, and LLM Usage} \label{sec:limitations}

\paragraph{Limitations.} Our analysis is confined to two-layer ReLU neural networks; as is the case more generally in deep learning theory, extending the results to three or more layers would be nontrivial (see, \eg\cite{nichani2023provable}).
Similarly, we study online minibatch SGD as is common in the literature, and handling batch sizes of $1$ or data re-use would require different theoretical techniques~\citep{glasgow2024sgd}.
Our results hold only for Boolean data, and not Gaussian data --- this is the key fact that enables us to write the population gradients $\nabla L_0$ and $\nabla L_\rho$ in closed form.
Finally, we focus on the setting with one signal feature and one spurious feature, whereas in practice there may be many signals and spurious features competing simultaneously~\citep{li2023whac, kim2024improving}.
We believe it is an exciting direction to extend our work to multiple features of differing complexities~\citep{qiu2024complexity}.

\paragraph{Broader impacts.} We hope this work contributes to the safe and equitable application of machine learning by improving understanding of how neural networks learn and rely on spurious correlations.
Our results may help motivate future research on robustness, fairness, and out-of-distribution generalization.
A potential negative outcome is that practitioners may over-interpret theoretical guarantees in simplified settings as evidence that real-world systems are robust to spurious correlations.
However, no theoretical framework can fully capture the complexity of practical deployment environments, and additional empirical evaluation remains necessary.

\paragraph{LLM usage.} Large language models (LLMs) including Claude Sonnet 4.6 Thinking, GPT 5.4 Thinking, and Gemini 3 Flash were used for discussion and refinement of some proof ideas.
In particular, these models contributed to mathematical derivations in \Lemref{v_ub}, \Lemref{boolean_to_gaussian_delta}, \Lemref{berry_esseen_tail}, and \Lemref{lindeberg}.
All proofs were written, critically reviewed, and independently verified by the authors, who assume full responsibility for their correctness.

\paragraph{Simulations.} Our simulations were run on an Nvidia A5000 GPU with 24GB VRAM, but this level of compute is not necessary. Our code is available at \url{https://github.com/tmlabonte/xor}.

\end{document}